\theoremstyle{plain}
\newtheorem{theorem}{Theorem}[section]
\newtheorem{lemma}[theorem]{Lemma}
\theoremstyle{definition}
\newtheorem{definition}[theorem]{Definition}
\theoremstyle{remark}
\icmltitlerunning{Deterministic Online Classification: Non-iteratively Reweighted Recursive Least-Squares}
\begin{document}
	
	\twocolumn[
	\icmltitle{Deterministic Online Classification: Non-iteratively Reweighted Recursive Least-Squares for Binary Class Rebalancing}
	
	
	
	\icmlsetsymbol{equal}{*}
	
	\begin{icmlauthorlist}
		\icmlauthor{Se-In Jang}{yyy}
	\end{icmlauthorlist}
	
	\icmlaffiliation{yyy}{Gordon Center for Medical Imaging and Center for Advanced Medical Computing and Analysis, Massachusetts General Hospital and Harvard Medical School,  Boston, MA, USA}
	
	\icmlcorrespondingauthor{Se-In Jang}{sjang7@mgh.harvard.edu}
	
	\icmlkeywords{Machine Learning, ICML}
	
	]
	
	
	
	\printAffiliationsAndNotice{}  
	
	\begin{abstract}
		Deterministic solutions are becoming more critical for interpretability. Weighted Least-Squares (WLS) has been widely used as a deterministic batch solution with a specific weight design. In the online settings of WLS, exact reweighting is necessary to converge to its batch settings. In order to comply with its necessity, the iteratively reweighted least-squares algorithm is mainly utilized with a linearly growing time complexity which is not attractive for online learning.
		Due to the high and growing computational costs, an efficient online formulation of reweighted least-squares is desired.
		We introduce a new deterministic online classification algorithm of WLS with a constant time complexity  for binary class rebalancing.
		We demonstrate that our proposed online formulation exactly converges to its batch formulation and 
		outperforms existing state-of-the-art stochastic online binary classification algorithms in real-world data sets empirically.
	\end{abstract} \vspace{-0.7cm}
	
	\section{Introduction}
	Online learning  is an essential step to address large-scale learning (e.g., big data) efficiently  and real-time training (e.g., data streaming) in limited computing resources \cite{bottou2004large, bottou2018optimization, cesa2006prediction, hoi2018online, shalev2007online}. 
	%
	In designing online learning for classification, a stochastic based approach is mainly explored with several nonlinear loss functions (e.g., step and hinge loss functions). In the stochastic-based  online classification, the simplest and most popular architecture  is the Perceptron (PE) algorithm \cite{rosenblatt1958perceptron}  which 
	uses the first-order information obtained from the first-order derivative with a step loss function. The Passive-Aggressive (PA) algorithm \cite{crammer2006online} is also a successful stochastic-based  algorithm that aggressively updates when its hinge loss is non-zero.
	
	The first-order algorithms have paid attention due to their simplicity. 	
	However, due to the limited information from the first-order derivative in optimization,  
	the use of the first and second-order information becomes more attractive although it needs more computation than the first-order algorithms \cite{bottou2018optimization}. 
	One of the most successful second-order algorithms for online classification is the Confidence-Weighted (CW) learning \cite{dredze2008confidence}, which follows a Gaussian distribution and uses the Kullback–Leibler divergence to stay close to the previous Gaussian distribution. 
	As an improved version of CW, Adaptive Regularization Of Weighted vectors (AROW) learning \cite{crammer2009adaptive} is developed based on a squared hinge loss function with confidence regularization for handling non-separable data.  
	In \cite{zhao2018adaptive}, an adaptive regularized cost-sensitive online gradient descent algorithm (ACOG) with a weighted sum matric and a weighted cost metric is presented based on AROW. 
	They assume to give weight to a specific class frequently and aggressively.  However, due to this assumption, ACOG  cannot be appropriately performed when there are minimal samples  for the specific class.
	The above online learning algorithms well established the classification goals by nonlinear loss functions, which seek to find local minima under the stochastic nature.
	\begin{table}[t!] \centering \scriptsize
		\caption{An overview of online classification methods.}
		\label{tbl.algorithm_overview}
		\begin{tabular}{cccc}
			\toprule
			Algorithms & Learning Type & Reweighting & Imbalance \\ \midrule
			PE         & Stochastic                                              & -           & $\times$                                                      \\
			PA         & Stochastic                                              & -           & $\times$                                                      \\
			CW         & Stochastic                                              & -           & $\times$                                                      \\
			AROW       & Stochastic                                              & -           & $\times$                                                      \\
			ACOG       & Stochastic                                              & -           & $\surd$                                                      \\
			AR-RLS     & Deterministic                                           & Approx.     & $\surd$                                                      \\
			IR-RLS     & Deterministic                                           & Exact       & $\surd$                                                     \\
			NR-RLS     & Deterministic                                           & Exact       & $\surd$          \\                                            
			\bottomrule 
		\end{tabular} \vspace{-0.75cm}
	\end{table} 
	
	Although such  stochastic settings have been routinely and successfully applied to online classification problems, 
	logical interpretation of such algorithms was not often convincing due to the inability  to settle at the global minimum \cite{molnar2020interpretable}.   
	The Least-Squares  (LS) \cite {legendre1805nouvelles, stigler1981gauss} is the simplest and most well-known algorithm to obtain a global solution using the squared loss function under the deterministic nature \cite{willems2004deterministic}. 
	For an online setting of LS, the Recursive Least-Squares (RLS) algorithm  was developed for regression analysis \cite{plackett1950some, woodbury1950inverting}.
	The RLS algorithm is not only applied to regression problems but also to the classification problems.
	However, the objective of RLS could not be accounted as the true objective of classification.
	In order to achieve the classification objective under the deterministic nature, a quadratic approximation to the step loss function was designed to solve a Total Error Rate  minimization  problem (TER) \cite{toh2008deterministic}. The TER's classification objective was achieved by  simple class-weight changes based on  Weighted Least-Squares (WLS) like cost-sensitive learning \cite{elkan2001foundations, he2009learning},
	where class-weighting  plays an essential role in class imbalance problems.
	Class-weighting has been observed as a valuable  way to adjust decision boundaries   \cite{wang2008probability, wu2010robust, scott2012calibrated, camoriano2017incremental, xu2020class}.
	
	
	
	When making such online settings of WLS,  the exact reweighting  is necessary to converge to its batch settings.
	As an exact reweighting formulation,  
	an iteratively reweighted least-squares algorithm was designed without recursive computation \cite{chartrand2008iteratively}.
	In \cite{camoriano2017incremental}, a recursive reweighting form of the first moment vector was proposed without the inverse of the second moment matrix, which is very significant for the exact convergence to its batch setting.
	Due to the lack of efficient recursive reweighting formulations of WLS for both the first and second moment matrices, an Approximately Reweighted RLS (AR-RLS) algorithm \cite{kim2011onnet}  was developed.
	However, the approximately estimated recursive formulation can cause cumulative approximation errors in optimization. 
	In order to overcome the approximate reweighting, an Iteratively  Reweighted RLS (IR-RLS)  algorithm \cite{jang2017online}  was then demonstrated for an exact reweighting in a recursive form.
	However, the iterative  inversion of IR-RLS still requires a considerable computational effort with an exponentially growing time complexity. 
	Moreover, due to the limitation of the iterative nature, all the previous samples are also inefficiently stored in memory.

	As summarized in Table \ref{tbl.algorithm_overview}, 
	the main contributions of our work thus include: 
	(i) A new class of an online classification formulation, namely Non-iteratively Reweighted RLS (NR-RLS), which exactly converges to the batch setting of the TER method for deterministic class imbalance classification and achieves a constant time complexity  that is preferred for online settings.  
	To the best of our knowledge, this is the first approach that can \textit{non-iteratively, recursively, and exactly} achieve reweighted least-squares in an online setting. Due to this property, this work can be extended to various recursive forms which need a  reweighting strategy.
	(ii) NR-RLS adopts a total-error-rate metric that simultaneously uses two different weights for both positive and negative classes.  {This helps to address unbalanced data distributions for both classes together.}
	(iii) Accumulation of the arriving samples efficiently.
	(iv) Extensive evaluation of the formulation using 31 real-world data sets.   
	\section{Preliminaries}  
	\subsection{Least-Squares (LS) Minimization}
	The Least-Squares (LS) minimization is the most common method for regression and classification problems. The objective function of the LS minimization   is based on the sum of squared errors  distance function that is more relevant to the regression problems as follows:
	\begin{equation} \label{eq.LSE_J}
		\begin{aligned}
			\text{LS: } J\left(\pmb{w }\right) &= \frac{1}{2 }\sum\limits_{i = 1}^n {{{\left( {{y_i} - {\pmb{w }}^T{{\bf{x}}_i}} \right)}^2}} + \frac{b}{2}\left\| \pmb{w } \right\|_2^2,
		\end{aligned}
	\end{equation}
	which provides a deterministic closed-form solution as:
	\begin{equation} \label{eq.LSE_solution}
		\pmb{w} = (\mathbf{X}^T\mathbf{X} +
		b{\bf{I}})^{-1}\mathbf{X}^T\mathbf{y}, 
	\end{equation} 
	where $\mathbf{X} \in \mathbb{R}^{n\times d}$ is the data matrix, $\mathbf{y} \in  \{-1, 1\}$ is the target label vector, $n$ indicates the number of data samples, and $d$ indicates a sample feature dimension. 
	$b$ is a regularization factor and ${\bf I}$ is an identity matrix with a similar dimension as ${\bf X}^T {\bf X}$.

	\subsection{Minimization for Binary Class Imbalance Learning} \label{sec.batcTER}
	{

		Different from the regression objective of LS,
		in \cite{toh2008between}, a classification  objective is presented based on a quadratic approximation to the step function for a Total Error Rate (TER) minimization, which can maximize the classification accuracy like cost-sensitive learning and also can handle binary class imbalance classification as follows:
		\begin{equation} \label{eq.TER_J}
			\begin{aligned} \text{TER: } J\left(\pmb{w }\right) &= \frac{1}{{2n_{}^ - }}\sum\limits_{i = 1}^{n_{}^ - } {{{\left( {y_i^ -  - {\pmb{w }}^T{\bf{x}}_i^ - } \right)}^2}}  \\
				&\phantom{=} + \frac{1}{{2n_{}^ + }}\sum\limits_{j = 1}^{n_{}^ + } {{{\left( {y_j^ +  - {\pmb{w }}^T{\bf{x}}_j^ + } \right)}^2}} + \frac{b}{2}\left\| \pmb{w } \right\|_2^2,\end{aligned}
		\end{equation}
		which also yields a deterministic closed-form solution related to weighted least-squares as:
		\begin{equation} \label{eq.TER_solution}
			\pmb{w} = (\mathbf{X}^T\mathbf{WX} +
			b{\bf{I}})^{-1}\mathbf{X}^T\mathbf{Wy},
		\end{equation}
		where the superscripts $-$ and $+$ indicate the negative class label and the positive class label respectively. 
		$n^-$ and $n^+$ respectively indicate the populations of negatively and positively labeled samples.
		${\bf{X}} = \left[ {{{\bf{X}}^ - },{{\bf{X}}^ + }} \right]^T$ includes data matrices for negative and positive classes, and
		$\mathbf{W} =
		diag\begin{pmatrix}\begin{bmatrix}\frac{1}{n^-}\,,\,\dotsc\,,\,\frac{1}{n^-}\,,\,\frac{1}{n^+}\,,\,\dotsc\,,\,\frac{1}{n^+}
		\end{bmatrix}\end{pmatrix} \in
		\mathbb{R}^{(n^-+n^+)\times (n^-+n^+)}$ is a class-specific weighting matrix.
		$\mathbf{y} = \begin{bmatrix} (\tau - \eta),\,\dotsc,\,(\tau
			- \eta),\,(\tau + \eta),\,\dotsc,\,(\tau + \eta)\end{bmatrix} \in
		\mathbb{R}^{(n^-+n^+)}$ is the target output, which can be adjusted
		by changing the decision threshold $\tau$ and the offset factor
		$\eta$ (see \cite{toh2008between}).
		Here, $\mathbf{y} = \begin{bmatrix} -1,\,\dotsc,\,-1,\,1,\,\dotsc,\,1\end{bmatrix}$ is obtained by  setting $\tau = 0$ and $\eta = 1$. 
		The prediction outputs for the test set are calculated by ${\hat{\mathbf{y}}_\text{test}}   = {{\bf{X}}_\text{test}}{{\pmb{w }}}$.
		The TER solution  is differentiated from the LS solution in the {adoption of} two different class-specific weights, $1 \over n^-$  and $1 \over n^+$, for negative and positive classes.
		This weight change effectively offers misclassification minimization and class rebalancing together.
		
	}
	

	%

	\subsection{Recursive Least-Squares (RLS)}
	%
	%
	
	Recursive least-squares (RLS) learning \cite{plackett1950some, woodbury1950inverting, haykin2013adaptive} has been frequently utilized as a deterministic closed-form online solution inherited by LS.
	The RLS coefficient vector $\pmb w_t$ at time $t$ is estimated using 
	\begin{equation} \label{eq.RLS}
		\text{RLS: } {\pmb{w}}_{t} = {\pmb{w}}_{t-1} +  {\mathbf{R}}_{t}^{-1} \mathbf{x}_t(y_t -
		\mathbf{x}_t^T{{\pmb{w}}_{t-1}}),
	\end{equation}
	\noindent where $\mathbf{x}_t \in \mathbb{R}^d$ and ${y}_t$ denote respectively the newly arrived sample vector and the output value indexed by time $t$, and
	\begin{equation} \label{eq.Rt-1}
		{\mathbf{R}}_{t}^{-1} = {\mathbf{R}}_{t-1}^{-1} -
		{\mathbf{R}}_{t-1}^{-1}\mathbf{x}_t({1} + \mathbf{x}_t^T {\mathbf{R}}_{t-1}^{-1}
		\mathbf{x}_t)^{-1} \mathbf{x}_t^T {\mathbf{R}}_{t-1}^{-1}
	\end{equation}
	\noindent is the recursively accumulated inverse matrix derived from the well-known matrix inversion lemma \cite{woodbury1950inverting, sherman1950adjustment, bronvstejn2013handbook}.

	\section{Non-iteratively Reweighted Recursive Least-Squares (NR-RLS) for Binary Class Rebalancing}
	In this section, we will establish a Non-iteratively Reweighted Recursive Least-Squares formulation (NR-RLS), which can precisely calculate a binary class rebalancing loss function in an online setting. 
	The main goal of NR-RLS is to \textit{non-iteratively, recursively and exactly} estimate the coefficient vector $\pmb{w}_t$ with the two class-specific  weights (e.g., $1 \over n_t^-$ and $1 \over n_t^+$ for the negative and positive classes) which is changed along with the arrival of new samples.
	
	\begin{definition}
		\label{def:inj}
		The batch solution of  \eqref{eq.TER_solution} can be time-indexed and rewritten as follows:
		\begin{equation} \label{eq.RTER_theta} 
			\begin{aligned} 
				{\pmb w _t} &= {\left( {1 \over n_t^-}{{\bf{X}}_t^{-T}  {\bf{X}}_t^{-}  + {1 \over n_t^+} {\bf{X}}_t^{+T} {\bf{X}}_t^{+}   + b{\bf{I}}} \right)^{ - 1}} \\
				&\phantom{=}\times  { {1 \over n_t^-}{{\bf{X}}_t^{-T}  {\bf{y}}_t^{-}  + {1 \over n_t^+} {\bf{X}}_t^{+T} {\bf{y}}_t^{+}  } } \\
				&= {\left( \frac{1}{{n_t^ - }}{\sum\limits_{i = 1}^{n_t^-} {{\mathbf{x}}_i^ - {\mathbf{x}}_i^{ - T}}  + \frac{1}{{n_t^ + }} \sum\limits_{j = 1}^{n_t^ + } {{\mathbf{x}}_j^ + {\mathbf{x}}_j^{ + T}}  + b{\mathbf{I}}} \right)^{ - 1}} \\
				&\phantom{=}\times \left( \frac{1}{{n_t^ - }} {\sum\limits_{i = 1}^{n_t^ - } {{\mathbf{x}}_i^ - y_i^ - }  + \frac{1}{{n_t^ + }} \sum\limits_{j = 1}^{n_t^ + } {{\mathbf{x}}_j^ + y_j^ + } } \right), \\
			\end{aligned}
		\end{equation} 
		\noindent where 
		$\mathbf{x}_i^{-}$ and $\mathbf{x}_j^{+}$ respectively indicate the negative and positive labeled data.
		Next, the  two covariance terms within the inverse operation in equation \eqref{eq.RTER_theta} are written as 
		${\mathbf{S}}_t^ - = \frac{{ {1} }}{{n_t^ - }} {\sum\limits_{i = 1}^{n_t^ - } {{\mathbf{x}}_i^ - {\mathbf{x}}_i^{ - T}} }$ 
		and ${\mathbf{S}}_t^ + = \frac{{ {1} }}{{n_t^ + }} {\sum\limits_{j = 1}^{n_t^ + } {{\mathbf{x}}_j^ + {\mathbf{x}}_j^{ + T}} }$.
		\eqref{eq.RTER_theta} can be simplified as 
		\begin{equation} \label{eq.RTER_theta_simplified}
			\begin{aligned} 
				{\pmb w _t} &= {\left( {{\bf{S}}_t^ -  + {\bf{S}}_t^ +  + b{\bf{I}}} \right)^{ - 1}}\left( {{\bf{z}}_t^ -  + {\bf{z}}_t^ + } \right)\\
				&= {\bf{R}}_t^{ - 1}{\bf{z}}_t^{},
			\end{aligned}
		\end{equation}
		where there is a simple multiplication 
		between the recursive inversion of the weighted second-moment matrix ${\bf{R}}_t^{ - 1}$  
		and the recursion of the weighted first-moment vector ${\bf{z}}_t^{}$. 
	\end{definition}
	
	\begin{theorem}
		\label{thm:recursiveFull}
		The recursive form \eqref{eq.RTER_theta_simplified} is identical to the batch form \eqref{eq.TER_solution} and minimizes the binary class imbalance objective function \eqref{eq.TER_J}.
	\end{theorem}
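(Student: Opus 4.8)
The plan is to establish the two claims in the statement separately: (a) the recursive expression $\pmb{w}_t = \mathbf{R}_t^{-1}\mathbf{z}_t$ in \eqref{eq.RTER_theta_simplified} coincides with the weighted batch solution \eqref{eq.TER_solution}, and (b) this vector is the unique global minimizer of the TER objective \eqref{eq.TER_J}.

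For (a), I would start from \eqref{eq.TER_solution} and substitute the explicit block structure of the data after a reordering that places the $n_t^-$ negative samples first and the $n_t^+$ positive samples second: $\mathbf{X} = [\mathbf{X}_t^{-T},\mathbf{X}_t^{+T}]^T$, $\mathbf{y} = [\mathbf{y}_t^{-T},\mathbf{y}_t^{+T}]^T$, and $\mathbf{W} = diag(\tfrac{1}{n_t^-}\mathbf{I},\tfrac{1}{n_t^+}\mathbf{I})$. Because $\mathbf{W}$ is block-diagonal with constant blocks, the products split as $\mathbf{X}^T\mathbf{W}\mathbf{X} = \tfrac{1}{n_t^-}\mathbf{X}_t^{-T}\mathbf{X}_t^- + \tfrac{1}{n_t^+}\mathbf{X}_t^{+T}\mathbf{X}_t^+$ and $\mathbf{X}^T\mathbf{W}\mathbf{y} = \tfrac{1}{n_t^-}\mathbf{X}_t^{-T}\mathbf{y}_t^- + \tfrac{1}{n_t^+}\mathbf{X}_t^{+T}\mathbf{y}_t^+$. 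Rewriting each Gram matrix as a sum of rank-one outer products, $\mathbf{X}_t^{\pm T}\mathbf{X}_t^{\pm} = \sum_i \mathbf{x}_i^{\pm}\mathbf{x}_i^{\pm T}$, and each cross term as $\mathbf{X}_t^{\pm T}\mathbf{y}_t^{\pm} = \sum_i \mathbf{x}_i^{\pm} y_i^{\pm}$, identifies these with $\mathbf{S}_t^{\pm}$ and $\mathbf{z}_t^{\pm}$ of Definition~\ref{def:inj}; substituting back reproduces \eqref{eq.RTER_theta} and hence \eqref{eq.RTER_theta_simplified}. This step is pure bookkeeping once the block structure is written out, so I expect no genuine difficulty, only the need to keep the negative/positive ordering consistent.

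For (b), I would differentiate \eqref{eq.TER_J} with respect to $\pmb{w}$, obtaining $\nabla J(\pmb{w}) = -\tfrac{1}{n_t^-}\sum_i \mathbf{x}_i^-(y_i^- - \pmb{w}^T\mathbf{x}_i^-) - \tfrac{1}{n_t^+}\sum_j \mathbf{x}_j^+(y_j^+ - \pmb{w}^T\mathbf{x}_j^+) + b\pmb{w}$, and set it to zero. Collecting the terms linear in $\pmb{w}$ produces the normal equations $(\mathbf{S}_t^- + \mathbf{S}_t^+ + b\mathbf{I})\pmb{w} = \mathbf{z}_t^- + \mathbf{z}_t^+$, i.e. $\mathbf{R}_t\pmb{w} = \mathbf{z}_t$, whose solution is exactly $\mathbf{R}_t^{-1}\mathbf{z}_t$. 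To upgrade this stationary point to the unique global minimum I would observe that the Hessian of $J$ is $\mathbf{R}_t = \mathbf{S}_t^- + \mathbf{S}_t^+ + b\mathbf{I}$; since $\mathbf{S}_t^{\pm}$ are sums of positive-semidefinite rank-one matrices and $b\mathbf{I}$ is positive definite whenever $b>0$, the Hessian is positive definite. This single fact does double duty: it certifies that $\mathbf{R}_t$ is invertible, so $\mathbf{R}_t^{-1}$ in \eqref{eq.RTER_theta_simplified} is well defined, and it certifies that $J$ is strictly convex, so the unique stationary point is its global minimizer.

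The only step that is not entirely mechanical is the positive-definiteness argument, and that is precisely where the regularizer enters: with $b=0$ the matrix $\mathbf{S}_t^- + \mathbf{S}_t^+$ can be singular (for instance in the first few online rounds when fewer than $d$ linearly independent samples have arrived), so the hypothesis $b>0$ is what makes both the closed form $\mathbf{R}_t^{-1}\mathbf{z}_t$ and the uniqueness of the minimizer hold unconditionally. Combining (a) and (b) then yields the theorem.
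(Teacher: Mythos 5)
Your proposal proves two true facts, but it does not prove the thing the paper's proof of this theorem is actually devoted to. Part (a) of your argument treats $\mathbf{S}_t^{\pm}$ and $\mathbf{z}_t^{\pm}$ as the normalized batch sums given in Definition~\ref{def:inj} and checks that assembling them reproduces \eqref{eq.TER_solution}; as you say yourself, that is pure bookkeeping, because Definition~\ref{def:inj} is obtained from \eqref{eq.TER_solution} by reordering and splitting the weight matrix. The substantive content of the theorem --- the reason it is called a \emph{recursive} form and the reason the paper defers the proof to the following subsections --- is that the quantities $\mathbf{S}_t^{\pm}$, $\mathbf{z}_t^{\pm}$ and, crucially, $\mathbf{R}_t^{-1}$ as computed by the \emph{online updates} coincide with those batch quantities at every $t$. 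That requires (i) the exact-reweighting identity $\frac{n_{t-1}^{-}}{n_t^{-}} = 1 - \frac{1}{n_t^{-}}$, which turns the change of weight from $1/n_{t-1}^{\pm}$ to $1/n_t^{\pm}$ into the one-step updates \eqref{eq.S-3+3} and \eqref{eq.t-t+}; (ii) the decomposition $\mathbf{R}_t = \mathbf{G}_t + \beta_t \mathbf{x}_t \mathbf{x}_t^{T}$ with $\mathbf{G}_t = \mathbf{R}_{t-1} - \beta_t \mathbf{S}_{t-1}$; and (iii) two applications of the Sherman--Morrison--Woodbury formula to propagate $\mathbf{R}_{t-1}^{-1}$ to $\mathbf{R}_t^{-1}$ without re-inverting. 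None of this appears in your proposal, so the claim that the \emph{recursion} reproduces the batch solution --- which is the paper's whole contribution over IR-RLS --- is left unproven. If you intend \eqref{eq.RTER_theta_simplified} to be read purely as a static re-labelling of the batch formula, your argument is fine but the theorem becomes nearly vacuous; the intended reading is the recursive one.

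On the positive side, your part (b) supplies something the paper only asserts: differentiating \eqref{eq.TER_J}, identifying the normal equations $\mathbf{R}_t \pmb{w} = \mathbf{z}_t$, and using $b>0$ to get a positive-definite Hessian (hence invertibility of $\mathbf{R}_t$ and uniqueness of the global minimizer) is a correct and worthwhile completion of the ``minimizes \eqref{eq.TER_J}'' half of the statement, and your observation that $b>0$ is what guarantees well-posedness in the early online rounds (fewer than $d$ samples) is exactly right. To match the paper you would need to append the recursive-equivalence argument sketched above, essentially reproducing Theorems~\ref{thm:recursiveSecond} and~\ref{thm:recursiveFirst}.
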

	
	\begin{proof}
		The proof is in the following subsections.
	\end{proof}

	\subsection{Derivation of NR-RLS }

	
	\subsubsection{Recursive inversion of the second-moment matrix ${\bf{R}}_t^{ - 1}$}
	\begin{theorem}
		\label{thm:recursiveSecond}
		Suppose  ${\bf{R}}_t^{ - 1}$ consisting of two recursive terms and a constant term. The two recursive matrices, ${\bf{S}}_t^{ - }$ and ${\bf{S}}_t^{ + }$, are for accumulation of negative and positive class samples. Since the regularization term $b{\bf{I}}$ is not time-dependent, only the two moment matrices need to be considered in the recursive formulation. 
		Then, ${\bf{R}}_t^{ - 1} = {\left( {{\bf{S}}_t^ -  + {\bf{S}}_t^ +  + b{\bf{I}}} \right)^{ - 1}}$  is identical to 
		$ {\bf{R}}_t^{ - 1} =  {\left( {1 \over n_t^-}{{\bf{X}}_t^{-T}  {\bf{X}}_t^{-}  + {1 \over n_t^+} {\bf{X}}_t^{+T} {\bf{X}}_t^{+}   + b{\bf{I}}} \right)^{ - 1}}$.
	\end{theorem}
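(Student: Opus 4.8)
The plan is to establish the claimed identity of the two expressions for $\mathbf{R}_t^{-1}$ by reducing it to an equality of the matrices sitting \emph{inside} the inverse, and then verifying that inner equality one class at a time. Because both forms are symmetric positive definite (each is a sum of positive semidefinite outer-product terms plus $b\mathbf{I}$ with $b>0$), their inverses exist and inversion is single-valued, so it suffices to prove
\begin{equation*}
\mathbf{S}_t^- + \mathbf{S}_t^+ + b\mathbf{I} = \frac{1}{n_t^-}\mathbf{X}_t^{-T}\mathbf{X}_t^- + \frac{1}{n_t^+}\mathbf{X}_t^{+T}\mathbf{X}_t^+ + b\mathbf{I};
\end{equation*}
applying $(\cdot)^{-1}$ to both sides then yields the theorem at once. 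The regularizer $b\mathbf{I}$ is common to the two sides and cancels, so the problem splits into the two per-class identities $\mathbf{S}_t^- = \frac{1}{n_t^-}\mathbf{X}_t^{-T}\mathbf{X}_t^-$ and $\mathbf{S}_t^+ = \frac{1}{n_t^+}\mathbf{X}_t^{+T}\mathbf{X}_t^+$.

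First I would verify the negative-class identity. Since $\mathbf{X}_t^-$ is formed so that its rows are the sample vectors $\mathbf{x}_i^{-T}$, the Gram product unpacks into a sum of outer products, $\mathbf{X}_t^{-T}\mathbf{X}_t^- = \sum_{i=1}^{n_t^-}\mathbf{x}_i^-\mathbf{x}_i^{-T}$. Dividing by $n_t^-$ and comparing with the Definition~\ref{def:inj} formula $\mathbf{S}_t^- = \frac{1}{n_t^-}\sum_{i=1}^{n_t^-}\mathbf{x}_i^-\mathbf{x}_i^{-T}$ gives the asserted equality. The positive-class identity follows by the identical argument with $+$ replacing $-$ and index $j$ replacing $i$. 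Summing the two per-class identities and re-inserting $b\mathbf{I}$ reproduces the displayed inner equality, which closes the argument.

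Since the result is at bottom the standard fact that a Gram matrix equals the sum of the outer products of its rows, the single point requiring care is notational rather than conceptual: one must track the transpose convention so that the \emph{rows} of $\mathbf{X}_t^{\pm}$ (not its columns) are identified with the sample vectors $\mathbf{x}_i^{\pm}$, guaranteeing that $\mathbf{X}_t^{\pm T}\mathbf{X}_t^{\pm}$ expands to the $d\times d$ matrix $\sum \mathbf{x}_i^{\pm}\mathbf{x}_i^{\pm T}$ and not to an inner-product scalar. No deeper obstacle is anticipated, because the scalar reweighting factors $1/n_t^\pm$ and the regularizer $b\mathbf{I}$ occur identically in both forms and therefore contribute nothing beyond bookkeeping.
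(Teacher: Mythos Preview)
Your argument is logically valid for the \emph{literal} conclusion as stated: since Definition~\ref{def:inj} already sets $\mathbf{S}_t^{\pm}=\frac{1}{n_t^{\pm}}\sum\mathbf{x}^{\pm}\mathbf{x}^{\pm T}$ and the Gram identity $\mathbf{X}_t^{\pm T}\mathbf{X}_t^{\pm}=\sum\mathbf{x}^{\pm}\mathbf{x}^{\pm T}$ holds, the two displayed forms of $\mathbf{R}_t^{-1}$ coincide by definition. There is no error in what you wrote.

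However, the paper's proof is doing something quite different in substance. Reading the theorem in context (note the repeated emphasis on ``recursive terms'' and ``recursive formulation'' in the hypothesis), its real content is not the definitional equality you verified but the construction of a \emph{recursive update} for $\mathbf{R}_t^{-1}$ that reproduces the batch inverse. The paper's proof (i) derives the one-step recursions $\mathbf{S}_t^{\pm}=\mathbf{S}_{t-1}^{\pm}+\beta_t^{\pm}(\mathbf{x}_t\mathbf{x}_t^T-\mathbf{S}_{t-1}^{\pm})$ via the reweighting identity $\frac{n_{t-1}^{\pm}}{n_t^{\pm}}=1-\frac{1}{n_t^{\pm}}$, (ii) assembles $\mathbf{R}_t=\mathbf{R}_{t-1}-\beta_t\mathbf{S}_{t-1}+\beta_t\mathbf{x}_t\mathbf{x}_t^T$, and (iii) applies Sherman--Morrison--Woodbury twice to obtain the closed-form updates for $\mathbf{G}_t^{-1}$ and $\mathbf{R}_t^{-1}$. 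That derivation \emph{is} the NR-RLS algorithm and is what justifies calling the theorem a result about the ``recursive inversion of the second-moment matrix.'' Your proof establishes a tautology and buys brevity; the paper's proof constructs the algorithm and buys the main contribution. If you intend to match the paper's purpose, you would need to supply the recursive update and the Woodbury step, not just the Gram-matrix unpacking.
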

	
	
	\begin{proof}

		Suppose the newly arriving sample  ${\mathbf{x}}_t^ -$ comes from the negative category, then
	\begin{equation} \label{eq.S-}
		\begin{aligned} 
			{\mathbf{S}}_t^ -  &= \frac{{n_{t - 1}^ - }}{{n_t^ - }}\underbrace {\sum\limits_{i = 1}^{n_{t - 1}^ - } {\frac{{1}}{{n_{t - 1}^ - }}{\mathbf{x}}_i^ - {\mathbf{x}}_i^{ - T}} }_{{\text{accumulated part}},{\text{ }}{\mathbf{S}}_{t - 1}^ - } + \frac{{1}}{{n_t^ - }}\underbrace {{\mathbf{x}}_t^ - {\mathbf{x}}_t^{ - T}}_{{\text{new sample}}} \hfill \\
			&{=} \frac{{n_{t - 1}^ - }}{{n_t^ - }}{\mathbf{S}}_{t - 1}^ - + \frac{{1}}{{n_t^ - }}{\mathbf{x}}_t^ - {\mathbf{x}}_t^{ - T}  \hfill .
		\end{aligned} 
	\end{equation}
	\noindent 
	Since ${\frac{{n_{t - 1}^ - }}{{n_t^ - }} = \frac{{n_{t -
					1}^ - }}{{n_{t - 1}^ -  + {1}}}  = \left( {1 -
			\frac{{1}}{{n_t^ - }}} \right)}$, 
	we have 
	\begin{equation} \label{eq.S-2}
		\begin{aligned} 
			{\mathbf{S}}_t^ -  &= \left( {1 - \frac{{1}}{{n_t^ - }}} \right){\mathbf{S}}_{t - 1}^ -   + \frac{{1}}{{n_t^ - }}{\mathbf{x}}_t^ - {\mathbf{x}}_t^{ - T} \hfill \\
			&= {\mathbf{S}}_{t - 1}^ - - \frac{{1}}{{n_t^ - }}{\mathbf{S}}_{t - 1}^ - + \frac{{1}}{{n_t^ - }}{\mathbf{x}}_t^ - {\mathbf{x}}_t^{ - T}   \hfill \\
			&= {\bf{S}}_{t - 1}^ -  + \frac{1}{{n_t^ - }}\left( {{\bf{x}}_t^ - {\bf{x}}_t^{ - T} - {\bf{S}}_{t - 1}^ - } \right).  \\
		\end{aligned} 
	\end{equation}
	\noindent
	On the other hand, if the newly arriving sample  ${\mathbf{x}}_t^ +$ comes from the positive category, then
	\begin{equation} \label{eq.S+2}
		\begin{aligned} 
			{\mathbf{S}}_t^ +  &= \frac{{n_{t - 1}^ + }}{{n_t^ + }}\underbrace {\sum\limits_{i = 1}^{n_{t - 1}^ + } {\frac{{1}}{{n_{t - 1}^ + }}{\mathbf{x}}_i^ + {\mathbf{x}}_i^{ + T}} }_{{\text{accumulated part}},{\text{ }}{\mathbf{S}}_{t - 1}^ + } + \frac{{1}}{{n_t^ + }}\underbrace {{\mathbf{x}}_t^ + {\mathbf{x}}_t^{ + T}}_{{\text{new sample}}} \hfill \\
			&= {\mathbf{S}}_{t - 1}^ + - \frac{{1}}{{n_t^ + }}{\mathbf{S}}_{t - 1}^ + + \frac{{1}}{{n_t^ + }}{\mathbf{x}}_t^ + {\mathbf{x}}_t^{ + T} \hfill \\
			&= {\bf{S}}_{t - 1}^ +  + \frac{1}{{n_t^ + }}\left( {{\bf{x}}_t^ + {\bf{x}}_t^{ + T} - {\bf{S}}_{t - 1}^ + } \right).  \\
		\end{aligned} 
	\end{equation}
	By knowing that the newly arriving sample can only belong to one of the two categories,  \eqref{eq.S-2} and \eqref{eq.S+2} are re-written as
	\begin{equation} \label{eq.S-3+3}
		\begin{aligned} 
			{\bf{S}}_t^ -  &= {\bf{S}}_{t - 1}^ -  + \beta _t^ - \left( {{\bf{x}}_t {\bf{x}}_t^{  T} - {\bf{S}}_{t - 1}^ - } \right) ,  \\
			{\bf{S}}_t^ +  &= {\bf{S}}_{t - 1}^ +  + \beta _t^ + \left( {{\bf{x}}_t {\bf{x}}_t^{  T} - {\bf{S}}_{t - 1}^ + } \right) ,
		\end{aligned} 
	\end{equation}
	where $\beta _t^ -  = {\textstyle{{(1 - {y_t})} \over 2n_t^-}}$ and $\beta _t^ +  = {\textstyle{{(1 + {y_t})} \over 2n_t^+}}$ are  indicators to help a selection of either the negative class or the positive class.
	Therefore, {\it the new sample is accumulated} in either ${\mathbf{S}}_t^ -$  or  ${\mathbf{S}}_t^ +$.
	
	{ By combining ${\bf{S}}_t^ -$ and ${\bf{S}}_t^ +$ in  \eqref{eq.S-3+3},} we have
	\begin{equation} \label{eq.full_Rt_term}
		\begin{aligned} 
			{{\mathbf{R}}_t} &= {\mathbf{S}}_t^ -  + {\mathbf{S}}_t^ +  + b{\mathbf{I}} \hfill \\
			&= \underbrace {{\mathbf{S}}_{t - 1}^ -  + {\mathbf{S}}_{t - 1}^ +  + b{\mathbf{I}}}_{{{\mathbf{R}}_{t - 1}}} \\
			&\phantom{=}  - \underbrace {  \beta _t^ - {\mathbf{S}}_{t - 1}^ -  - \beta _t^ + {\mathbf{S}}_{t - 1}^ + }_{  \beta _t {\mathbf{S}}_{t - 1}^{}} \hfill 
			+ \underbrace {\beta _t^ - {\mathbf{x}}_t  {\mathbf{x}}_t^{  T} + \beta _t^ + {\mathbf{x}}_t {\mathbf{x}}_t^{  T}}_{ \beta _t {\mathbf{x}}_t^{}{\mathbf{x}}_t^T} \\
			&= \underbrace {{{\mathbf{R}}_{t - 1}} -  \beta _t{\mathbf{S}}_{t - 1}^{}}_{{{\mathbf{G}}_t}} + { \beta _t}{\mathbf{x}}_t^{}{\mathbf{x}}_t^T, \hfill \\ 
		\end{aligned}
	\end{equation}
	\noindent where
		${\bf{S}}_{t - 1}^{} = {\textstyle{{(1 - {y_t})} \over 2}} {\bf{S}}_{t - 1}^ -  + {\textstyle{{(1 + {y_t})} \over 2}} {\bf{S}}_{t - 1}^ + $,
			${\bf{x}}_t^{} = {\textstyle{{(1 - {y_t})} \over 2}} {\bf{x}}_t^ -  +{\textstyle{{(1 + {y_t})} \over 2}} {\bf{x}}_t^ + $,
				${\beta _t} = \beta _t^ -  + \beta _t^ + $,
				and $y_t \in \{-1, +1\}$. 
				In order to facilitate the utilization of the existing matrix inversion lemma,  \eqref{eq.full_Rt_term} is written  as two summation terms as follows:
				\begin{equation} \label{eq.R_Q_term}
					\begin{aligned}
						{{\bf{G}}_{t}} &= { {{{\mathbf{R}}_{t - 1}} - { \beta _t{\mathbf{S}}_{t - 1}^{}}} } \\
						{{\bf{R}}_{t}} &= {{{\mathbf{G}}_t} +  \beta _t{\mathbf{x}}_t^{}{\mathbf{x}}_t^T}.
					\end{aligned}
				\end{equation}
				Based on   the well-known Sherman-Morrison-Woodbury formulation \cite{henderson1981deriving}: 
				\begin{equation} \label{eq.InversionLemma2}
					{({\mathbf{A}} + {\mathbf{BCD}})^{ - 1}} = {{\mathbf{A}}^{ - 1}} - {{\mathbf{A}}^{ - 1}}{({\mathbf{I}} + {\mathbf{BCD}}{{\mathbf{A}}^{ - 1}})^{ - 1}}{\mathbf{BCD}}{{\mathbf{A}}^{ - 1}},
				\end{equation}
				the inverses of ${\bf{G}}_t$  and ${\bf{R}}_t$ are given by:
				\begin{equation} \label{eq.Qt_inv}
					\begin{aligned}
						{\mathbf{G}}_t^{ - 1} &= {\mathbf{R}}_{t - 1}^{ - 1} + {\mathbf{R}}_{t - 1}^{ - 1}{\left( {{\mathbf{I}} - { \beta _t}{\mathbf{S}}_{t - 1}^{}{\mathbf{R}}_{t - 1}^{ - 1}} \right)^{ - 1}}{ \beta _t}{\mathbf{S}}_{t - 1}^{}{\mathbf{R}}_{t - 1}^{ - 1}, \\
						{\mathbf{R}}_t^{ - 1} &= {\mathbf{G}}_t^{ - 1} - {\mathbf{G}}_t^{ - 1}{\left( {{\mathbf{I}} + { \beta _t}{\mathbf{x}}_t^{}{\mathbf{x}}_t^T{\mathbf{G}}_t^{ - 1}} \right)^{ - 1}}{ \beta _t}{\mathbf{x}}_t^{}{\mathbf{x}}_t^T{\mathbf{G}}_t^{ - 1},
					\end{aligned}
				\end{equation}
				\noindent where
				${{\bf{G}}_{t}^{-1}}$ is derived based on  ${({{{\mathbf{R}}_{t - 1}} - { \beta _t {\mathbf{S}}_{t - 1}^{}}})}$ of  \eqref{eq.R_Q_term} by putting $\mathbf{A} = \mathbf{R}_{t-1}$, $\mathbf{B} = \mathbf{I}$, $\mathbf{C} = -{ \beta _t {\mathbf{S}}_{t - 1}^{}}$ and $\mathbf{D} = \mathbf{I}$. 
				${{\bf{R}}_{t}^{-1}}$ is derived based on $({{{\mathbf{G}}_t} + { \beta _t}{\mathbf{x}}_t^{}{\mathbf{x}}_t^T})$ of  \eqref{eq.R_Q_term} by putting $\mathbf{A} = \mathbf{G}_{t}$, $\mathbf{B} = { \beta _t}{\mathbf{x}}_t^{}$, $\mathbf{C} = \mathbf{I}$ and $\mathbf{D} = {\mathbf{x}}_t^T$. 
				\eqref{eq.Qt_inv} achieves {\it a non-iteratively and exactly reweighting process} by replacing the old weight, $1 \over n_{t-1}$ by the new weight, $1 \over n_t$ for the previously estimated ${{\bf{R}}_{t - 1}^{-1}}$.
			\end{proof} 
			

			\subsubsection{Recursion of the first moment vector ${\bf{z}}_t$}
			
			\begin{theorem}
				\label{thm:recursiveFirst}
				Suppose  ${\bf{z}}_t$ consisting of two recursive terms. The two recursive vectors, ${\bf{z}}_t^ -$ and ${\bf{z}}_t^ +$, are for accumulation of negative and positive class samples. 
				Then, ${\bf{z}}_t = { {{\bf{z}}_t^ -  + {\bf{z}}_t^ + }}$  is identical to 
				$ {\bf{z}}_t = { {1 \over n_t^-}{{\bf{X}}_t^{-T}  {\bf{y}}_t^{-}  + {1 \over n_t^+} {\bf{X}}_t^{+T} {\bf{y}}_t^{+}  } }$.
			\end{theorem}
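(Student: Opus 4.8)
The plan is to mirror, term for term, the argument used for the second-moment matrix in the proof of Theorem \ref{thm:recursiveSecond}, with the rank-one outer products $\mathbf{x}_t\mathbf{x}_t^T$ demoted to the vectors $\mathbf{x}_t y_t$ and the matrices $\mathbf{S}_t^\pm$ replaced by the vectors $\mathbf{z}_t^\pm$. Concretely, set $\mathbf{z}_t^- = \frac{1}{n_t^-}\sum_{i=1}^{n_t^-}\mathbf{x}_i^- y_i^-$ and $\mathbf{z}_t^+ = \frac{1}{n_t^+}\sum_{j=1}^{n_t^+}\mathbf{x}_j^+ y_j^+$, which are precisely the two summands of $\mathbf{z}_t = \frac{1}{n_t^-}\mathbf{X}_t^{-T}\mathbf{y}_t^- + \frac{1}{n_t^+}\mathbf{X}_t^{+T}\mathbf{y}_t^+$ appearing inside \eqref{eq.RTER_theta}. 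The goal is then to establish one-step recursions for $\mathbf{z}_t^\pm$ and to verify by induction on $t$ that iterating them reproduces these closed-form batch sums.

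First I would split on the class of the newly arrived sample. If $\mathbf{x}_t^-$ is negative, write $\mathbf{z}_t^- = \frac{n_{t-1}^-}{n_t^-}\sum_{i=1}^{n_{t-1}^-}\frac{1}{n_{t-1}^-}\mathbf{x}_i^- y_i^- + \frac{1}{n_t^-}\mathbf{x}_t^- y_t^-$, recognize the accumulated part as $\mathbf{z}_{t-1}^-$, and use the identity $\frac{n_{t-1}^-}{n_t^-} = 1 - \frac{1}{n_t^-}$ already exploited in \eqref{eq.S-2} to get $\mathbf{z}_t^- = \mathbf{z}_{t-1}^- + \frac{1}{n_t^-}(\mathbf{x}_t^- y_t^- - \mathbf{z}_{t-1}^-)$, while $\mathbf{z}_t^+ = \mathbf{z}_{t-1}^+$ is untouched; the symmetric statement holds when the new sample is positive. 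Encoding which class was updated with the same indicators $\beta_t^- = (1-y_t)/2n_t^-$ and $\beta_t^+ = (1+y_t)/2n_t^+$ as in \eqref{eq.S-3+3} — valid because $y_t\in\{-1,+1\}$ forces one of $(1-y_t)/2$, $(1+y_t)/2$ to vanish and the other to equal $1$ — both cases collapse into
\begin{equation*}
\mathbf{z}_t^- = \mathbf{z}_{t-1}^- + \beta_t^-(\mathbf{x}_t y_t - \mathbf{z}_{t-1}^-), \qquad \mathbf{z}_t^+ = \mathbf{z}_{t-1}^+ + \beta_t^+(\mathbf{x}_t y_t - \mathbf{z}_{t-1}^+),
\end{equation*}
and summing gives $\mathbf{z}_t = \mathbf{z}_{t-1} - \beta_t\mathbf{z}_{t-1}^{} + \beta_t\mathbf{x}_t y_t$ with $\mathbf{z}_{t-1}^{} = \frac{1-y_t}{2}\mathbf{z}_{t-1}^- + \frac{1+y_t}{2}\mathbf{z}_{t-1}^+$ and $\beta_t = \beta_t^- + \beta_t^+$, in exact parallel to the $\mathbf{G}_t,\mathbf{R}_t$ decomposition of \eqref{eq.full_Rt_term}. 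Unlike the second-moment case, no matrix-inversion lemma is invoked, so this update is already closed-form.

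The induction is then routine: as base case take the first time a sample of each class appears, where $\mathbf{z}^\pm$ equals $\mathbf{x}y$ for that single sample and matches the one-term batch sum; assuming $\mathbf{z}_{t-1}^\pm$ equals the batch average over the first $n_{t-1}^\pm$ samples of each class, substituting into the displayed recursion and clearing $1/n_t^\pm$ telescopes the update back to $\frac{1}{n_t^\pm}\sum \mathbf{x} y$ over $n_t^\pm$ samples, which is the claimed identity $\mathbf{z}_t = \mathbf{z}_t^- + \mathbf{z}_t^+ = \frac{1}{n_t^-}\mathbf{X}_t^{-T}\mathbf{y}_t^- + \frac{1}{n_t^+}\mathbf{X}_t^{+T}\mathbf{y}_t^+$. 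The only point requiring care is the bookkeeping of the two counters $n_t^-, n_t^+$ and the empty-sum convention before a class has been observed (one adopts $\mathbf{z}_0^\pm = \mathbf{0}$, $n_0^\pm = 0$, with the first arrival of a class initializing rather than averaging, which avoids the division by zero); everything else is a direct transcription of the second-moment argument. Combined with Theorem \ref{thm:recursiveSecond}, this yields $\mathbf{w}_t = \mathbf{R}_t^{-1}\mathbf{z}_t$ of \eqref{eq.RTER_theta_simplified} and completes the proof of Theorem \ref{thm:recursiveFull}.
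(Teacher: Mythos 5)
Your proposal is correct and follows essentially the same route as the paper: the paper's proof of this theorem simply states that, ``similar to \eqref{eq.S-3+3},'' the moment vectors satisfy the recursions in \eqref{eq.t-t+}, which is precisely the term-for-term transcription of the second-moment argument (the $\frac{n_{t-1}^\pm}{n_t^\pm}=1-\frac{1}{n_t^\pm}$ split, the $\beta_t^\pm$ indicators, and the implicit induction back to the batch sums) that you carry out explicitly. Your write-up is in fact more complete than the paper's one-line proof, since you also spell out the base case and the empty-sum convention $\mathbf{z}_0^\pm=\mathbf{0}$, $n_0^\pm=0$ that the paper only records in Algorithm~\ref{alg:RTER}.
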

			
			\begin{proof}
				Similar to \eqref{eq.S-3+3}, 
				these moment vectors can be easily expressed in terms of their previous estimations as
				\begin{equation} \label{eq.t-t+}
					\begin{aligned} 
						{\bf{z}}_t^ -  &= {\bf{z}}_{t - 1}^ -  + \beta _t^ - \left( {{\bf{x}}_t y_t  - {\bf{z}}_{t - 1}^ - } \right) , \\
						{\bf{z}}_t^ +  &= {\bf{z}}_{t - 1}^ +  + \beta _t^ + \left( {{\bf{x}}_t y_t  - {\bf{z}}_{t - 1}^ + } \right) .
					\end{aligned}
				\end{equation}
			\end{proof} 

			\subsection{Summary of the proposed NR-RLS algorithm}
			The proposed NR-RLS algorithm is summarized in the pseudo-code form (see Algorithm~\ref{alg:RTER}).
			The main contribution of the proposed NR-RLS over the existing IR-RLS \cite{jang2017online} lies on the utilization of a vectorized weight matrix update to replace the iterative nature of the sample-wise weight update. 
			Therefore, the proposed NR-RLS achieves a constant time complexity $\mathcal{O}(2d^2)$ similar to the complexity $\mathcal{O}(d^2)$ of RLS.
			This solves the linearly growing computational problem of IR-RLS, which has a growing time complexity of $\mathcal{O}(n_td^2)$ caused by the iterative inversion.
			
			\begin{lemma}
				\label{lem:usefullemma}
				The proposed NR-RLS classifier asymptotically recover the  optimal
				Bayes classifier and can easily be extended to the multiclass classification.
			\end{lemma}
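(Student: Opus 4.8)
\emph{Asymptotic Bayes consistency.} I would first invoke Theorem~\ref{thm:recursiveFull} to replace the online iterate $\pmb{w}_t$ by the batch minimizer of the regularized class-balanced criterion~\eqref{eq.TER_J}, reducing the question to the $t\to\infty$ behaviour of that criterion. Because the TER weighting renormalizes each class to unit total mass, its empirical objective with $\tau=0,\eta=1$ and $b\to0$ is the average of the two within-class empirical square risks, so by the strong law of large numbers it converges to $\tfrac12\,\mathbb{E}[(y-f(\mathbf{x}))^2\mid y=-1]+\tfrac12\,\mathbb{E}[(y-f(\mathbf{x}))^2\mid y=+1]$. Minimizing this population risk pointwise in the scalar $f(\mathbf{x})$ --- with $\eta(\mathbf{x})=\Pr(y=1\mid\mathbf{x})$ and $\pi_\pm=\Pr(y=\pm1)$ --- gives the unconstrained optimum
\begin{equation}
	f^\star(\mathbf{x})=\frac{\eta(\mathbf{x})/\pi_+ - (1-\eta(\mathbf{x}))/\pi_-}{\eta(\mathbf{x})/\pi_+ + (1-\eta(\mathbf{x}))/\pi_-},
\end{equation}
and $f^\star(\mathbf{x})>0$ exactly when $p(\mathbf{x}\mid y=1)>p(\mathbf{x}\mid y=-1)$, i.e. $\mathbf{x}\mapsto\operatorname{sign}f^\star(\mathbf{x})$ is the Bayes rule for the prior-corrected $0$--$1$ loss that TER targets (and, replacing $1/n^\pm$ by general misclassification costs, the matching cost-sensitive Bayes rule). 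Combining this with the classical fact that the square loss is classification-calibrated --- so that a vanishing excess surrogate risk forces a vanishing excess $0$--$1$ risk --- then yields $\operatorname{sign}(\mathbf{x}^T\pmb{w}_t)\to\operatorname{sign}(f^\star(\mathbf{x}))$, provided the ridge term is annealed ($b\to0$) and the feature representation is rich enough to realize $f^\star$ (a well-specified linear model, or a universal representation as $d\to\infty$).

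\emph{Multiclass extension.} For $K>2$ classes I would replace the two indicators $\beta_t^-,\beta_t^+$ by $K$ indicators $\beta_t^{(k)}=\mathbf{1}\{y_t=k\}/n_t^{(k)}$ and run $K$ first-moment recursions $\mathbf{z}_t^{(1)},\dots,\mathbf{z}_t^{(K)}$, each exactly the recursion of Theorem~\ref{thm:recursiveFirst} with the one-hot target for class $k$ (the columns with $k\neq y_t$ stay fixed since $\beta_t^{(k)}=0$). The weighted second-moment matrix $\mathbf{R}_t=\sum_{k=1}^{K}\mathbf{S}_t^{(k)}+b\mathbf{I}$ still obeys the split~\eqref{eq.R_Q_term}: an arriving sample updates only the block $\mathbf{S}_t^{(y_t)}$, so with $\beta_t=\beta_t^{(y_t)}$ and $\mathbf{S}_{t-1}=\mathbf{S}_{t-1}^{(y_t)}$ one has $\mathbf{R}_t=(\mathbf{R}_{t-1}-\beta_t\mathbf{S}_{t-1})+\beta_t\mathbf{x}_t\mathbf{x}_t^T$, and the two Sherman--Morrison--Woodbury updates in~\eqref{eq.Qt_inv} transfer verbatim. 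The decision rule becomes $\arg\max_{k}\,\mathbf{x}^T\mathbf{R}_t^{-1}\mathbf{z}_t^{(k)}$ and the per-step cost rises only to $\mathcal{O}(2d^2+dK)$, still constant in $t$.

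\emph{Main obstacle.} The multiclass part is essentially bookkeeping; the delicate point is the consistency claim. It holds only for the \emph{reweighted} Bayes rule (balanced-error or cost-sensitive optimal, not the vanilla posterior-threshold rule), it needs the regularization to vanish at an appropriate rate, and --- since NR-RLS fits a linear predictor --- it needs either well-specification or an expanding/universal feature family for $f^\star$ to be attainable. I would therefore state these as explicit regularity assumptions alongside the lemma and cite a standard surrogate-risk consistency result to bridge square-loss minimization and the $0$--$1$ risk.
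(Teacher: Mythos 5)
Your proposal takes essentially the same route as the paper: its Appendix on the Bayes relationship derives the pointwise population minimizer of the class-weighted square loss, $f_{WLS}^{*}(\mathbf{x})=\bigl(p(1|\mathbf{x})w(1)-p(-1|\mathbf{x})w(-1)\bigr)/\bigl(p(1|\mathbf{x})w(1)+p(-1|\mathbf{x})w(-1)\bigr)$, and observes that its sign coincides with the \emph{weighted} Bayes rule --- exactly your $f^{\star}$ with $w(\pm1)\propto 1/\pi_{\pm}$ --- while the multiclass appendix gives the same one-weight-vector-per-class extension with constant per-step cost. If anything you are more careful than the paper, which minimizes over all measurable $f$ and never states the vanishing-regularization, well-specification, or calibration conditions you flag; your shared-$\mathbf{R}_t$ multiclass update is only a minor variant of the paper's one-vs-all scheme.
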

			\begin{proof}
				The proof is in the  Appendix \ref{sec.A} and \ref{sec.C}.
			\end{proof}

			\begin{algorithm}[t!]
				\caption{Non-iteratively Reweighted Recursive Least-Squares}
				\label{alg:RTER}
				\begin{algorithmic}
					
					\STATE {\bfseries Input:} ${\bf{x}}_t \in \mathbb{R}^{d}$, $y_t \in \{-1, +1\}$
					\STATE {\bfseries Initialize:} 
					$n_{0}^ -  = n_{0}^ += 0$,
					${\bf{S}}_0^ -  = {\bf{S}}_0^ + = \mathbf{0} $,  
					${\bf{z}}_0^ -  = {\bf{z}}_0^ + = \mathbf{0} $, 
					${\bf{R}}_0^{ - 1} = {1 \over b}{\bf{I}}$

					\FOR{$t=1,\dots$ }
					\STATE Update	$\scriptsize n_t^ -  = n_{t - 1}^ -  + {\textstyle{{(1 - {y_t})} \over 2 }} $, 
					$\scriptsize n_t^ +  = n_{t - 1}^ +  + {\textstyle{{(1 + {y_t})} \over 2 }} $
					
					\STATE Set the following: \\
					\hskip0.5em ${\beta _t} = \beta _t^ -  + \beta _t^ + $, $\beta _t^ -  = {\textstyle{{(1 - {y_t})} \over 2 n_t^ -}},\beta _t^ +  = {\textstyle{{(1 + {y_t})} \over 2 n_t^ +}}$ \\
					
					\hskip0.5em $\scriptsize {\bf{S}}_{t - 1}^{} = {\textstyle{{(1 - {y_t})} \over 2 }} {\bf{S}}_{t - 1}^ -  + {\textstyle{{(1 + {y_t})} \over 2 }} {\bf{S}}_{t - 1}^ +  $ \\

					\STATE Update the following: \\
					\hskip0.5em $ {\mathbf{G}}_t^{ - 1} = {\mathbf{R}}_{t - 1}^{ - 1} + {\mathbf{R}}_{t - 1}^{ - 1}{\left( {{\mathbf{I}} - {\beta _t}{\mathbf{S}}_{t - 1}^{}{\mathbf{R}}_{t - 1}^{ - 1}} \right)^{ - 1}}{\beta _t}{\mathbf{S}}_{t - 1}^{}{\mathbf{R}}_{t - 1}^{ - 1}$	
					\\
					\hskip0.5em ${\mathbf{R}}_t^{ - 1} = {\mathbf{G}}_t^{ - 1} - {\mathbf{G}}_t^{ - 1}{\left( {{\mathbf{I}} + {\beta _t}{\mathbf{x}}_t^{}{\mathbf{x}}_t^T{\mathbf{G}}_t^{ - 1}} \right)^{ - 1}}{\beta _t}{\mathbf{x}}_t^{}{\mathbf{x}}_t^T{\mathbf{G}}_t^{ - 1}$
					\\
					\hskip0.5em	${\bf{S}}_t^ -  = {\bf{S}}_{t - 1}^ -  + \beta _t^ - \left( {{\bf{x}}_t {\bf{x}}_t^{  T} - {\bf{S}}_{t - 1}^ - } \right)$,
					\\
					\hskip0.5em	${\bf{S}}_t^ +  = {\bf{S}}_{t - 1}^ +  + \beta _t^ + \left( {{\bf{x}}_t {\bf{x}}_t^{  T} - {\bf{S}}_{t - 1}^ + } \right)$
					\\
					\hskip0.5em	$\scriptsize {\bf{z}}_t^ -  = {\bf{z}}_{t - 1}^ -  + \beta _t^ - \left( {{\bf{x}}_t y_t  - {\bf{z}}_{t - 1}^ - } \right)$, 
					\\
					\hskip0.5em	$\scriptsize {\bf{z}}_t^ +  = {\bf{z}}_{t - 1}^ +  + \beta _t^ + \left( {{\bf{x}}_t y_t  - {\bf{z}}_{t - 1}^ + } \right)$
					\\
					\hskip0.5em	${{\pmb{w }}_t} = {\mathbf{R}}_t^{ - 1}{\mathbf{z}}_t^{}$, 
					${\mathbf{z}}_t = {{\mathbf{z}}_t^ -} +{{\mathbf{z}}_t^ +}  $
					
					\ENDFOR
					
				\end{algorithmic} 
			\end{algorithm}

			\begin{table*}[t!] \vspace{-0.5cm}
				\caption{Summary of the 31 real-world data sets for binary class imbalance classification.} \label{tbl.dataSummary} \vspace{-0.3cm}
				\vskip 0.15in
				\begin{center}
					\begin{small}
						\begin{sc}
							\fontsize{6}{7}\selectfont
							\begin{tabular}{l@{\hskip0.5pt}llllll@{\hskip3pt}lllll}
								\toprule
								& No. {\rule{0pt}{3ex}} & Data sets          & Size    & Dimension & Ratio  
								& No. {\rule{0pt}{3ex}} & Data sets          & Size    & Dimension & Ratio   \\ \midrule
								& 1 
								{\rule{0pt}{3ex}}  		   
								& Monks-3     	   & 122        & 6      & 0.98       
								& 17  & Blood-transfusion  & 748        & 4      & 0.31       	\\
								
								& 2   & Monks-1            & 124        & 6      & 1.01       
								& 18  & Pima-diabetes      & 768        & 8      & 0.54        		\\
								
								& 3   & Monks-2            & 169        & 6      & 0.62       
								& 19  & Mammographic       & 830        & 5      & 0.95      	\\
								
								& 4   & Wpbc               & 194        & 33     & 0.31      
								& 20  & Tic-tac-toe        & 958        & 9      & 0.53      				\\
								
								& 5   & Parkinsons         & 195        & 22     & 3.15      
								& 21  & Statlog-german     & 1,000      & 24     & 2.34      	\\
								
								& 6   & Sonar              & 208        & 60     & 1.16       
								& 22  & Ozone-eight        & 1,847      & 72   & 0.07        \\
								
								& 7   & SPECTF-heart       & 267        & 44     & 3.89       
								& 23  & Ozone-one          & 1,848      & 72   & 0.03       \\
								
								& 8   & StatLog-heart      & 270        & 13     & 0.80       
								& 24  & 20News-talk        & 1,848 	    & 3      & 1.04       \\
								
								& 9   & BUPA-liver         & 345        & 6      & 1.39       
								& 25  & 20News-comp        & 1,937      & 3      & 0.98       \\
								
								& 10  & Ionosphere         & 351        & 34     & 0.56       
								& 26  & 20News-sci         & 1,971      & 3      & 1.01       \\
								
								& 11  & Votes              & 435        & 16     & 1.60       
								& 27  & Spambase           & 4,601      & 57     & 0.65       \\
								
								& 12  & Musk-clean-1       & 476        & 166    & 0.77       
								& 28  & Mushroom           & 5,644      & 22   & 1.62       \\
								
								& 13  & Wdbc               & 569        & 30     & 1.69       
								& 29  & Cod-rna            & 59,535     & 8      & 0.50       \\
								
								& 14  & Credit-app         & 653        & 15     & 1.21       
								& 30  & Ijcnn1             & 141,691    & 22     & 0.11       \\
								
								& 15  & Breast-cancer-W    & 683        & 9      & 0.54       
								& 31  & Skin-nonskin       & 245,057    & 3      & 0.26       \\
								
								& 16  & Statlog-australian & 690        & 14     & 0.81       
								&     &      &     &      &      \\ \bottomrule
							\end{tabular} 
						\end{sc}
					\end{small}
				\end{center}
				\vskip -0.1in \vspace{-0.9cm}
			\end{table*}
			
			\begin{figure*}[!t] 
				\vskip 0.2in
				\begin{center} 
					\centering 
					\subfigure[Linear decision boundary (order 1)]{ 
						\includegraphics[width=0.37\textwidth]{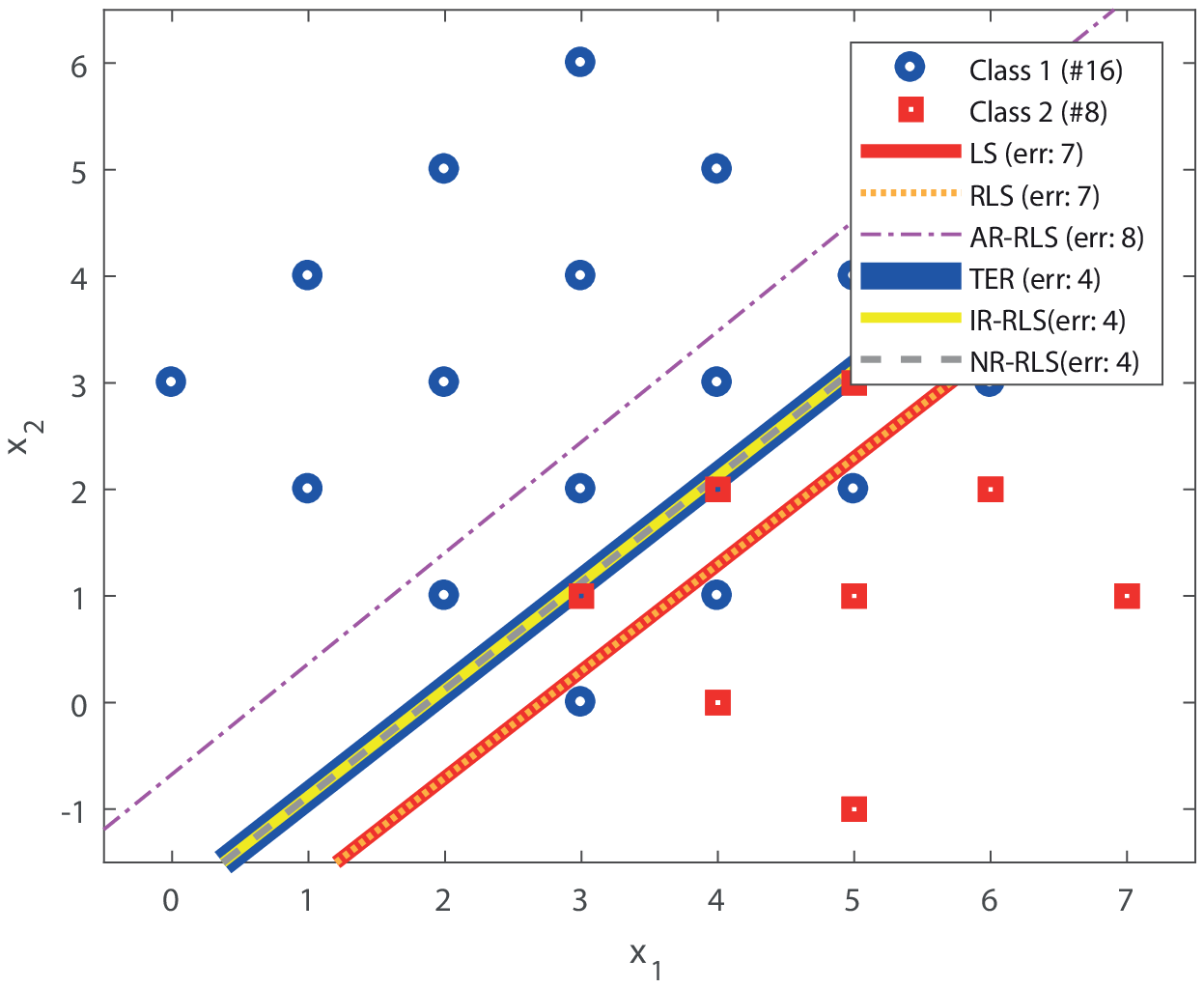}%
					} \hspace{1.5cm}
					\subfigure[Non-linear decision boundary (order 4)]{ 
						\includegraphics[width=0.37\textwidth]{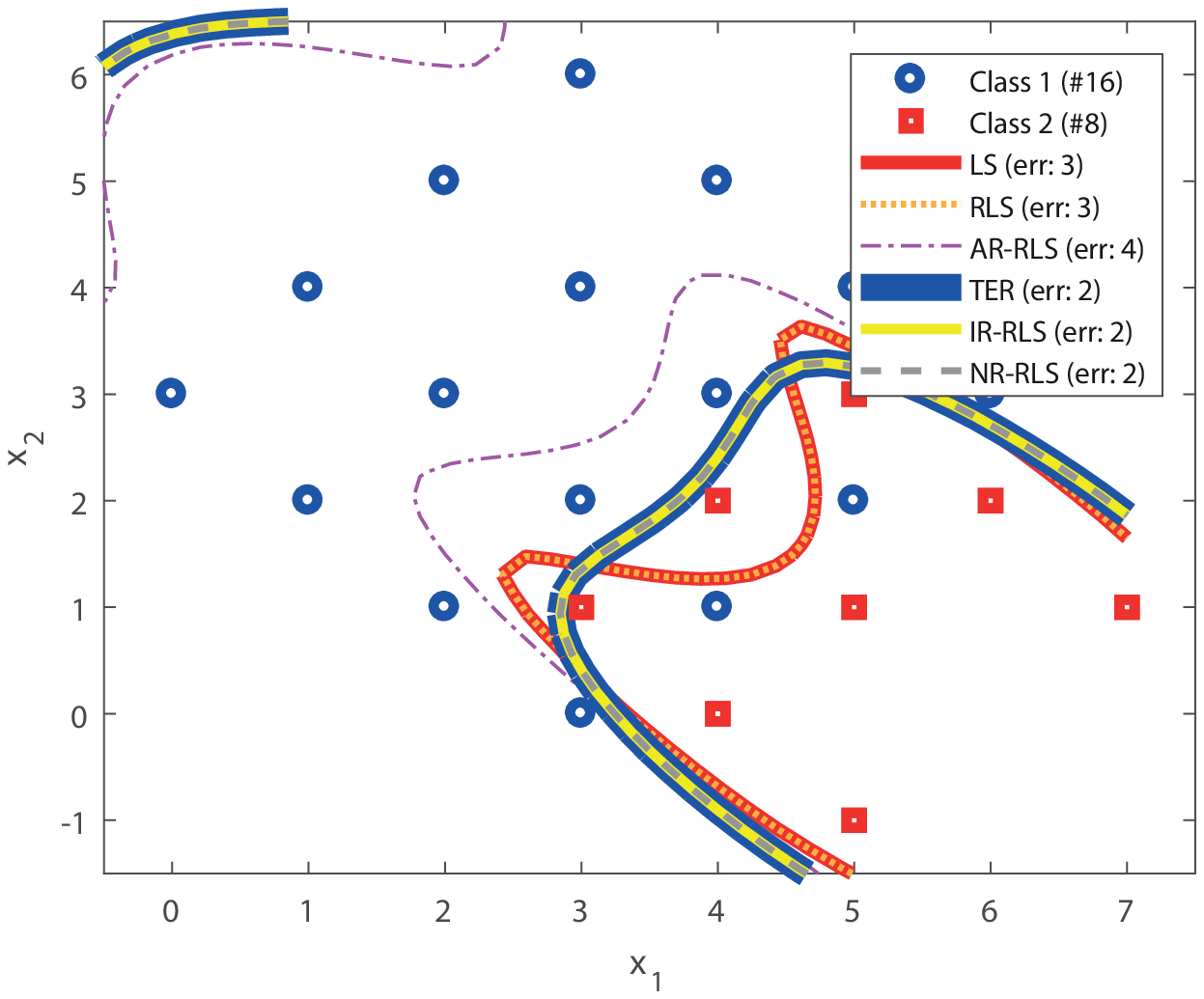}%
					} \vspace{-0.3cm}
					\caption{Decision boundaries of the no weighting based solutions (e.g., LS and RLS) and the class-specific weighting based solutions (e.g., TER, AR-RLS, IR-RLS and NR-RLS) at different polynomial orders: (a) at order 1 and (b) at order 4.}%
					\label{fig.twoDimExample}%
				\end{center}
				\vskip -0.2in
			\end{figure*}

					\section{Experiments} \label{sec.experiments}
					In this section, we perform an empirical evaluation of the proposed Non-iteratively Reweighted Recursive Least-Squares (NR-RLS) based on real-world data sets obtained from 
					the public domain (UCI machine learning repository \cite{Lichman2013} and LIBSVM website \cite{chang2011libsvm}). The data sets are divided into small, medium and large scale data groups to observe the impact of data sizes upon online learning.
					The goals of our experiments are
					(i) to show the impact of the class-specific weights in a synthetic class imbalance data set;
					(ii) to observe the convergence between NR-RLS and TER;
					(iii) to compare the training CPU processing time of NR-RLS with competing algorithms  
					such as RLS, Approximately Reweighted RLS (AR-RLS) and Iteratively Reweighted RLS (IR-RLS);
					(iv) to compare the accuracy performance of  NR-RLS with competing state-of-the-arts such as 
					PErceptron (PE) \cite{rosenblatt1958perceptron}, 
					online Passive-Aggressive learning (PA) \cite{crammer2006online}, 
					online Confident Weighted learning (CW) \cite{dredze2008confidence},
					Adaptive Regularization Of Weights (AROW) \cite{crammer2009adaptive} and
					Adaptive regularized Cost-sensitive Online Gradient descent (ACOG) \cite{zhao2018adaptive}.

					\begin{table*}[t]   \vspace{-0.5cm}
						\caption{Comparison of average G-means and (ranks).} \vspace{-0.7cm}
						\label{tbl.avgacc}
						\vskip 0.2in
						\begin{center}
							\begin{small}
								\begin{sc} \fontsize{6}{6}\selectfont
									\begin{tabular}{l@{\hskip3pt}l@{\hskip3pt}l@{\hskip3pt}l@{\hskip3pt}l@{\hskip3pt}l@{\hskip3pt}l@{\hskip3pt}l@{\hskip3pt}l@{\hskip3pt}}
										\toprule
										\multirow{3}{*}{No.} &   \multicolumn{7}{c}{G-mean $\pm$ std (rank)}  {\rule{0pt}{3ex}}             \\ \cline{2-9} 
										{\rule{0pt}{3ex}} &     \multicolumn{2}{c|}{The First-order}                       &                    \multicolumn{5}{c}{The Second-order}                                                                                 \\ \cline{2-9} 
										{\rule{0pt}{3ex}}                          & {PE}    & {PA}    & {CW}    & {AROW}   & {ACOG}    & {RLS}   & {AR-RLS}  & {NR-RLS and IR-RLS} \\ \midrule
										1   & 0.584   $\pm$ 0.063 (7)  & 0.593   $\pm$ 0.056 (6)  & 0.658   $\pm$ 0.073 (5)  & 0.713   $\pm$ 0.058 (4)  & 0.533   $\pm$ 0.172 (8)  & \textbf{0.771   $\pm$ 0.036 (1.5)} & 0.768   $\pm$ 0.048 (3)  & \textbf{0.771 $\pm$ 0.036} (1.5) \\
										2   & 0.533 $\pm$ 0.069 (7)    & 0.550 $\pm$ 0.063 (6)    & 0.571 $\pm$ 0.079 (5)    & 0.599 $\pm$ 0.040 (4)    & 0.509 $\pm$ 0.121 (8)    & 0.659 $\pm$ 0.065 (2)     & 0.657 $\pm$ 0.058 (3)    & \textbf{0.662 $\pm$ 0.06}1 (1)     \\
										3   & 0.472 $\pm$ 0.046 (5)    & 0.475 $\pm$ 0.046 (4)    & \textbf{0.514 $\pm$ 0.061 (1)}    & 0.246 $\pm$ 0.192 (7)    & 0.218 $\pm$ 0.191 (8)    & 0.330 $\pm$ 0.108 (6)     & 0.492 $\pm$ 0.088 (3)    & 0.511 $\pm$ 0.056 (2)     \\
										4   & 0.440 $\pm$ 0.093 (5)    & 0.440 $\pm$ 0.075 (6)    & 0.522 $\pm$ 0.087 (4)    & 0.285 $\pm$ 0.230 (7)    & 0.114 $\pm$ 0.176 (8)    & 0.639 $\pm$ 0.081 (3)     & 0.687 $\pm$ 0.058 (2)    & \textbf{0.688 $\pm$ 0.047} (1)     \\
										5   & 0.508 $\pm$ 0.058 (6)    & 0.470 $\pm$ 0.045 (7)    & 0.657 $\pm$ 0.044 (4)    & 0.584 $\pm$ 0.103 (5)    & 0.234 $\pm$ 0.254 (8)    & 0.751 $\pm$ 0.067 (3)     & 0.772 $\pm$ 0.055 (2)    & \textbf{0.775 $\pm$ 0.050} (1)     \\
										6   & 0.548 $\pm$ 0.039 (6)    & 0.529 $\pm$ 0.051 (8)    & 0.675 $\pm$ 0.050 (4)    & 0.674 $\pm$ 0.049 (5)    & 0.542 $\pm$ 0.122 (7)    & 0.702 $\pm$ 0.039 (3)     & 0.710 $\pm$ 0.040 (2)    & \textbf{0.729 $\pm$ 0.038} (1)     \\
										7   & 0.423 $\pm$ 0.070 (5)    & 0.407 $\pm$ 0.072 (6)    & 0.540 $\pm$ 0.074 (3)    & 0.077 $\pm$ 0.112 (7)    & 0.046 $\pm$ 0.098 (8)    & 0.450 $\pm$ 0.103 (4)     & 0.630 $\pm$ 0.073 (2)    & \textbf{0.649 $\pm$ 0.055} (1)     \\
										8   & 0.576 $\pm$ 0.033 (7)    & 0.540 $\pm$ 0.031 (8)    & 0.721 $\pm$ 0.039 (5)    & 0.786 $\pm$ 0.029 (4)    & 0.644 $\pm$ 0.176 (6)    & 0.820 $\pm$ 0.028 (3)     & 0.821 $\pm$ 0.026 (2)    & \textbf{0.823 $\pm$ 0.029} (1)     \\
										9   & 0.500 $\pm$ 0.028 (6)    & 0.498 $\pm$ 0.028 (7)    & 0.577 $\pm$ 0.045 (4)    & 0.543 $\pm$ 0.061 (5)    & 0.158 $\pm$ 0.192 (8)    & 0.629 $\pm$ 0.024 (3)     & 0.632 $\pm$ 0.026 (2)    & \textbf{0.639 $\pm$ 0.031} (1)     \\
										10  & 0.532 $\pm$ 0.033 (6)    & 0.509 $\pm$ 0.037 (7)    & 0.762 $\pm$ 0.022 (4)    & 0.731 $\pm$ 0.039 (5)    & 0.494 $\pm$ 0.192 (8)    & 0.786 $\pm$ 0.048 (3)     & 0.795 $\pm$ 0.041 (2)    & \textbf{0.796 $\pm$ 0.040} (1)     \\
										11  & 0.815 $\pm$ 0.025 (8)    & 0.837 $\pm$ 0.025 (7)    & 0.909 $\pm$ 0.023 (5)    & 0.925 $\pm$ 0.014 (4)    & 0.863 $\pm$ 0.079 (6)    & 0.944 $\pm$ 0.011 (3)     & \textbf{0.946 $\pm$ 0.012 (1.5)}  & \textbf{0.946 $\pm$ 0.012} (1.5)     \\
										12  & 0.503 $\pm$ 0.036 (6)    & 0.488 $\pm$ 0.035 (7)    & 0.735 $\pm$ 0.025 (5)    & 0.736 $\pm$ 0.026 (4)    & 0.483 $\pm$ 0.217 (8)    & 0.750 $\pm$ 0.028 (3)     & 0.754 $\pm$ 0.025 (2)    & \textbf{0.806 $\pm$ 0.022} (1)     \\
										13  & 0.767 $\pm$ 0.027 (7)    & 0.718 $\pm$ 0.018 (8)    & 0.939 $\pm$ 0.011 (3)    & 0.934 $\pm$ 0.011 (5)    & 0.768 $\pm$ 0.145 (6)    & 0.938 $\pm$ 0.014 (4)     & 0.951 $\pm$ 0.015 (2)    & \textbf{0.956 $\pm$ 0.011} (1)     \\
										14  & 0.645 $\pm$ 0.058 (7)    & 0.572 $\pm$ 0.028 (8)    & 0.765 $\pm$ 0.023 (5)    & 0.845 $\pm$ 0.015 (4)    & 0.684 $\pm$ 0.168 (6)    & \textbf{0.869 $\pm$ 0.012 (1.5)}   & 0.868 $\pm$ 0.013 (3)    & \textbf{0.869 $\pm$ 0.012} (1.5)   \\
										15  & 0.916 $\pm$ 0.017 (7)    & 0.921 $\pm$ 0.013 (6)    & 0.934 $\pm$ 0.019 (5)    & 0.948 $\pm$ 0.009 (4)    & 0.834 $\pm$ 0.096 (8)    & 0.950 $\pm$ 0.009 (3)     & \textbf{0.957 $\pm$ 0.009 (1)}    & 0.956 $\pm$ 0.009 (2)     \\
										16  & 0.619 $\pm$ 0.038 (7)    & 0.541 $\pm$ 0.030 (8)    & 0.764 $\pm$ 0.022 (5)    & 0.850 $\pm$ 0.013 (4)    & 0.738 $\pm$ 0.115 (6)    & \textbf{0.864 $\pm$ 0.013 (1.5)}   & 0.863 $\pm$ 0.014 (3)    & \textbf{0.864 $\pm$ 0.013} (1.5)   \\
										17  & 0.485 $\pm$ 0.046 (4)    & 0.456 $\pm$ 0.040 (5)    & 0.526 $\pm$ 0.057 (3)    & 0.237 $\pm$ 0.100 (7)    & 0.103 $\pm$ 0.139 (8)    & 0.286 $\pm$ 0.043 (6)     & 0.682 $\pm$ 0.023 (2)    & \textbf{0.685 $\pm$ 0.022} (1)     \\
										18  & 0.555 $\pm$ 0.022 (6)    & 0.502 $\pm$ 0.021 (7)    & 0.625 $\pm$ 0.021 (5)    & 0.667 $\pm$ 0.035 (4)    & 0.435 $\pm$ 0.224 (8)    & 0.698 $\pm$ 0.021 (3)     & 0.739 $\pm$ 0.017 (2)    & \textbf{0.742 $\pm$ 0.014} (1)     \\
										19  & 0.598 $\pm$ 0.019 (7)    & 0.499 $\pm$ 0.031 (8)    & 0.722 $\pm$ 0.023 (5)    & 0.798 $\pm$ 0.010 (4)    & 0.603 $\pm$ 0.218 (6)    & 0.813 $\pm$ 0.019 (2.5)   & \textbf{0.814 $\pm$ 0.017 (1)}    & 0.813 $\pm$ 0.019 (2.5)   \\
										20  & 0.505 $\pm$ 0.025 (5)    & 0.505 $\pm$ 0.017 (4)    & 0.511 $\pm$ 0.024 (3)    & 0.334 $\pm$ 0.093 (7)    & 0.097 $\pm$ 0.137 (8)    & 0.427 $\pm$ 0.049 (6)     & 0.576 $\pm$ 0.016 (2)    & \textbf{0.577 $\pm$ 0.021} (1)     \\
										21  & 0.601 $\pm$ 0.027 (6)    & 0.591 $\pm$ 0.028 (8)    & 0.594 $\pm$ 0.028 (7)    & 0.622 $\pm$ 0.036 (4)    & 0.616 $\pm$ 0.052 (5)    & 0.638 $\pm$ 0.029 (3)     & 0.714 $\pm$ 0.021 (2)    & \textbf{0.715 $\pm$ 0.019} (1)     \\
										22  & 0.364 $\pm$ 0.066 (4)    & 0.296 $\pm$ 0.061 (5)    & 0.571 $\pm$ 0.045 (3)    & 0.072 $\pm$ 0.106 (6)    & 0.000 $\pm$ 0.000 (8)    & 0.007 $\pm$ 0.029 (7)     & 0.807 $\pm$ 0.033 (2)    & \textbf{0.820 $\pm$ 0.018} (1)     \\
										23  & 0.240 $\pm$ 0.104 (4)    & 0.161 $\pm$ 0.101 (5)    & 0.423 $\pm$ 0.100 (3)    & 0.009 $\pm$ 0.041 (6.5)  & 0.009 $\pm$ 0.041 (6.5)  & 0.000 $\pm$ 0.000 (8)     & 0.788 $\pm$ 0.036 (2)    & \textbf{0.819 $\pm$ 0.035} (1)     \\
										24  & \textbf{0.500 $\pm$ 0.014 (1)}    & 0.498 $\pm$ 0.017 (2)    & 0.481 $\pm$ 0.070 (4)    & 0.456 $\pm$ 0.042 (5)    & 0.326 $\pm$ 0.171 (8)    & 0.359 $\pm$ 0.194 (7)     & 0.374 $\pm$ 0.178 (6)    & 0.483 $\pm$ 0.016 (3)     \\
										25  & 0.574 $\pm$ 0.013 (6)    & 0.548 $\pm$ 0.013 (7)    & 0.477 $\pm$ 0.096 (8)    & 0.679 $\pm$ 0.037 (4)    & 0.633 $\pm$ 0.074 (5)    & \textbf{0.695 $\pm$ 0.039 (1.5)}   & 0.688 $\pm$ 0.050 (3)    & \textbf{0.695 $\pm$ 0.039} (1.5)   \\
										26  & 0.714 $\pm$ 0.014 (7)    & 0.689 $\pm$ 0.012 (8)    & 0.722 $\pm$ 0.065 (6)    & 0.850 $\pm$ 0.015 (4)    & 0.745 $\pm$ 0.055 (5)    & \textbf{0.892 $\pm$ 0.017 (1.5)}   & 0.877 $\pm$ 0.032 (3)    & \textbf{0.892 $\pm$ 0.017} (1.5)   \\
										27  & 0.610 $\pm$ 0.036 (8)    & 0.630 $\pm$ 0.008 (7)    & 0.865 $\pm$ 0.007 (4)    & 0.869 $\pm$ 0.006 (3)    & 0.671 $\pm$ 0.119 (6)    & 0.864 $\pm$ 0.010 (5)     & 0.899 $\pm$ 0.007 (2)    & \textbf{0.900 $\pm$ 0.006} (1)     \\
										28  & 0.679 $\pm$ 0.022 (7)    & 0.510 $\pm$ 0.010 (8)    & \textbf{0.984 $\pm$ 0.001 (1)}    & 0.916 $\pm$ 0.009 (5)    & 0.800 $\pm$ 0.081 (6)    & 0.932 $\pm$ 0.009 (4)     & 0.949 $\pm$ 0.004 (3)    & 0.950 $\pm$ 0.003 (2)     \\
										29  & 0.763 $\pm$ 0.002 (6)    & 0.718 $\pm$ 0.005 (7)    & 0.863 $\pm$ 0.002 (5)    & 0.930 $\pm$ 0.001 (3)    & 0.714 $\pm$ 0.304 (8)    & 0.928 $\pm$ 0.001 (4)     & 0.939 $\pm$ 0.001 (2)    & \textbf{0.940 $\pm$ 0.001} (1)     \\
										30  & 0.631 $\pm$ 0.004 (4)    & 0.627 $\pm$ 0.005 (5)    & 0.692 $\pm$ 0.004 (3)    & 0.508 $\pm$ 0.009 (6)    & 0.173 $\pm$ 0.085 (8)    & 0.301 $\pm$ 0.007 (7)     & 0.858 $\pm$ 0.004 (2)    & \textbf{0.859 $\pm$ 0.004} (1)     \\
										31  & 0.768 $\pm$ 0.002 (7)    & 0.783 $\pm$ 0.002 (6)    & 0.454 $\pm$ 0.015 (8)    & 0.899 $\pm$ 0.001 (4)    & 0.784 $\pm$ 0.342 (5)    & 0.904 $\pm$ 0.001 (3)     & 0.957 $\pm$ 0.000 (2)    & \textbf{0.958 $\pm$ 0.000} (1)     \\ Avg.
										& 0.580 $\pm$ 0.037 (5.94) & 0.552 $\pm$ 0.033 (6.48) & 0.670 $\pm$ 0.040 (4.35) & 0.623 $\pm$ 0.050 (4.85) & 0.470 $\pm$ 0.147 (7.02) & 0.664 $\pm$ 0.037 (3.74)  & 0.773 $\pm$ 0.034 (2.29) & \textbf{0.783 $\pm$ 0.024 (1.32)}  \\ \bottomrule 
									\end{tabular}
								\end{sc}
							\end{small}
						\end{center}
						\vskip -0.2in \vspace{-0.7cm}
					\end{table*}

					\begin{figure*}[!t] 
						\vskip 0.2in
						\begin{center} 
							\subfigure[The $L_2$-norm values
							]{ 
								\includegraphics[width=0.3\textwidth]{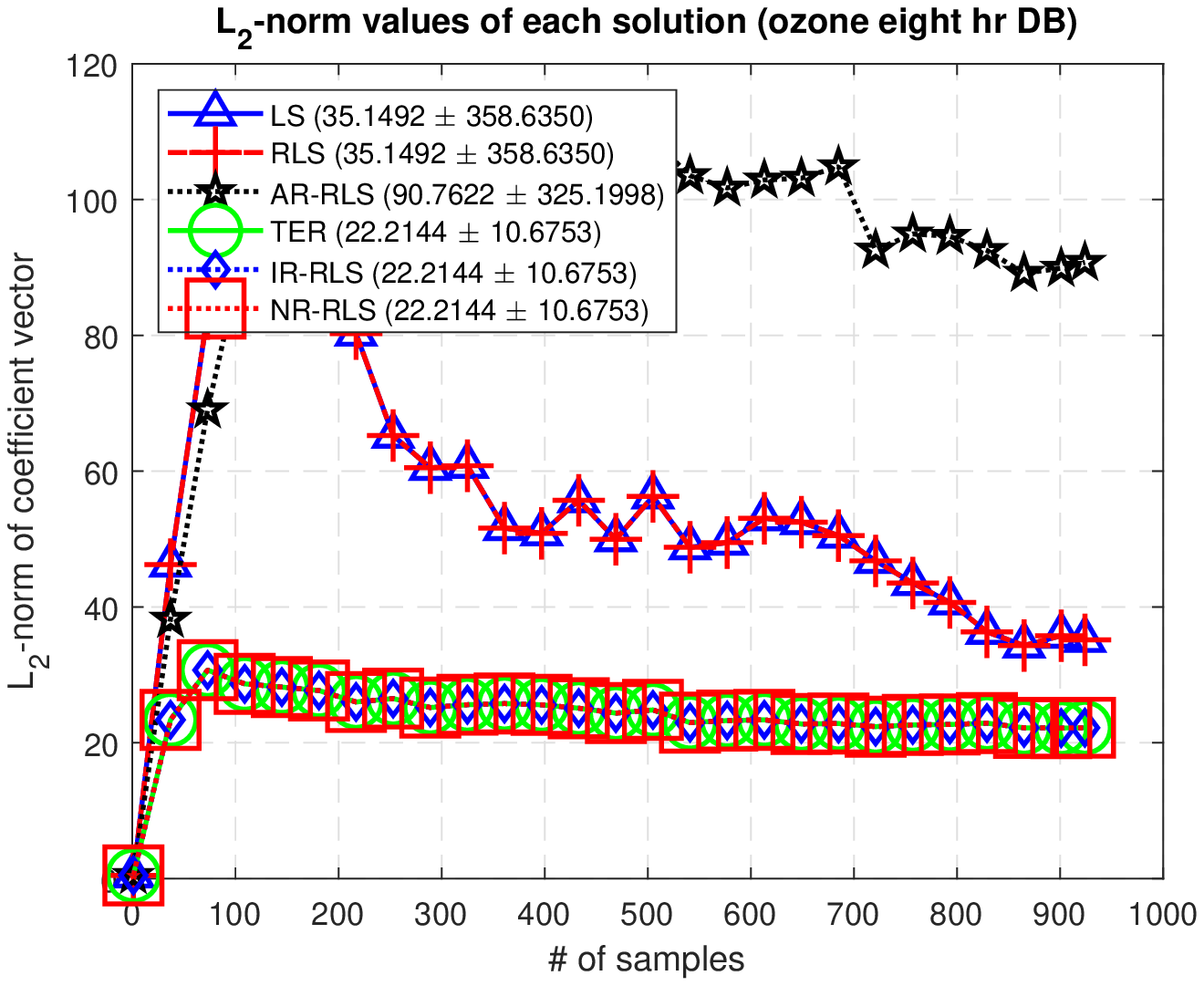}%
							} \hspace{-0.5cm}
							\subfigure[The G-means
							]{ 
								\includegraphics[width=0.3\textwidth]{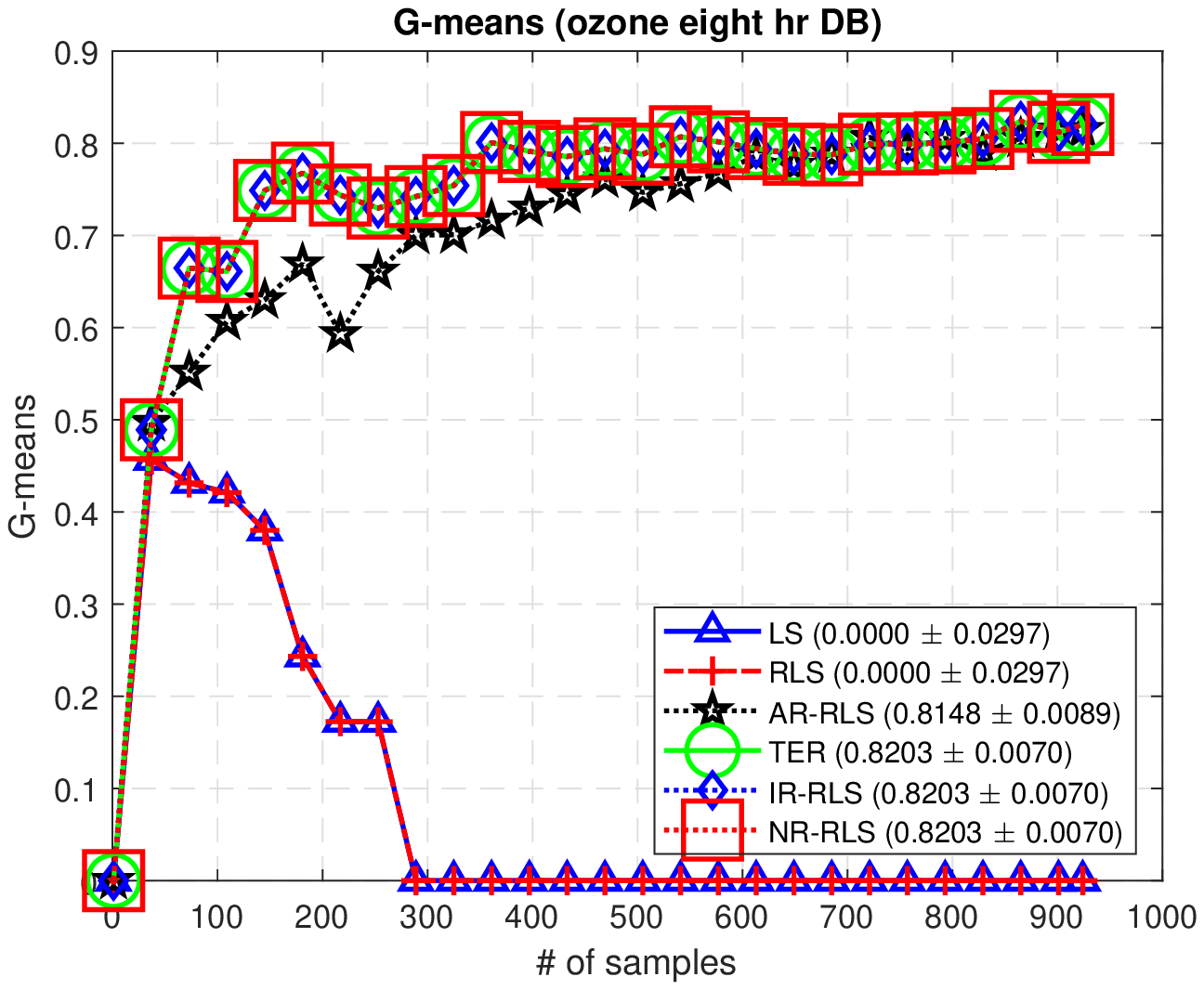}%
							} \hspace{-0.5cm}
							\subfigure[The CPU times
							]{ 
								\includegraphics[width=0.3\textwidth]{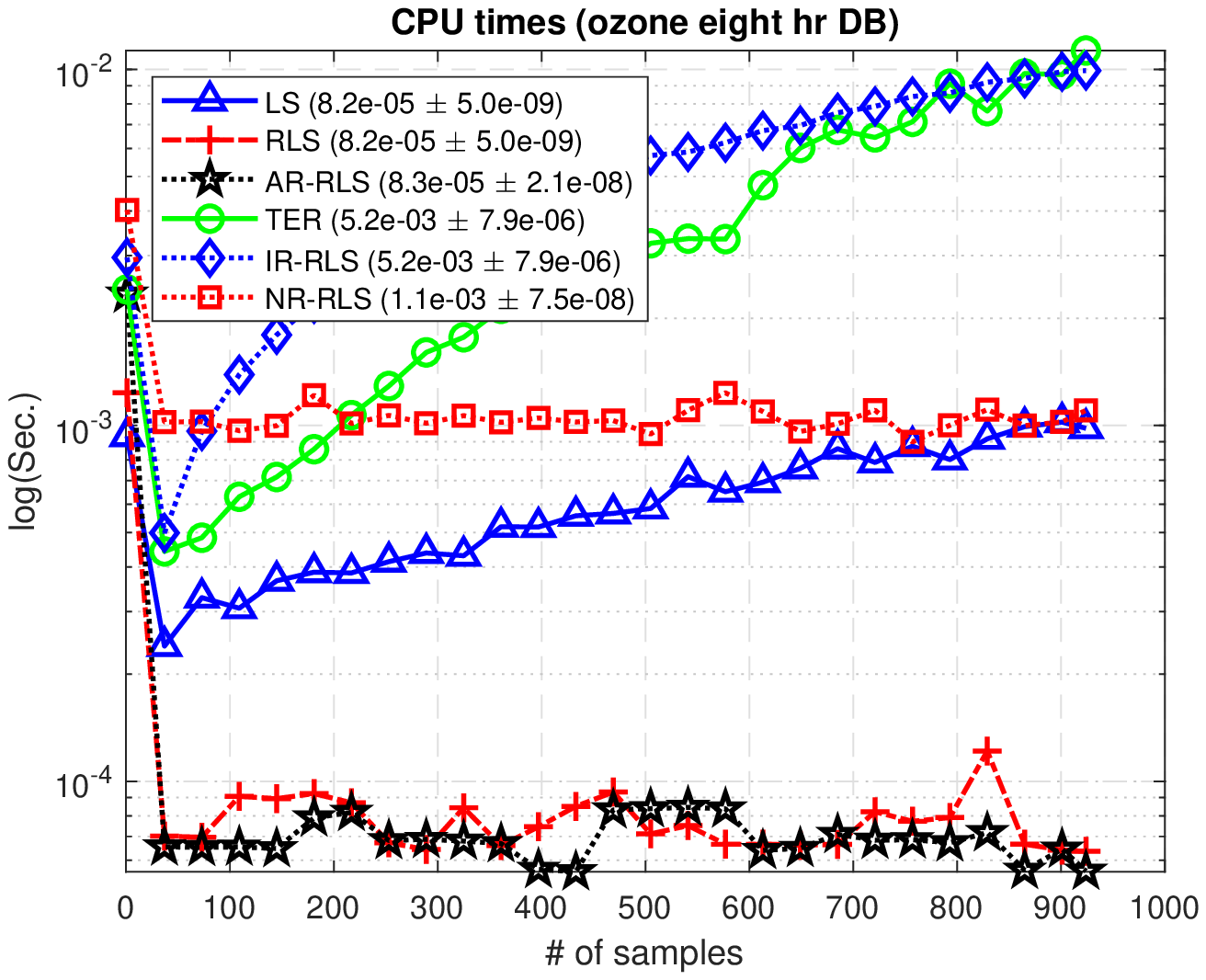}%
							} \vspace{-0.3cm}
							\caption{Comparisons of (a) the $L_2$-norm values, 
								(b) the G-means and
								(c) the CPU times among LS, RLS, TER, AR-RLS, IR-RLS and NR-RLS plotted over different  number of training samples for the `Ozone-eight' data set. Each bracket in legends indicates a mean value and its standard deviation over the number of samples.}%
							\label{fig.l2norm_TER_acc_TER_time_TER}%
						\end{center}
						\vskip -0.2in
					\end{figure*} 
					
					\subsection{Data sets and experimental setup}
					In our experiments, twenty small scale data sets ($<1,000$ samples) are taken from 
					the UCI machine learning repository \cite{Lichman2013}.
					Additionally, eight medium scale data sets ($<10,000$ samples) consist of five data sets from the UCI machine learning repository and three data sets from the 20 Newsgroups which are popular in the NLP community. 
					The sample size of these data set ranges from 122 to 245,057 samples.
					The data imbalance ratio is calculated by ${n^+}\over{n^-}$.
					In addition to these two groups of data, 
					three large scale data sets    from the LIBSVM website \cite{chang2011libsvm} 
					are included in this study. 
					In \cite{lu2016large, jian2017budget, hu2015dependent, ito2017unified}, these data sets are considered as the large scale data sets ($> 50,000$ samples).
					The input data is normalized to the range $[0, 1]$.
					Table \ref{tbl.dataSummary}  summarizes the attributes of the total 31 data sets used in our study. For performance comparison with  representative online algorithms, our proposed NR-RLS is compared with 
					PE \cite{rosenblatt1958perceptron}, 
					PA \cite{crammer2006online}, 
					CW \cite{dredze2008confidence}, 
					AROW \cite{crammer2009adaptive},
					ACOG \cite{zhao2018adaptive},
					RLS \cite{haykin2013adaptive}, 
					AR-RLS  \cite{kim2011onnet} and 
					IR-RLS \cite{jang2017online}.

					Similar to \cite{kim2012online}, our experiments are recorded using ten runs of 2-fold cross-validations for all compared algorithms. 
					We adopt a $G$-mean matric \cite{he2009learning} which evaluates the degree of inductive bias in terms of a ratio of the positive and
					negative accuracies as follows: $ G{\text{-mean}} = \sqrt {\frac{{TP}}{{TP + FN}} \times \frac{{TN}}{{TN + FP}}}$.
					The average $G$-means  for the unseen test data are recorded. 
					In the class-specific weighting based classifiers, namely TER, AR-RLS, IR-RLS and NR-RLS,
					there are two parameters: the class-specific weight and the regularization factor $b$. 
					The RLS classifier has one parameter $b$. 
					Here, the regularization setting $b$ is set at a small value of $b=10^{-4}$ following \cite{kim2012online, toh2014exploiting} since the main objective of this setting is to stabilize the least-squares solution.
					The class-specific weight is varied according to \cite{toh2008between}.
					By setting $n_t^ -  = 1 $ and $n_t^ +  = 1$ without its update, the proposed NR-RLS can take the balanced weight setting, heading to the LS objective.
					To address the nonlinear input-output relationship,
					the multivariate polynomial model  is adopted at different polynomial orders $r \in \left\{ {1,2, \ldots ,6} \right\}$.

					\subsection{Comprehensive analysis of the no weighting and the class-specific reweighting based solutions}
					In order to observe the effectiveness of the class-specific reweighting for binary class rebalancing, 
					the class-specific reweighting based TER, AR-RLS, IR-RLS and NR-RLS are compared with 
					the no weighting based LS and RLS.
					
					\subsubsection{The impact of class-specific weights in class imbalance learning  }
					In order to observe the difference between LS and TER in a class imbalance problem, 
					Fig. \ref{fig.twoDimExample} illustrates the decision boundaries on a synthetic example consisting of 24  unbalanced discrete data points (i.e., 8 negative samples and 16 positive samples) with 7 overlapping data points.
					The decision boundaries are drawn in two different order polynomials
					(e.g., the first and fourth orders).
					Fig. \ref{fig.twoDimExample}(a) shows 7 error counts for `LS and RLS', 4 error counts for `TER, IR-RLS and
					NR-RLS' and 8 error counts for `AR-RLS' in the first order polynomial. 
					The exact reweighting based TER, IR-RLS and NR-RLS achieve lower error counts than the no weighting based solutions and the approximate reweighting based AR-RLS. 
					Fig. \ref{fig.twoDimExample}(b) also shows the error counts in the fourth-order polynomial.
					The error counts for `LS and RLS', `TER, IR-RLS and NR-RLS' and `AR-RLS' are given by 3, 2 and 4. 
					Similarly, the exact reweighting based TER, IR-RLS and NR-RLS are seen to be the best performer compared to all the other solutions.
					In Fig. \ref{fig.twoDimExample}, the equality  between the batch and online settings can be found in (i) LS and RLS, (ii) TER, IR-RLS and NR-RLS, whereas the inequality remains between TER and AR-RLS.
					
					\subsubsection{Observing the convergence trends} \label{sec.convergence}
				In order to observe the convergence trends of the batch and recursive formulations using real-world data, 
				the `Ozone-eight' data set from the UCI repository \cite{Lichman2013} is adopted for this investigation. 
				Fig. \ref{fig.l2norm_TER_acc_TER_time_TER}(a) shows the $L_2$-norm of the coefficient vectors for each algorithm. 
				In this figure, the average $L_2$-norm values show that 
				IR-RLS and NR-RLS converge to the batch setting of TER
				while AR-RLS shows a different convergence. 
				Similarly, RLS shows the same convergence with the batch setting of LS.
				In the  Appendix \ref{sec.B}, we give a figure which includes each value of the coefficient vectors.
				
				Fig. \ref{fig.l2norm_TER_acc_TER_time_TER}(b) shows the estimation trends of the compared algorithms according to each arrival of training samples. 
				We observe that 
				each group of 
				`TER, IR-RLS and NR-RLS' and `LS and RLS' achieves the same $G$-mean and the same standard deviation, whereas AR-RLS shows the different $G$-mean and standard deviation from the exact reweighting group.
				This  also verifies that NR-RLS is converged to TER. 
				Due to the exact convergence of the TER objective, 
				IR-RLS and NR-RLS show a better $G$-mean performance than 
				the approximate reweighting based AR-RLS. 
				Since the `Ozone-eight' data set is highly imbalanced with its ratio, 0.07, shown in Table \ref{tbl.dataSummary}, the no weighting based LS and RLS cannot classify one class completely.
				In Fig. \ref{fig.l2norm_TER_acc_TER_time_TER}(a) and (b), the exact reweighting based TER, IR-RLS, and NR-RLS show a lower standard deviation than LS, RLS, and AR-RLS.


				\begin{table*}[t] \vspace{-0.5cm}

					\caption{Comparison of average training CPU times in seconds} \vspace{-0.3cm}
					
					\label{tbl.cputime}
					\vskip 0.15in
					\begin{center}
						\begin{small}
							\begin{sc} \fontsize{6.7}{4}\selectfont
								\begin{tabular}{lrrrrrrrr}
									\toprule
									\multirow{3}{*}{No.}  & \multicolumn{8}{l}{Training CPU times in seconds}  {\rule{0pt}{3ex}}                                                                                                                                         \\ \cline{2-9} 
									{\rule{0pt}{3ex}} 
									&  \multicolumn{2}{c|}{The First-order}          & \multicolumn{6}{c}{The Second-order }                                                                                  \\ \cline{2-9} 
									{\rule{0pt}{3ex}}
									& \multicolumn{1}{c}{PE} & \multicolumn{1}{c}{PA} & \multicolumn{1}{c}{CW} & \multicolumn{1}{c}{AROW} & \multicolumn{1}{c}{RLS} & \multicolumn{1}{c}{AR-RLS} & \multicolumn{1}{c}{IR-RLS} & \multicolumn{1}{c}{NR-RLS} \\ \midrule
									1 {\rule{0pt}{3ex}}
									& 0.0052 & 0.0026 & 0.0021 & 0.0024 & 0.0053 & 0.0099 & 0.0388     & 0.0254 \\
									2                    & 0.0014 & 0.0013 & 0.0020 & 0.0023 & 0.0007 & 0.0007 & 0.0170     & 0.0068 \\
									3                    & 0.0017 & 0.0019 & 0.0028 & 0.0034 & 0.0009 & 0.0010 & 0.0265     & 0.0036 \\
									4                    & 0.0021 & 0.0021 & 0.0039 & 0.0046 & 0.0016 & 0.0018 & 0.0658     & 0.0123 \\
									5                    & 0.0019 & 0.0021 & 0.0033 & 0.0040 & 0.0014 & 0.0017 & 0.0539     & 0.0087 \\
									6                    & 0.0023 & 0.0022 & 0.0063 & 0.0071 & 0.0035 & 0.0045 & 0.2497     & 0.0382 \\
									7                    & 0.0035 & 0.0034 & 0.0080 & 0.0110 & 0.0036 & 0.0038 & 0.2218     & 0.0307 \\
									8                    & 0.0028 & 0.0030 & 0.0046 & 0.0053 & 0.0014 & 0.0016 & 0.0785     & 0.0096 \\
									9                    & 0.0037 & 0.0038 & 0.0058 & 0.0071 & 0.0018 & 0.0020 & 0.1019     & 0.0081 \\
									10                   & 0.0035 & 0.0041 & 0.0061 & 0.0078 & 0.0031 & 0.0034 & 0.2303     & 0.0347 \\
									11                   & 0.0051 & 0.0050 & 0.0067 & 0.0086 & 0.0027 & 0.0028 & 0.1770     & 0.0126 \\
									12                   & 0.0057 & 0.0084 & 0.0619 & 0.0725 & 0.0433 & 0.0438 & 5.5455     & 0.4465 \\
									13                   & 0.0050 & 0.0058 & 0.0079 & 0.0104 & 0.0046 & 0.0115 & 0.4944     & 0.0305 \\
									14                   & 0.0074 & 0.0079 & 0.0096 & 0.0131 & 0.0034 & 0.0039 & 0.3622     & 0.0160 \\
									15                   & 0.0064 & 0.0064 & 0.0085 & 0.0105 & 0.0037 & 0.0041 & 0.3938     & 0.0139 \\
									16                   & 0.0064 & 0.0074 & 0.0099 & 0.0126 & 0.0038 & 0.0053 & 0.4389     & 0.0190 \\
									17                   & 0.0073 & 0.0074 & 0.0126 & 0.0139 & 0.0035 & 0.0041 & 0.4350     & 0.0142 \\
									18                   & 0.0074 & 0.0088 & 0.0109 & 0.0154 & 0.0050 & 0.0044 & 0.5167     & 0.0152 \\
									19                   & 0.0083 & 0.0083 & 0.0111 & 0.0152 & 0.0040 & 0.0049 & 0.5469     & 0.0145 \\
									20                   & 0.0095 & 0.0096 & 0.0149 & 0.0178 & 0.0052 & 0.0058 & 0.7547     & 0.0215 \\
									21                   & 0.0109 & 0.0111 & 0.0233 & 0.0267 & 0.1160 & 0.1293 & 6.1132     & 1.5705 \\
									22                   & 0.0164 & 0.0173 & 0.0398 & 0.0599 & 0.0503 & 0.0444 & 17.9072    & 0.4350 \\
									23                   & 0.0168 & 0.0163 & 0.0344 & 0.0645 & 0.0437 & 0.0448 & 17.5479    & 0.4225 \\
									{24}                   & { 0.0211} & { 0.0212} & { 0.0364} & { 0.0324} & { 0.0095} & { 0.0121} & { 2.1149}     & { 0.0672} \\
									{25}                   & { 0.0204} & { 0.0218} & { 0.031} & { 0.0336} & { 0.0136} & { 0.0184} & { 1.9262}     & { 0.0592} \\
									{26}                   & { 0.0216} & { 0.0219} & { 0.0302} & { 0.0339} & { 0.0103} & { 0.0133} & { 1.9603}     & { 0.0571} \\
									27                   & 0.0417 & 0.0493 & 0.0924 & 0.1331 & 0.0816 & 0.0891 & 94.5412    & 0.7049 \\
									28                   & 0.0572 & 0.0610 & 0.0641 & 0.1220 & 0.0580 & 0.0668 & 37.2190    & 0.2729 \\
									29                   & 0.5827 & 0.6056 & 0.7635 & 1.0081 & 0.6249 & 0.6699 & 2402.4486  & 1.6872 \\
									30                   & 1.3512 & 1.3686 & 1.9197 & 2.4948 & 2.5199 & 3.0872 & 23977.8040 & 8.3815 \\
									31                   & 2.2627 & 2.4008 & 2.9559 & 4.3889 & 3.1734 & 3.4330 & 39724.2624 & 6.4619 \\
									{\rule{0pt}{3ex}}
									Avg                & { 0.1451} & { 0.1515} & { 0.1997} & { 0.2788} & { 0.2195} & { 0.2493} & { 2138.5353}  & { 0.6742} \\ \bottomrule
								\end{tabular}
							\end{sc}
						\end{small}
					\end{center}
					\vskip -0.1in \vspace{-0.9cm}
				\end{table*} 
				
				\begin{figure*}[!t] 
					\vskip 0.2in
					\begin{center} 
						\subfigure[G-mean rank]{ 
							\includegraphics[width=0.42\textwidth]{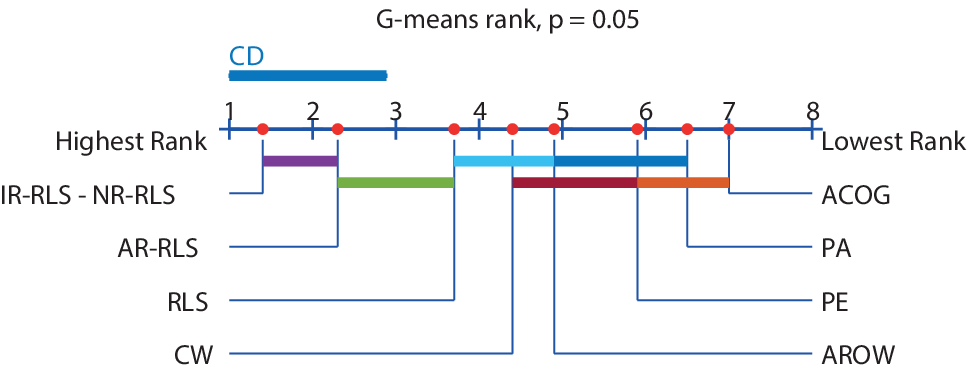}%
						} \hspace{0.9cm}
						\subfigure[Computational rank]{ 
							\includegraphics[width=0.42\textwidth]{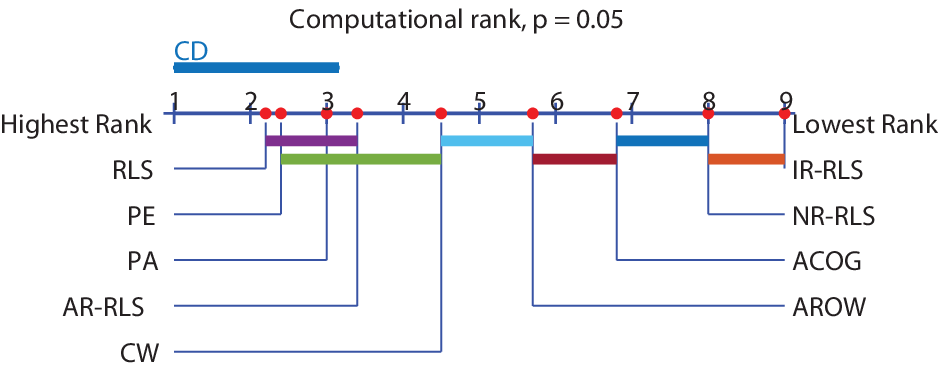}%
						}  \vspace{-0.3cm}
						\caption{Statistical significance among the averaged (a) $G$-means and (b) computational times of the online algorithms according to the Nemenyi test. The connected algorithms by the Critical Difference (CD) are those that their differences in performance are of no statistical significance.}%
						\label{fig.nemenyi_acc_time}%
					\end{center}
					\vskip -0.2in
				\end{figure*} 
				
				\subsubsection{Comparing the CPU processing time}
				The `Ozone-eight' data set is again used for comparing the training computational time. 
				Fig. \ref{fig.l2norm_TER_acc_TER_time_TER}(c) shows the average CPU times of LS, RLS, TER, AR-RLS, IR-RLS, and NR-RLS over 10 runs along with each arriving training sample. 
				Due to the batch mode and iterative reweighting settings, 
				the computational times of LS, TER and IR-RLS are linearly growing according to the increasing number of samples. 
				Different from LS, TER, and IR-RLS, 
				the non-growing computational times of RLS, AR-RLS, and NR-RLS are observed.
				Similar to Fig. \ref{fig.l2norm_TER_acc_TER_time_TER}, we provide all the corresponding figures for each data set in the  Appendix \ref{sec.E}.
				
				\subsection{Performance evaluation}
				\subsubsection{Comparing $G$-mean and CPU time performances among the state-of-the-arts}
				Table \ref{tbl.avgacc}  shows the average $G$-mean results  
				with its standard deviation and ranks of the 31 data sets. 
				In the  Appendix \ref{sec.D}, highly imbalanced data sets (e.g., Ozone and IJCNN) are highlighted.
				The $G$-mean and CPU time values are recorded based on 10 runs of 2-fold cross-validation.
				As competing state-of-the-arts, two first-order algorithms such as 
				PE \cite{rosenblatt1958perceptron} and 
				PA \cite{crammer2006online} are included, 
				and five second-order algorithms, namely 
				CW \cite{dredze2008confidence}, 
				AROW \cite{crammer2009adaptive},
				ACOG \cite{zhao2018adaptive},
				RLS \cite{haykin2013adaptive}, 
				AR-RLS  \cite{kim2011onnet}, and 
				IR-RLS \cite{jang2017online} are adopted.

				Several observations are gathered for this experiment. 
				Firstly, the second-order solutions show higher average $G$-means than the first-order solutions. This is due to the use of more information than the first-order information in the update rules. This consistent observation can also be found in \cite{dredze2008confidence, crammer2009adaptive, hao2018second}.
				Secondly, the exact reweighting based solutions namely, NR-RLS and IR-RLS show the highest average $G$-mean performance than all the other solutions.
				The main reason is due to the direct optimization of the classification error goal 
				with the class imbalance classification design.
				Additionally, NR-RLS and IR-RLS give the lowest standard deviation than all the state-of-the-art methods. 

				Table \ref{tbl.cputime} shows the average CPU time on the 31 data sets. 
				The computational time of NR-RLS is much faster than IR-RLS. The main reason is the replacement of the iterative reweighting with the single-step vectorized reweighting.
				The computational times of the first-order algorithms are seen to be faster than the second-order methods. 
				
				Friedman tests (see \cite{demvsar2006statistical}) on the $G$-mean and CPU time comparisons reject the null hypothesis that all eight compared algorithms are statistically equivalent. 
				These are followed by Nemenyi plots as a post-hoc analysis to show the groups of connected algorithms that are not significantly different at $p = 0.05$. 
				In Fig. \ref{fig.nemenyi_acc_time}(a), the Nemenyi plot for the $G$-mean rank shows six groups of algorithm similarity 
				namely,  
				(i) IR-RLS--NR-RLS--AR-RLS, 
				(ii) AR-RLS--RLS,
				(iii) RLS--CW--AROW,
				(iv) CW--AROW--PE,
				(v) AROW--PE--PA, and 
				(vi) PE--PA--ACOG. 
				In the first group, NR-RLS and IR-RLS achieve the highest rank and significantly differ from all the other algorithms in the lowly-ranked groups.  
				NR-RLS and IR-RLS show the higher $G$-mean performance than all the other algorithms. 
				Since NR-RLS--IR-RLS is not overlapped between the first and second groups, NR-RLS--IR-RLS are seen to be  the best performer.
				In Fig. \ref{fig.nemenyi_acc_time}(b), the Nemenyi plot for the computational rank also shows six groups of algorithm similarity, namely
				(i) RLS--PE--PA--AR-RLS, 
				(ii) PE--PA--AR-RLS--CW, 
				(iii) CW--AROW, 
				(iv) AROW--ACOG, 
				(v) ACOG--NR-RLS, and 
				(vi) NR-RLS--IR-RLS. 
				The proposed NR-RLS is overlapped between the two lowly-ranked   groups. 

				%
				
				Here, we summarize our observations: (i) In terms of the $G$-mean performance, 
				the equivalence between TER and NR-RLS is observed.
				The exact reweighting based  NR-RLS outperformed each of the approximate reweighting based AR-RLS, the no weighting based RLS, and the stochastic based solutions such as PE, PA, CW, AROW, and ACOG  with statistical significance.
				(ii) In terms of the computational time, 
				the non-growing computational trend of NR-RLS is observed 
				whereas IR-RLS showed the growing computational trend.
				NR-RLS is seen to be slower than the first-order and the second-order solutions but comparable to them.
				%
				%
				
				\section{Conclusion}
				This paper presented a new deterministic online learning formulation of the weighted least-squares for binary class rebalancing. 
				Specifically, we proposed a non-iteratively reweighted recursive least-squares algorithm which is designed to replace the old weights with the new ones. 
				We showed that the proposed online formulation converged to the batch setting for binary class imbalance classification and achieved the constant time complexity. 
				We also showed that the proposed algorithm
				outperformed the state-of-the-art online binary classification algorithms effectively and efficiently.
				In the future, we will extend this formulation to nonlinear classifiers in a reproducing kernel Hilbert space.
				%
				

				\bibliography{references.bib}
				\bibliographystyle{icml2023}

				\newpage
				\appendix
				\onecolumn
				\section*{Appendix}  
				Our appendices contain additional details which are omitted from the main text.
				
				In Appendix \ref{sec.A}, 
				we introduce the relationship between the optimal Bayes classifier and the Least-Squares (LS) classifier.
				We also show the weighted version of the optimal Bayes classifier and then build the relationship among the weighted optimal Bayes classifier, the Weighted Least-Squares (WLS) classifier and the Total-Error-Rate (TER) classifier.

				In Appendix \ref{sec.C}, 
				we show that the proposed NR-RLS can easily be extended to the multiclass classification.
				
				In Appendix \ref{sec.B}, 
				we show the learned coefficient vectors of LS, Recursive LS (RLS), TER, Approiximately Reweighted RLS (AR-RLS), Iteratively Reweighted RLS (IR-RLS), Non-iteratively Reweighted RLS (NR-RLS) to experimentally provide the convergence results between the batch and online settings on the Mushroom data set which has enough numbers of feature dimension and data samples to show a good presentation.
				
				In Appendix \ref{sec.E}, 
				we provide all the figures which show $L_2$-norm, $G$-mean and CPU time values for each data set. In the main text, we only showed the figure for the ozone-eight data set as a representative example.
				
				In Appendix \ref{sec.D}, 
				we highlight the data sets highly imbalanced.

				\section{ Relationship with the optimal Bayes classifier}  \label{sec.A}
				Consider a binary classification problem with 
				a finite set of observations $\left\{ {{{\bf{x}}_i},{y_i}} \right\}_i^n$, 
				where $\bf x$ and $y$ are 
				the input and the output that randomly sampled according to a distribution $p$ over $\mathcal{X} \times \{-1, 1 \}$.
				The overall classification error can be minimized by 
				the optimal Bayes classifier as follows:
				\begin{equation} \label{eq.bayes}
					f_{Bayes}^ *  = \mathop {\arg \min }\limits_{f_{Bayes} :\mathcal{X} \to \left\{ { - 1,1} \right\}}  \int_{\mathcal{X} \times \{-1, 1 \}} {{\bf 1}\left( {f_{Bayes}  \left( {\bf{x}} \right) - y} \right)} dp \left( {{\bf{x}},y} \right),
				\end{equation}
				where ${\bf{1}}\left(  \cdot  \right): \mathbb{R} \to \left\{ { 0,1} \right\}$ is the binary function.
				The optimal Bayes classifier satisfies the following equation:
				\begin{equation} \label{eq.bayes_bin}
					f_{Bayes}^ * \left( {\bf{x}} \right) = \left\{ {\begin{array}{*{20}{c}}
							1&{\text{if } p \left( {1|{\bf{x}}} \right) > p \left( { - 1|{\bf{x}}} \right)}\\
							{ - 1}&{{\rm{otherwise}}}
					\end{array}} \right. .
				\end{equation}
				Since large scale data sets are needed for good estimation of $p\left( { y|{\bf{x}}} \right)$, 
				a good surrogate method is required for a good feasible solution in practice.
				The Least-Squares (LS) minimization is a well-known method to asymptotically
				recover the optimal Bayes classifier  as follows:
				\begin{equation} \label{eq.LS}
					f_{LS}^ *  = \mathop {\arg \min }\limits_{f_{LS} :\mathcal{X} \to \mathbb{R}} \int_{\mathcal{X} \times \{-1, 1 \}} {{{\left( {y - f_{LS} \left( {\bf{x}} \right)} \right)}^2}} dp \left( {{\bf{x}},y} \right).
				\end{equation}
				Then, we  have
				\begin{equation}
					\begin{aligned} 
						\int {{{\left( {y - f_{LS} \left( {\bf{x}} \right)} \right)}^2}dp \left( {{\bf{x}},y} \right)} 
						= \int {\int {{{\left( {y - f_{LS} \left( {\bf{x}} \right)} \right)}^2}dp \left( {y|{\bf{x}}} \right)dp \left( {\bf{x}} \right)} } \\
						= \int {\left[ {{{\left( {1 - f_{LS} \left( {\bf{x}} \right)} \right)}^2}p \left( {1|{\bf{x}}} \right) + {{\left( {f_{LS} \left( {\bf{x}} \right) + 1} \right)}^2}p \left( { - 1|{\bf{x}}} \right)} \right]dp \left( {\bf{x}} \right)} ,
					\end{aligned} 
				\end{equation}
				which implies that the minimizer of the equation \eqref{eq.LS} satisfies
				\begin{equation}
					f_{LS}^ * \left( {\bf{x}} \right) = 2p \left( {1|{\bf{x}}} \right) - 1 = p \left( {1|{\bf{x}}} \right) - p \left( { - 1|{\bf{x}}} \right) .
				\end{equation}
				The optimal Bayes classifier can be recovered
				by:
				$f_{Bayes}^ *\left({\bf x}\right)   = sign \left( f_{LS}^ *\left({\bf x}\right) \right)$.
				Indeed, $f_{LS}^ * > 0 $ if and only if $p \left( {1|{\bf{x}}} \right) > p \left( { - 1|{\bf{x}}} \right)$.
				
				Similar to the equation \eqref{eq.bayes}, a weighted version of the optimal Bayes classifier for binary class rebalancing can be defined as:
				\begin{equation} \label{eq.wbayes}
					f_{wBayes}^ *  = \mathop {\arg \min }\limits_{f_{wBayes} :\mathcal{X} \to \left\{ { - 1,1} \right\}}  \int_{\mathcal{X} \times \{-1, 1 \}} {w\left( y \right){\bf{1}}\left( {f_{wBayes}  \left( {\bf{x}} \right) - y} \right)} dp\left( {{\bf{x}},y} \right),
				\end{equation}
				where $w(y)$ indicates a weight.
				Similar to the equation \eqref{eq.bayes_bin}, the solution is as follows:
				\begin{equation}
					f_{wBayes}^ * \left( {\bf{x}} \right) = \left\{ {\begin{array}{*{20}{c}}
							1&{\text{if } p \left( {1|{\bf{x}}} \right) w(1) > p \left( { - 1|{\bf{x}}} \right) w(-1)}\\
							{ - 1}&{{\rm{otherwise}}}
					\end{array}} \right. .
				\end{equation}
				By setting $w(1) = w(-1) = 0.5$, we can have the optimal Bayes classifier without the class rebalancing in the equation \eqref{eq.bayes_bin}. 
				Similar to the equation \eqref{eq.LS}, the Weighted Least-Squares (WLS) minimization problem for the weighted optimal Bayes classifier is as follows: 
				\begin{equation} \label{eq.WLS}
					f_{WLS}^ *  = \mathop {\arg \min }\limits_{f_{WLS} :\mathcal{X} \to \mathbb{R}} \int_{\mathcal{X} \times \{-1, 1 \}} {w\left( y \right){{\left( {y - f_{WLS}^{}\left( {\bf{x}} \right)} \right)}^2}dp\left( {{\bf{x}},y} \right)} .
				\end{equation}
				Then, we have the minimizer of the equation \eqref{eq.WLS} satisfies
				\begin{equation} \label{eq.WLS_bin}
					f_{WLS}^ * \left( {\bf{x}} \right) = \frac{{p\left( {1|{\bf{x}}} \right)w\left( 1 \right) - p\left( { - 1|{\bf{x}}} \right)w\left( { - 1} \right)}}{{p\left( {1|{\bf{x}}} \right)w\left( 1 \right) + p\left( { - 1|{\bf{x}}} \right)w\left( { - 1} \right)}} .
				\end{equation}
				By assuming $w(1) > 0$  and  $w(-1) > 0$, 
				the weighted optimal Bayes classifier can be recovered
				by:
				$f_{wBayes}^ *\left({\bf x}\right)   = sign \left( f_{WLS}^ *\left({\bf x}\right) \right)$.
				Therefore, $f_{WLS}^ * > 0 $ if and only if $p \left( {1|{\bf{x}}} \right)w(1) > p \left( { - 1|{\bf{x}}} \right)w(-1)$.
				
				\begin{figure*}[!t]
					\centering 
					{ 
						\includegraphics[width=1\textwidth]{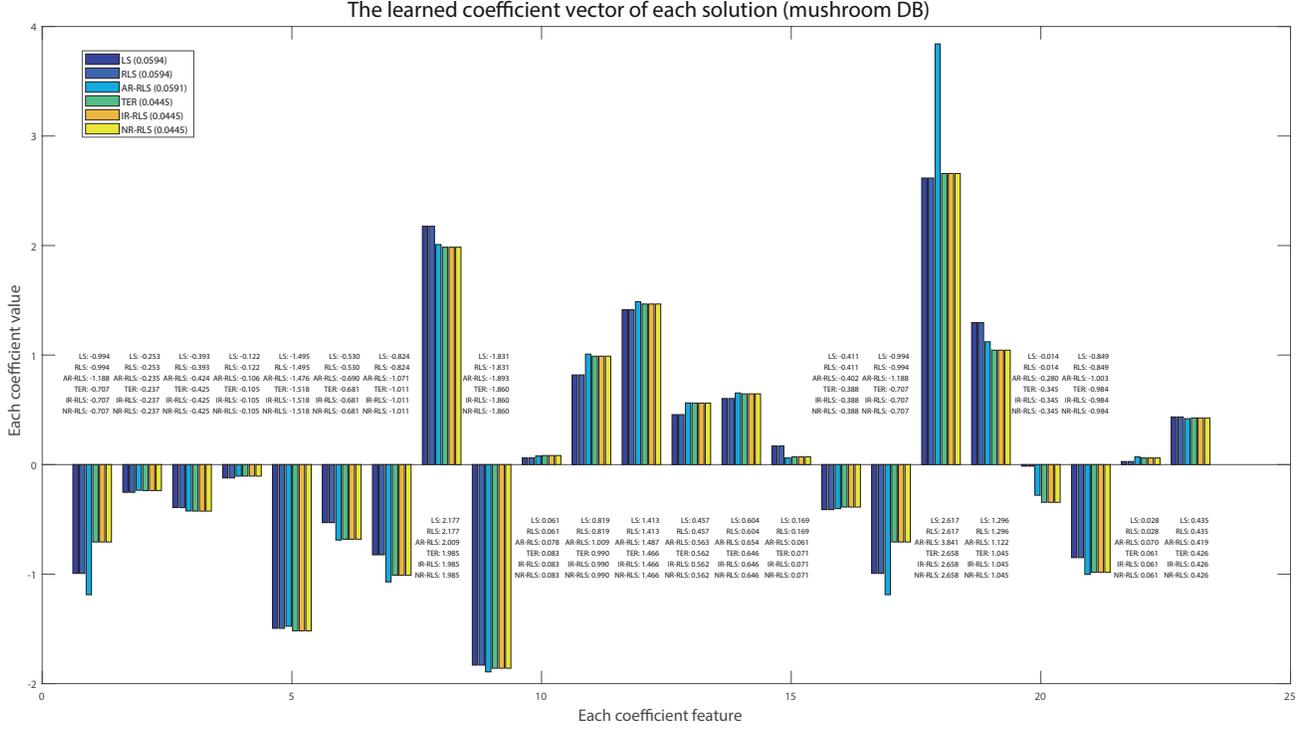}%
					}  
					\caption{The learned coefficient vector of each algorithm in the `Mushroom' data set.}%
					\label{fig.BAR_mushroom}%
				\end{figure*} 
				
				Similar to the equation \eqref{eq.WLS}, 
				the Total-Error-Rate (TER) minimization problem for the weighted optimal Bayes classifier is defined as:
				\begin{equation}
					f_{TER}^ *  = \mathop {\arg \min }\limits_{f_{TER} :\mathcal{X} \to \mathbb{R}} \int_{\mathcal{X} \times \{-1, 1 \}} {w\left( { - 1} \right){{\left( {{y^ - } - f_{TER}^{}\left( {\bf{x}} \right)} \right)}^2} + w\left( 1 \right){{\left( {{y^ + } - f_{TER}^{}\left( {\bf{x}} \right)} \right)}^2}dp\left( {{\bf{x}},y} \right)}  ,
				\end{equation}
				where $y^-$ and $y^+$ respectively are $-1$ and $1$ for negative and positive classes.
				Similar to the equation \eqref{eq.WLS_bin}, 
				we have 
				\begin{equation} \label{eq.TER_bin}
					f_{TER}^ * \left( {\bf{x}} \right) = \frac{{p\left( {1|{\bf{x}}} \right)w\left( 1 \right) - p\left( { - 1|{\bf{x}}} \right)w\left( { - 1} \right)}}{{p\left( {1|{\bf{x}}} \right)w\left( 1 \right) + p\left( { - 1|{\bf{x}}} \right)w\left( { - 1} \right)}} .
				\end{equation}
				Since $w(1) = {1 \over n^+} > 0$  and  $w(-1) = {1 \over n^-}  > 0$ in our work, 
				the weighted optimal Bayes classifier can be recovered
				by:
				$f_{wBayes}^ *\left({\bf x}\right)   = sign \left( f_{TER}^ *\left({\bf x}\right) \right)$.
				Therefore, $f_{TER}^ * > 0 $ if and only if $p \left( {1|{\bf{x}}} \right)w(1) > p \left( { - 1|{\bf{x}}} \right)w(-1)$. 
				Since the proposed NR-RLS classifier could exactly converge to the TER classifier, we conclude that the NR-RLS classifier can asymptotically recover the weighted optimal Bayes classifier.

				\section{ Extension to multiclass classification}  \label{sec.C}
				The multiclass version of NR-RLS can be easily extended by the one-vs-all classification scheme as follows:
				\begin{equation} \label{eq.TER_multi}
					{{\pmb \Theta }_t} = \left[ {{\pmb w}_t^1,{\pmb w}_t^2, \ldots, {\pmb w}_t^c} \right] \in \mathbb{R}^{d\times c},
				\end{equation}
				where $c$ is the number of classes. Each solution, ${\pmb w}_t^i$,  is updated upon the arrival of the new training sample. 
				The time complexity of the multiclass NR-RLS is $\mathcal{O}(2cd^2)$ which still has a constant time complexity since the number of classes will be fixed before training.
				
				\section{The learned coefficient vectors of each algorithm} \label{sec.B}
				In this appendix, we show the learned coefficient vectors after training all the input samples for the compared algorithms in Fig. \ref{fig.BAR_mushroom}. 
				Each bracket in legends indicates the mean value of each coefficient vector.
				In this figure, 
				the no weighting based group (e.g., LS and RLS) and
				the exact reweighting based group (e.g., TER, IR-RLS and NR-RLS)
				show
				equal coefficient values in each group whereas the approximate reweighting based AR-RLS show different coefficient values from the exact reweighting based group. 
				This result further verifies the convergence of NR-RLS to its batch setting for TER minimization.

				\section{$L_2$-norm, $G$-mean and CPU time values for each data set} \label{sec.E}
				
				
				In this appendix, we provide each plot of the $L_2$-norm, the $G$-mean and the CPU time values for each data set which is not shown in the main paper.
				%
				Due to the out-of-memory issue caused by the huge number of samples, the batch mode solutions (e.g., LS and TER) are omitted in the cod-rna, ijcnn1 and skin-nonskin data sets.
				In all the figures, the same convergence results between LS and RLS are shown in the no weighting based group. Similarly, the same convergence results among TER, IR-RLS and NR-RLS are shown in the exact reweighting group whereas the convergence results of the approximate reweighting AR-RLS is different from the exact reweighting group.

				\section{ Highlighted comparison on the highly imbalanced data sets}  \label{sec.D}
				In Table \ref{tbl.dataSummary2} and \ref{tbl.avgacc2}, we selected the six data sets highly imbalanced (e.g., the imbalance ratio $< 0.5$) to feel the impact of the reweighting based online  classifiers. We can see that the exactly reweighted NR-RLS and IR-RLS are seen to be the best performer, while the approximately reweighted AR-RLS is the second best performer. 
				Many of the other algorithms that have no reweighting scheme, suffer from the highly imbalanced data sets (e.g., specially the data set no. 17, 22, 23 and 30).

				\begin{table*}[t!]
					\caption{Highlighted summary of the 6 real-world data sets (e.g., the imbalance ratio $< 0.5$).} \label{tbl.dataSummary2}
					\vskip 0.15in
					\begin{center}
						\begin{small}
							\begin{sc}
								\fontsize{6}{7}\selectfont
								\begin{tabular}{l@{\hskip0.5pt}llllll}
									\toprule
									& No. {\rule{0pt}{3ex}} & Data sets          & Size    & Dimension & Ratio  
									\\ \midrule
									& 4 
									{\rule{0pt}{3ex}}  		   
									& Wpbc               & 194        & 33     & 0.31        \\
									
									& 17  & Blood-transfusion  & 748        & 4      & 0.31    \\

									& 22  & Ozone-eight        & 1,847      & 72   & 0.07       \\
									
									& 23  & Ozone-one          & 1,848      & 72   & 0.03       \\
									
									& 30  & Ijcnn1             & 141,691    & 22     & 0.11       \\
									
									& 31  & Skin-nonskin       & 245,057    & 3      & 0.26       \\ \bottomrule
								\end{tabular} 
							\end{sc}
						\end{small}
					\end{center}
					\vskip -0.1in
				\end{table*}
				
				\begin{table*}[t]  
					\caption{Highlighted comparison of average G-means and (ranks).}
					\label{tbl.avgacc2}
					\vskip 0.2in
					\begin{center}
						\begin{small}
							\begin{sc} \fontsize{6}{6}\selectfont
								\begin{tabular}{l@{\hskip3pt}l@{\hskip3pt}l@{\hskip3pt}l@{\hskip3pt}l@{\hskip3pt}l@{\hskip3pt}l@{\hskip3pt}l@{\hskip3pt}l@{\hskip3pt}}
									\toprule
									\multirow{3}{*}{No.} &   \multicolumn{7}{c}{G-mean $\pm$ std (rank)}  {\rule{0pt}{3ex}}             \\ \cline{2-9} 
									{\rule{0pt}{3ex}} &     \multicolumn{2}{c|}{The First-order}                       &                    \multicolumn{5}{c}{The Second-order}                                                                                 \\ \cline{2-9} 
									{\rule{0pt}{3ex}}                          & {PE}    & {PA}    & {CW}    & {AROW}   & {ACOG}    & {RLS}   & {AR-RLS}  & {NR-RLS and IR-RLS} \\ \midrule
									4   & 0.440 $\pm$ 0.093 (5)    & 0.440 $\pm$ 0.075 (6)    & 0.522 $\pm$ 0.087 (4)    & 0.285 $\pm$ 0.230 (7)    & 0.114 $\pm$ 0.176 (8)    & 0.639 $\pm$ 0.081 (3)     & 0.687 $\pm$ 0.058 (2)    & \textbf{0.688 $\pm$ 0.047} (1)     \\
									17  & 0.485 $\pm$ 0.046 (4)    & 0.456 $\pm$ 0.040 (5)    & 0.526 $\pm$ 0.057 (3)    & 0.237 $\pm$ 0.100 (7)    & 0.103 $\pm$ 0.139 (8)    & 0.286 $\pm$ 0.043 (6)     & 0.682 $\pm$ 0.023 (2)    & \textbf{0.685 $\pm$ 0.022} (1)     \\
									22  & 0.364 $\pm$ 0.066 (4)    & 0.296 $\pm$ 0.061 (5)    & 0.571 $\pm$ 0.045 (3)    & 0.072 $\pm$ 0.106 (6)    & 0.000 $\pm$ 0.000 (8)    & 0.007 $\pm$ 0.029 (7)     & 0.807 $\pm$ 0.033 (2)    & \textbf{0.820 $\pm$ 0.018} (1)     \\
									23  & 0.240 $\pm$ 0.104 (4)    & 0.161 $\pm$ 0.101 (5)    & 0.423 $\pm$ 0.100 (3)    & 0.009 $\pm$ 0.041 (6.5)  & 0.009 $\pm$ 0.041 (6.5)  & 0.000 $\pm$ 0.000 (8)     & 0.788 $\pm$ 0.036 (2)    & \textbf{0.819 $\pm$ 0.035} (1)     \\
									30  & 0.631 $\pm$ 0.004 (4)    & 0.627 $\pm$ 0.005 (5)    & 0.692 $\pm$ 0.004 (3)    & 0.508 $\pm$ 0.009 (6)    & 0.173 $\pm$ 0.085 (8)    & 0.301 $\pm$ 0.007 (7)     & 0.858 $\pm$ 0.004 (2)    & \textbf{0.859 $\pm$ 0.004} (1)     \\
									31  & 0.768 $\pm$ 0.002 (7)    & 0.783 $\pm$ 0.002 (6)    & 0.454 $\pm$ 0.015 (8)    & 0.899 $\pm$ 0.001 (4)    & 0.784 $\pm$ 0.342 (5)    & 0.904 $\pm$ 0.001 (3)     & 0.957 $\pm$ 0.000 (2)    & \textbf{0.958 $\pm$ 0.000} (1)     \\ Avg.
									& 0.488 $\pm$ 0.189 (4.66) & 0.461 $\pm$ 0.223 (5.33) & 0.531 $\pm$ 0.095 (4.00) & 0.335 $\pm$ 0.327 (6.08) & 0.197 $\pm$ 0.295 (7.25) & 0.356 $\pm$ 0.357 (5.66)  & 0.797 $\pm$ 0.105 (2) & \textbf{0.805 $\pm$ 0.104} (1)  \\ \bottomrule 
								\end{tabular}
							\end{sc}
						\end{small}
					\end{center}
					\vskip -0.2in
				\end{table*}

				\begin{figure*}[!h]
					\centering 
					\subfigure[The $L_2$-norm values
					]{ 
						\includegraphics[width=0.3\textwidth]{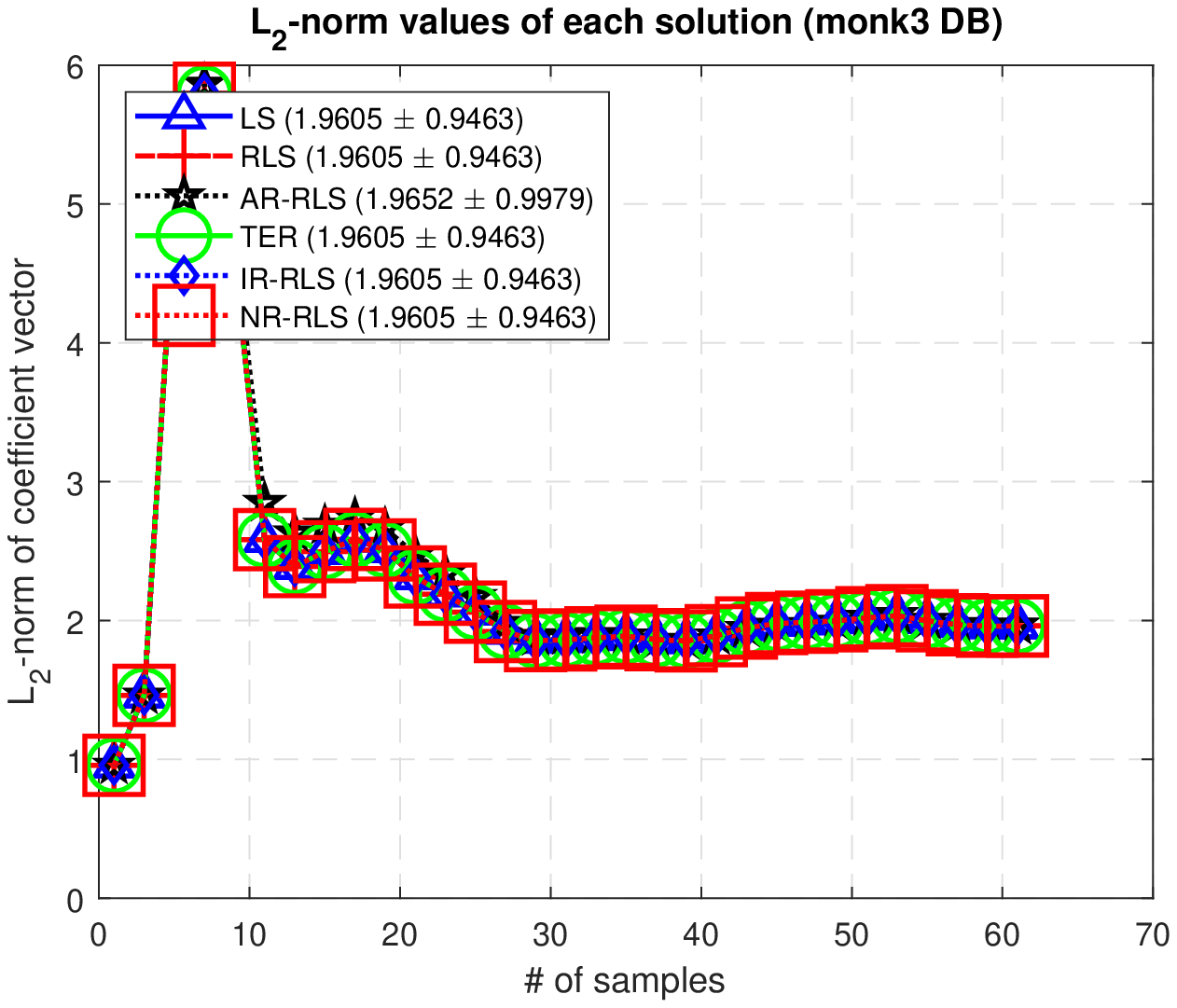}%
					} \hspace{0.4cm}
					\subfigure[The G-means
					]{ 
						\includegraphics[width=0.3\textwidth]{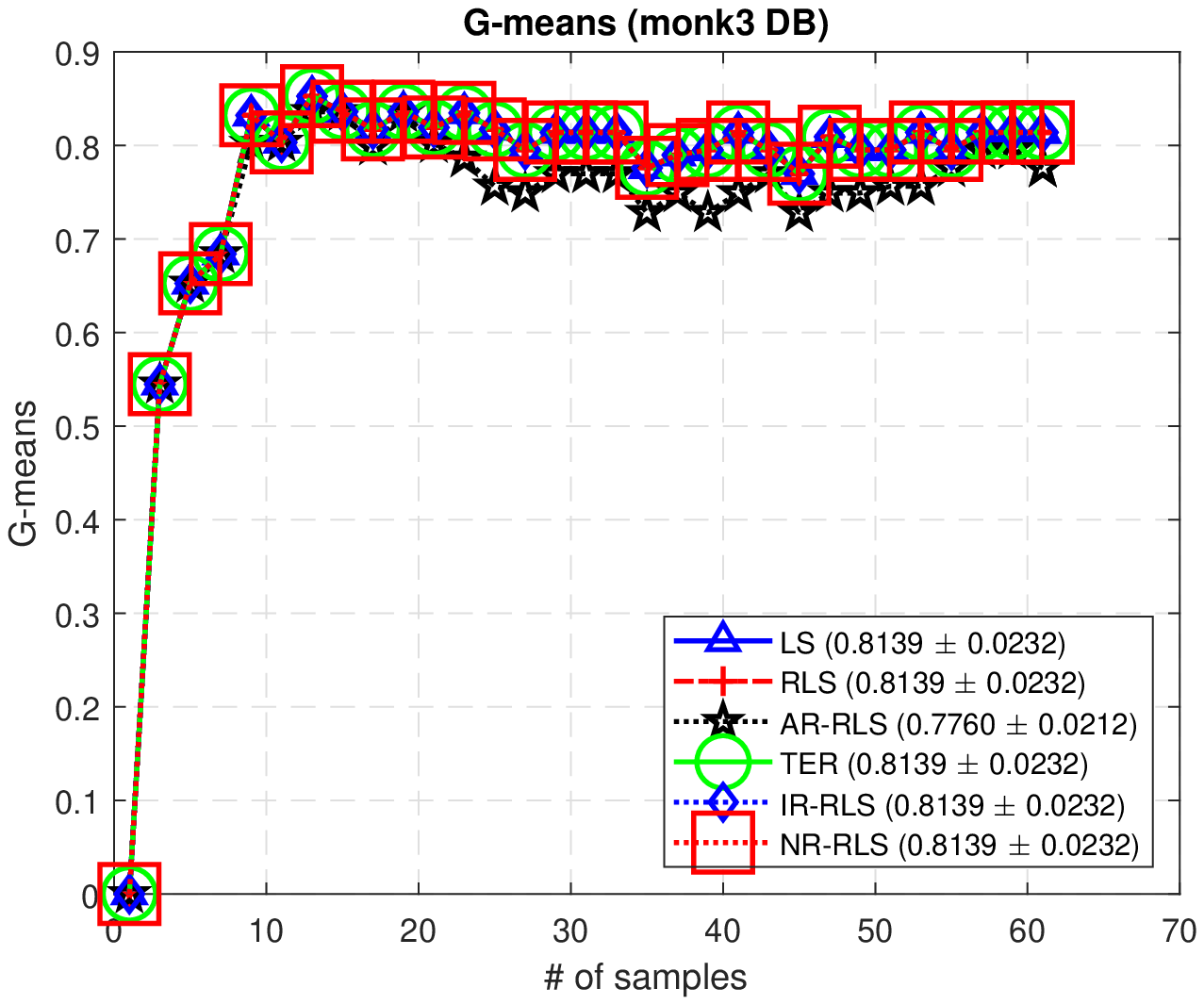}%
					} \hspace{0.4cm}
					\subfigure[The CPU times
					]{ 
						\includegraphics[width=0.3\textwidth]{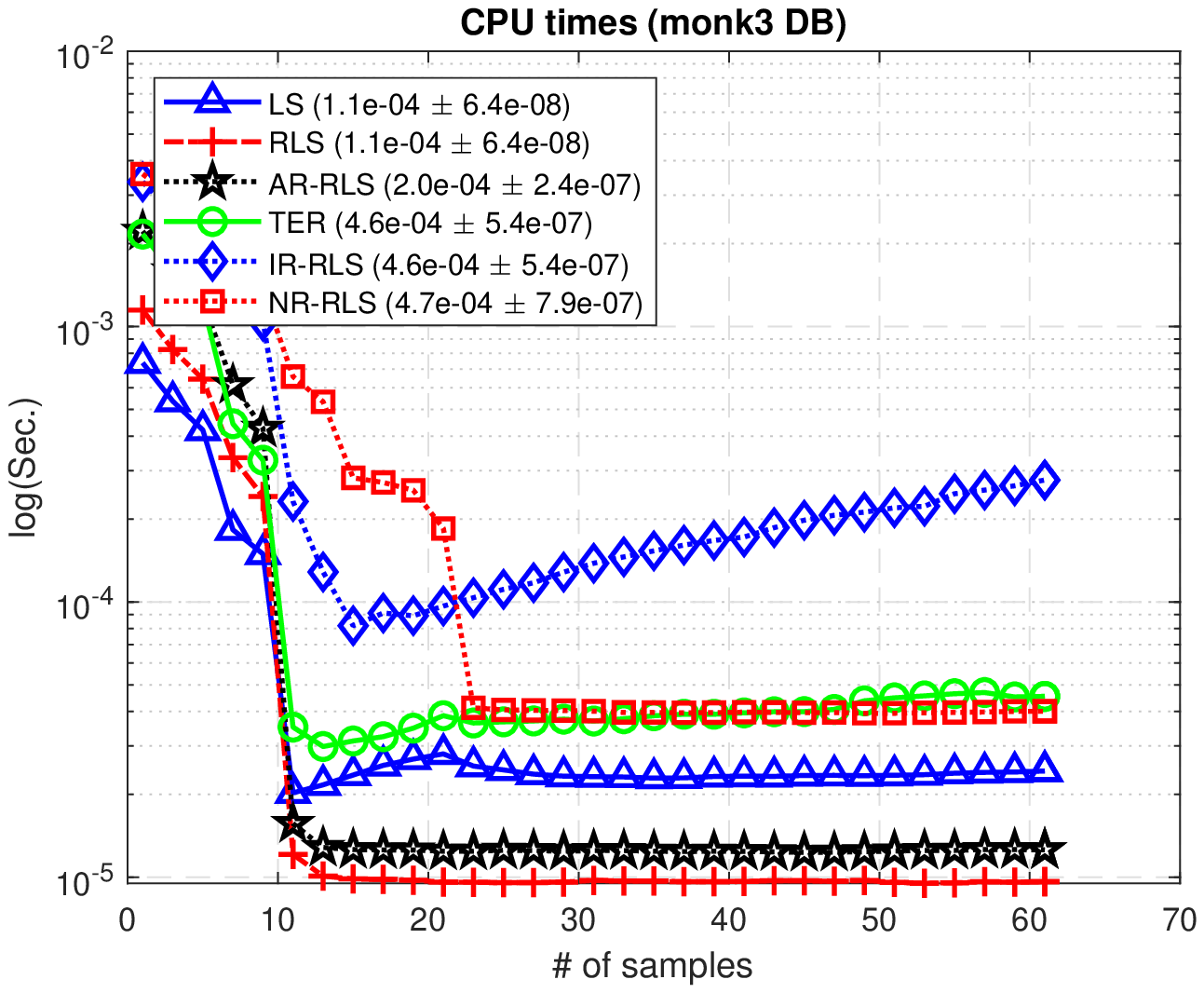}%
					} 
					\caption{Monk-3 DB}%
				\end{figure*} 
				
				\begin{figure*}[!h]
					\centering 
					\subfigure[The $L_2$-norm values
					]{ 
						\includegraphics[width=0.3\textwidth]{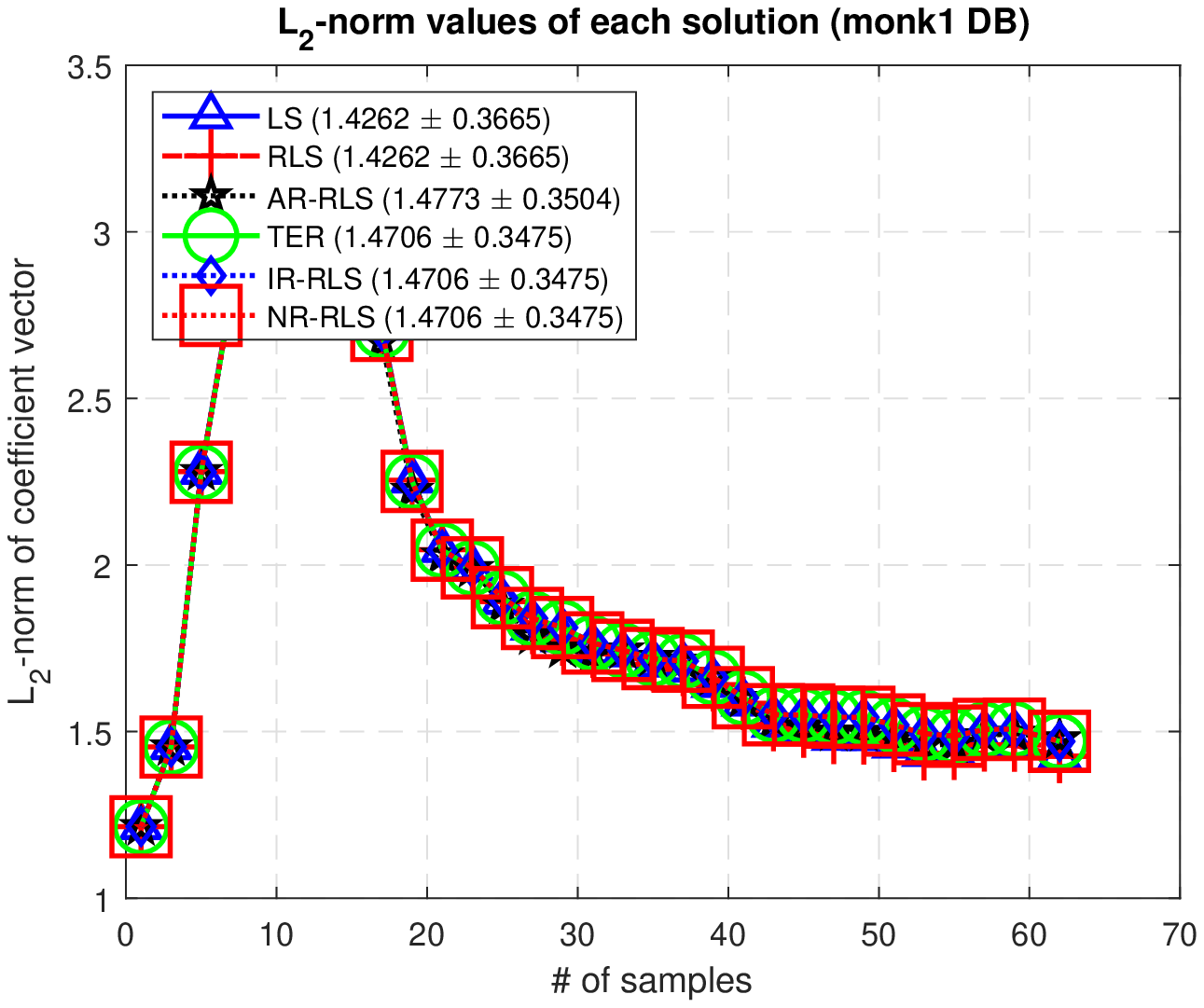}%
					} \hspace{0.4cm}
					\subfigure[The G-means
					]{ 
						\includegraphics[width=0.3\textwidth]{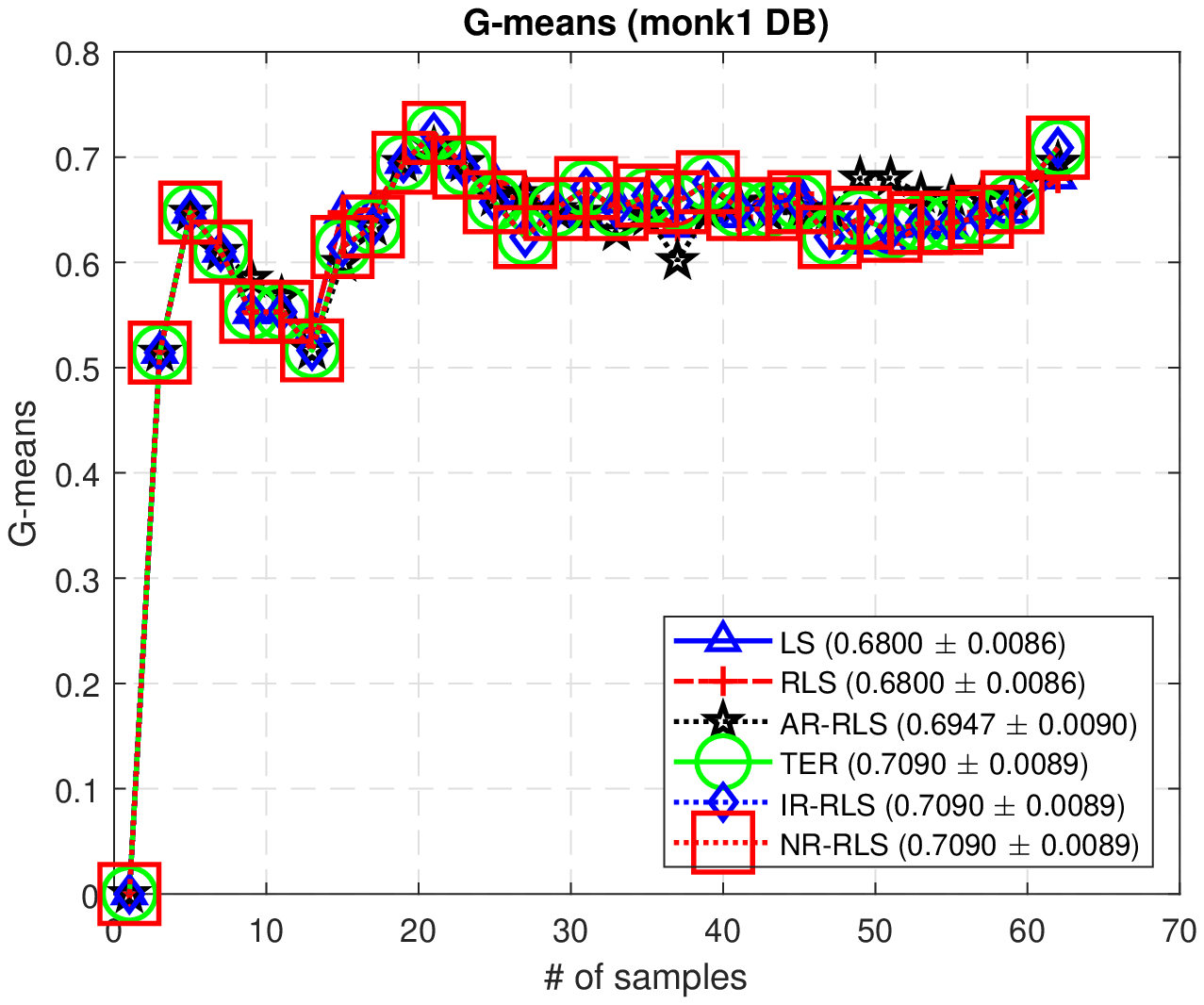}%
					} \hspace{0.4cm}
					\subfigure[The CPU times
					]{ 
						\includegraphics[width=0.3\textwidth]{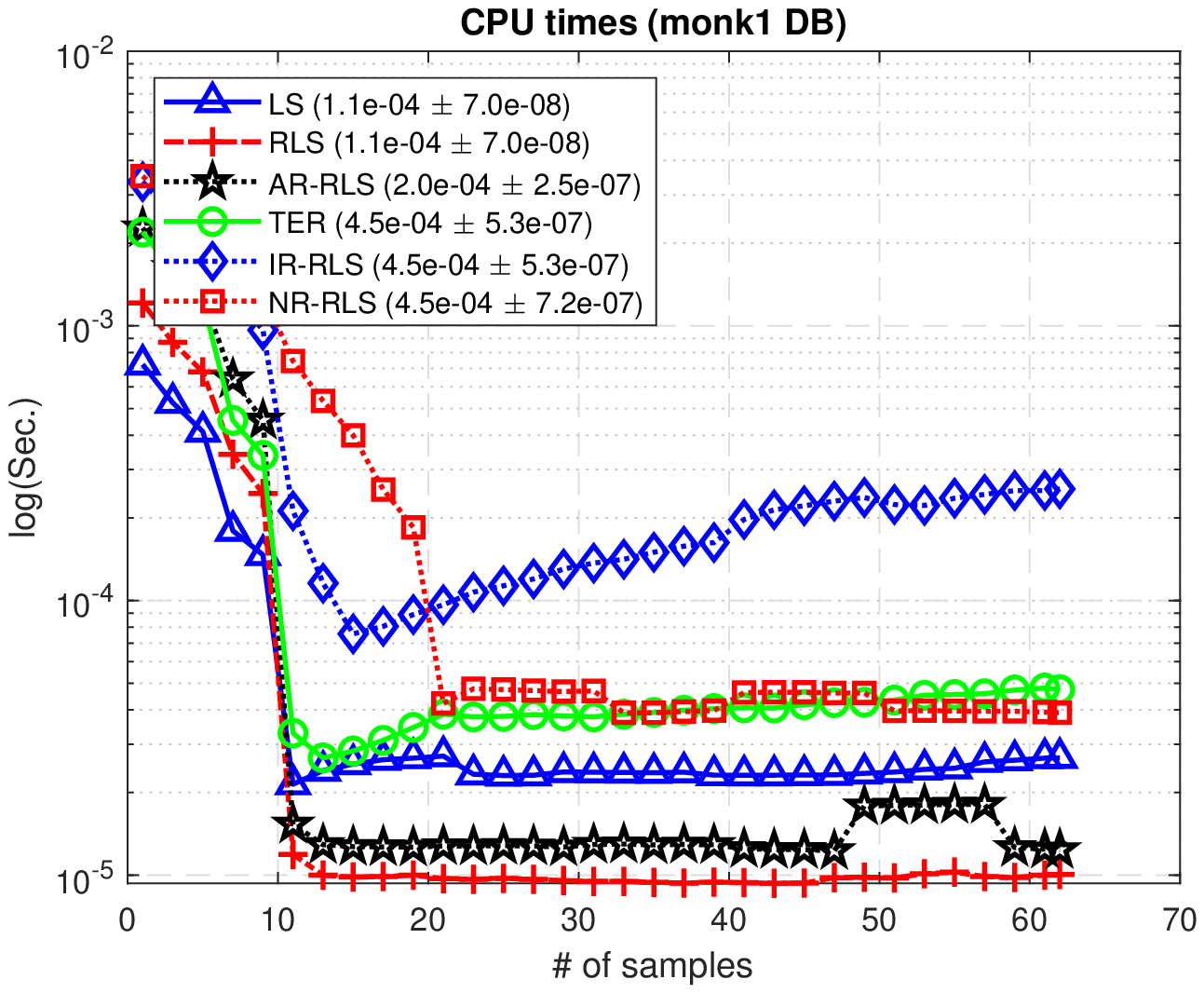}%
					} 
					\caption{Monk-1 DB}%
				\end{figure*} 
				
				\begin{figure*}[!h]
					\centering 
					\subfigure[The $L_2$-norm values
					]{ 
						\includegraphics[width=0.3\textwidth]{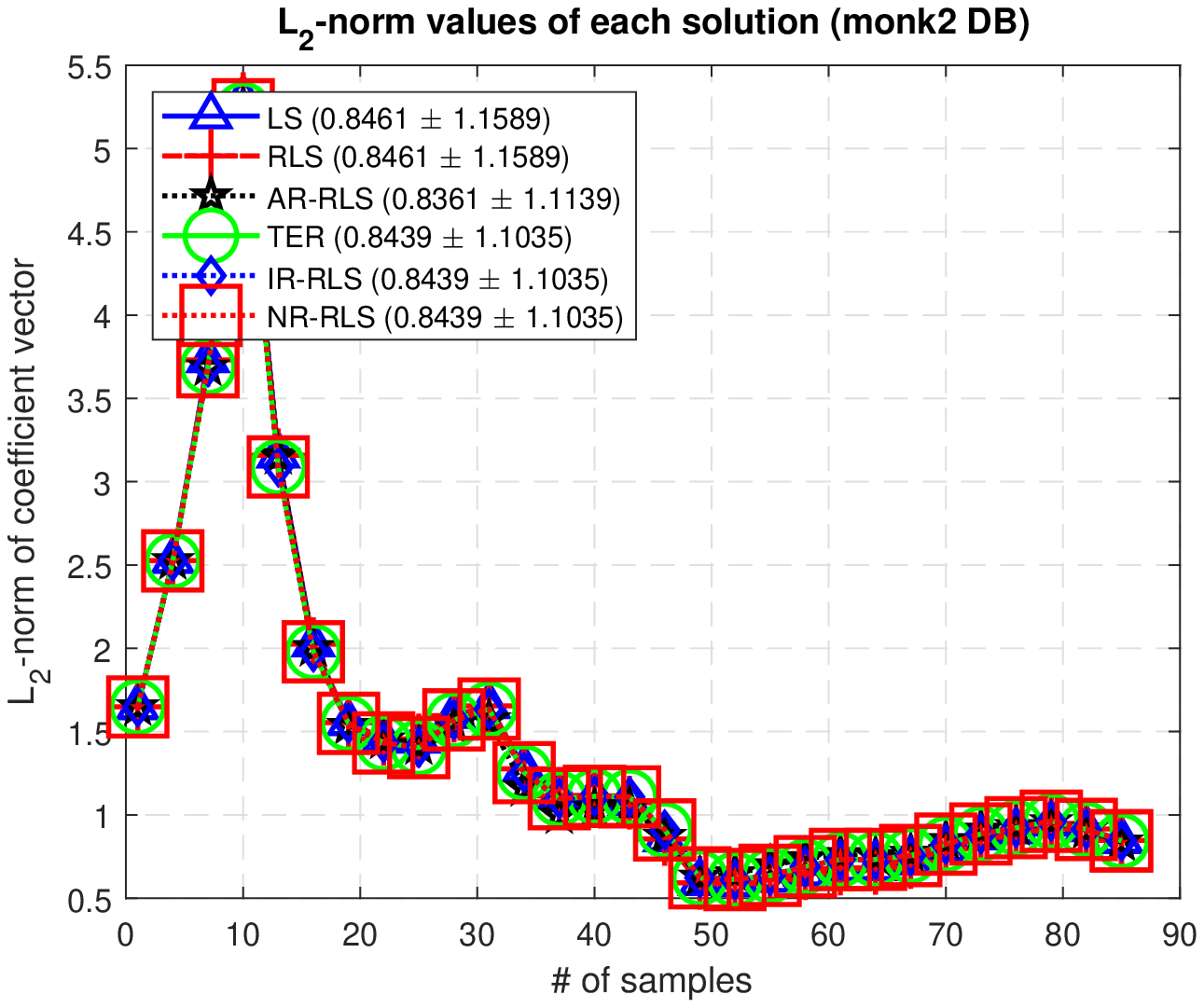}%
					} \hspace{0.4cm}
					\subfigure[The G-means
					]{ 
						\includegraphics[width=0.3\textwidth]{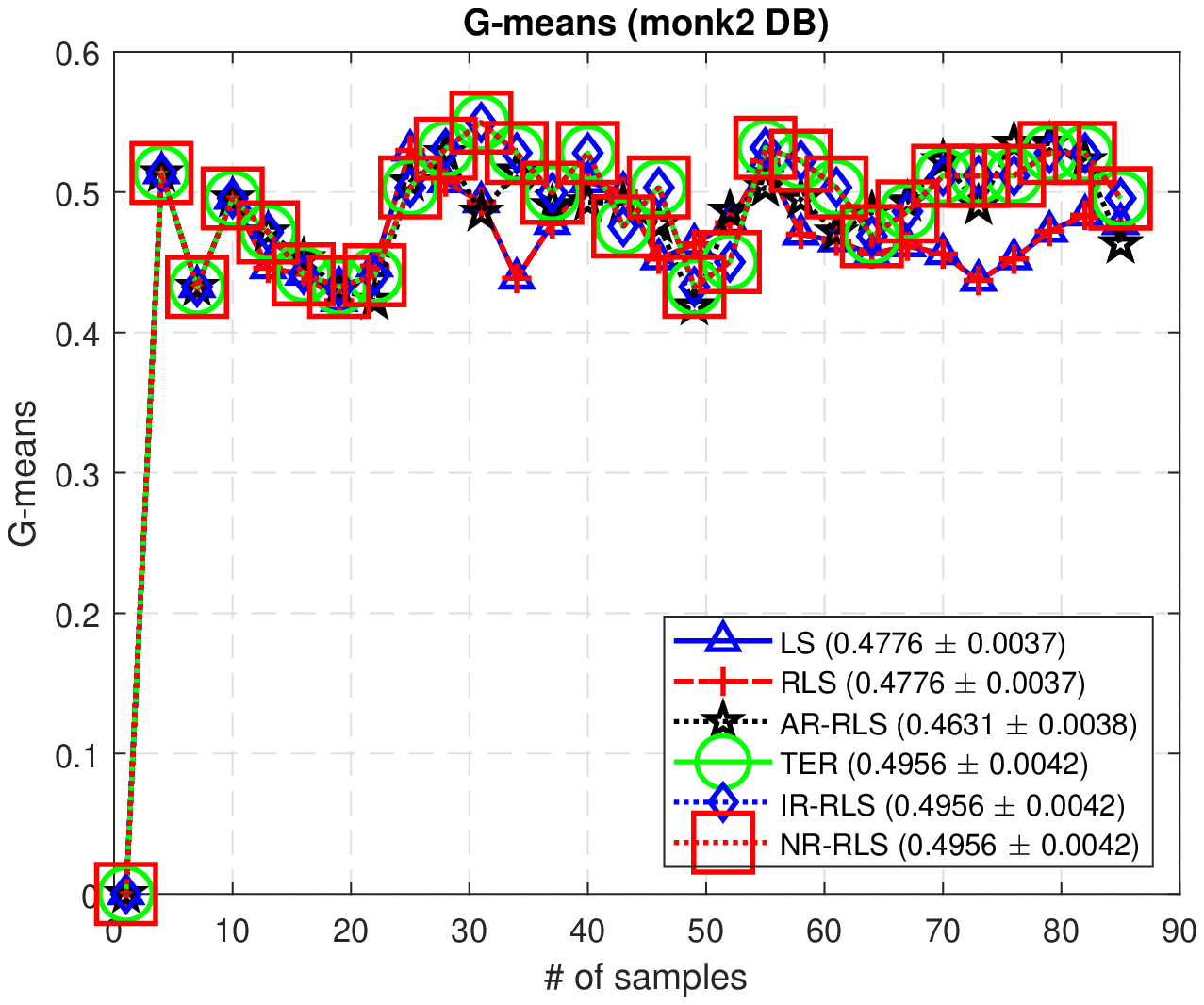}%
					} \hspace{0.4cm}
					\subfigure[The CPU times
					]{ 
						\includegraphics[width=0.3\textwidth]{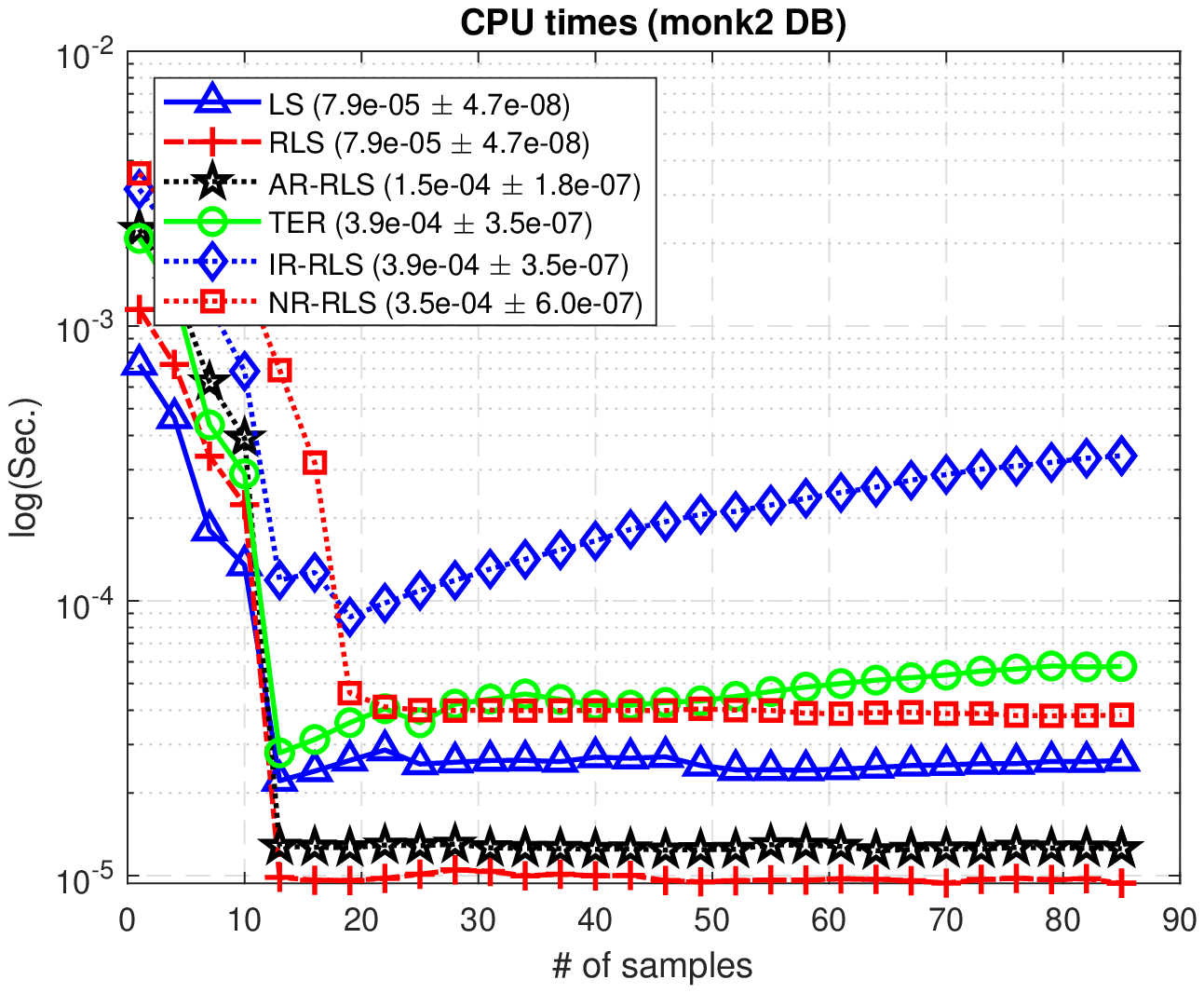}%
					} 
					\caption{Monk-2 DB}%
					
				\end{figure*} 
				
				\begin{figure*}[!h]
					\centering 
					\subfigure[The $L_2$-norm values
					]{ 
						\includegraphics[width=0.3\textwidth]{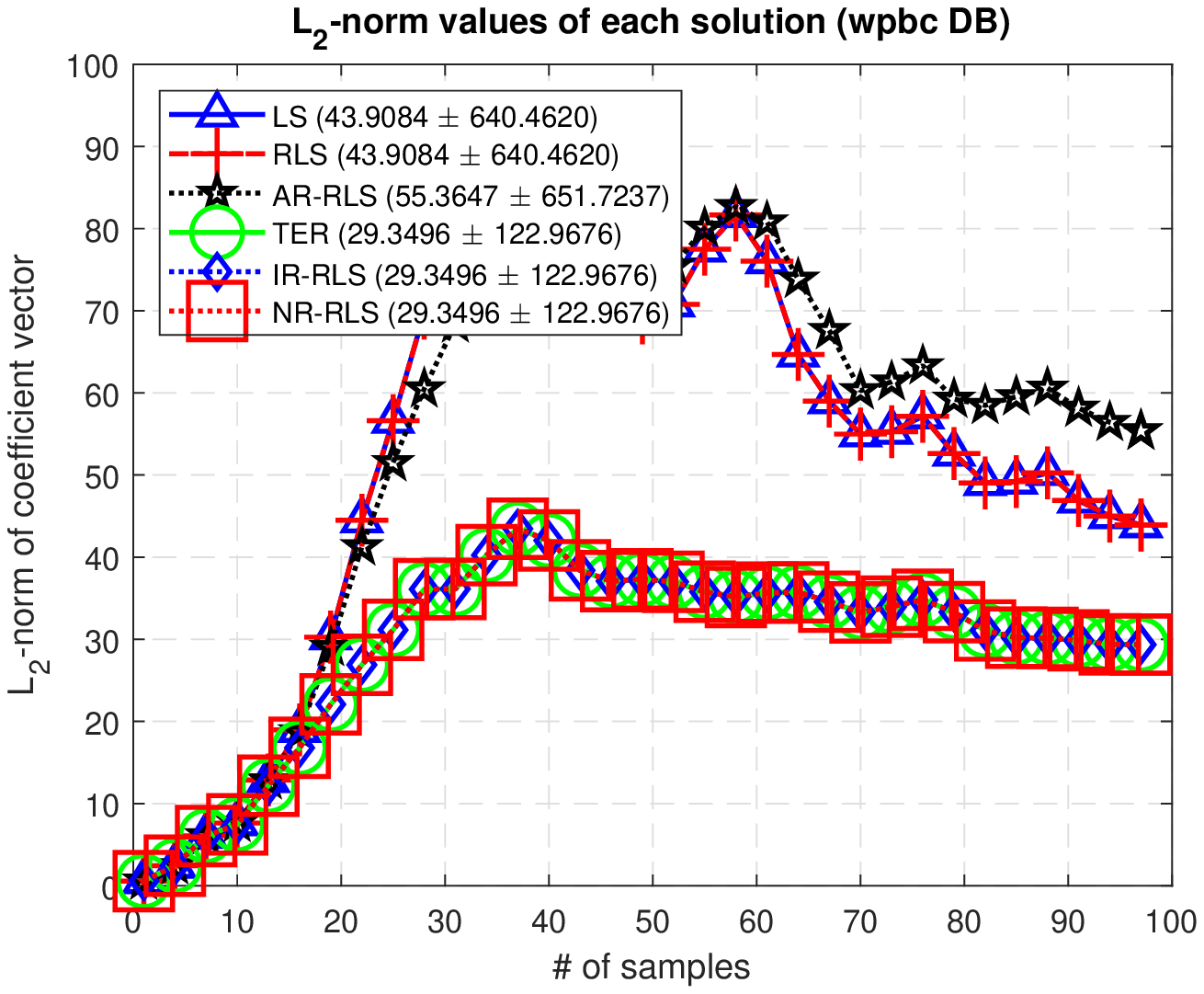}%
					} \hspace{0.4cm}
					\subfigure[The G-means
					]{ 
						\includegraphics[width=0.3\textwidth]{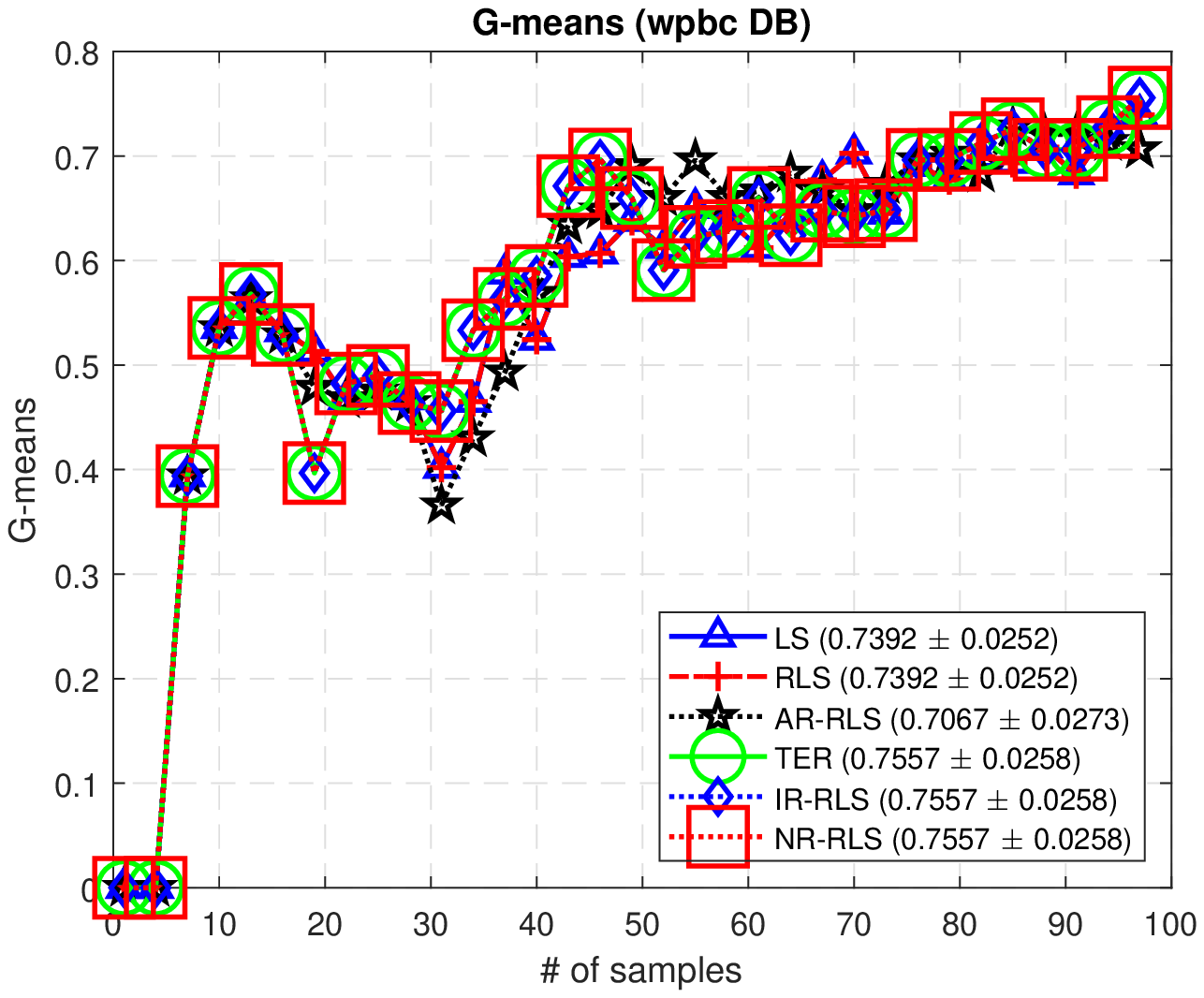}%
					} \hspace{0.4cm}
					\subfigure[The CPU times
					]{ 
						\includegraphics[width=0.3\textwidth]{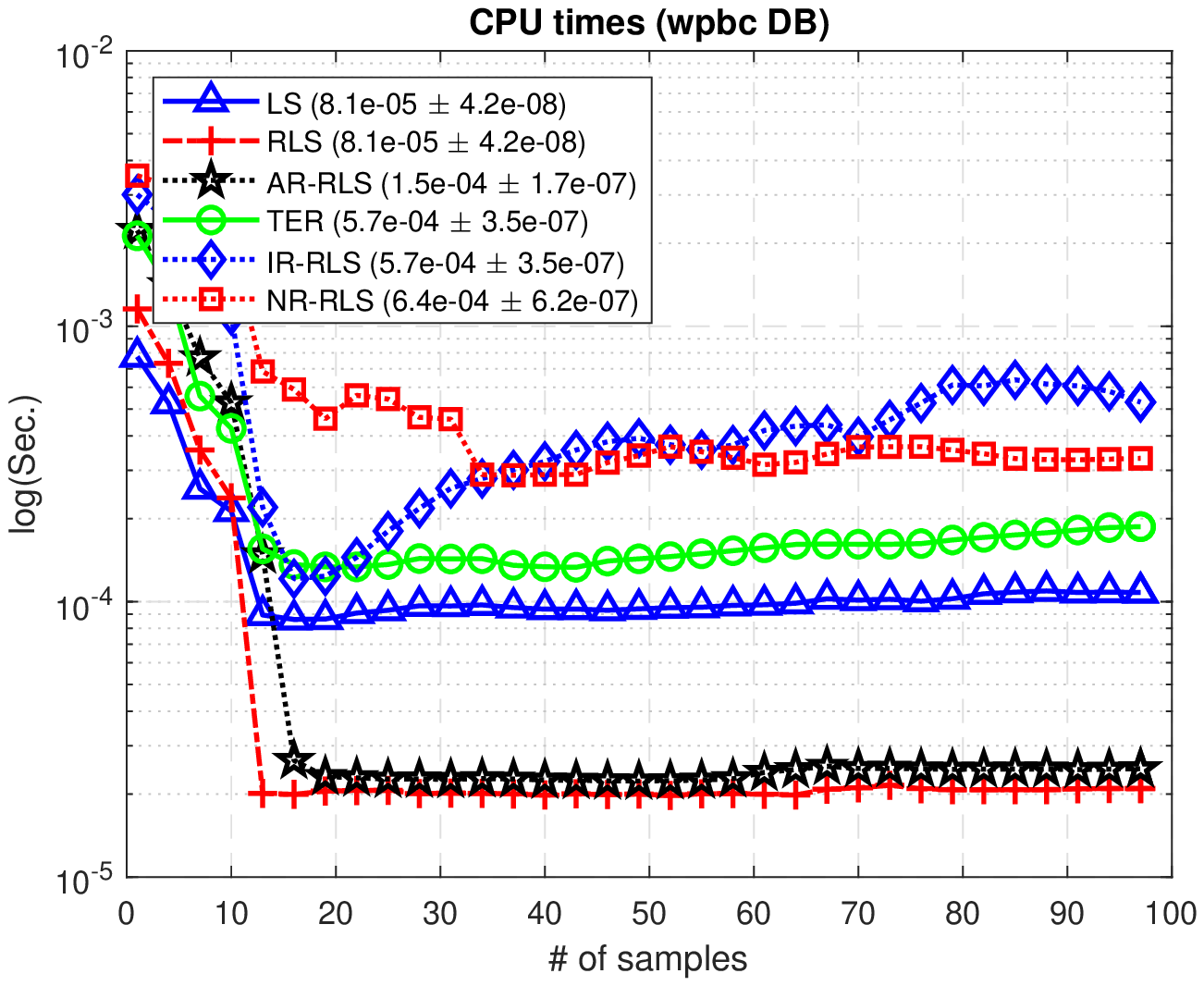}%
					} 
					\caption{Wpbc}%
					
				\end{figure*} 
				\newpage
				\begin{figure*}[!h]
					\centering 
					\subfigure[The $L_2$-norm values
					]{ 
						\includegraphics[width=0.3\textwidth]{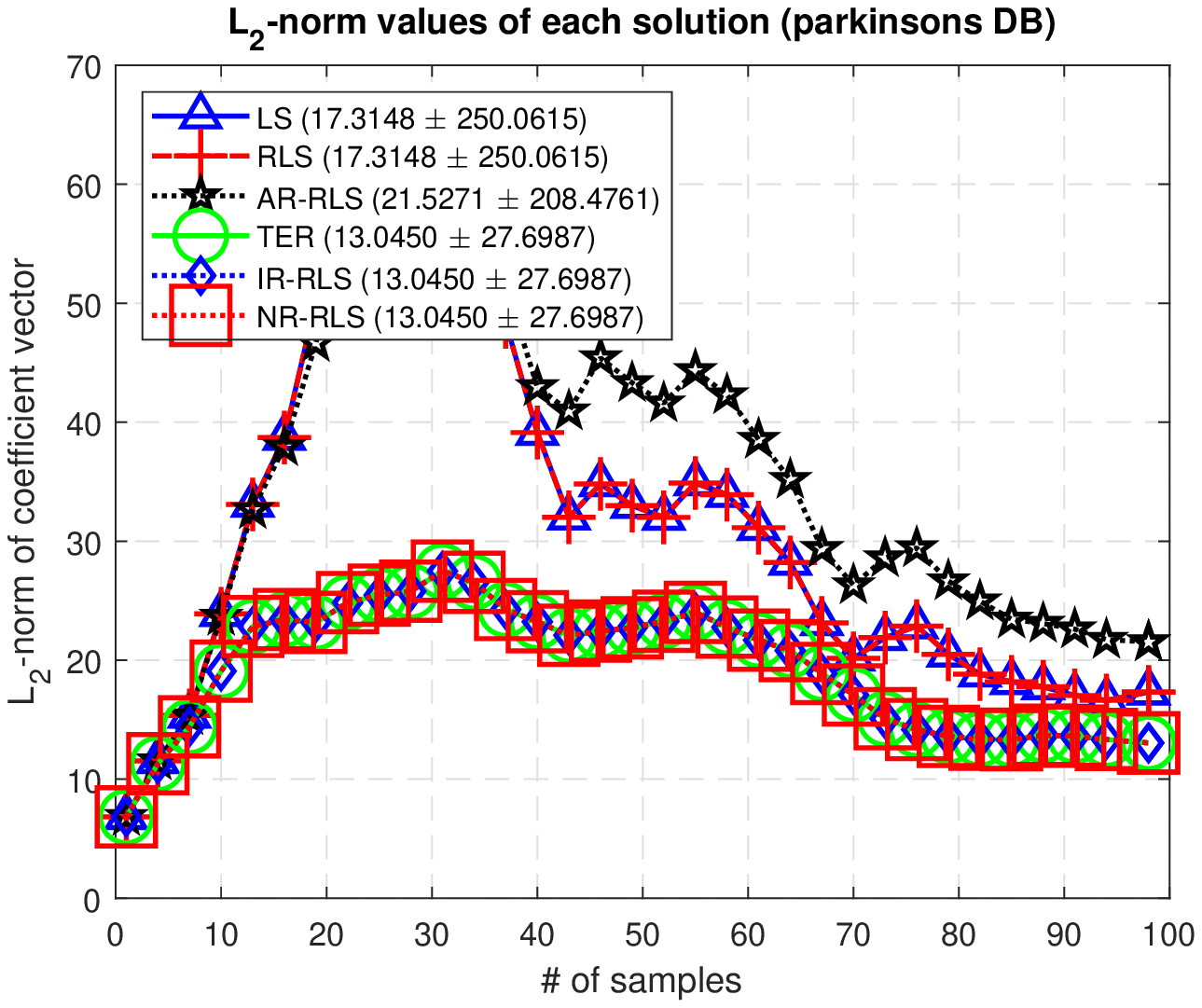}%
					} \hspace{0.4cm}
					\subfigure[The G-means
					]{ 
						\includegraphics[width=0.3\textwidth]{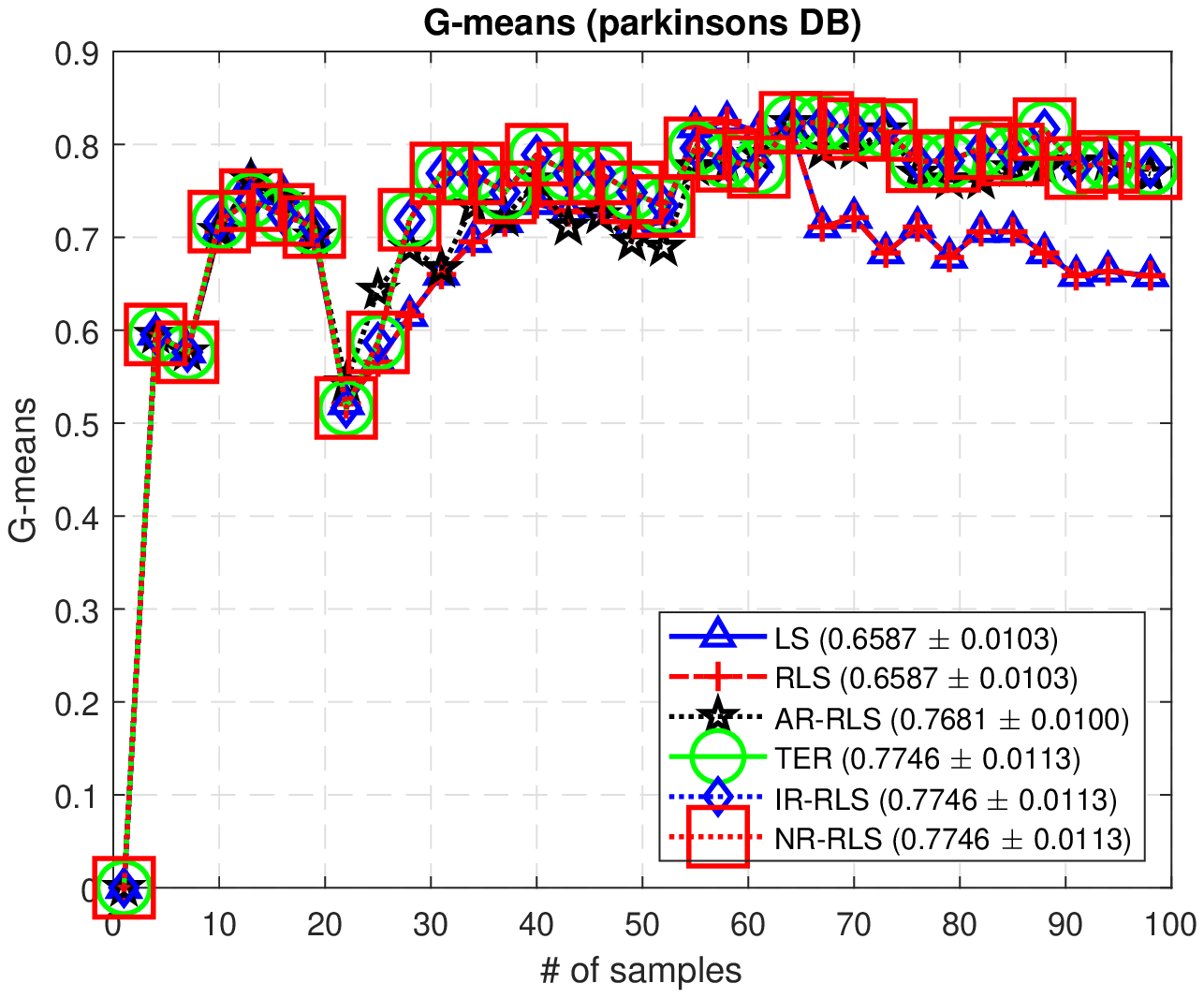}%
					} \hspace{0.4cm}
					\subfigure[The CPU times
					]{ 
						\includegraphics[width=0.3\textwidth]{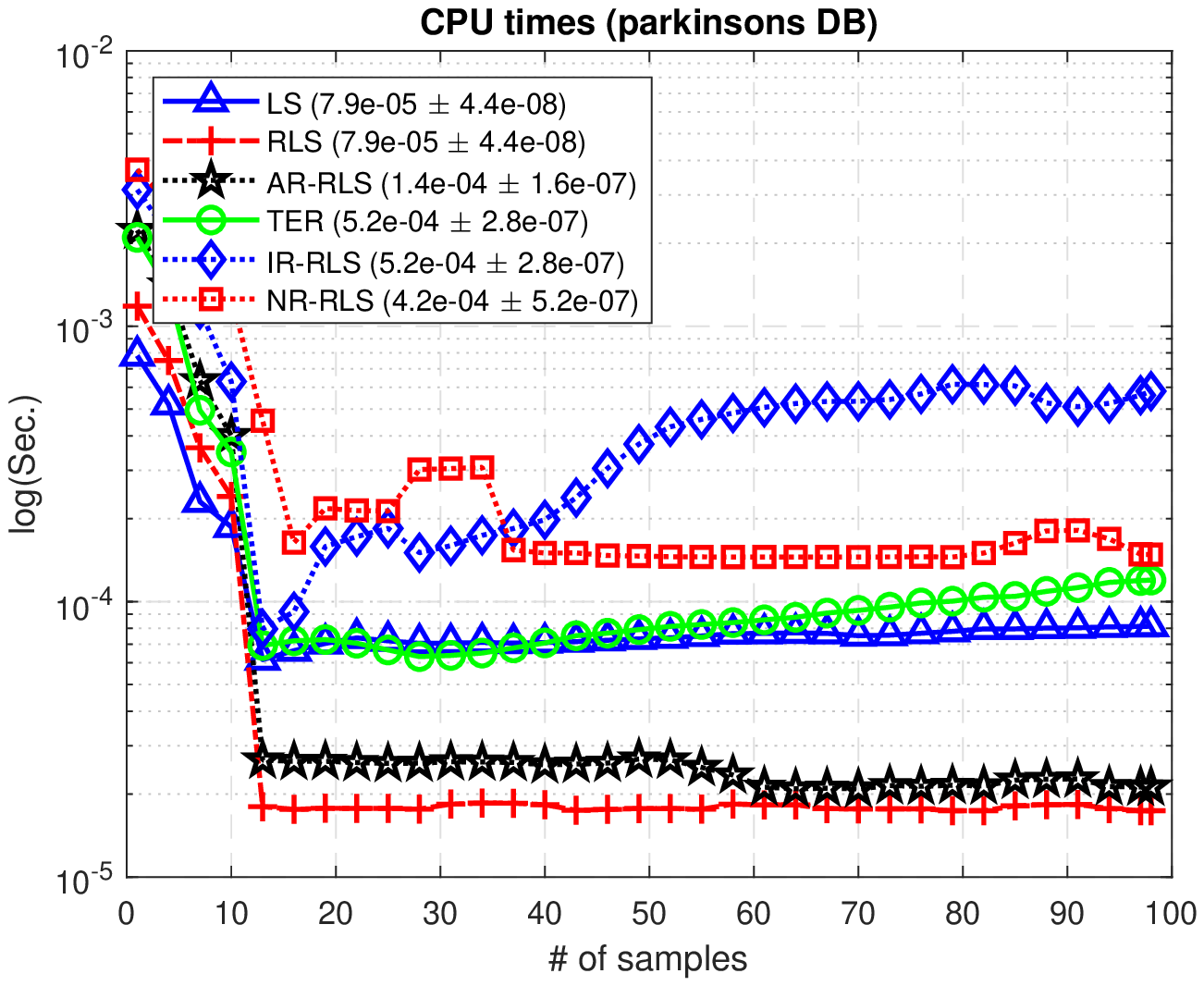}%
					} 
					\caption{Parkinsons}%
				\end{figure*} 
				
				\begin{figure*}[!h]
					\centering 
					\subfigure[The $L_2$-norm values
					]{ 
						\includegraphics[width=0.3\textwidth]{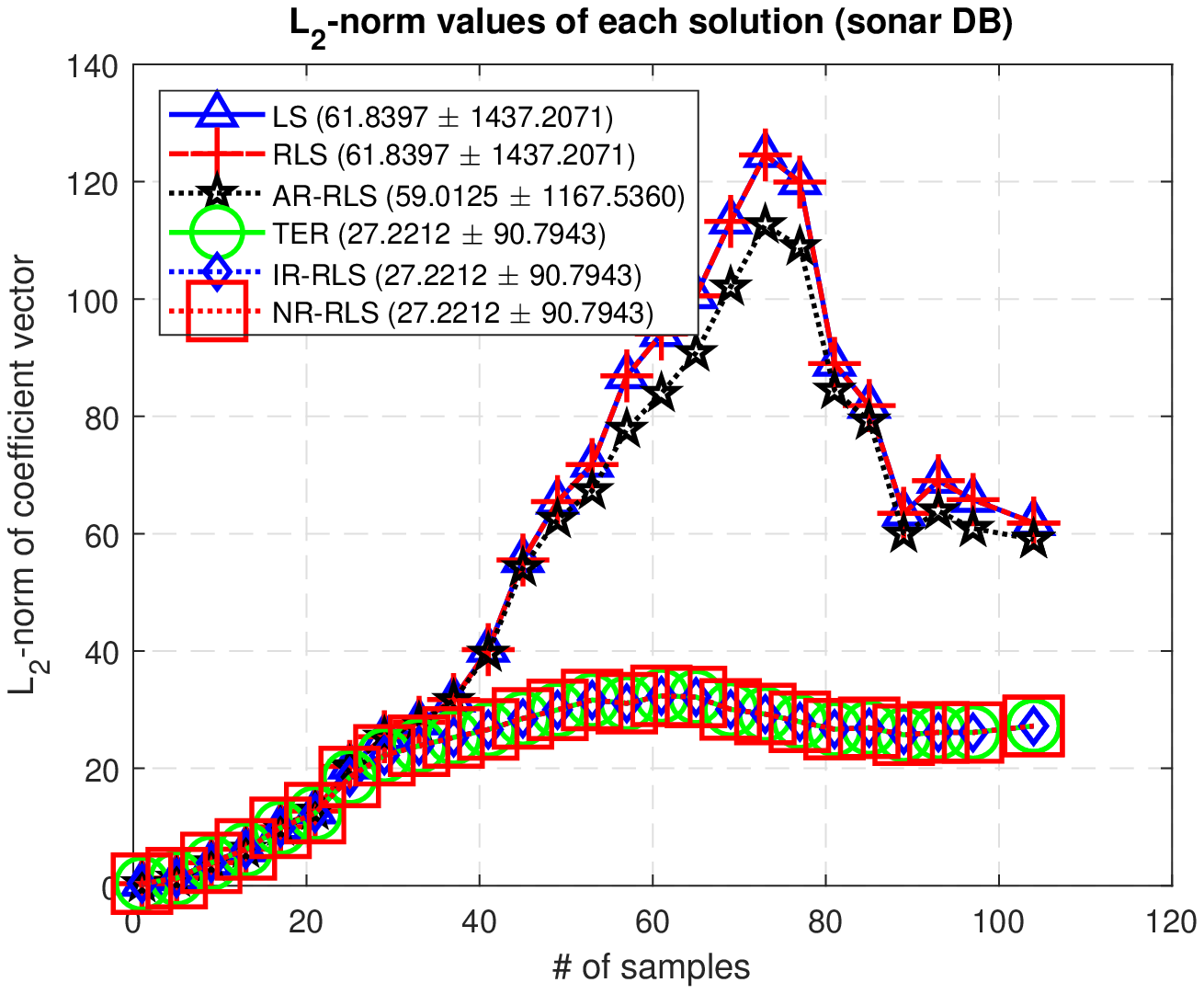}%
					} \hspace{0.4cm}
					\subfigure[The G-means
					]{ 
						\includegraphics[width=0.3\textwidth]{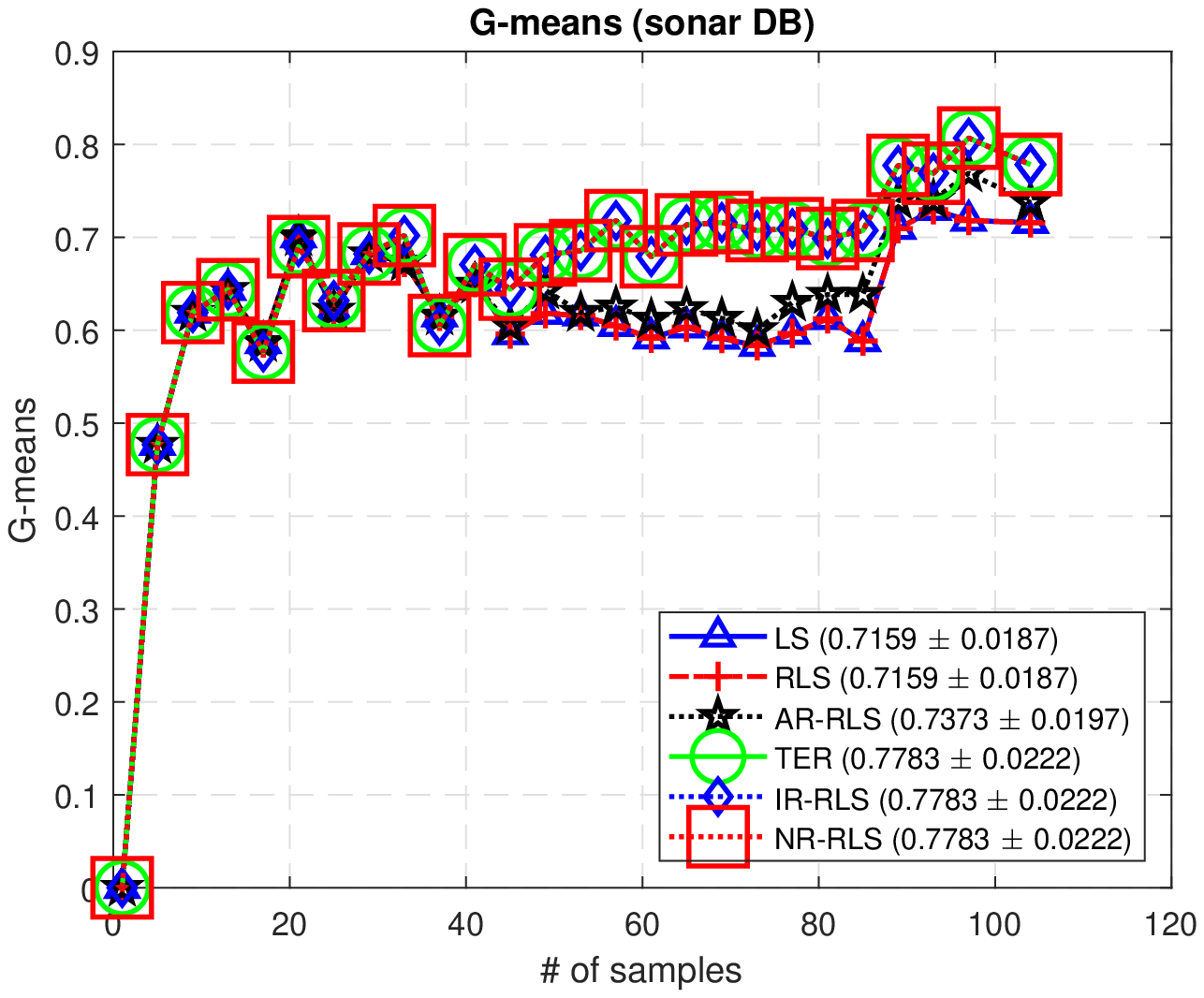}%
					} \hspace{0.4cm}
					\subfigure[The CPU times
					]{ 
						\includegraphics[width=0.3\textwidth]{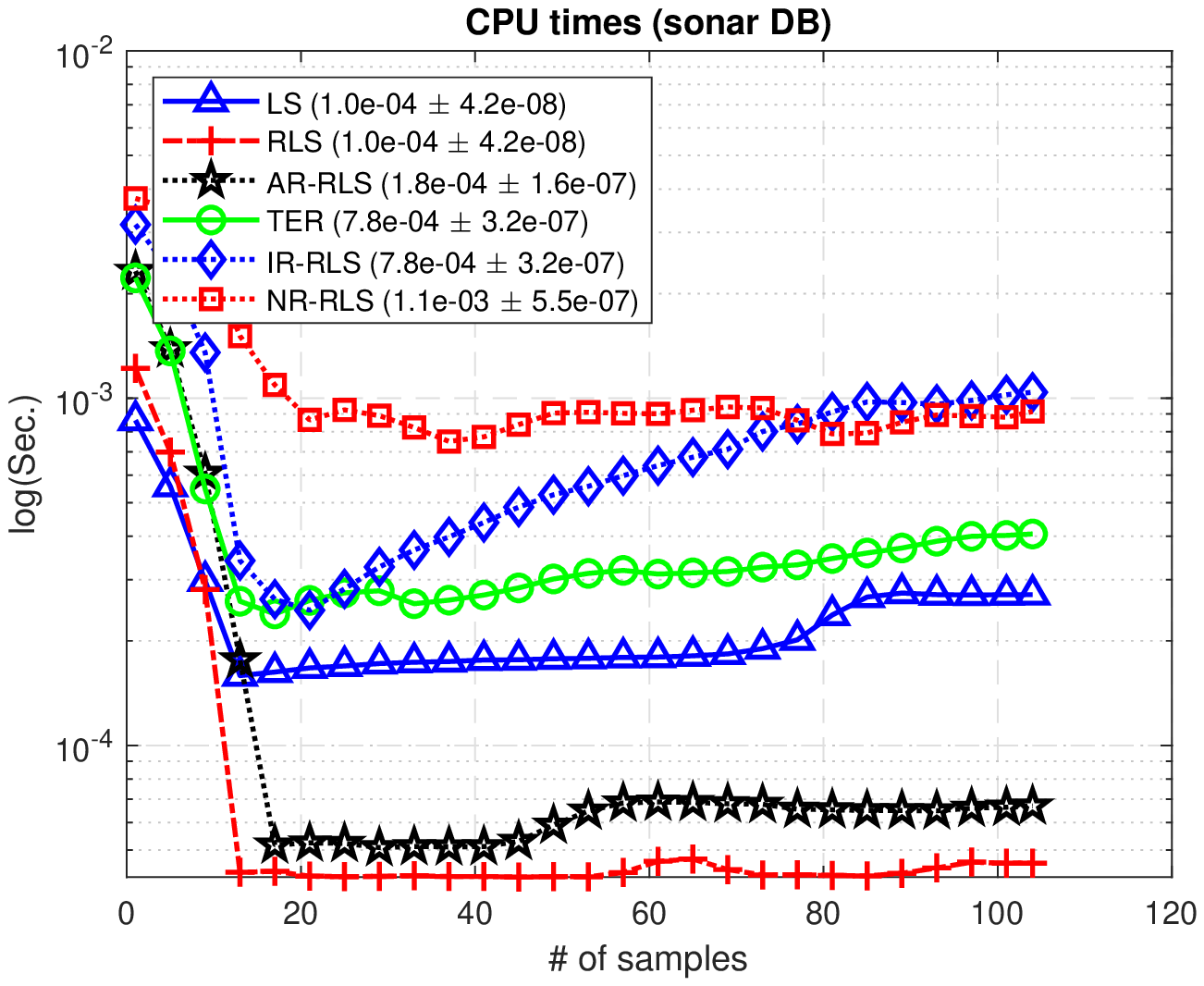}%
					} 
					\caption{Sonar}%
					
				\end{figure*} 
				
				\begin{figure*}[!h]
					\centering 
					\subfigure[The $L_2$-norm values
					]{ 
						\includegraphics[width=0.3\textwidth]{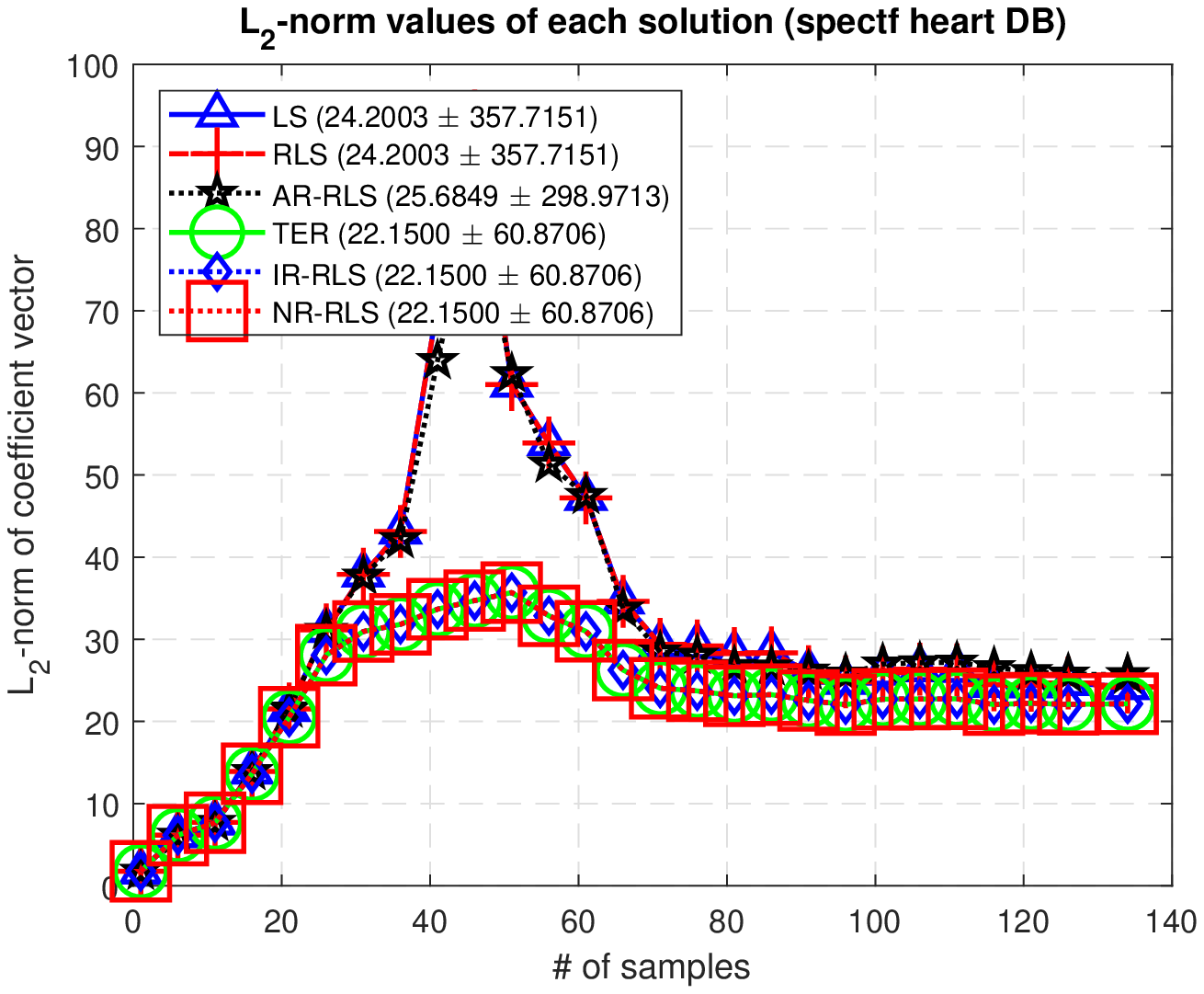}%
					} \hspace{0.4cm}
					\subfigure[The G-means
					]{ 
						\includegraphics[width=0.3\textwidth]{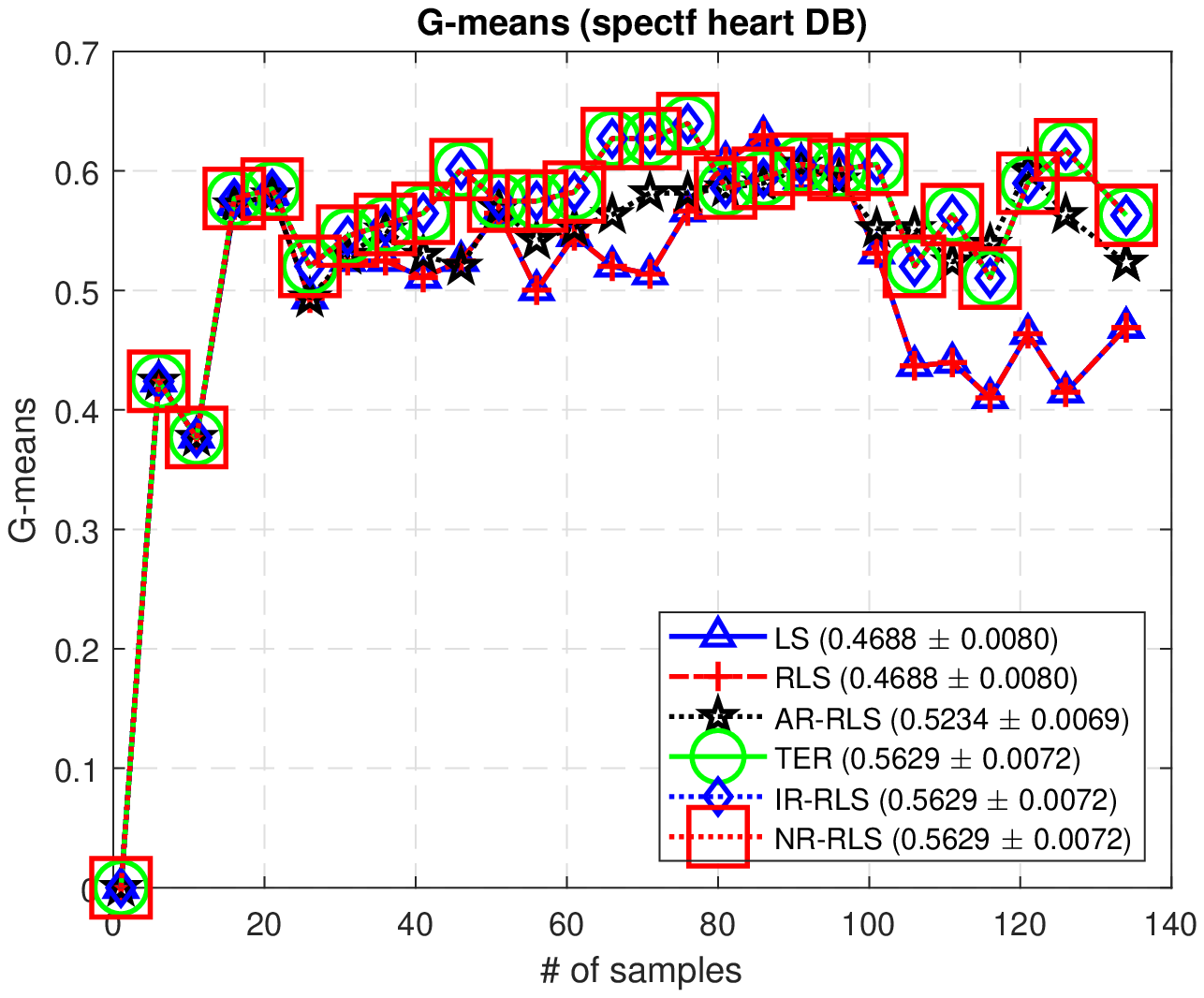}%
					} \hspace{0.4cm}
					\subfigure[The CPU times
					]{ 
						\includegraphics[width=0.3\textwidth]{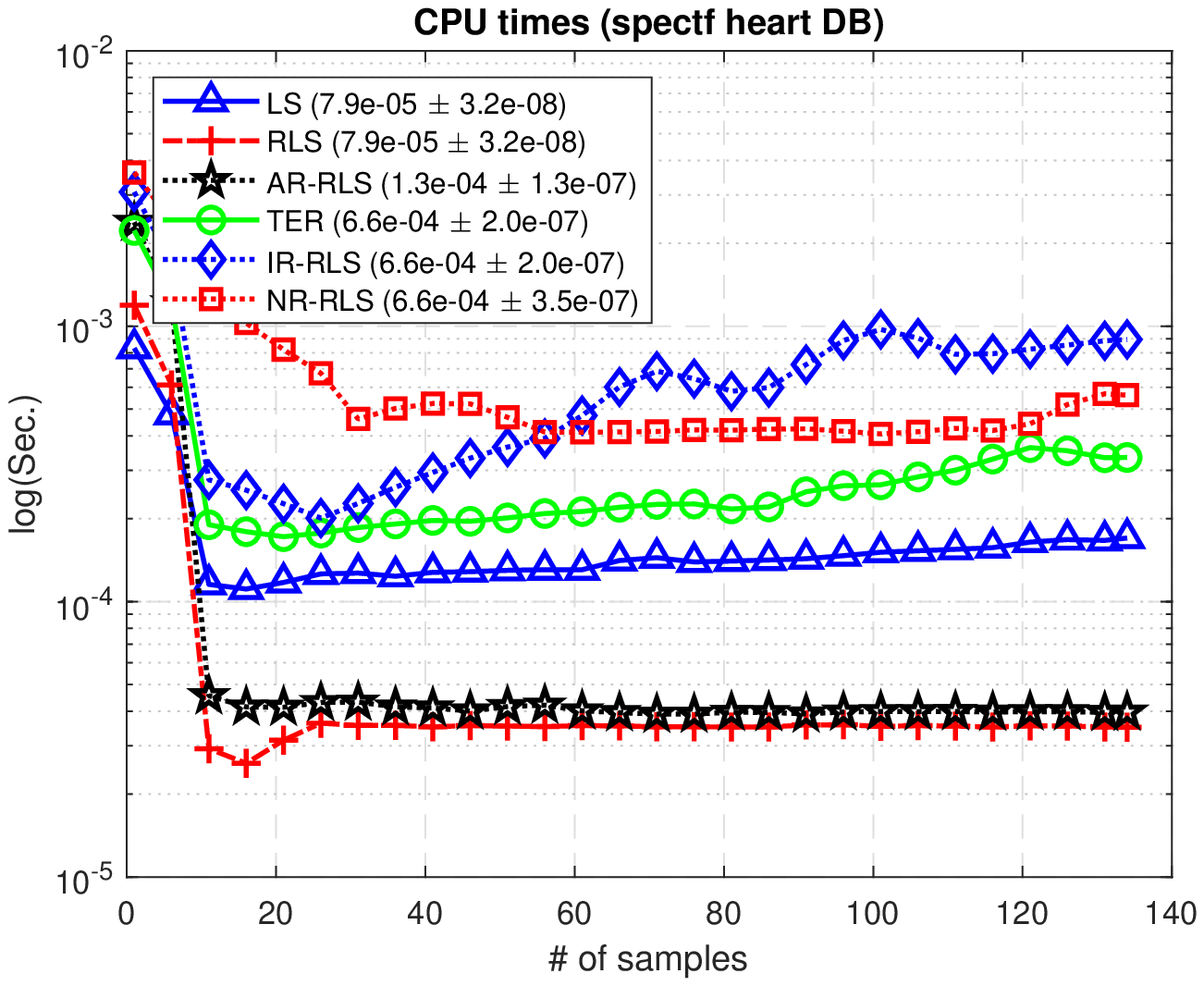}%
					} 
					\caption{SPECTF-heart}%
					
				\end{figure*} 
				
				\begin{figure*}[!h]
					\centering 
					\subfigure[The $L_2$-norm values
					]{ 
						\includegraphics[width=0.3\textwidth]{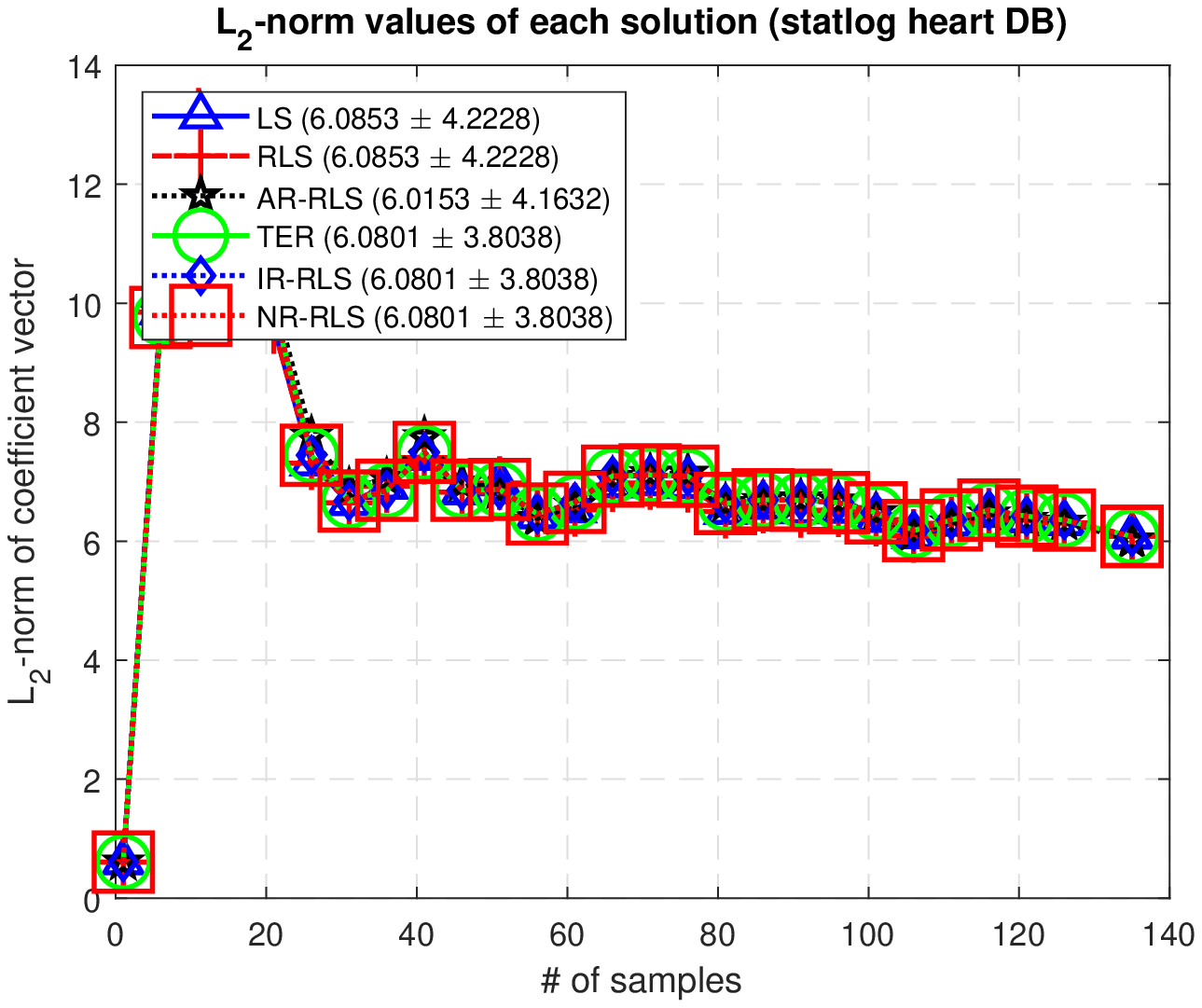}%
					} \hspace{0.4cm}
					\subfigure[The G-means
					]{ 
						\includegraphics[width=0.3\textwidth]{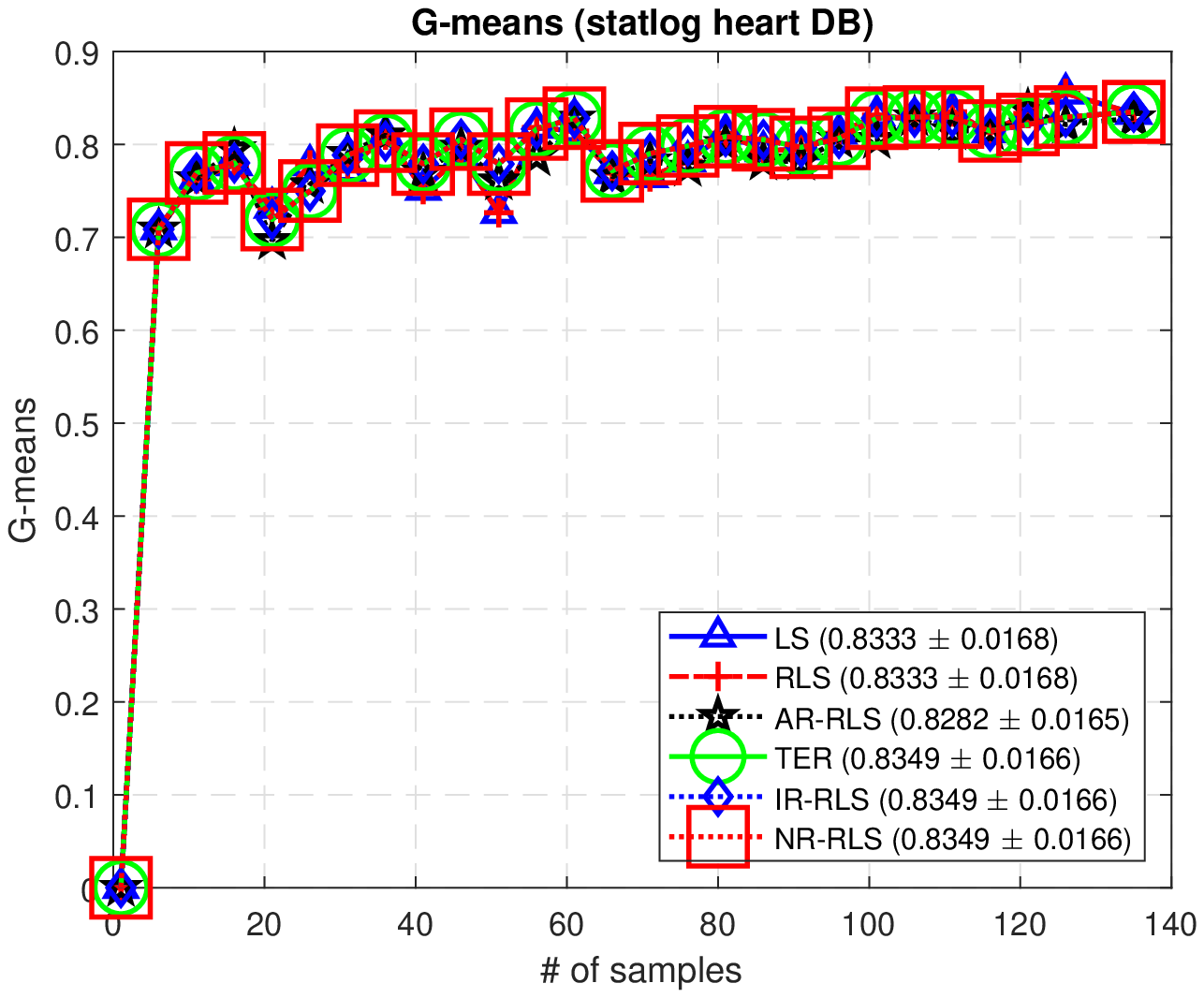}%
					} \hspace{0.4cm}
					\subfigure[The CPU times
					]{ 
						\includegraphics[width=0.3\textwidth]{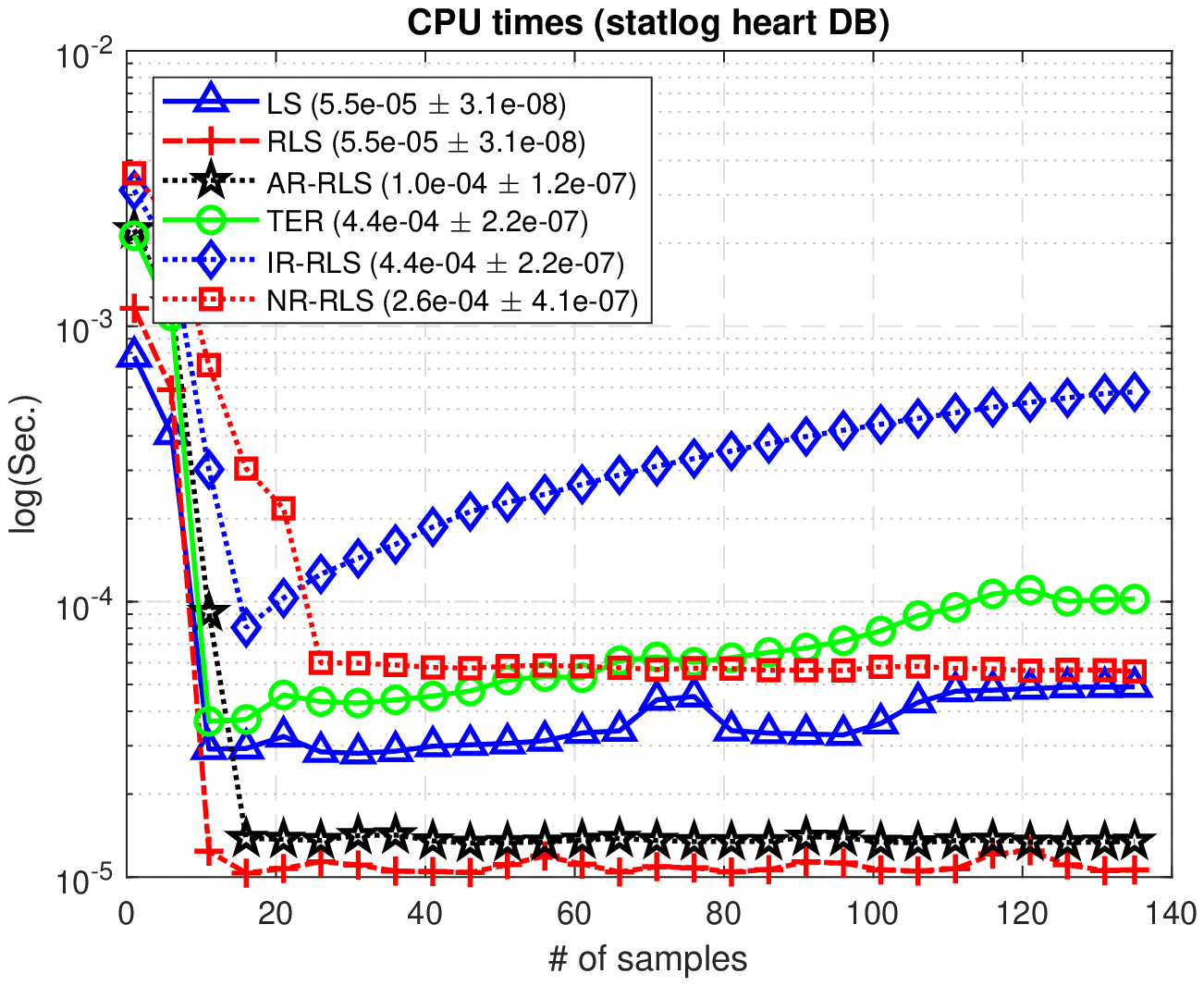}%
					} 
					\caption{StatLog-heart}%
				\end{figure*} 
				\newpage
				
				\begin{figure*}[!h]
					\centering 
					\subfigure[The $L_2$-norm values
					]{ 
						\includegraphics[width=0.3\textwidth]{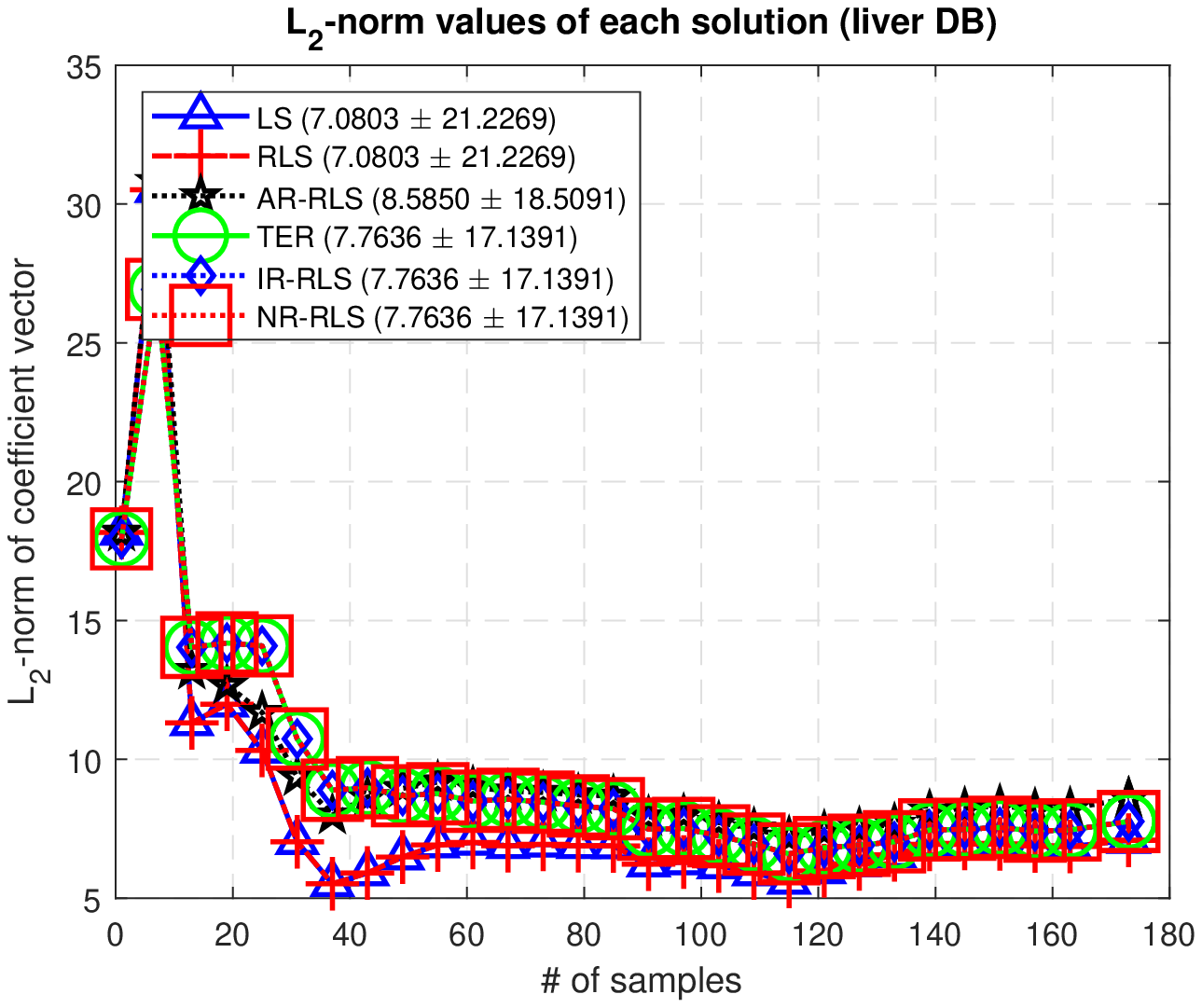}%
					} \hspace{0.4cm}
					\subfigure[The G-means
					]{ 
						\includegraphics[width=0.3\textwidth]{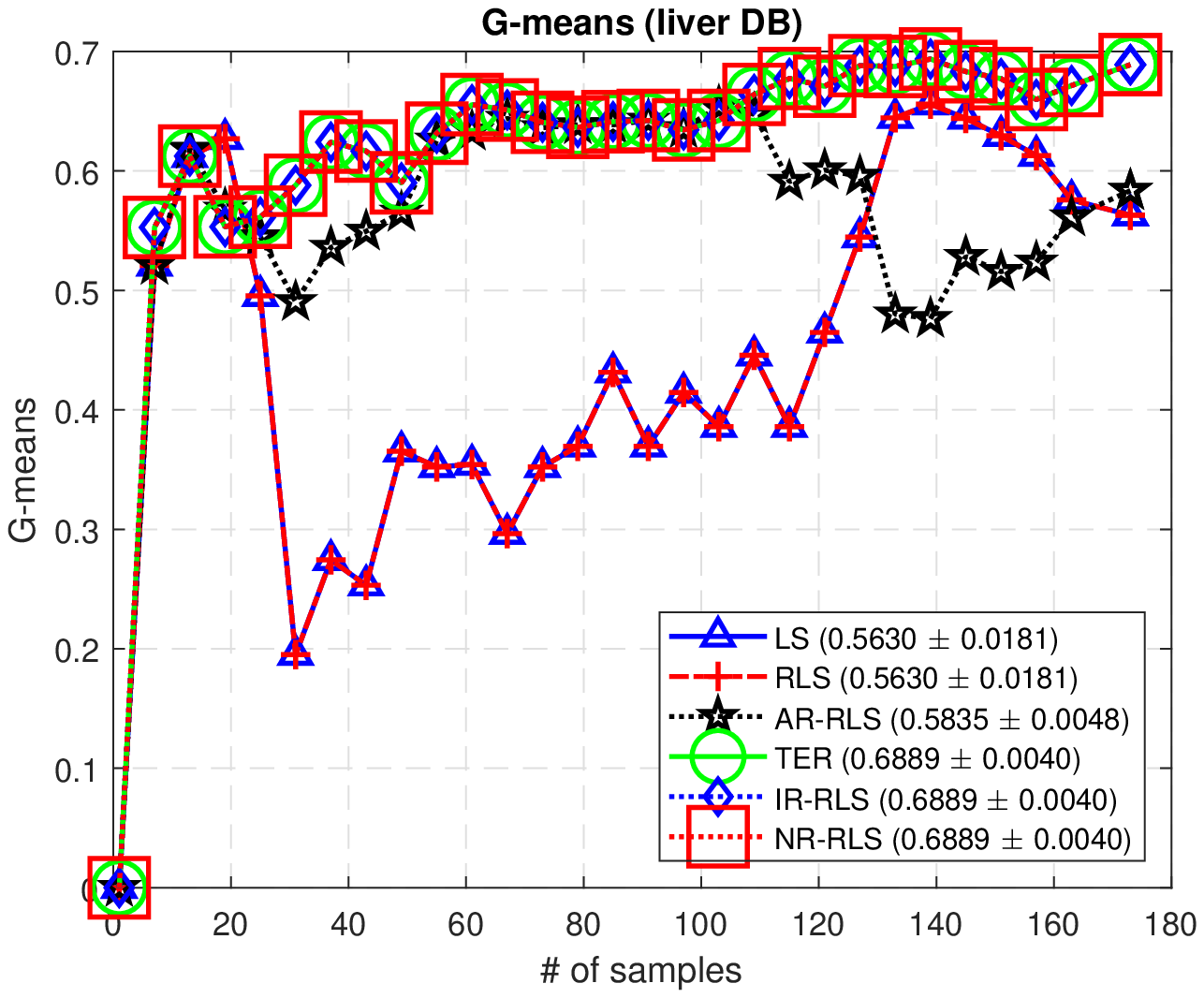}%
					} \hspace{0.4cm}
					\subfigure[The CPU times
					]{ 
						\includegraphics[width=0.3\textwidth]{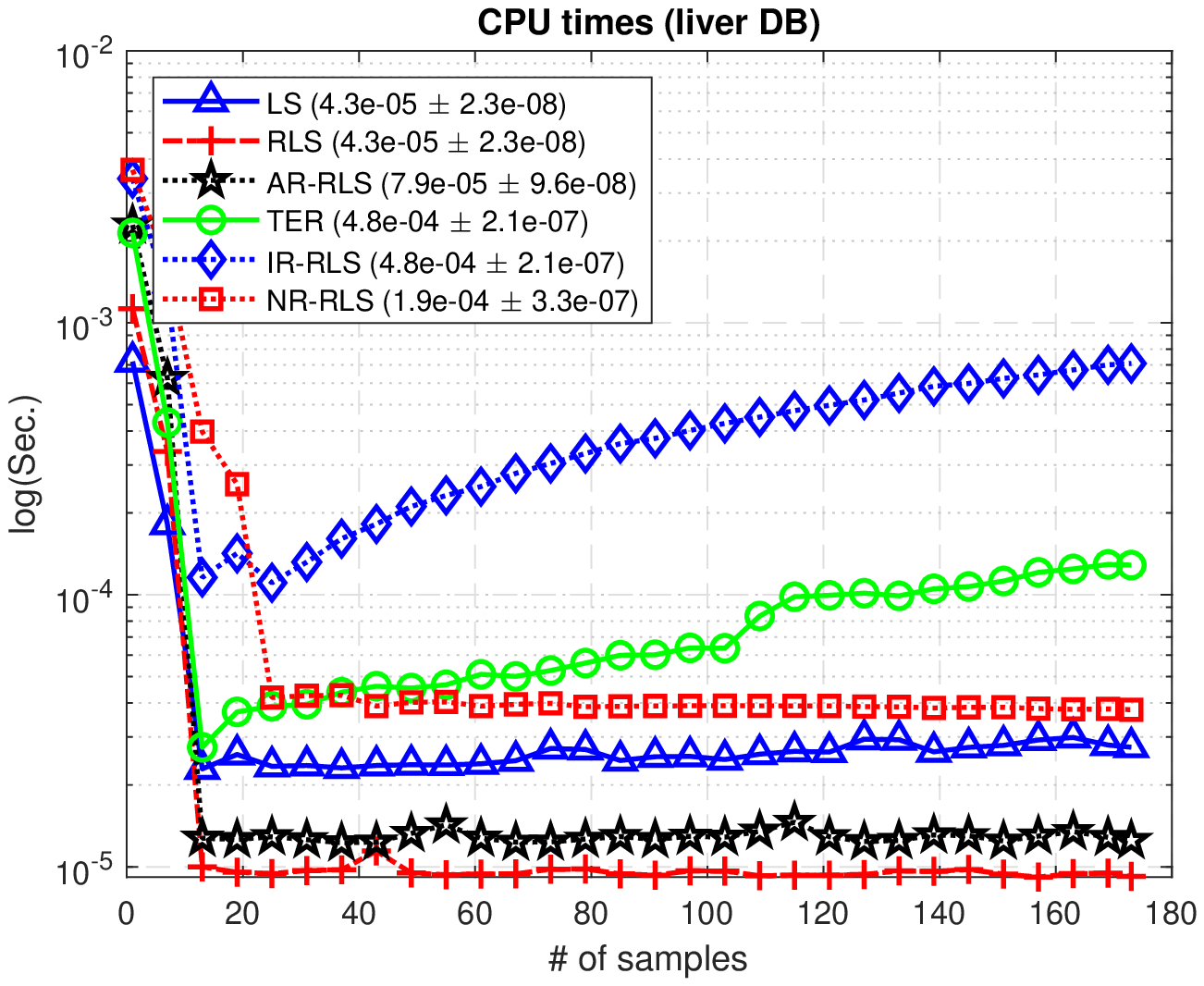}%
					} 
					\caption{BUPA-liver}%
					
				\end{figure*} 
				\vspace{0.4cm}
				\begin{figure*}[!h]
					\centering 
					\subfigure[The $L_2$-norm values
					]{ 
						\includegraphics[width=0.3\textwidth]{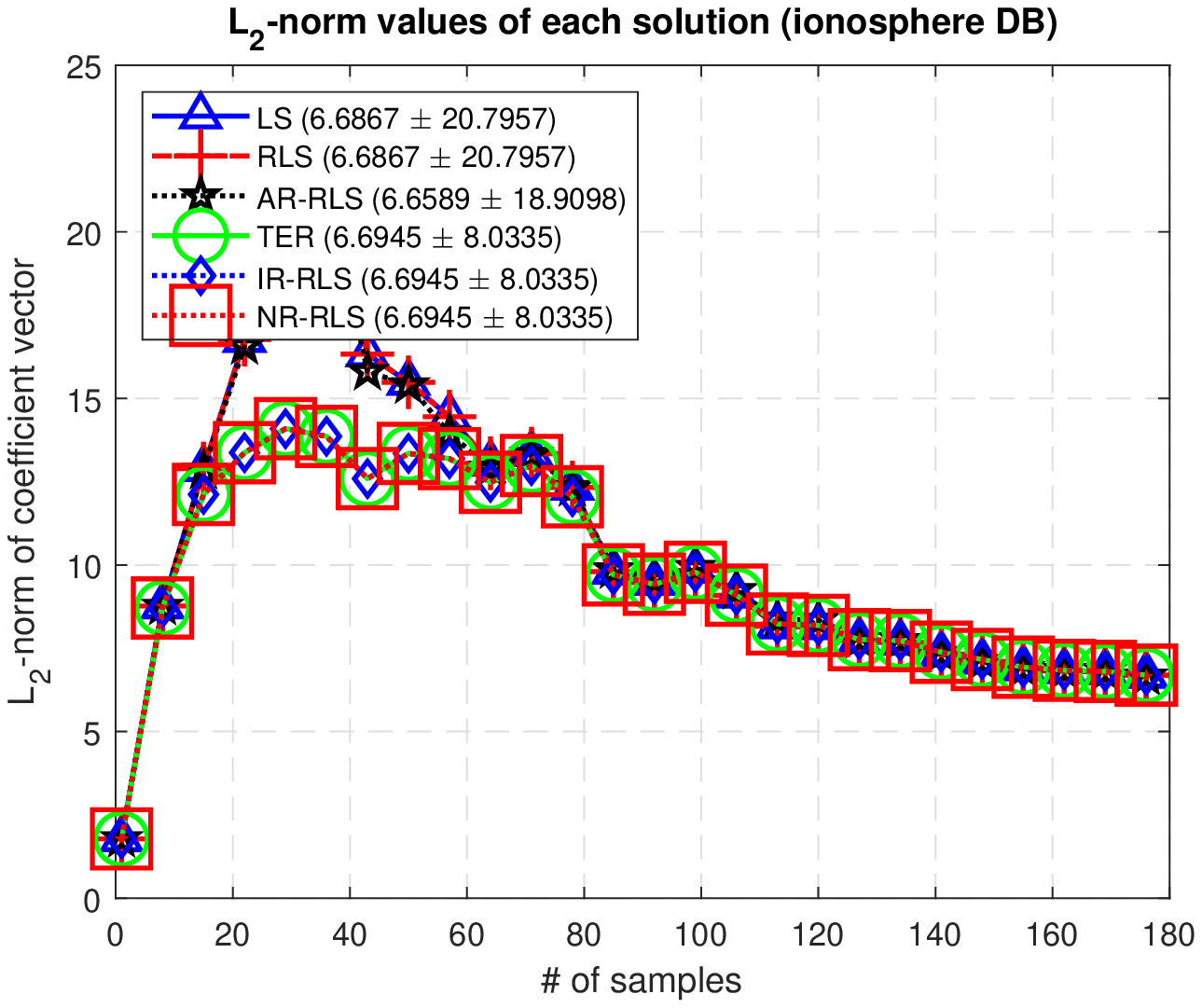}%
					} \hspace{0.4cm}
					\subfigure[The G-means
					]{ 
						\includegraphics[width=0.3\textwidth]{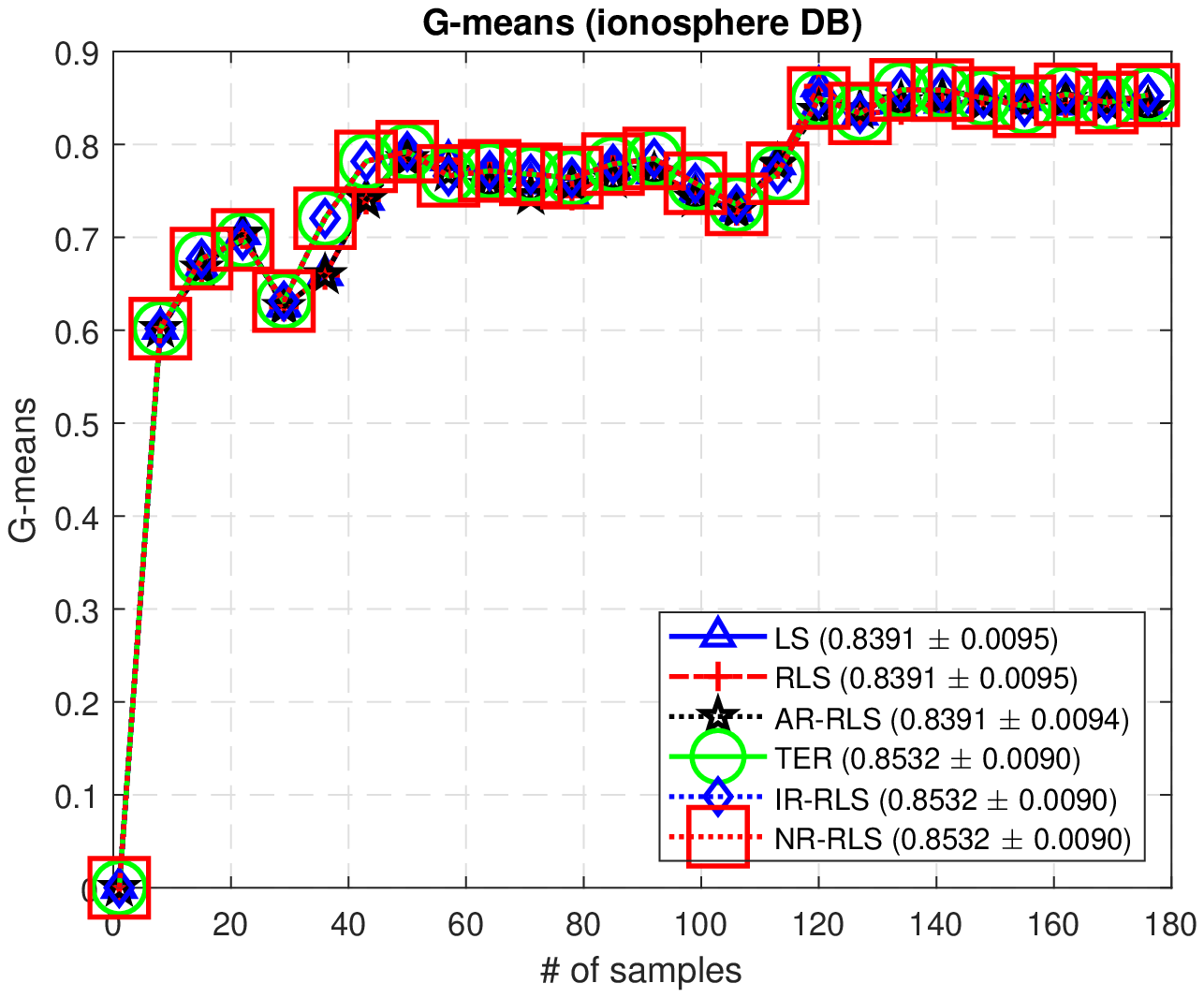}%
					} \hspace{0.4cm}
					\subfigure[The CPU times
					]{ 
						\includegraphics[width=0.3\textwidth]{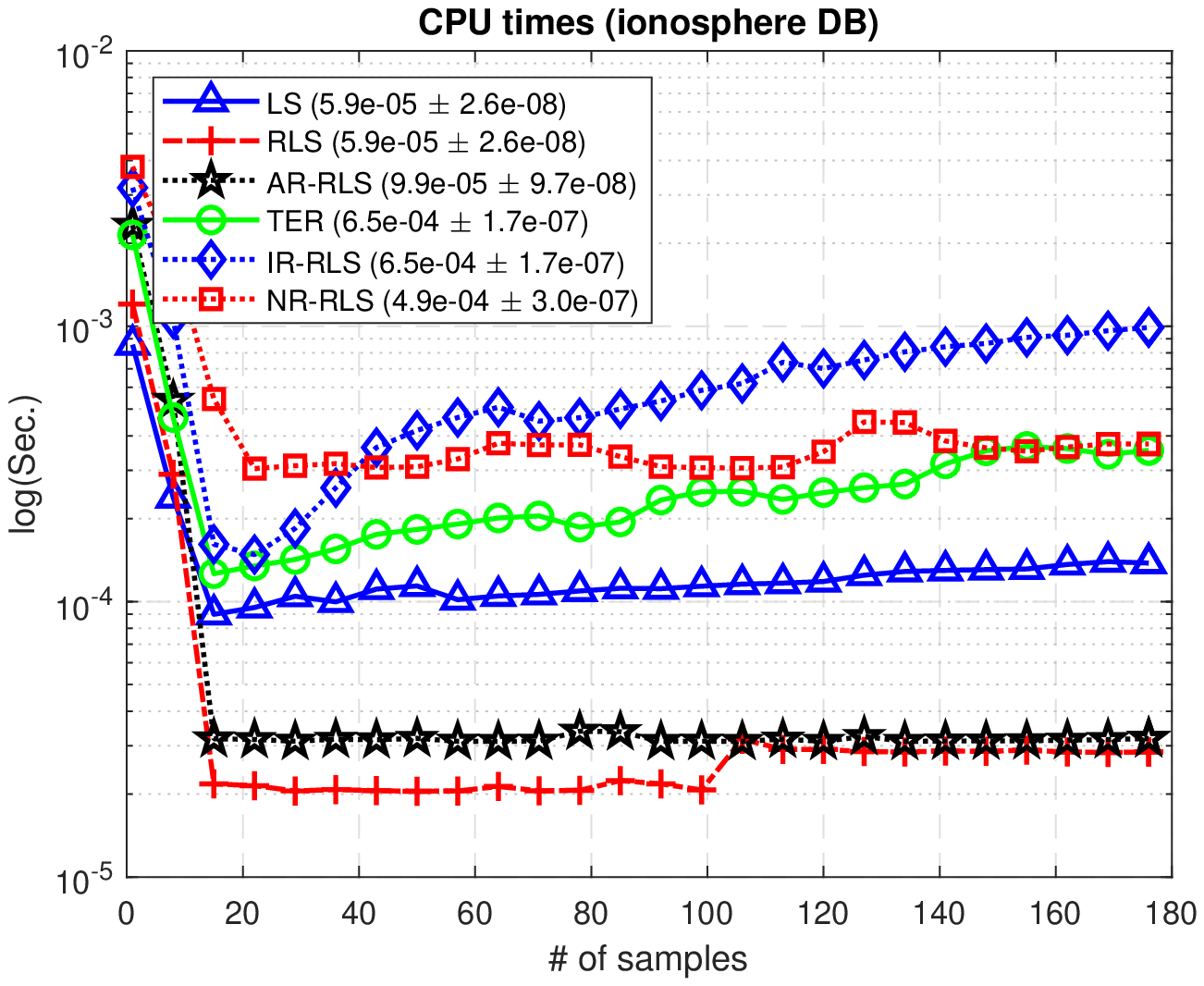}%
					} 
					\caption{Ionosphere}%
				\end{figure*} 
				\vspace{0.4cm}
				\begin{figure*}[!h]
					\centering 
					\subfigure[The $L_2$-norm values
					]{ 
						\includegraphics[width=0.3\textwidth]{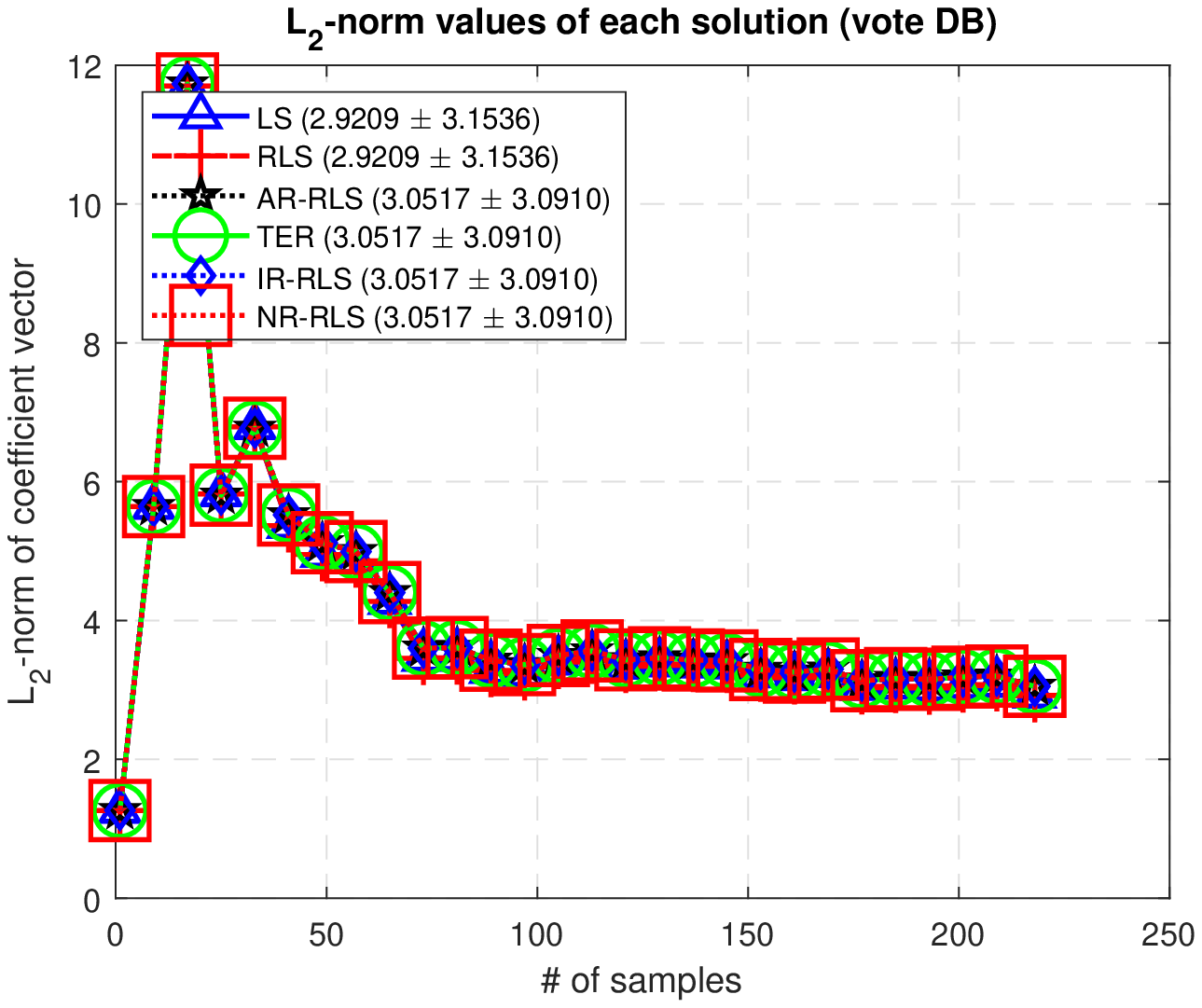}%
					} \hspace{0.4cm}
					\subfigure[The G-means
					]{ 
						\includegraphics[width=0.3\textwidth]{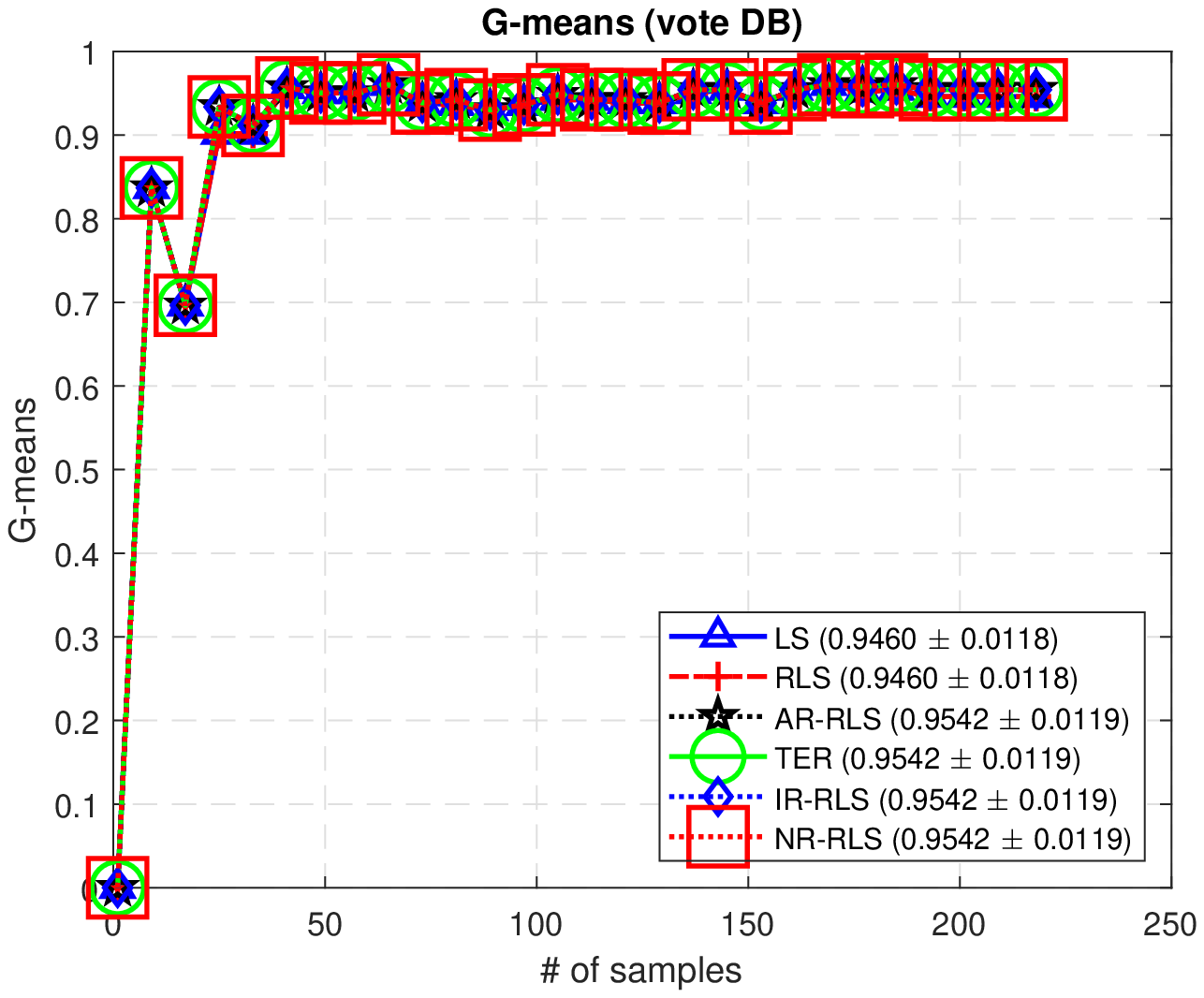}%
					} \hspace{0.4cm}
					\subfigure[The CPU times
					]{ 
						\includegraphics[width=0.3\textwidth]{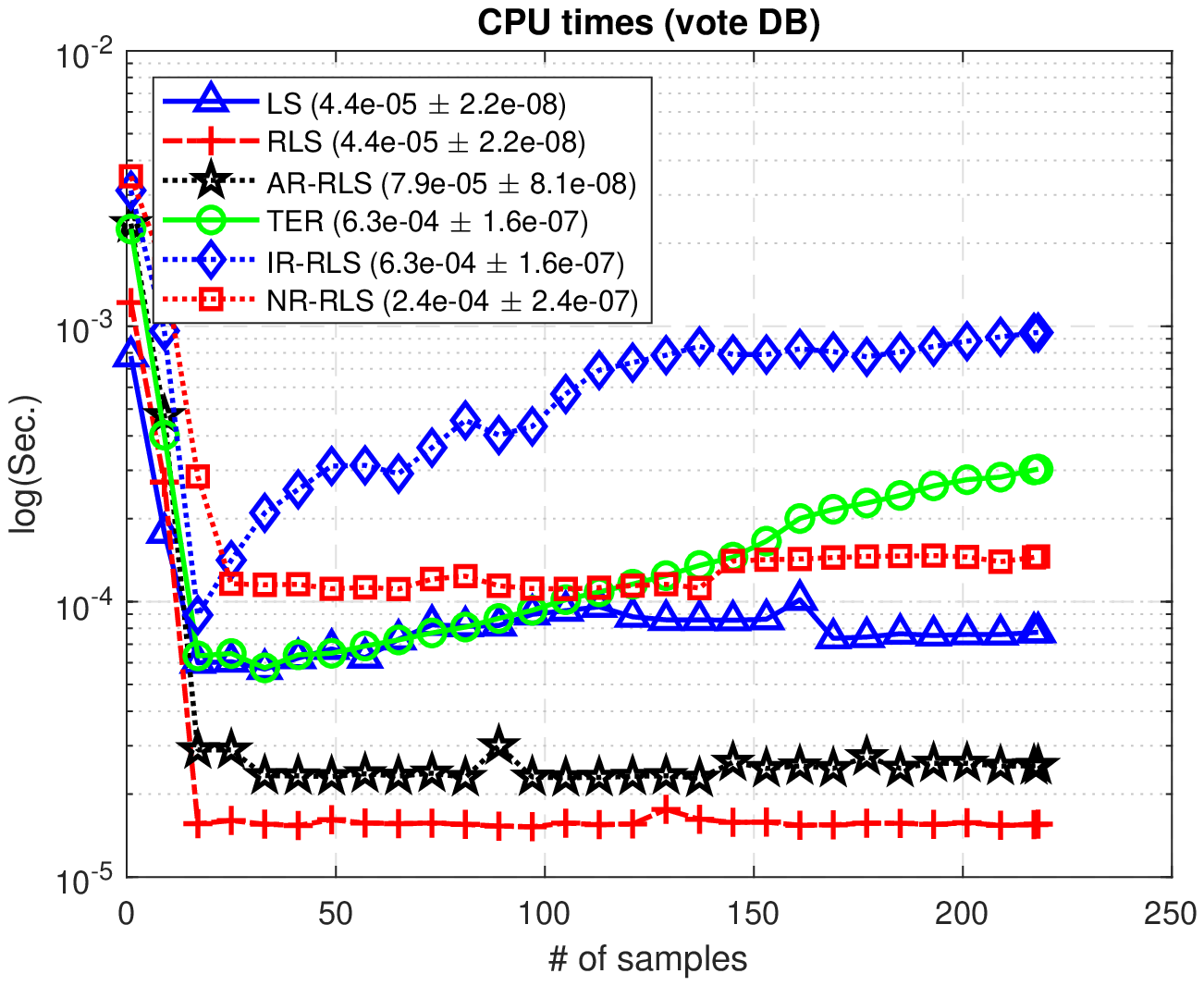}%
					} 
					\caption{Votes}%
					
				\end{figure*} 
				
				\begin{figure*}[!h]
					\centering 
					\subfigure[The $L_2$-norm values
					]{ 
						\includegraphics[width=0.3\textwidth]{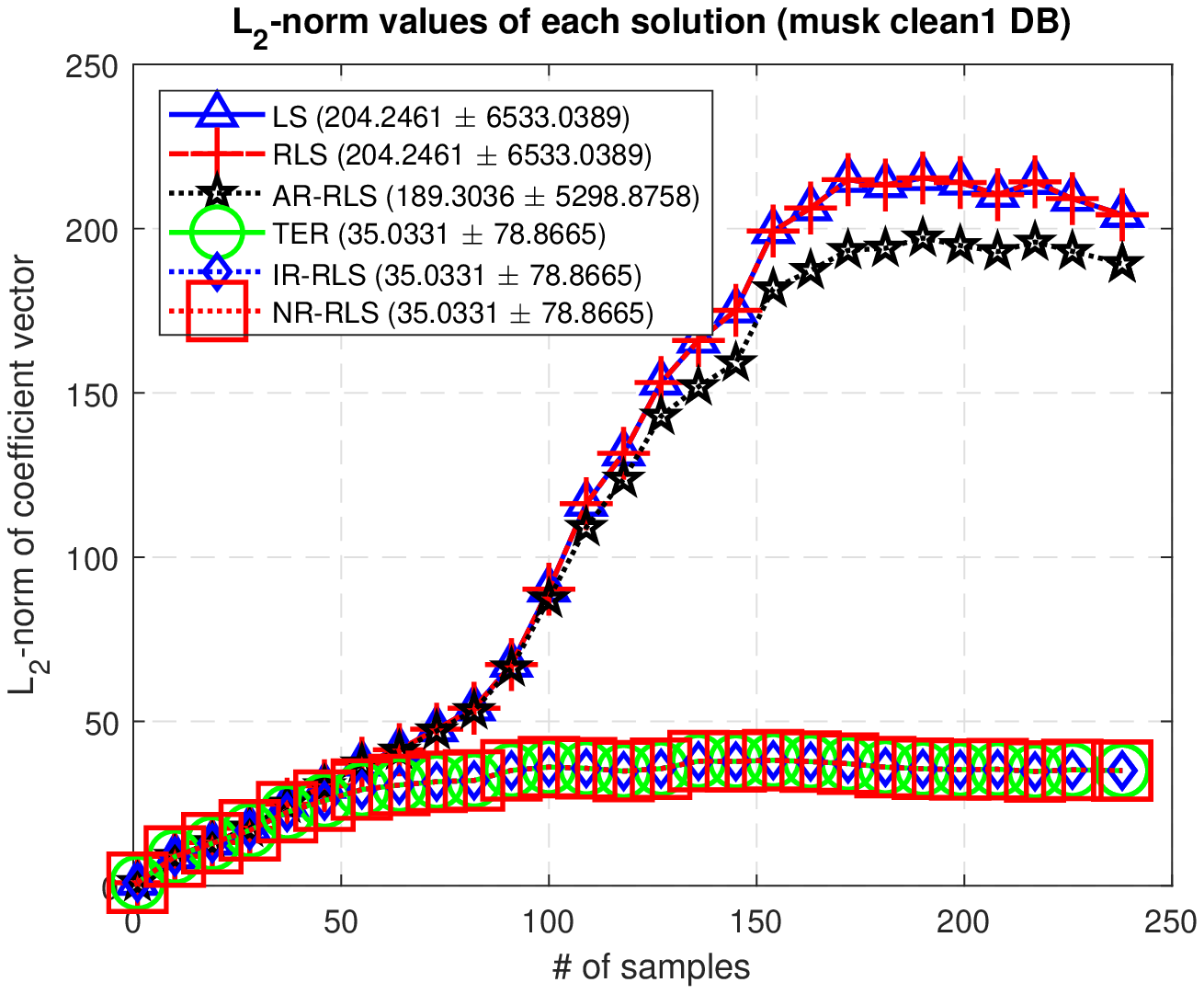}%
					} \hspace{0.4cm}
					\subfigure[The G-means
					]{ 
						\includegraphics[width=0.3\textwidth]{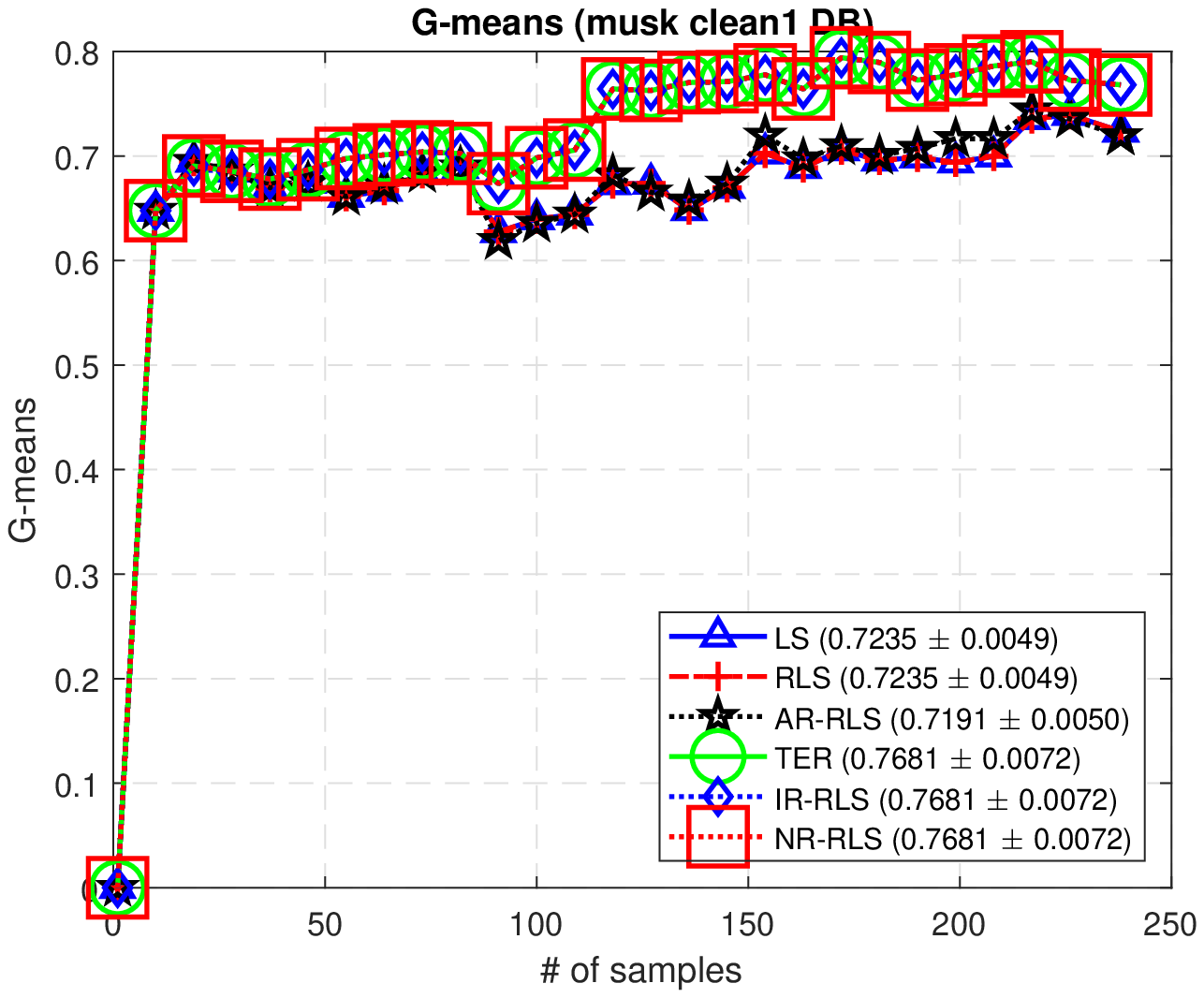}%
					} \hspace{0.4cm}
					\subfigure[The CPU times
					]{ 
						\includegraphics[width=0.3\textwidth]{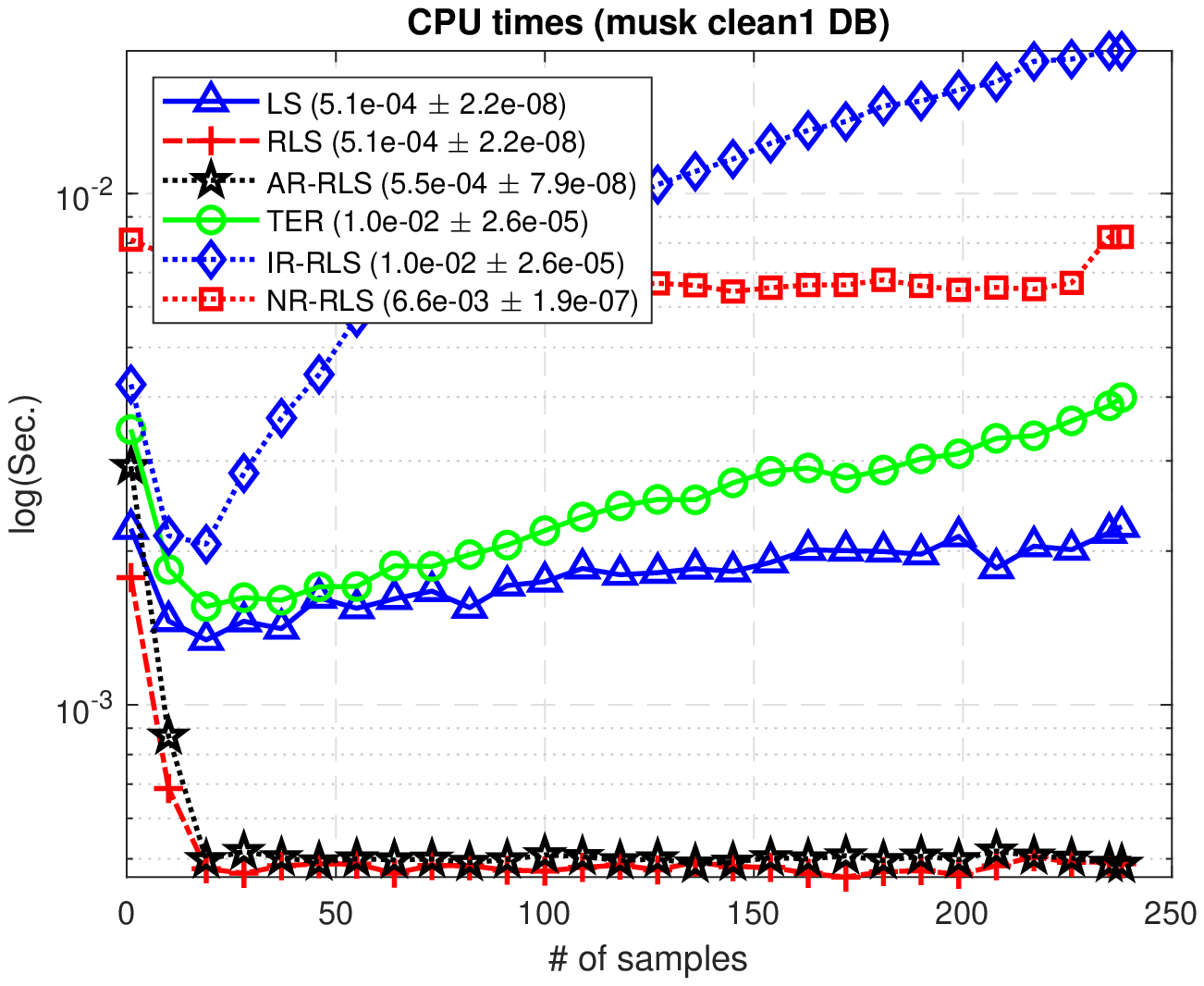}%
					} 
					\caption{Musk-clean-1}%
					
				\end{figure*} 
				\newpage
				\begin{figure*}[!h]
					\centering 
					\subfigure[The $L_2$-norm values
					]{ 
						\includegraphics[width=0.3\textwidth]{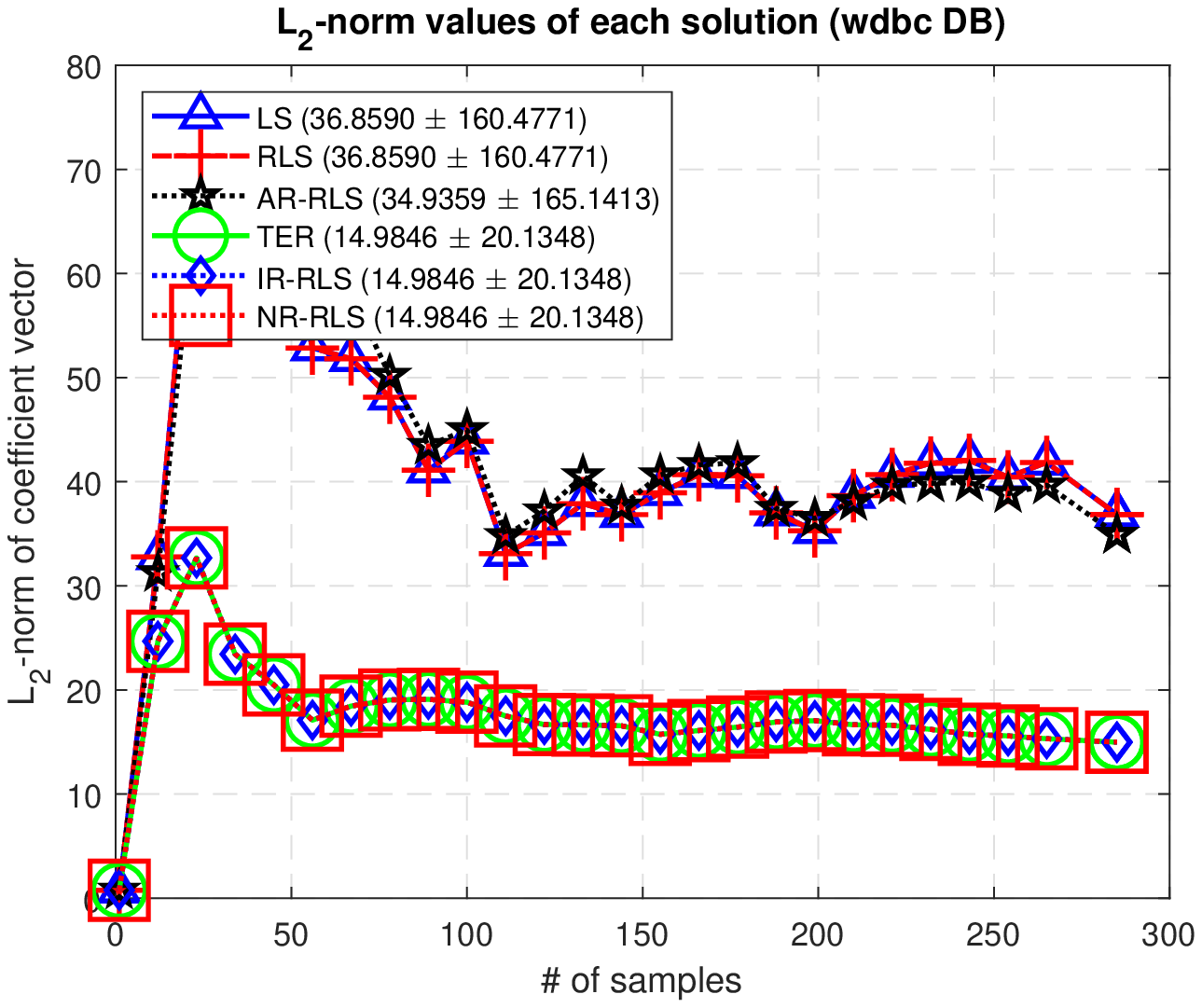}%
					} \hspace{0.4cm}
					\subfigure[The G-means
					]{ 
						\includegraphics[width=0.3\textwidth]{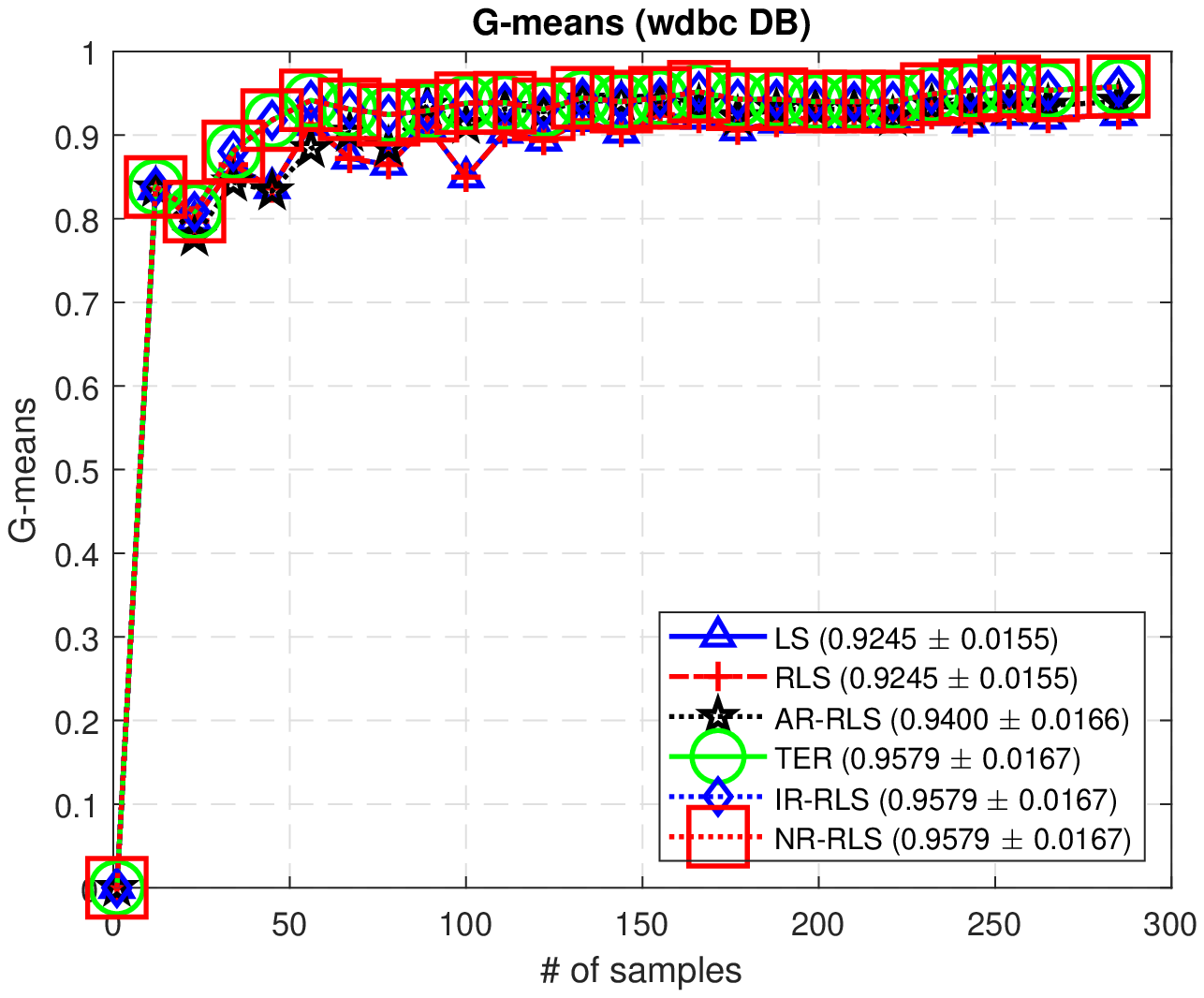}%
					} \hspace{0.4cm}
					\subfigure[The CPU times
					]{ 
						\includegraphics[width=0.3\textwidth]{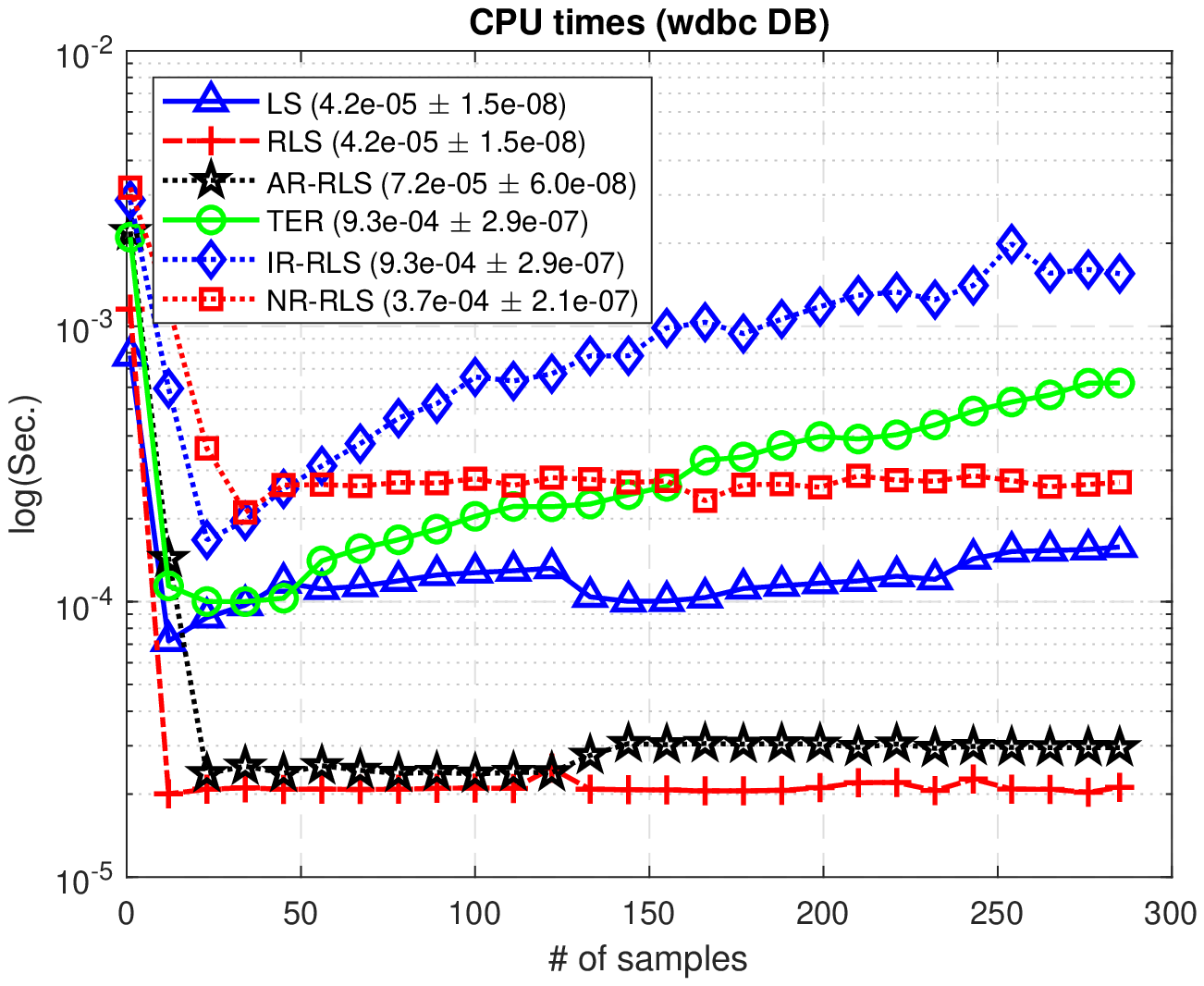}%
					} 
					\caption{Wdbc}%
				\end{figure*} 
				
				\begin{figure*}[!h]
					\centering 
					\subfigure[The $L_2$-norm values
					]{ 
						\includegraphics[width=0.3\textwidth]{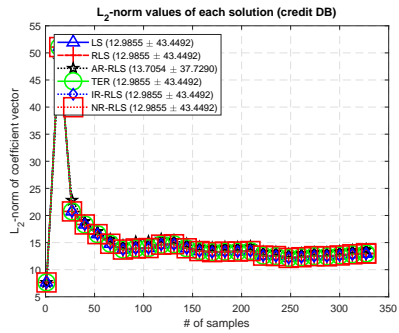}%
					} \hspace{0.4cm}
					\subfigure[The G-means
					]{ 
						\includegraphics[width=0.3\textwidth]{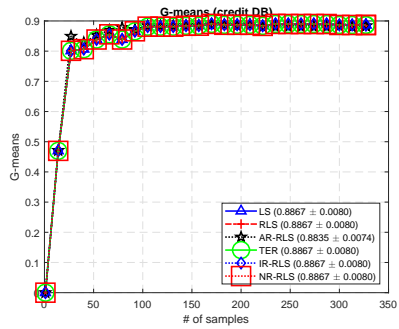}%
					} \hspace{0.4cm}
					\subfigure[The CPU times
					]{ 
						\includegraphics[width=0.3\textwidth]{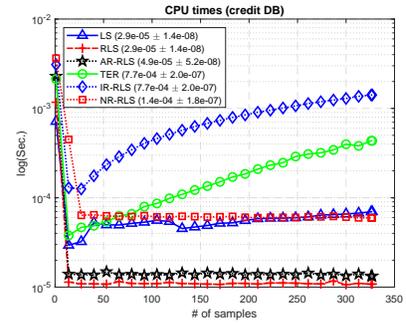}%
					} 
					\caption{Credit-app}%
				\end{figure*} 
				
				\begin{figure*}[!h]
					\centering 
					\subfigure[The $L_2$-norm values
					]{ 
						\includegraphics[width=0.3\textwidth]{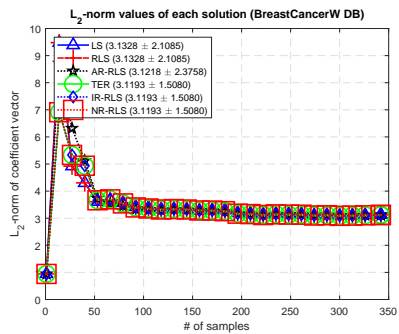}%
					} \hspace{0.4cm}
					\subfigure[The G-means
					]{ 
						\includegraphics[width=0.3\textwidth]{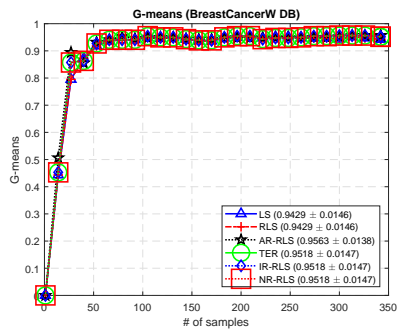}%
					} \hspace{0.4cm}
					\subfigure[The CPU times
					]{ 
						\includegraphics[width=0.3\textwidth]{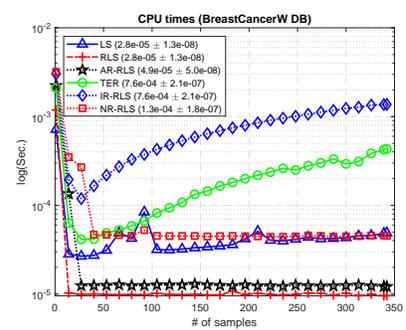}%
					} 
					\caption{Breast-cancer-W}%
					
				\end{figure*} 
				
				\begin{figure*}[!h]
					\centering 
					\subfigure[The $L_2$-norm values
					]{ 
						\includegraphics[width=0.3\textwidth]{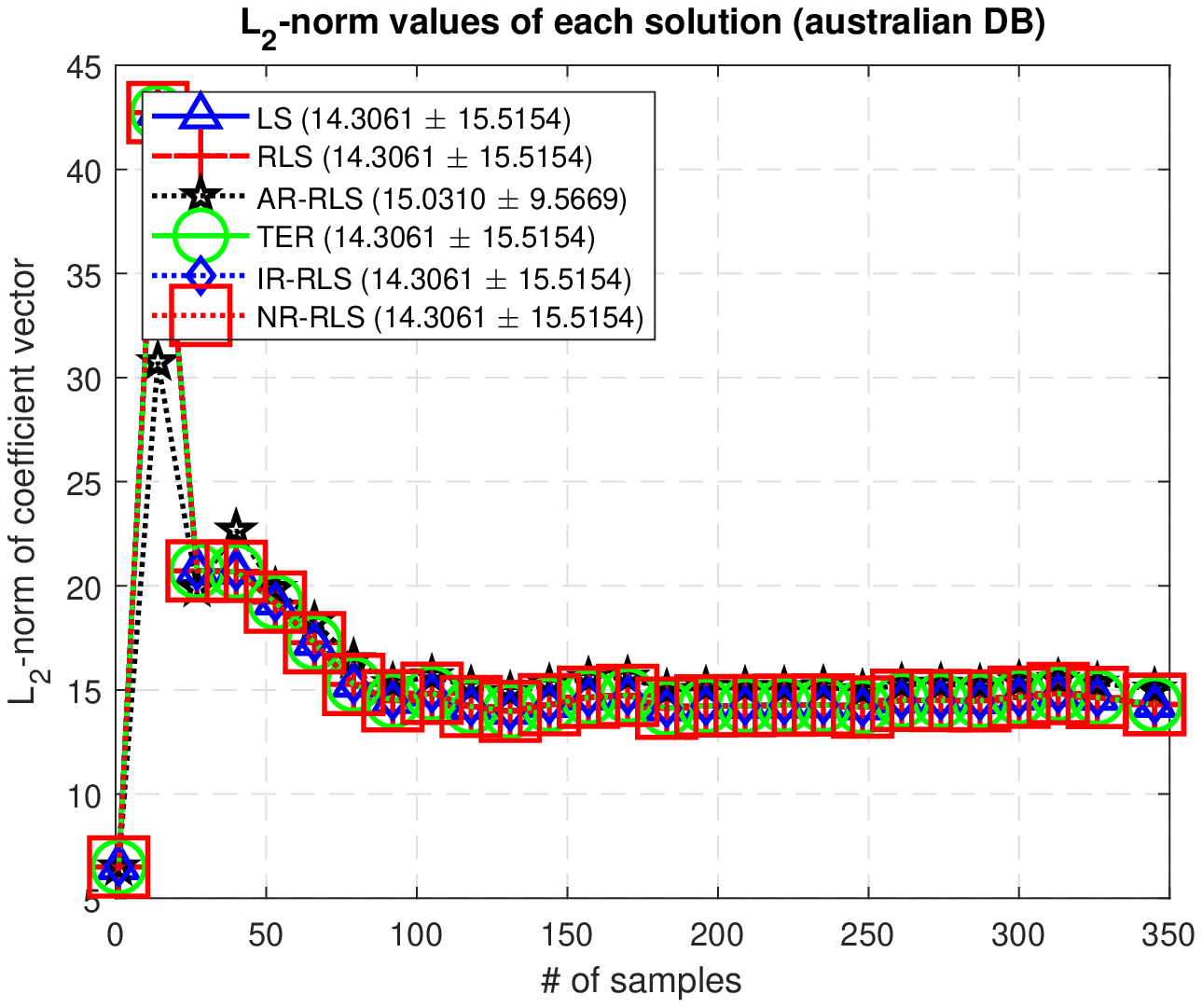}%
					} \hspace{0.4cm}
					\subfigure[The G-means
					]{ 
						\includegraphics[width=0.3\textwidth]{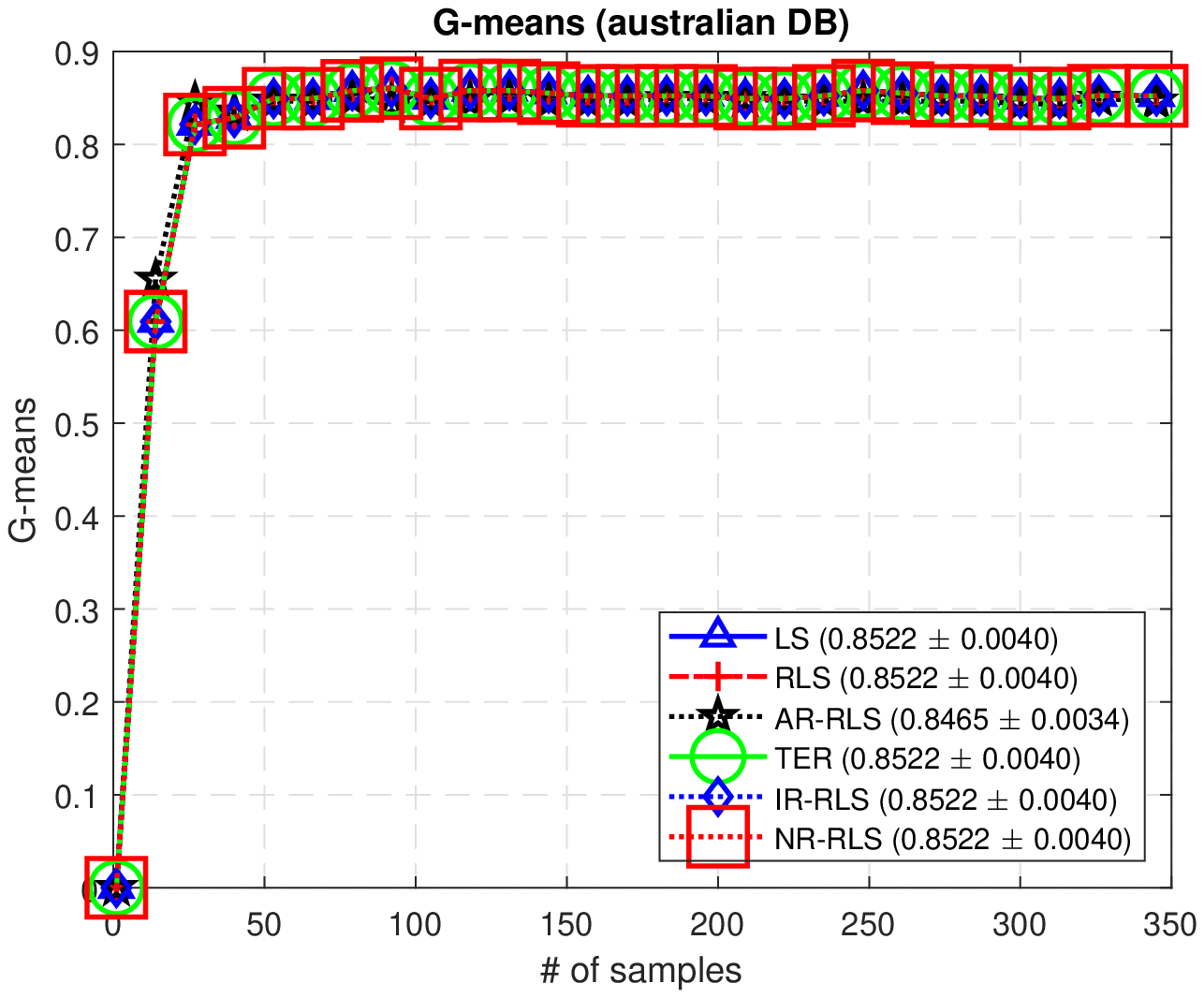}%
					} \hspace{0.4cm}
					\subfigure[The CPU times
					]{ 
						\includegraphics[width=0.3\textwidth]{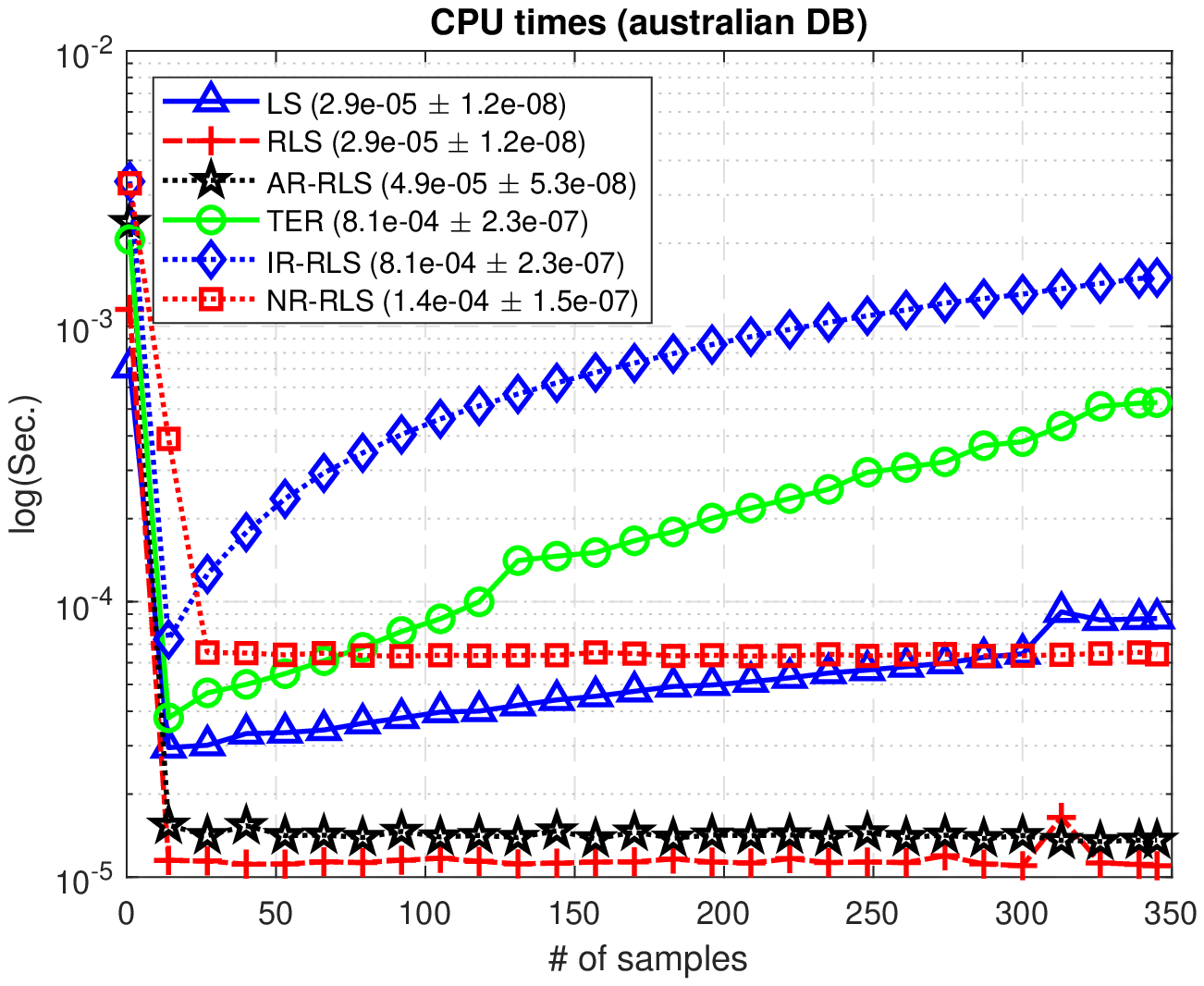}%
					} 
					\caption{Statlog-australian}%
					
				\end{figure*} 
				\newpage
				\begin{figure*}[!h]
					\centering 
					\subfigure[The $L_2$-norm values
					]{ 
						\includegraphics[width=0.3\textwidth]{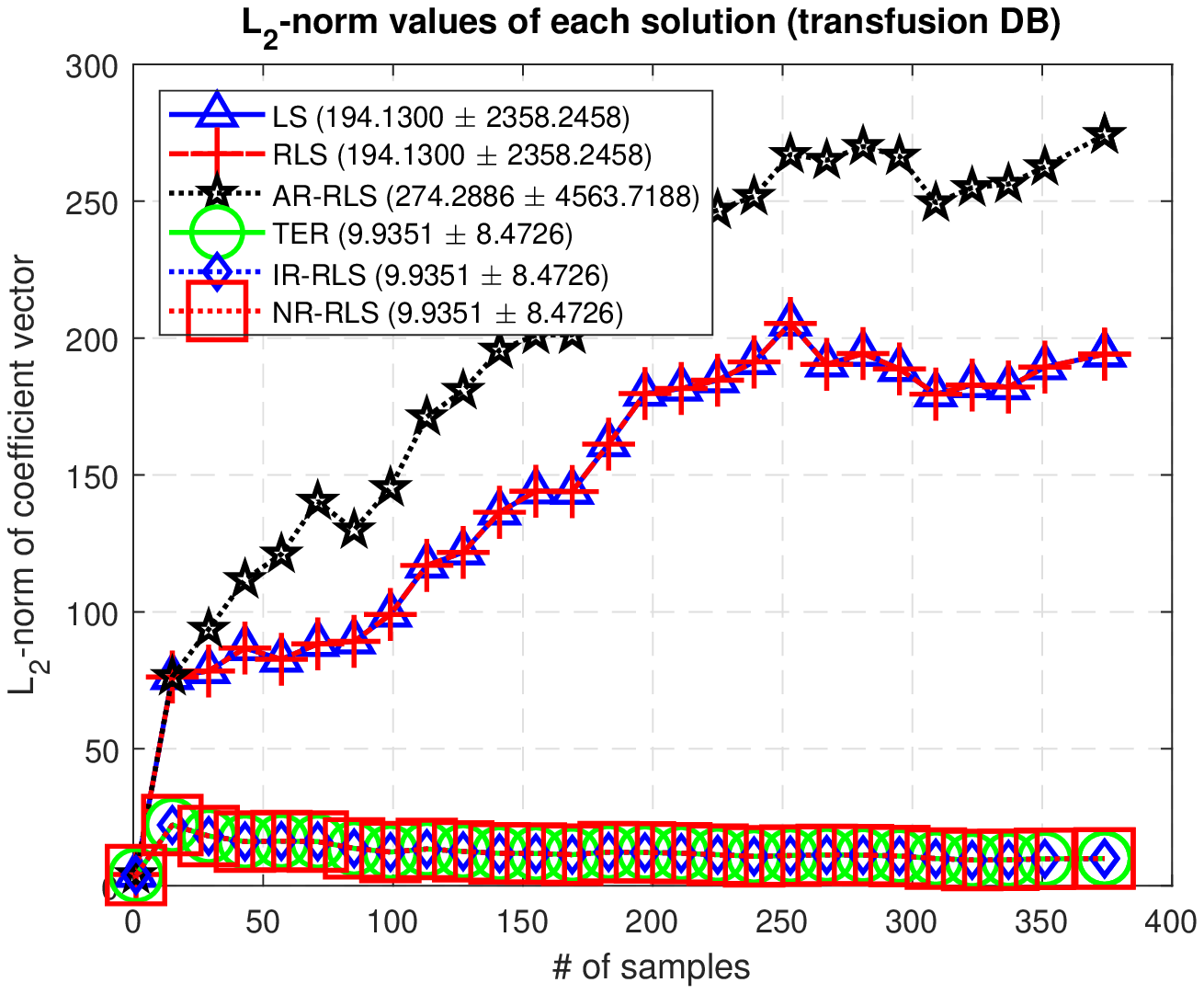}%
					} \hspace{0.4cm}
					\subfigure[The G-means
					]{ 
						\includegraphics[width=0.3\textwidth]{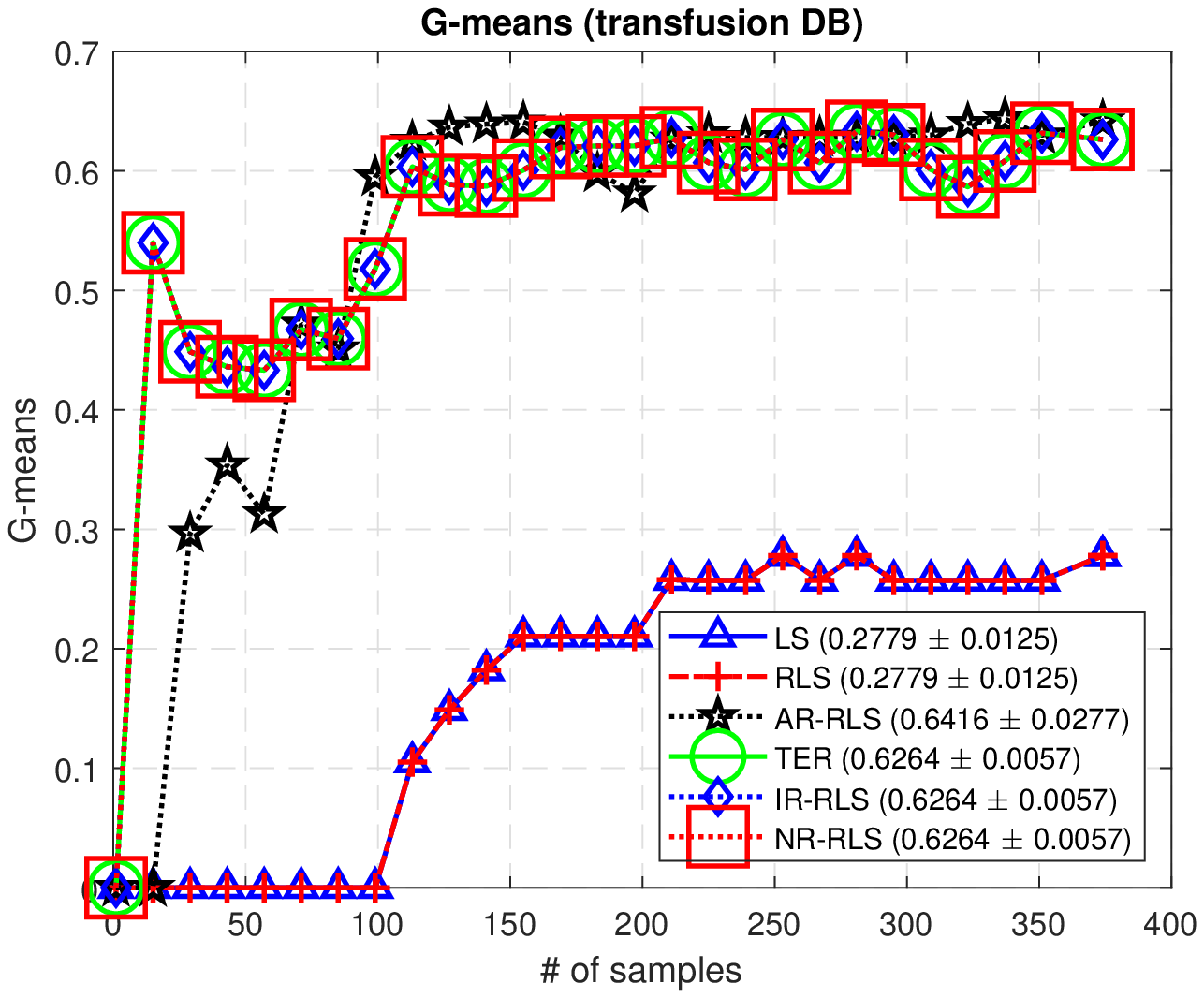}%
					} \hspace{0.4cm}
					\subfigure[The CPU times
					]{ 
						\includegraphics[width=0.3\textwidth]{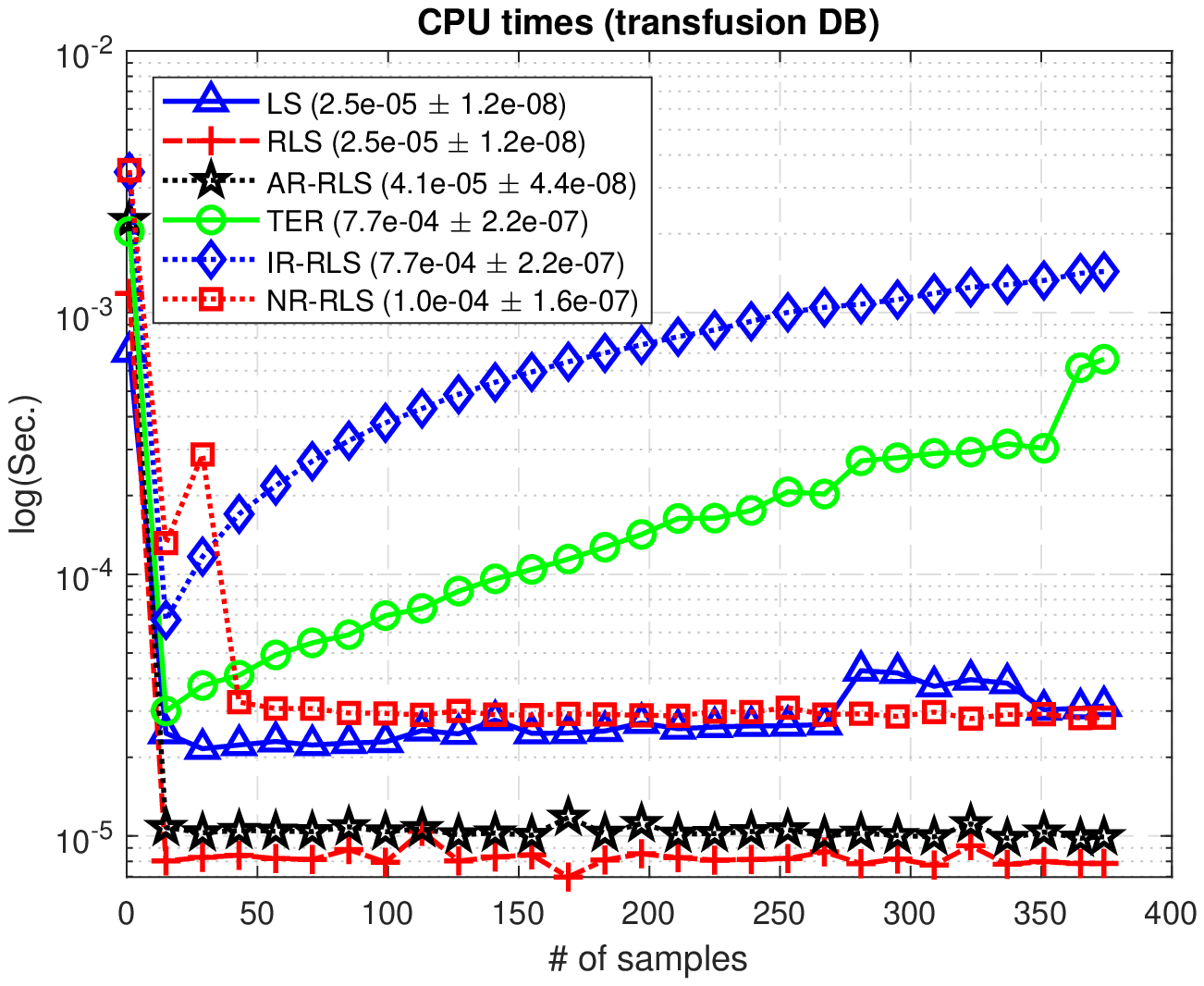}%
					} 
					\caption{Blood-transfusion}%
					
				\end{figure*} 
				
				\begin{figure*}[!h]
					\centering 
					\subfigure[The $L_2$-norm values
					]{ 
						\includegraphics[width=0.3\textwidth]{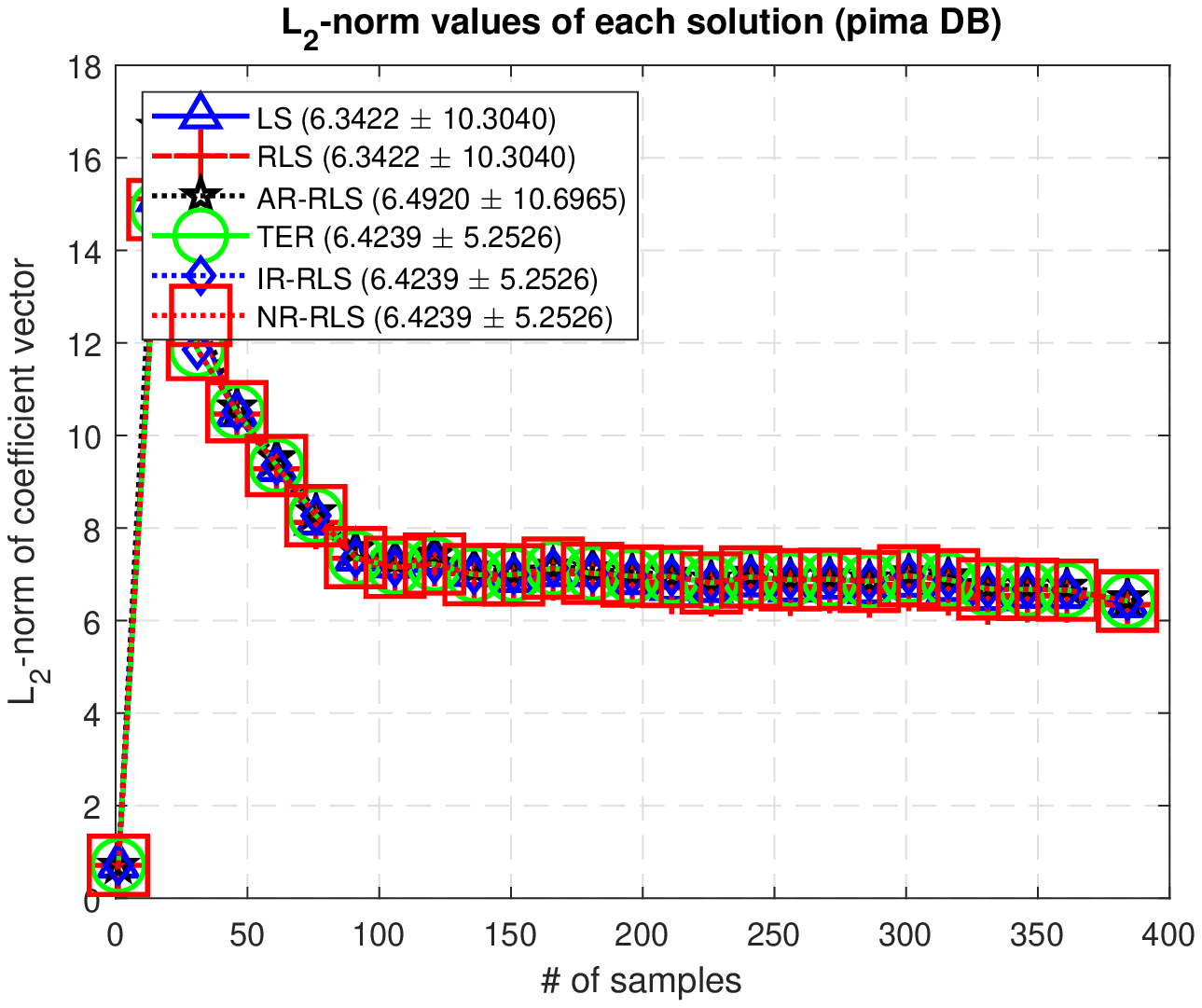}%
					} \hspace{0.4cm}
					\subfigure[The G-means
					]{ 
						\includegraphics[width=0.3\textwidth]{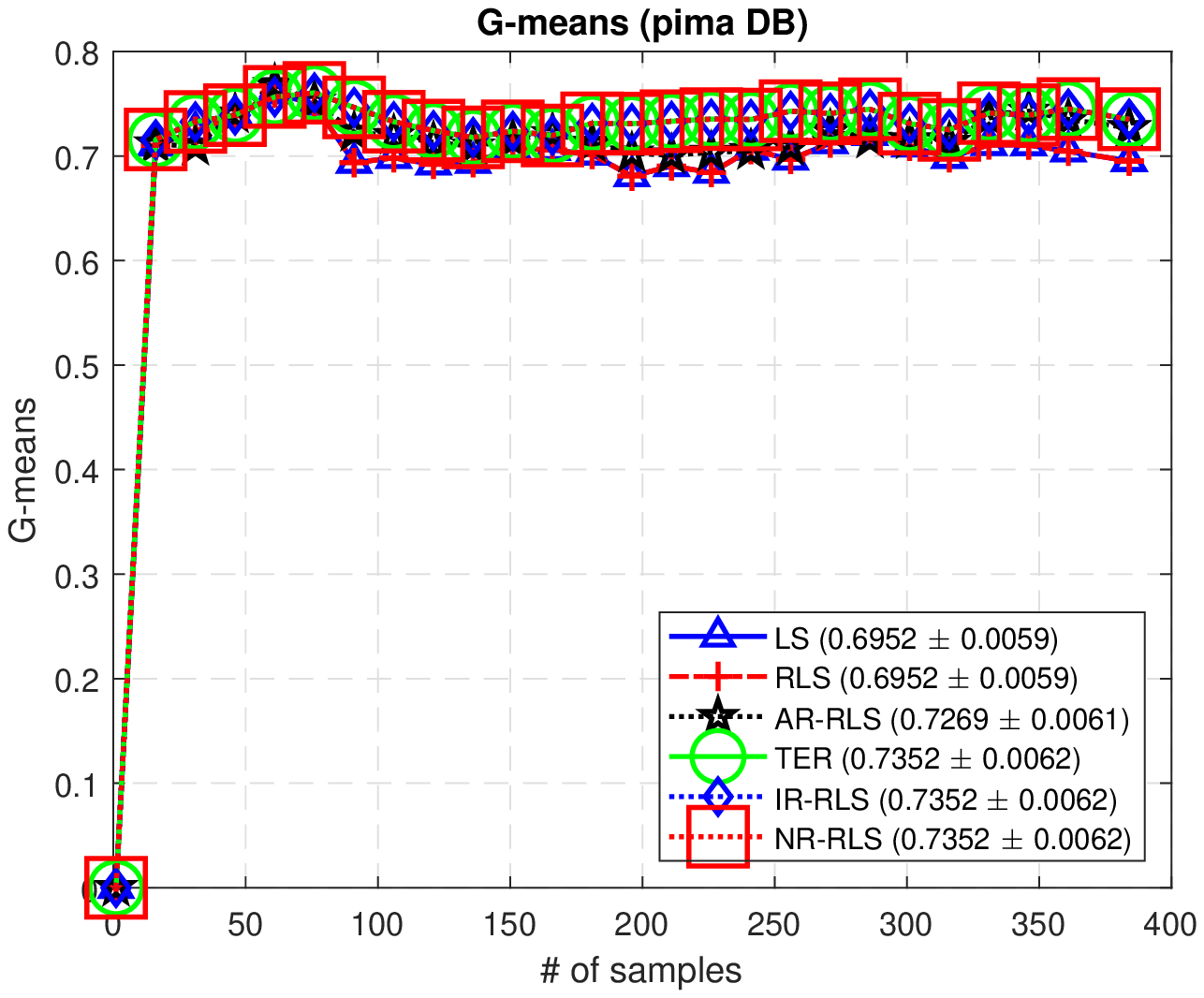}%
					} \hspace{0.4cm}
					\subfigure[The CPU times
					]{ 
						\includegraphics[width=0.3\textwidth]{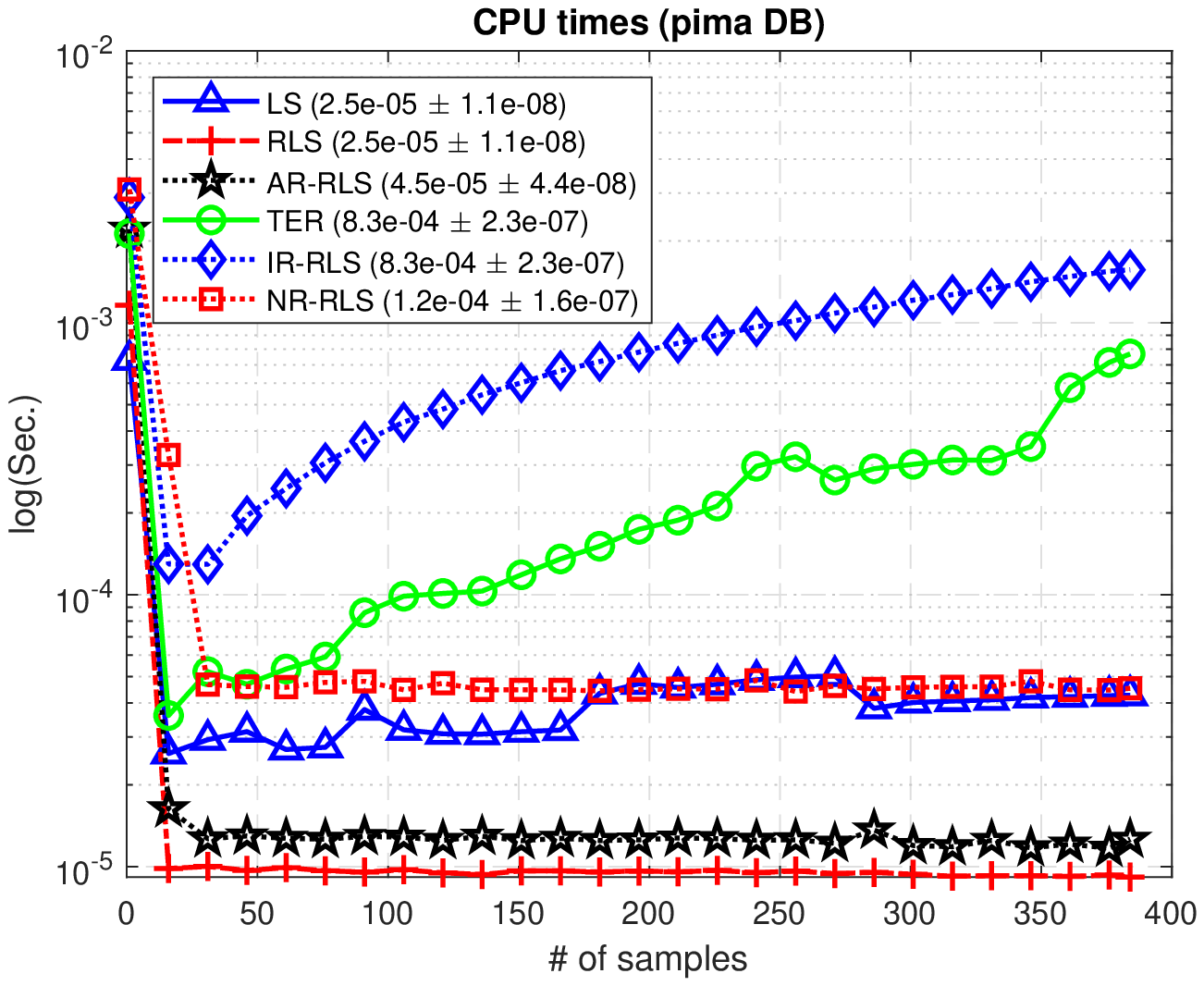}%
					} 
					\caption{Pima-diabetes}%
					
				\end{figure*} 
				
				\begin{figure*}[!h]
					\centering 
					\subfigure[The $L_2$-norm values
					]{ 
						\includegraphics[width=0.3\textwidth]{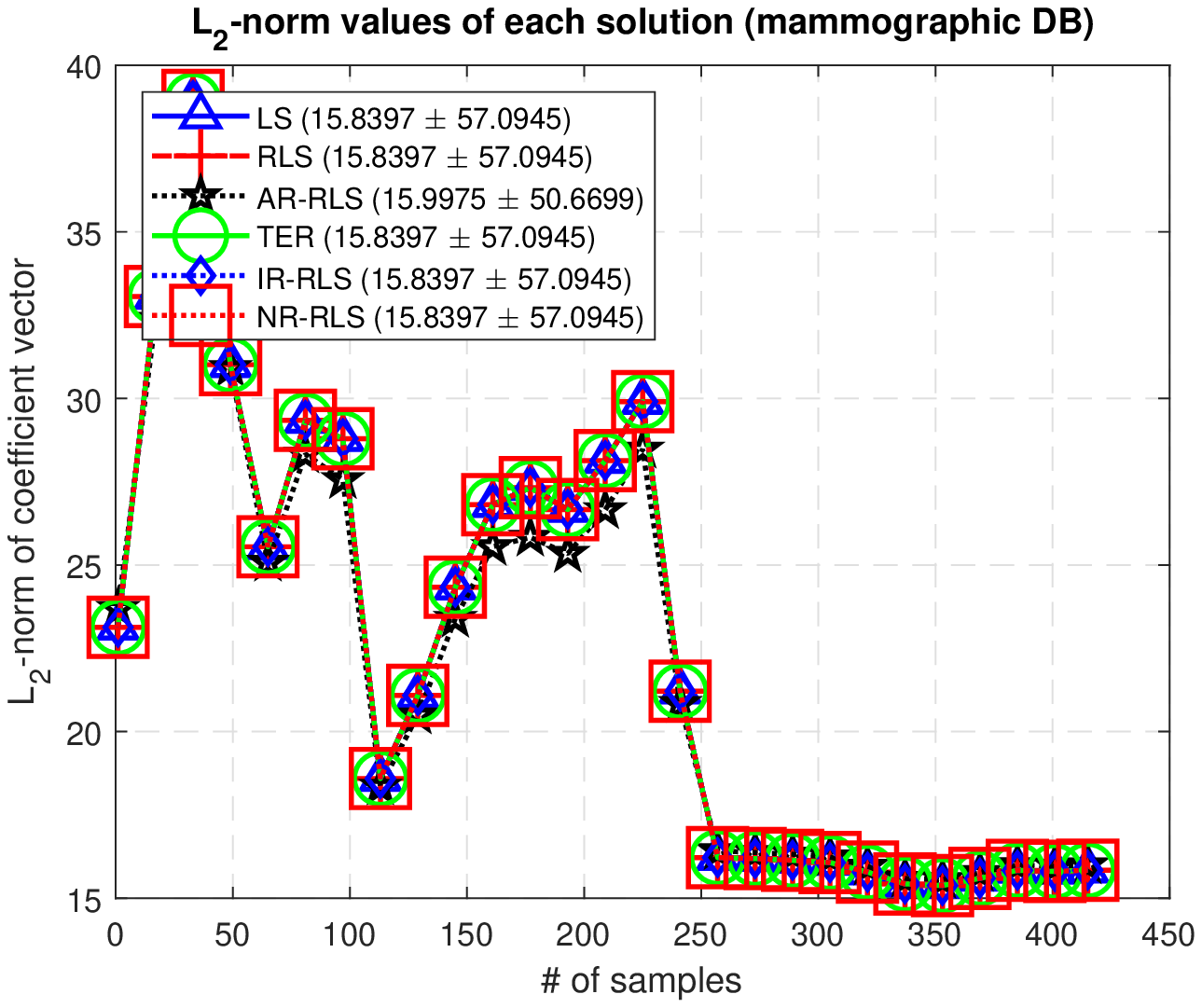}%
					} \hspace{0.4cm}
					\subfigure[The G-means
					]{ 
						\includegraphics[width=0.3\textwidth]{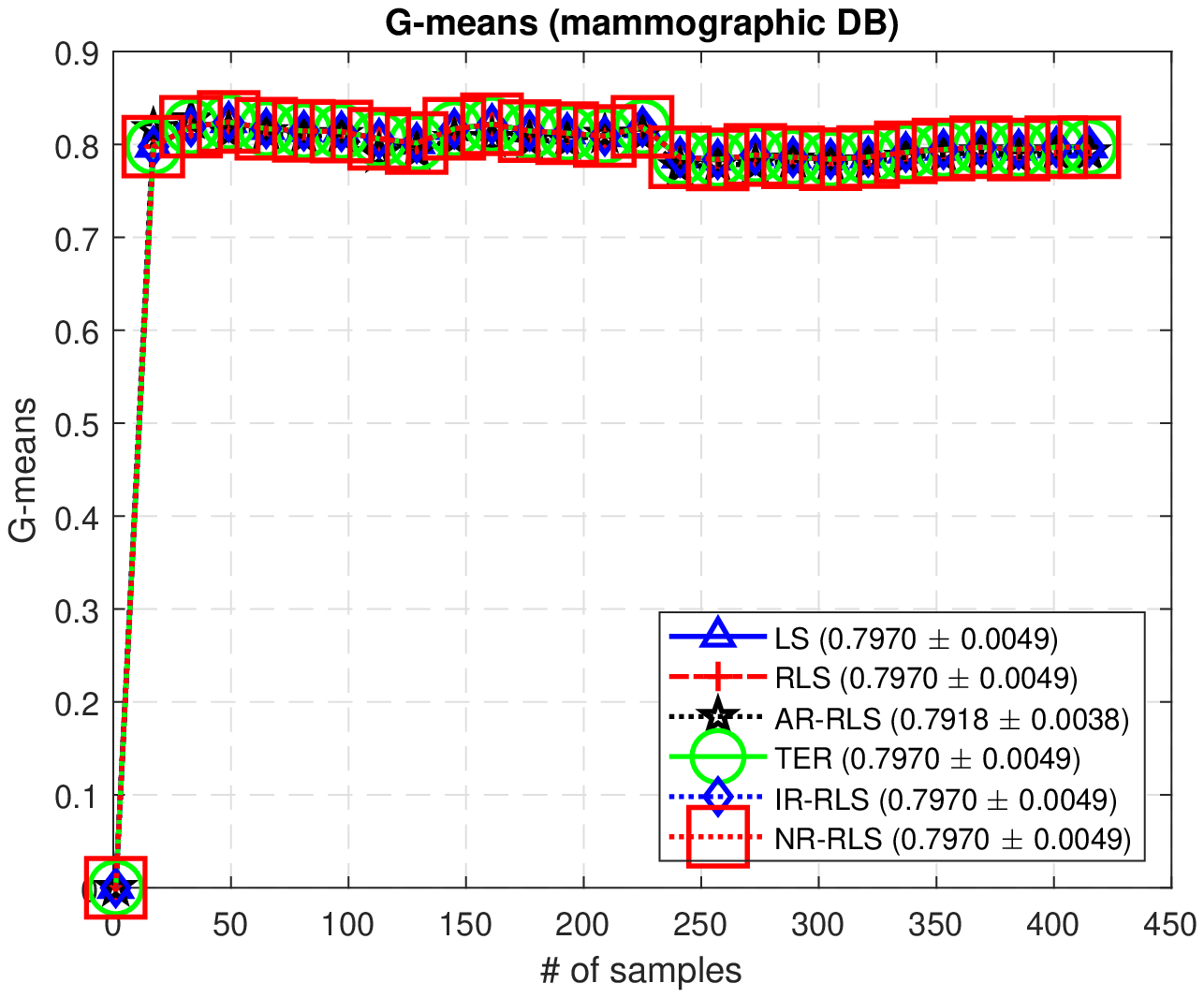}%
					} \hspace{0.4cm}
					\subfigure[The CPU times
					]{ 
						\includegraphics[width=0.3\textwidth]{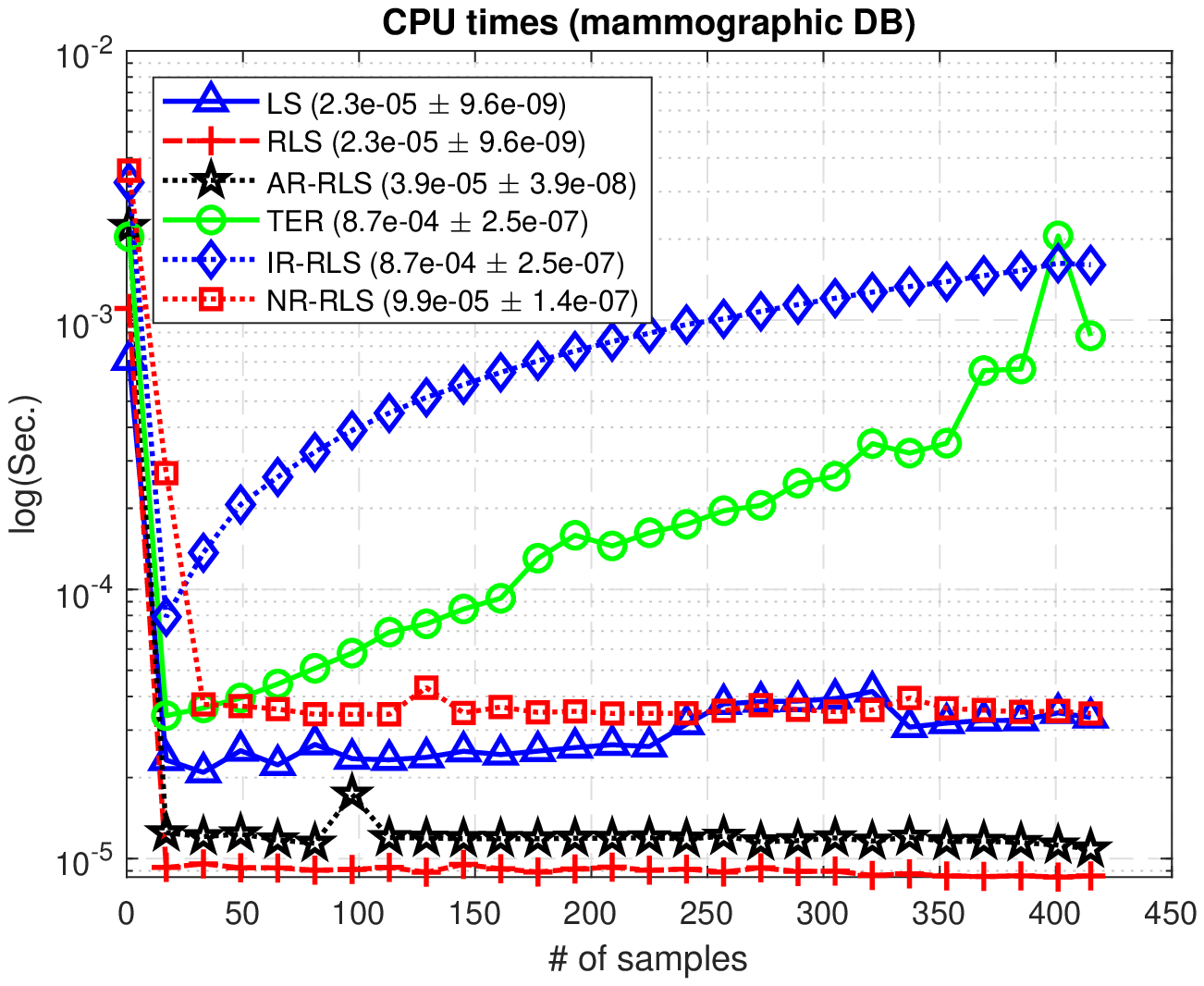}%
					} 
					\caption{Mammographic}%
					
				\end{figure*} 
				
				\begin{figure*}[!h]
					\centering 
					\subfigure[The $L_2$-norm values
					]{ 
						\includegraphics[width=0.3\textwidth]{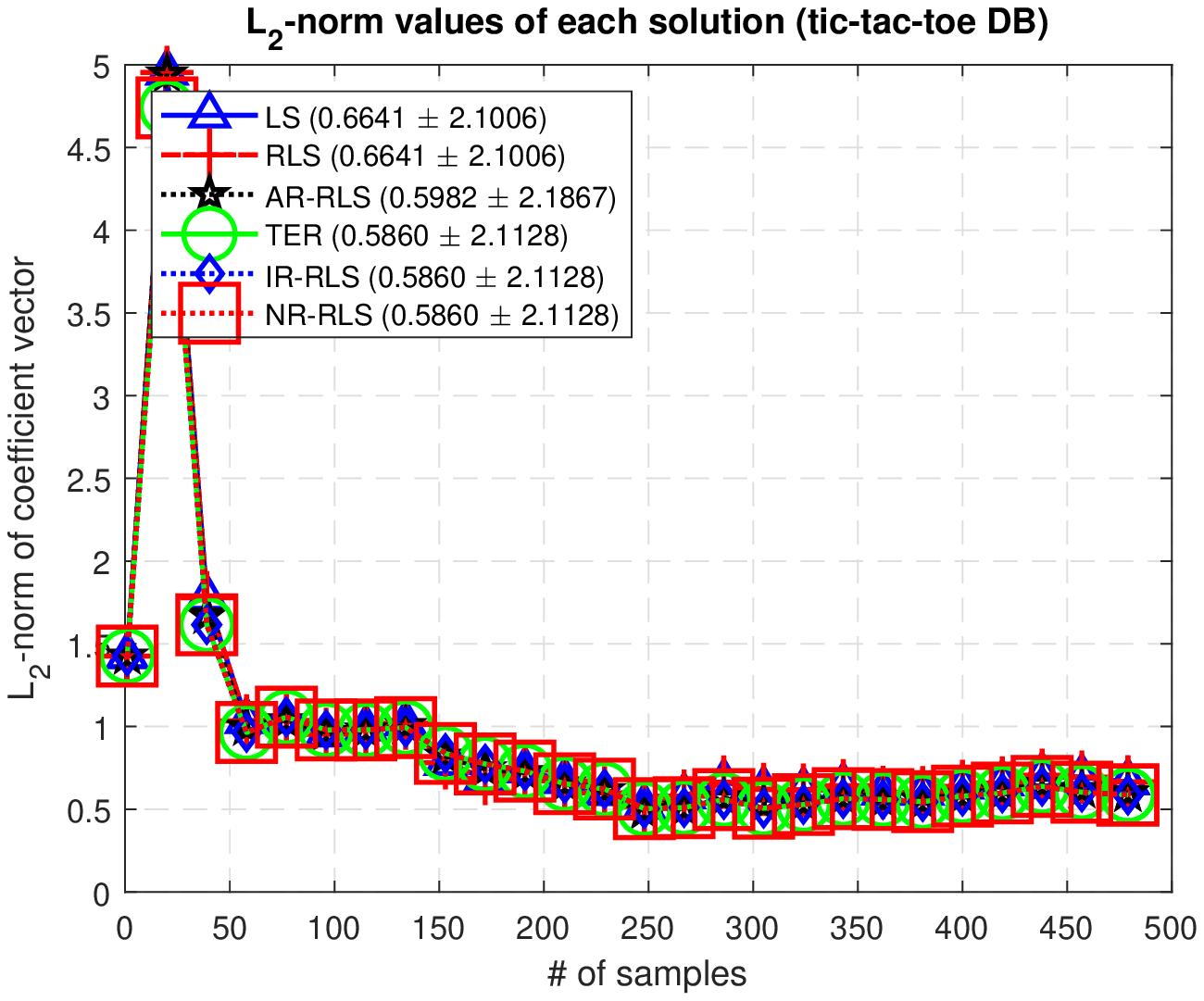}%
					} \hspace{0.4cm}
					\subfigure[The G-means
					]{ 
						\includegraphics[width=0.3\textwidth]{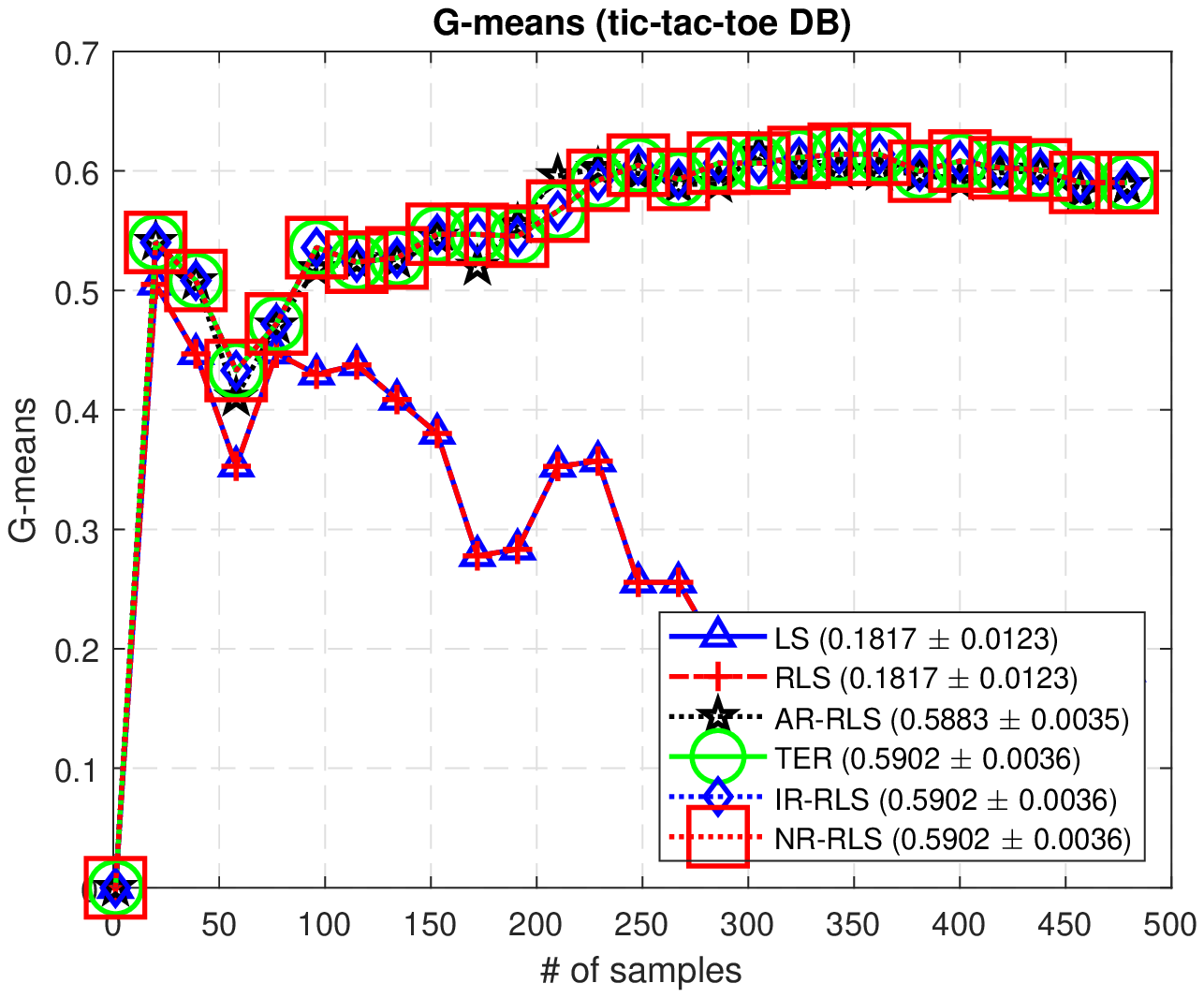}%
					} \hspace{0.4cm}
					\subfigure[The CPU times
					]{ 
						\includegraphics[width=0.3\textwidth]{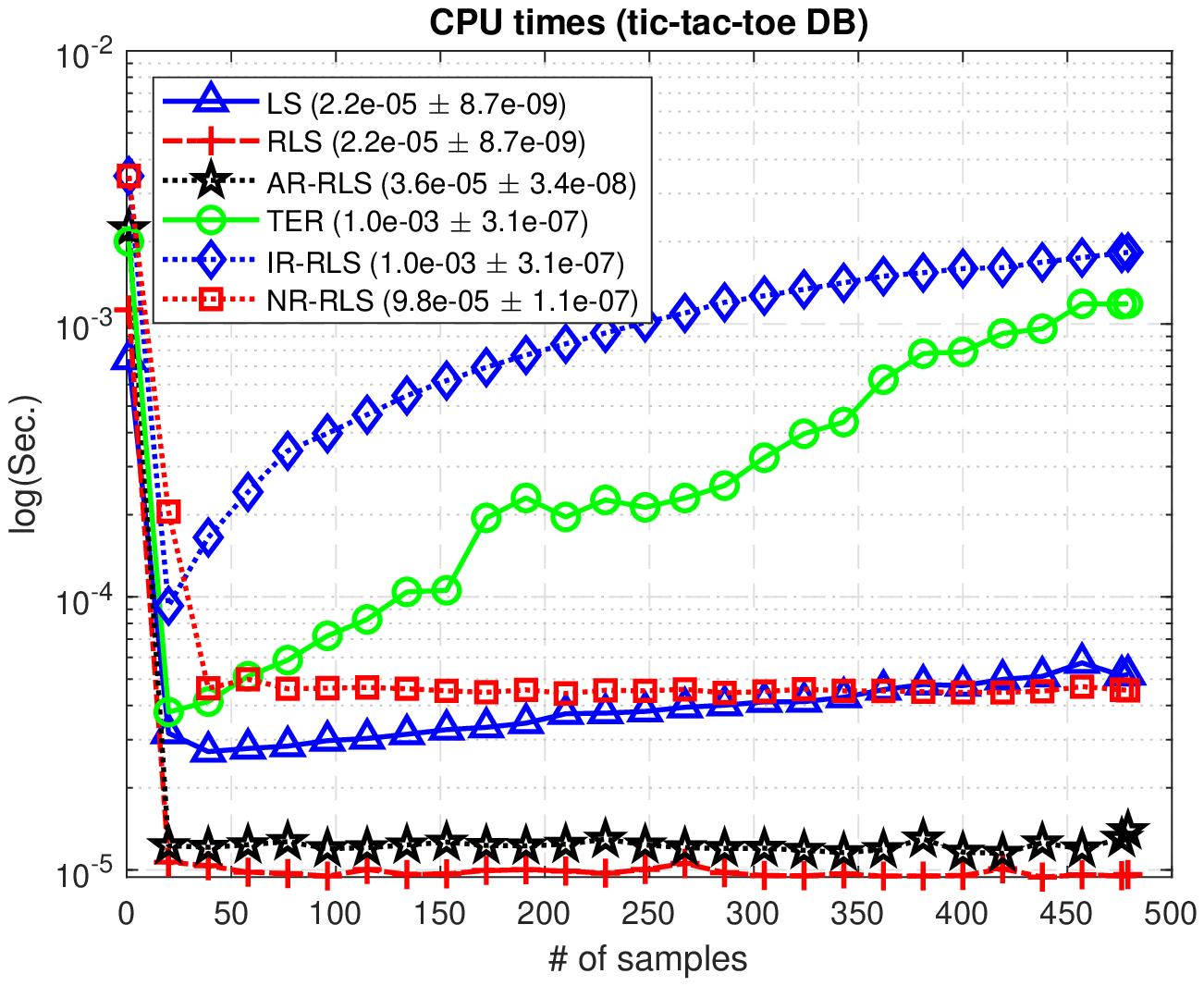}%
					} 
					\caption{Tic-tac-toe}%
					
				\end{figure*} 
				\newpage
				\begin{figure*}[!h]
					\centering 
					\subfigure[The $L_2$-norm values
					]{ 
						\includegraphics[width=0.3\textwidth]{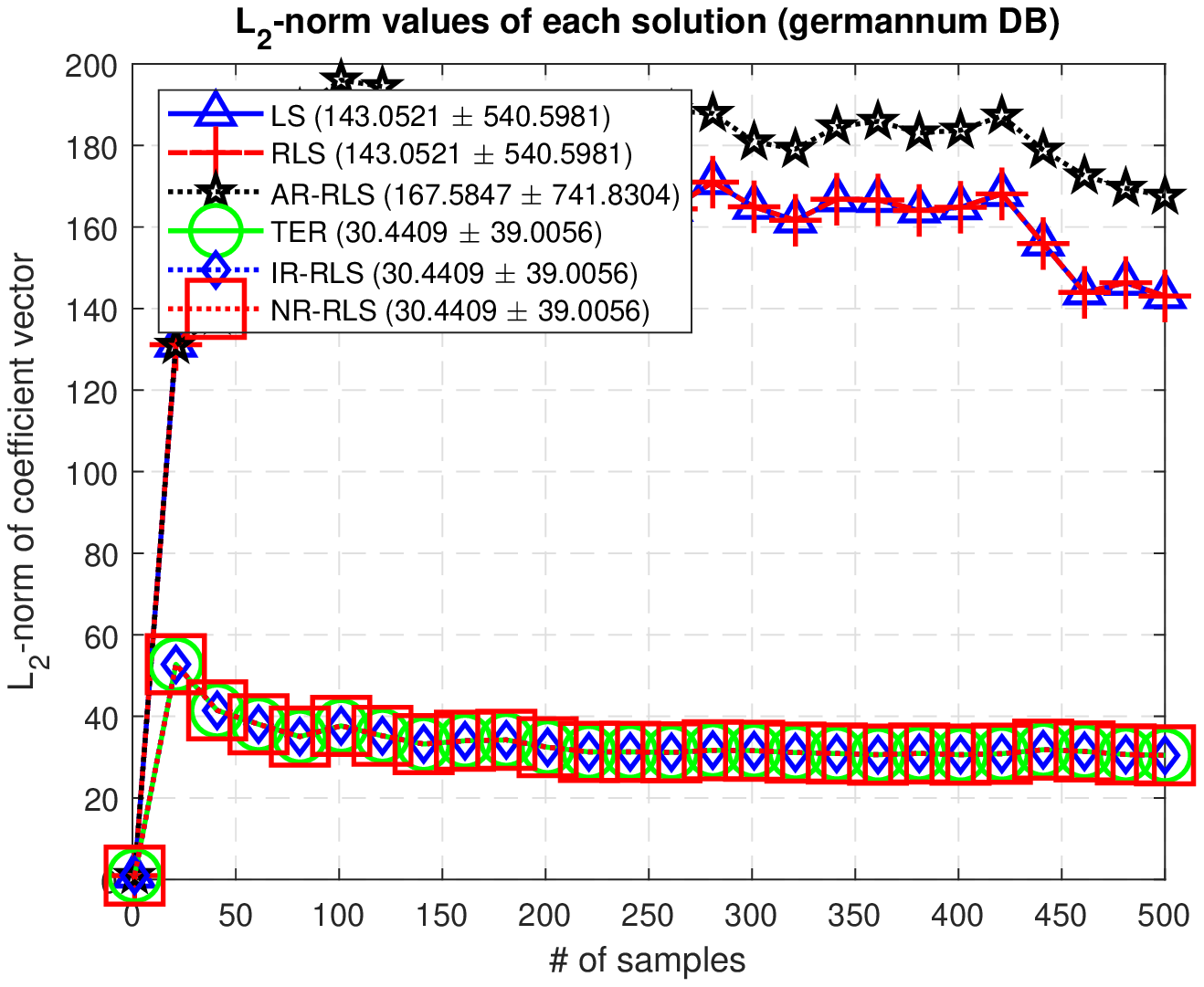}%
					} \hspace{0.4cm}
					\subfigure[The G-means
					]{ 
						\includegraphics[width=0.3\textwidth]{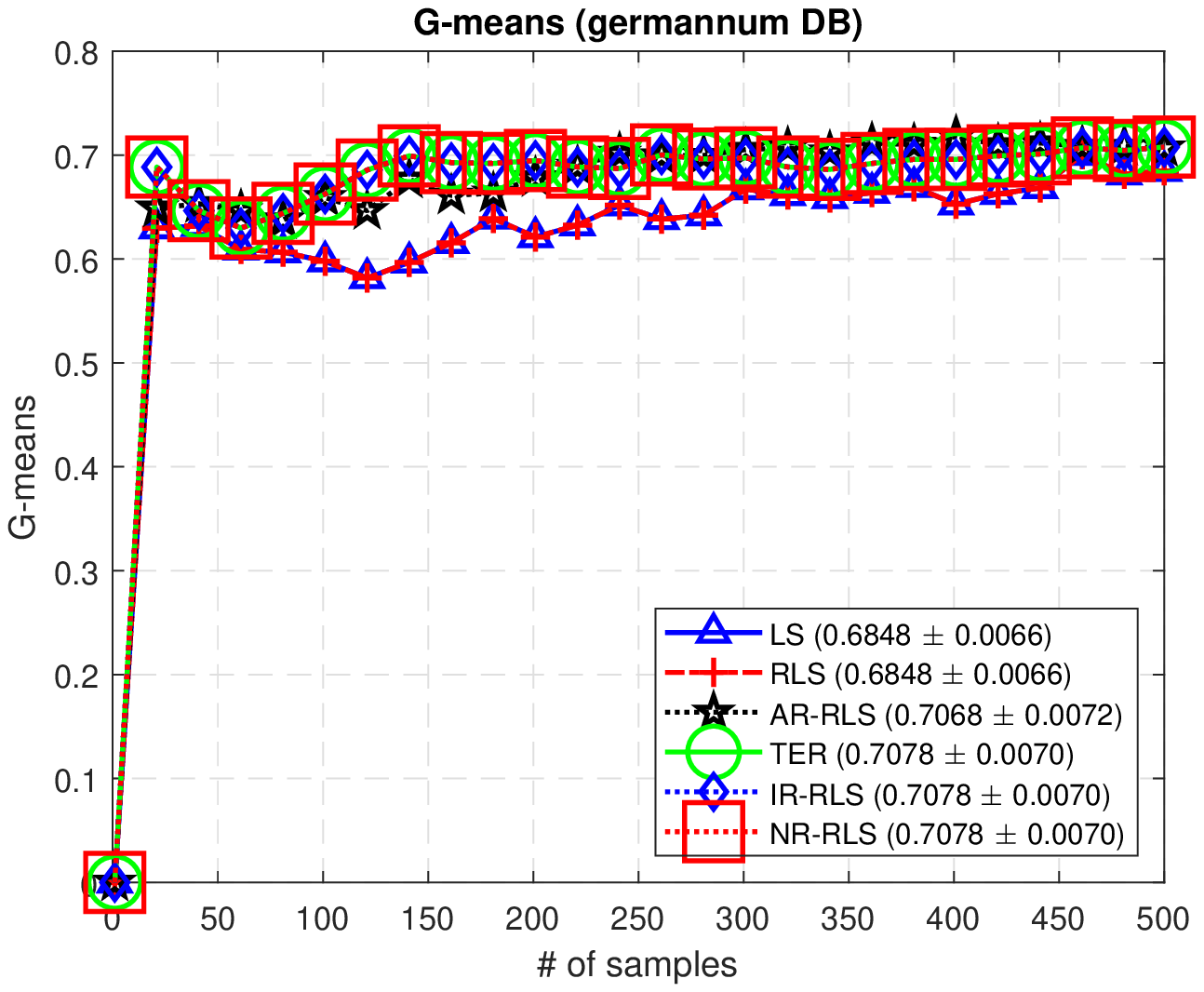}%
					} \hspace{0.4cm}
					\subfigure[The CPU times
					]{ 
						\includegraphics[width=0.3\textwidth]{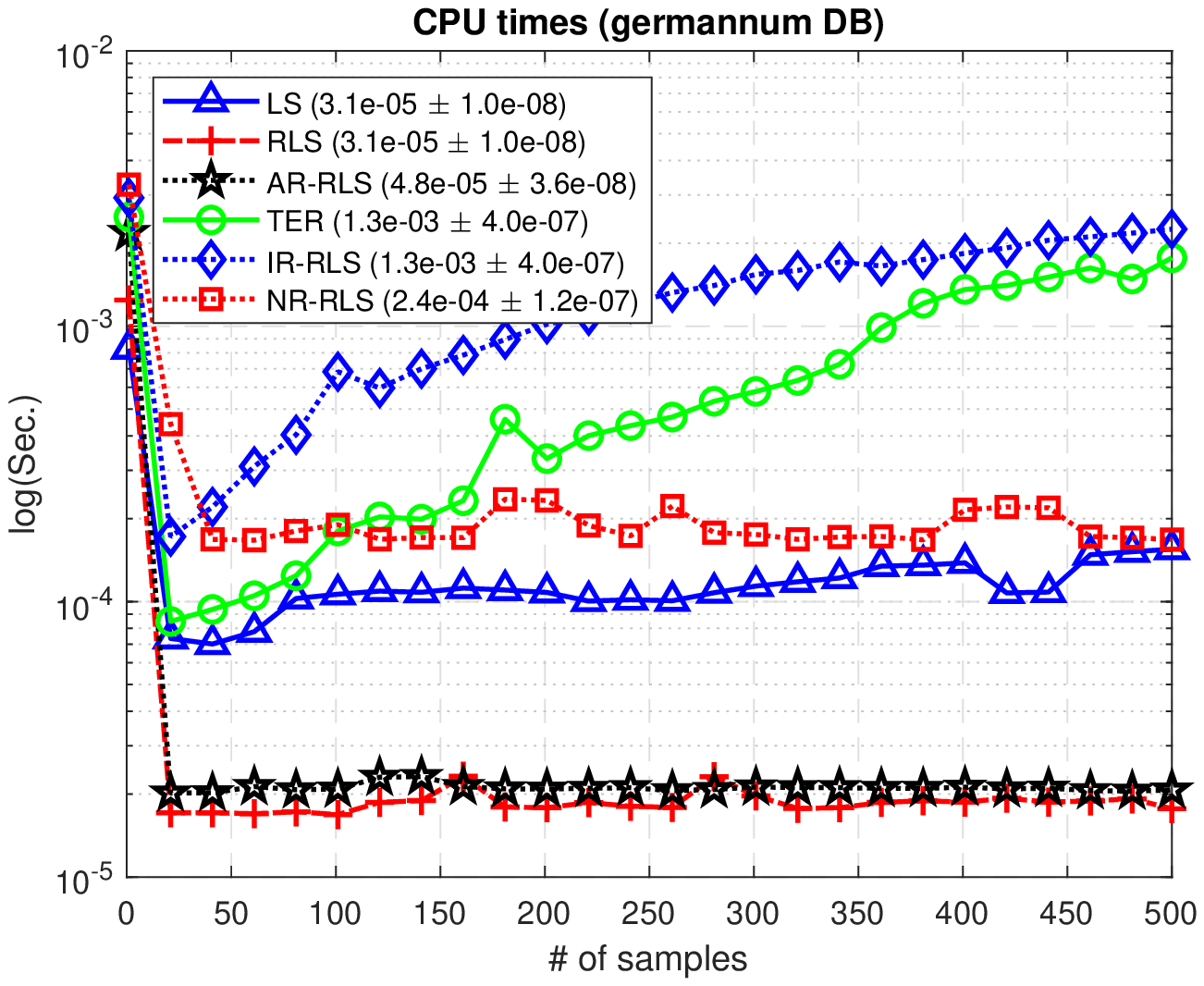}%
					} 
					\caption{Statlg-german}%
					
				\end{figure*} 
				
				\begin{figure*}[!h]
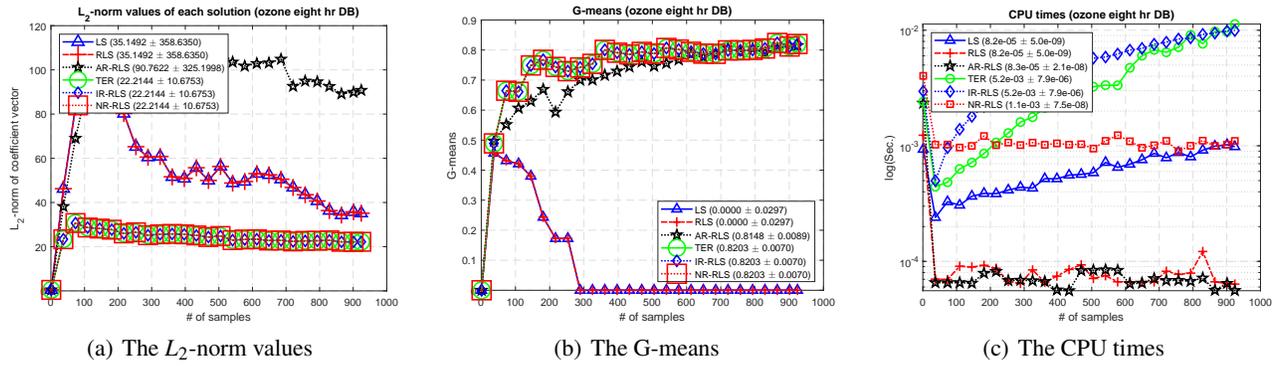

					\centering 
					\subfigure[The $L_2$-norm values
					]{ 
						\includegraphics[width=0.3\textwidth]{figures_NIPS/ozone_eight_hr_l2norm.eps}%
					} \hspace{0.4cm}
					\subfigure[The G-means
					]{ 
						\includegraphics[width=0.3\textwidth]{figures_NIPS/ozone_eight_hr_gmeans.eps}%
					} \hspace{0.4cm}
					\subfigure[The CPU times
					]{ 
						\includegraphics[width=0.3\textwidth]{figures_NIPS/ozone_eight_hr_time.eps}%
					} 
					\caption{Ozone-eight}%
					
				\end{figure*} 
				
				\begin{figure*}[!h]
					\centering 
					\subfigure[The $L_2$-norm values
					]{ 
						\includegraphics[width=0.3\textwidth]{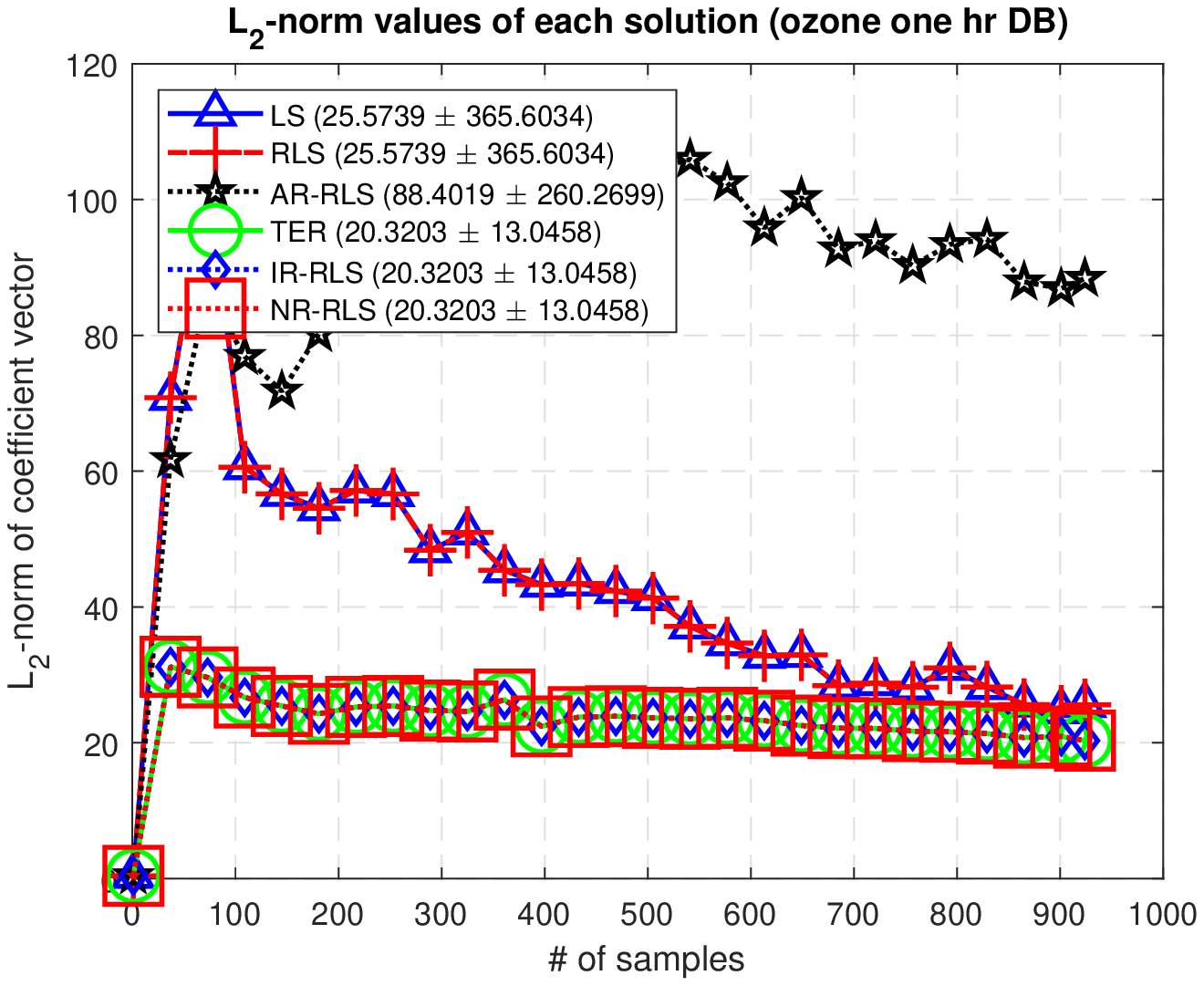}%
					} \hspace{0.4cm}
					\subfigure[The G-means
					]{ 
						\includegraphics[width=0.3\textwidth]{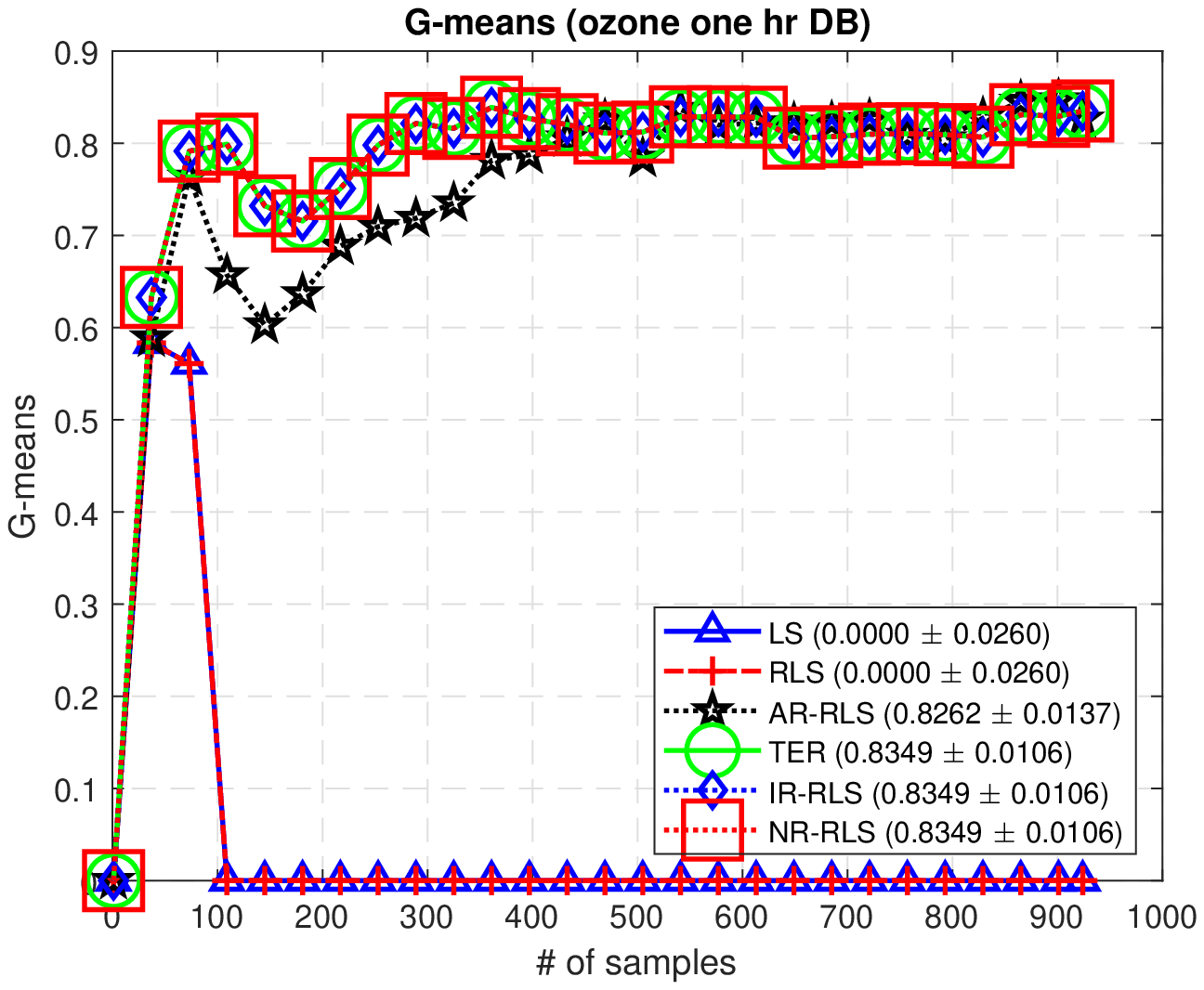}%
					} \hspace{0.4cm}
					\subfigure[The CPU times
					]{ 
						\includegraphics[width=0.3\textwidth]{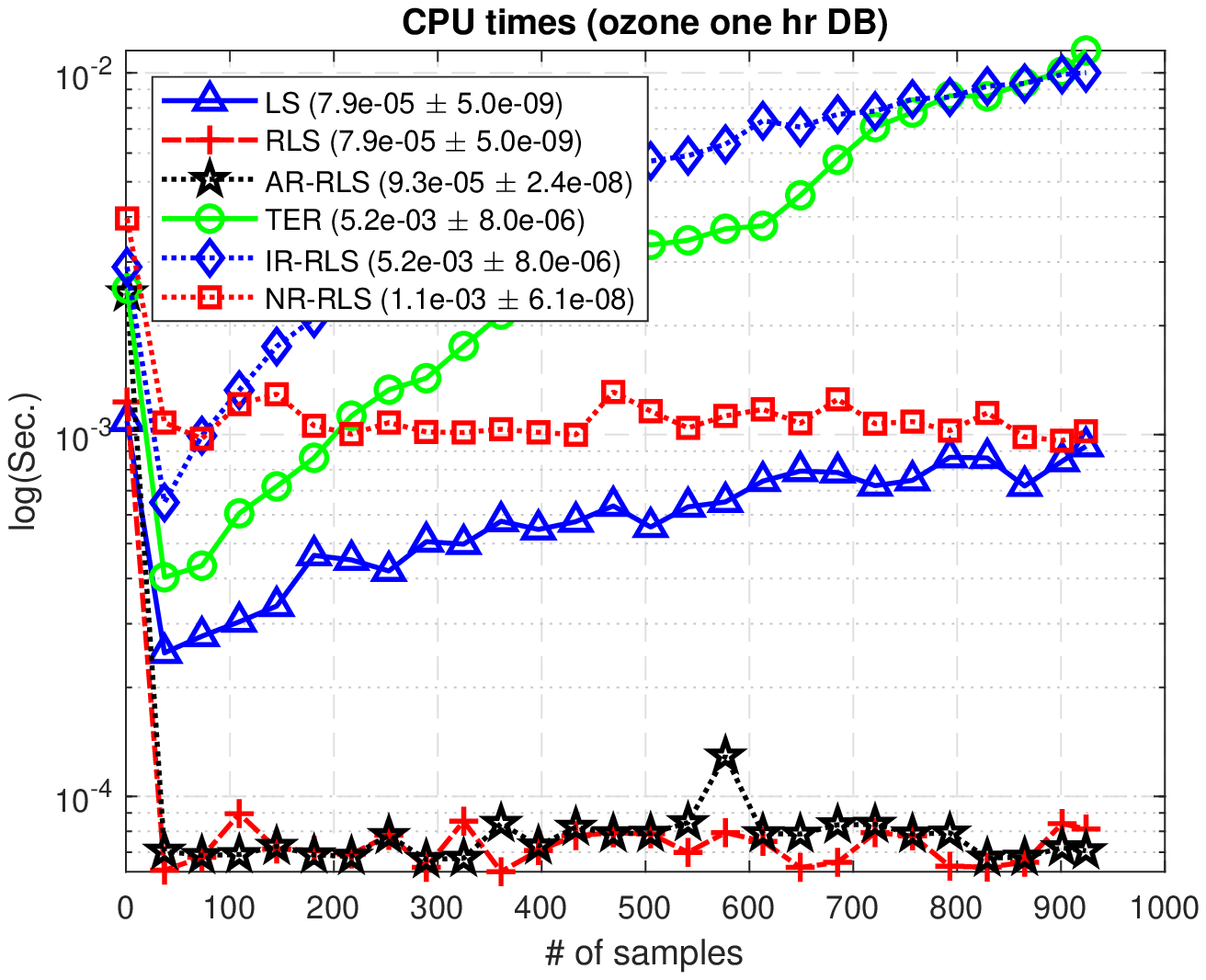}%
					} 
					\caption{Ozone-one}%
					
				\end{figure*}

				\begin{figure*}[!h]
					\centering 
					\subfigure[The $L_2$-norm values
					]{ 
						\includegraphics[width=0.3\textwidth]{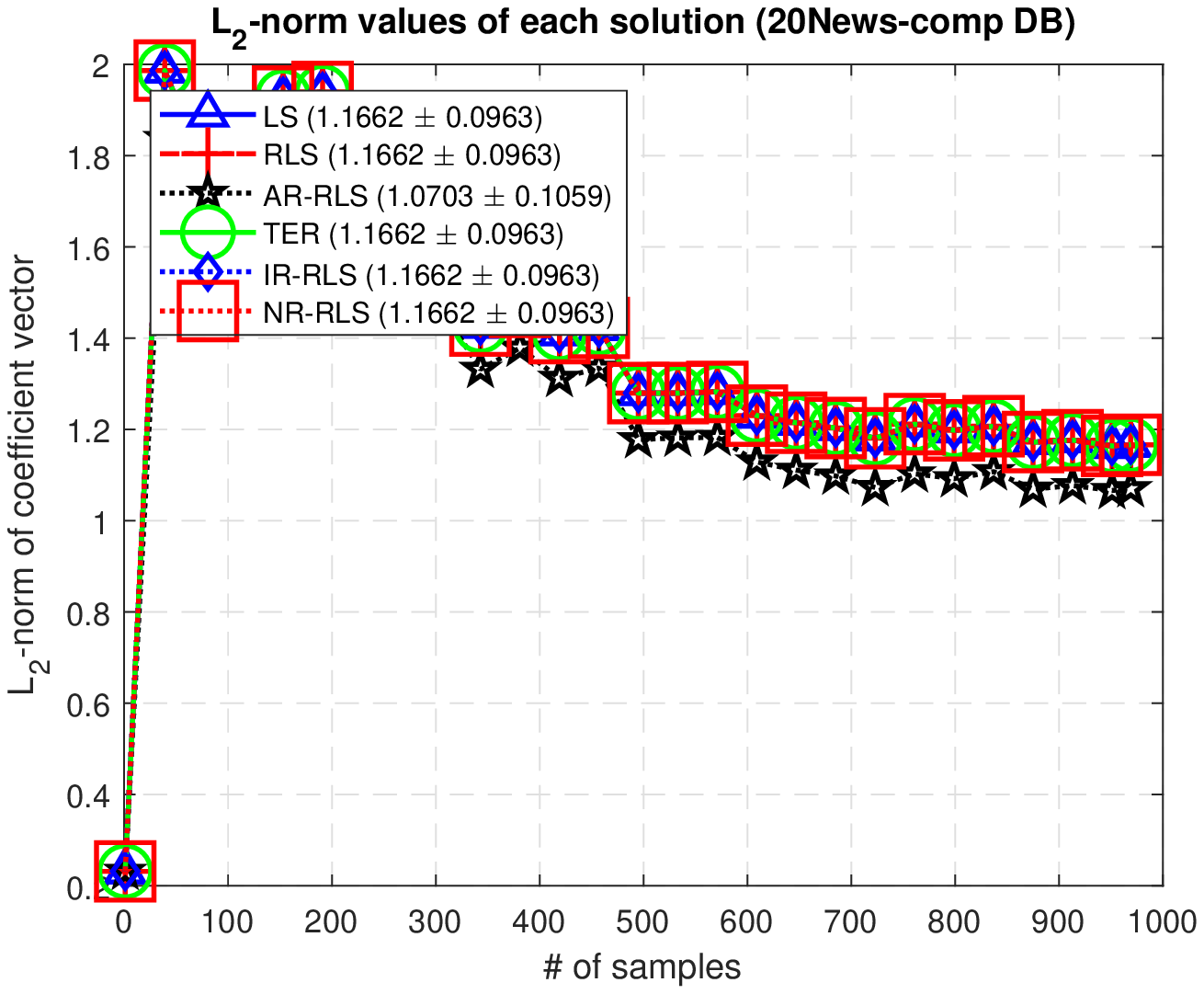}%
					} \hspace{0.4cm}
					\subfigure[The G-means
					]{ 
						\includegraphics[width=0.3\textwidth]{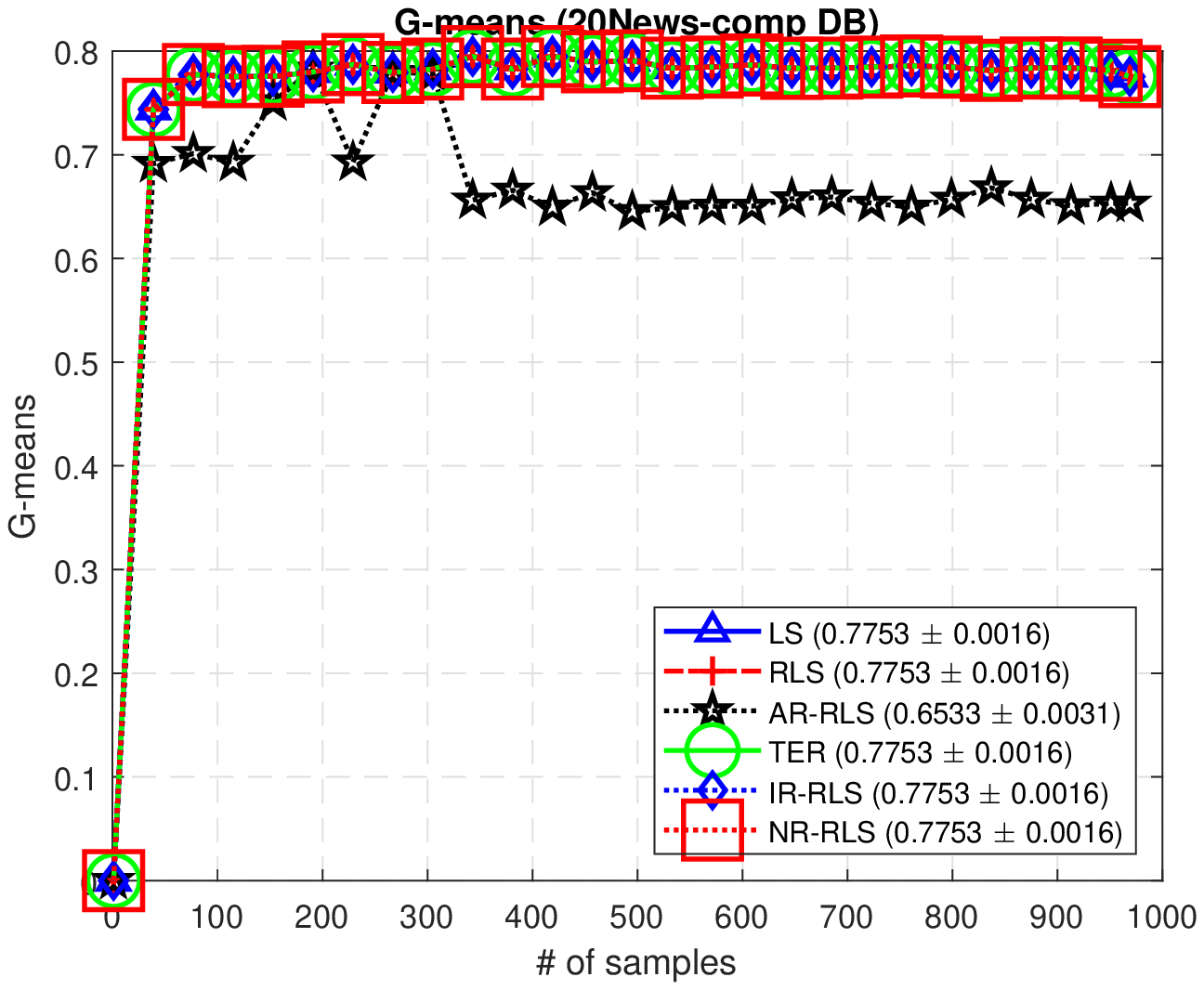}%
					} \hspace{0.4cm}
					\subfigure[The CPU times
					]{ 
						\includegraphics[width=0.3\textwidth]{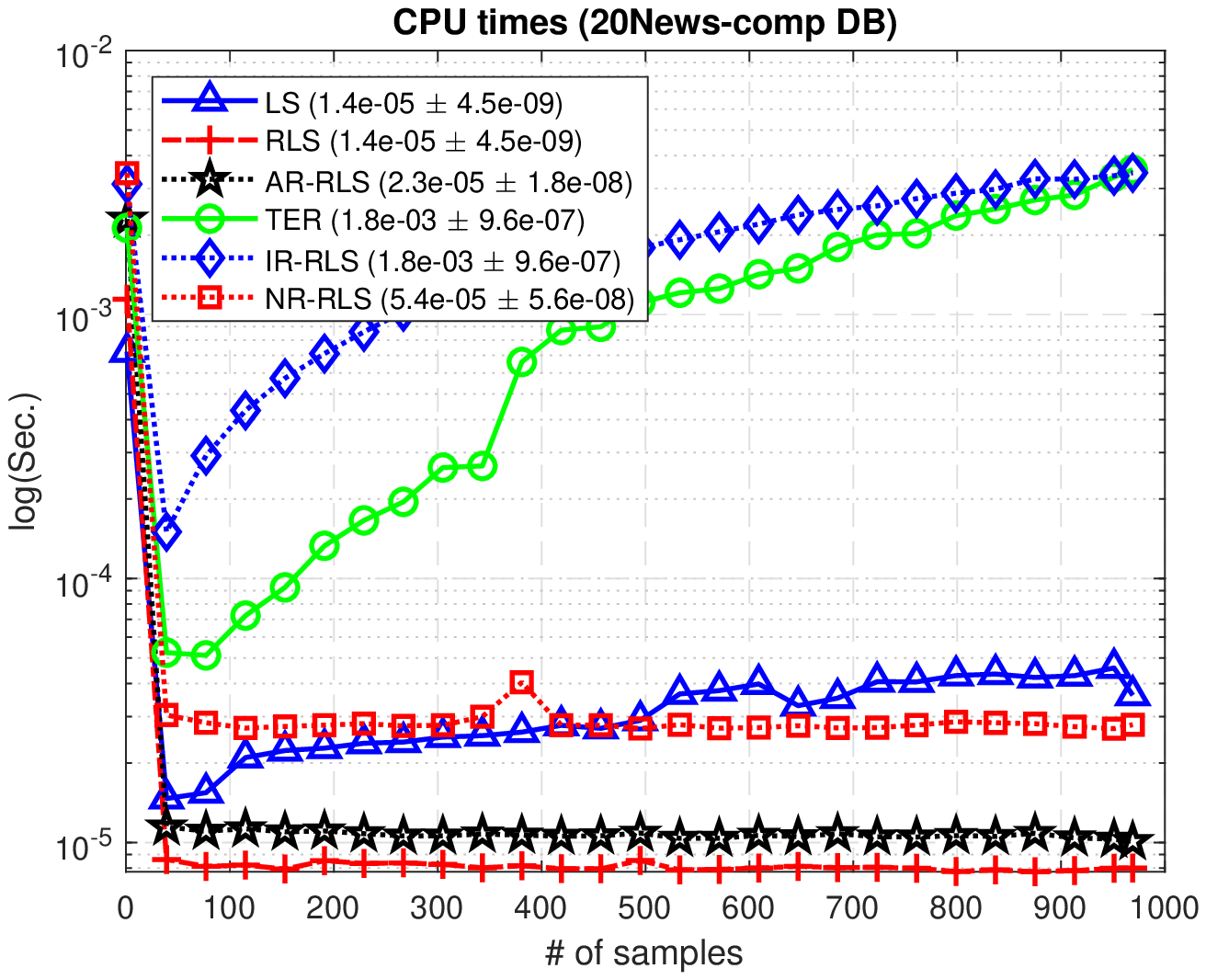}%
					} 
					\caption{20News-comp}%
					
				\end{figure*} 
				
				\begin{figure*}[!h]
					\centering 
					\subfigure[The $L_2$-norm values
					]{ 
						\includegraphics[width=0.3\textwidth]{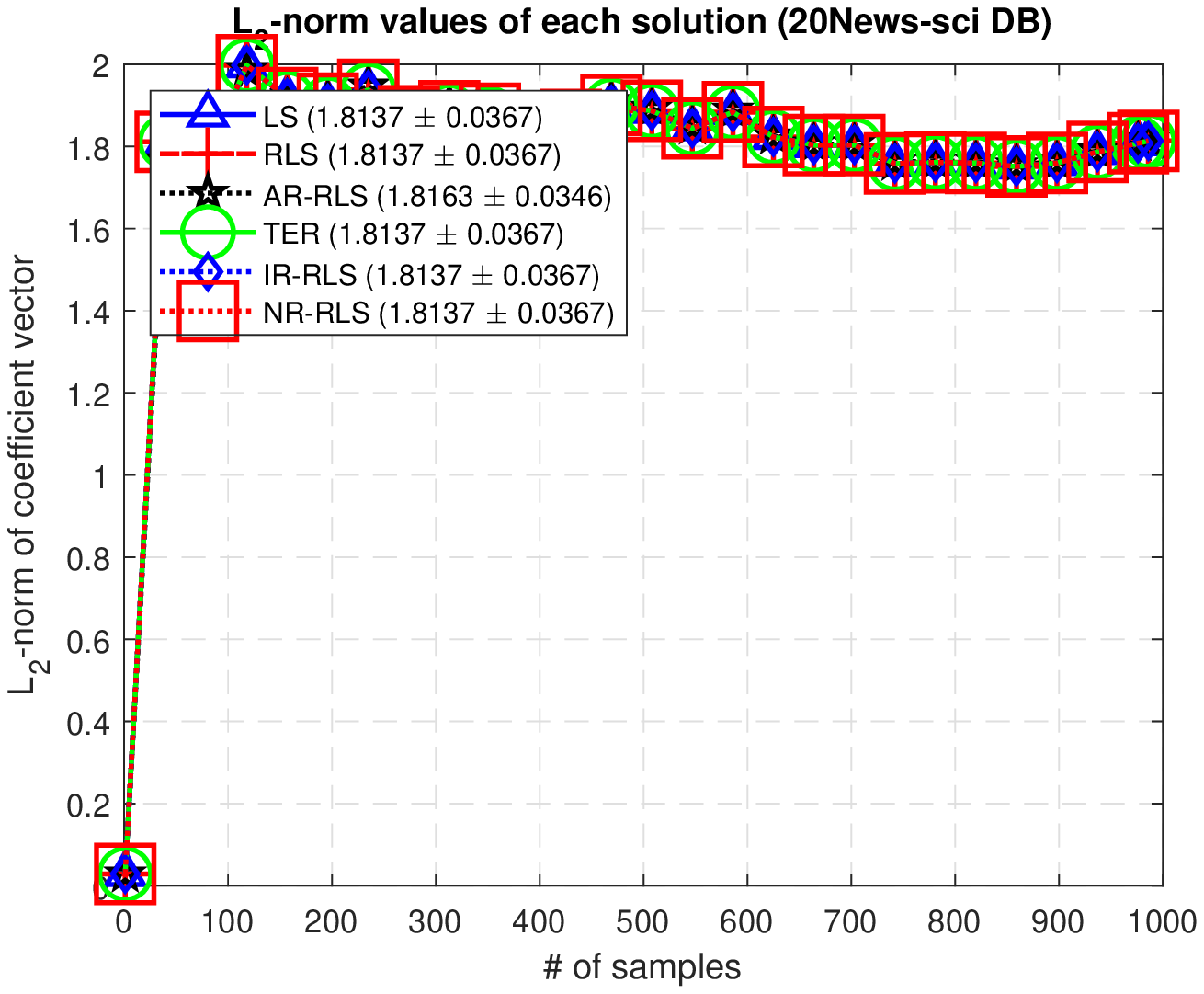}%
					} \hspace{0.4cm}
					\subfigure[The G-means
					]{ 
						\includegraphics[width=0.3\textwidth]{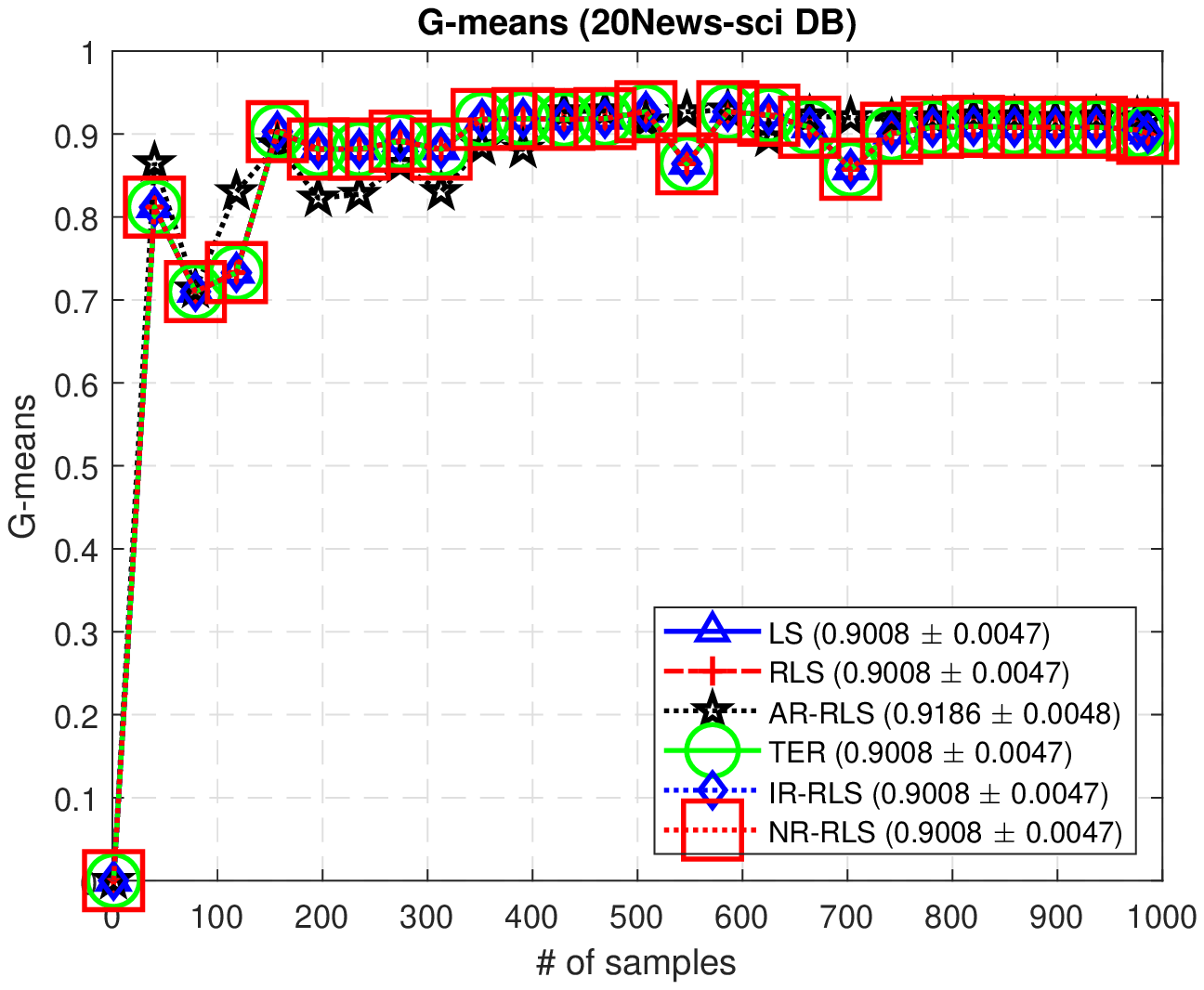}%
					} \hspace{0.4cm}
					\subfigure[The CPU times
					]{ 
						\includegraphics[width=0.3\textwidth]{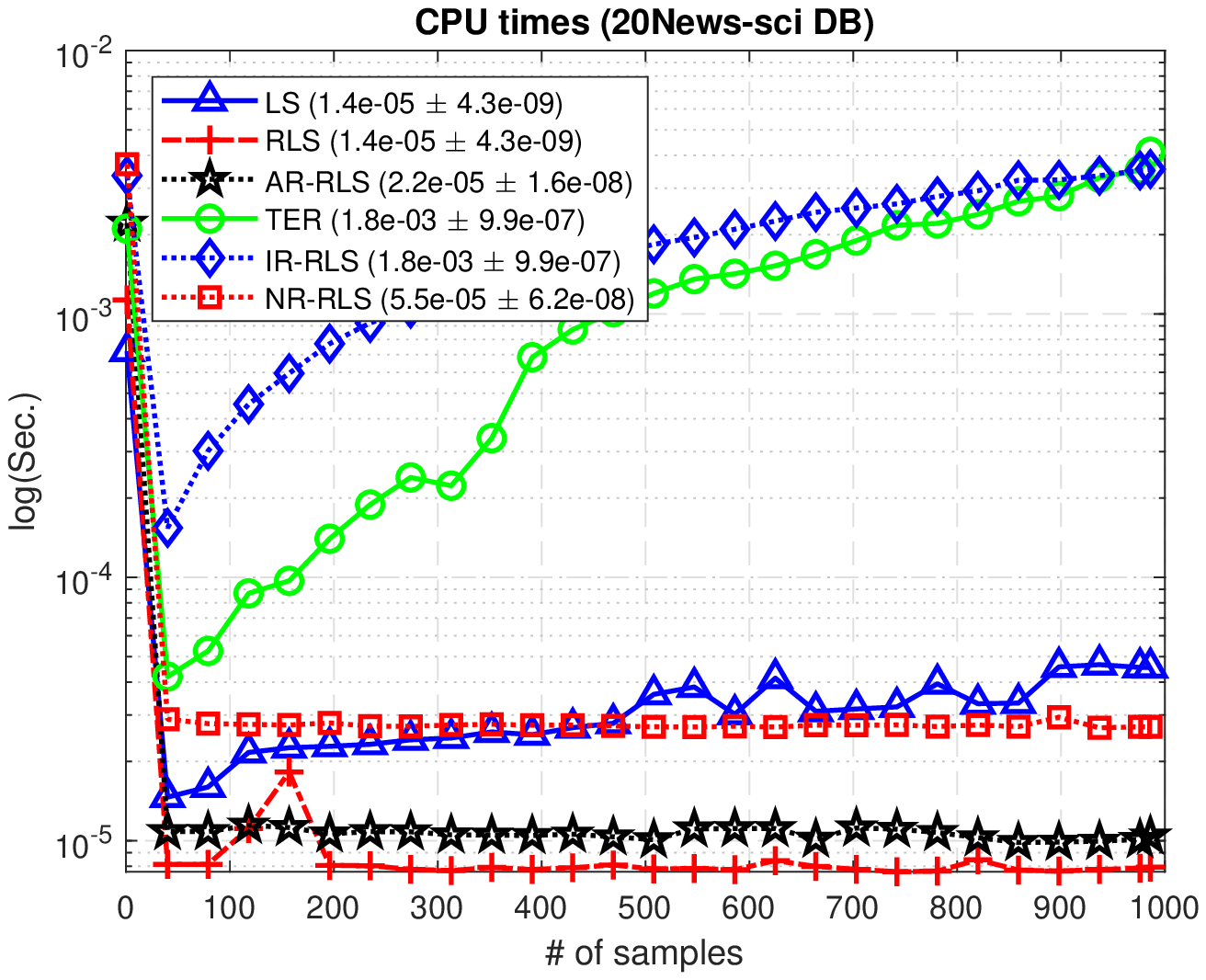}%
					} 
					\caption{20News-sci}%
					
				\end{figure*} 
				
				\begin{figure*}[!h]
					\centering 
					\subfigure[The $L_2$-norm values
					]{ 
						\includegraphics[width=0.3\textwidth]{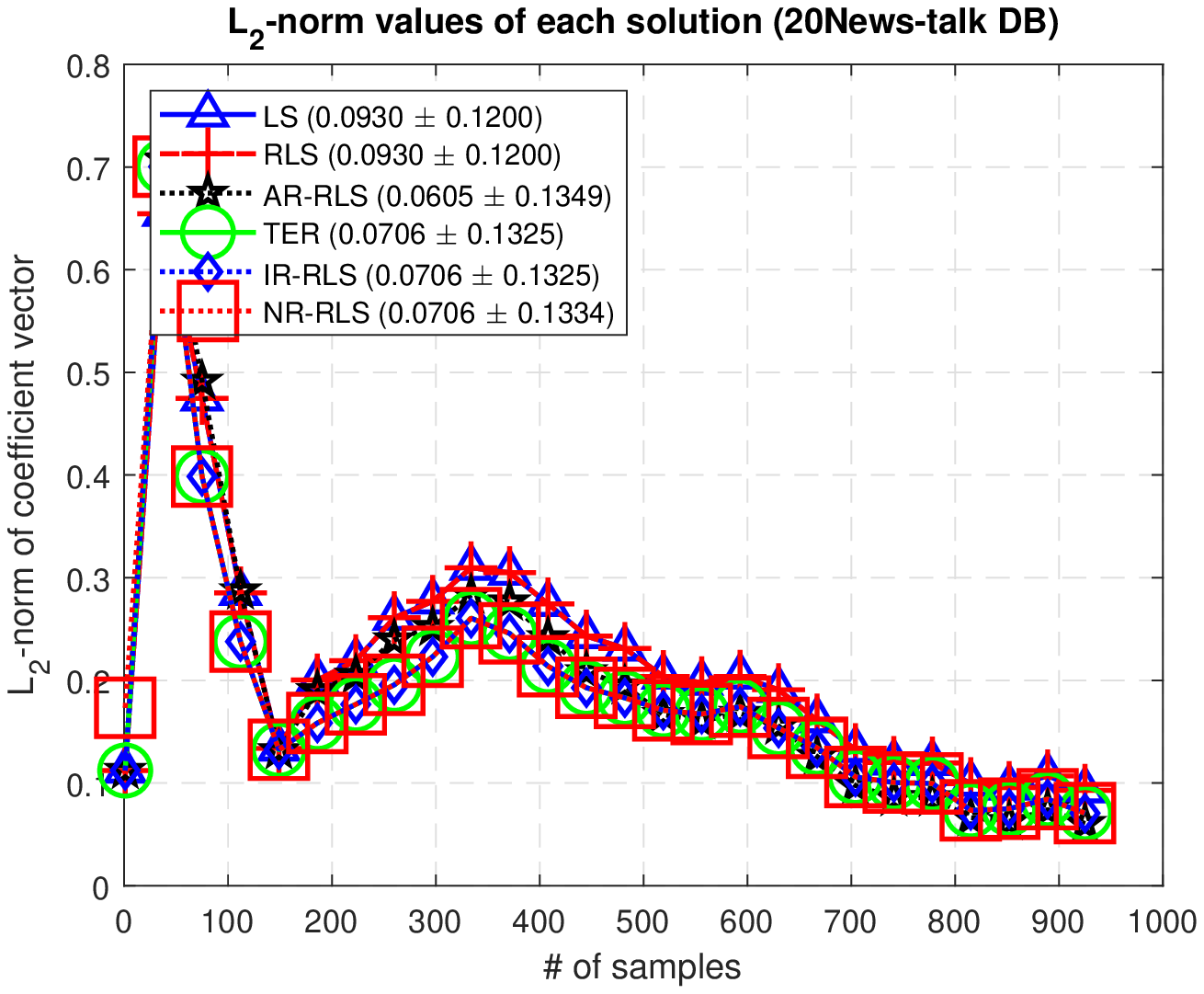}%
					} \hspace{0.4cm}
					\subfigure[The G-means
					]{ 
						\includegraphics[width=0.3\textwidth]{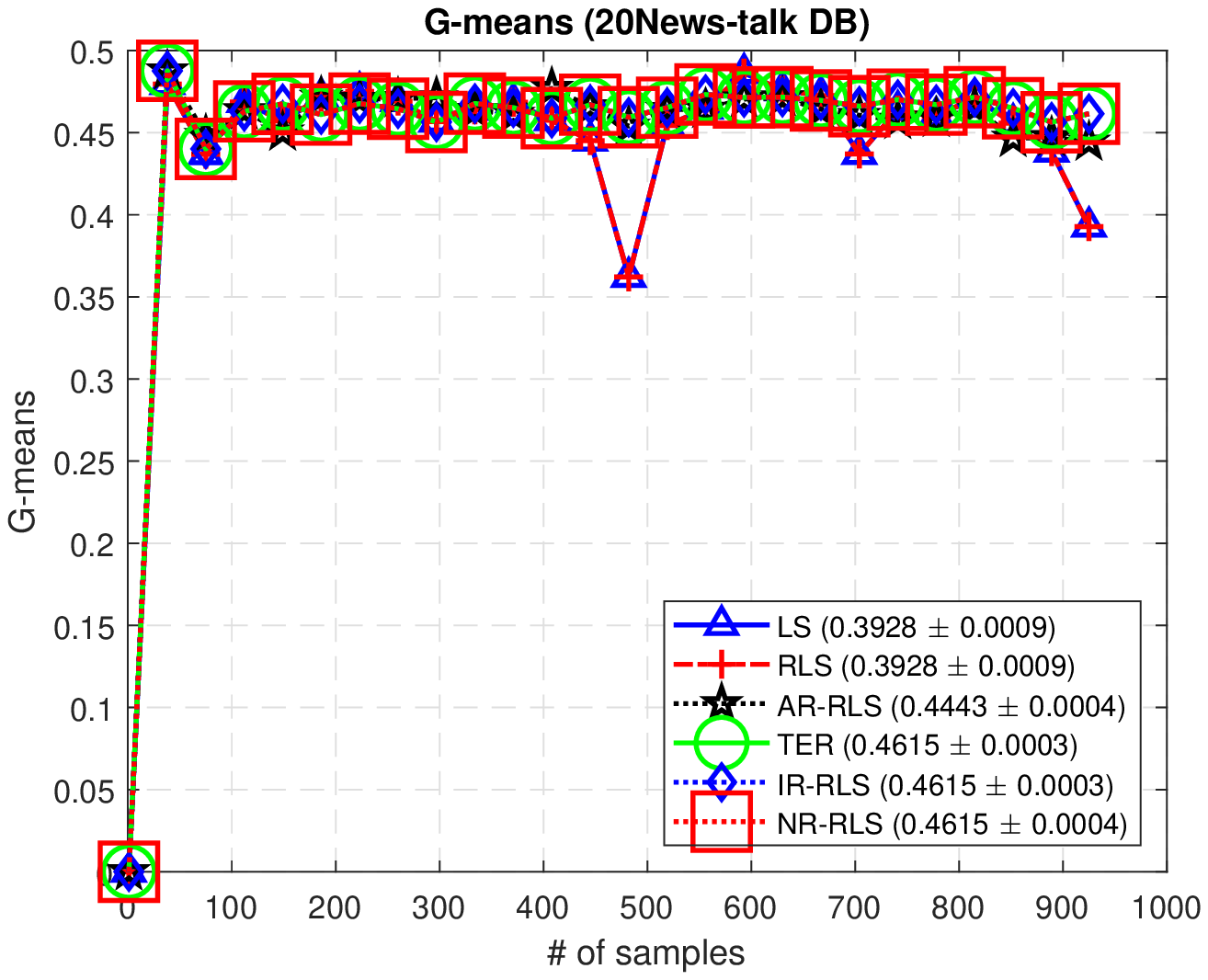}%
					} \hspace{0.4cm}
					\subfigure[The CPU times
					]{ 
						\includegraphics[width=0.3\textwidth]{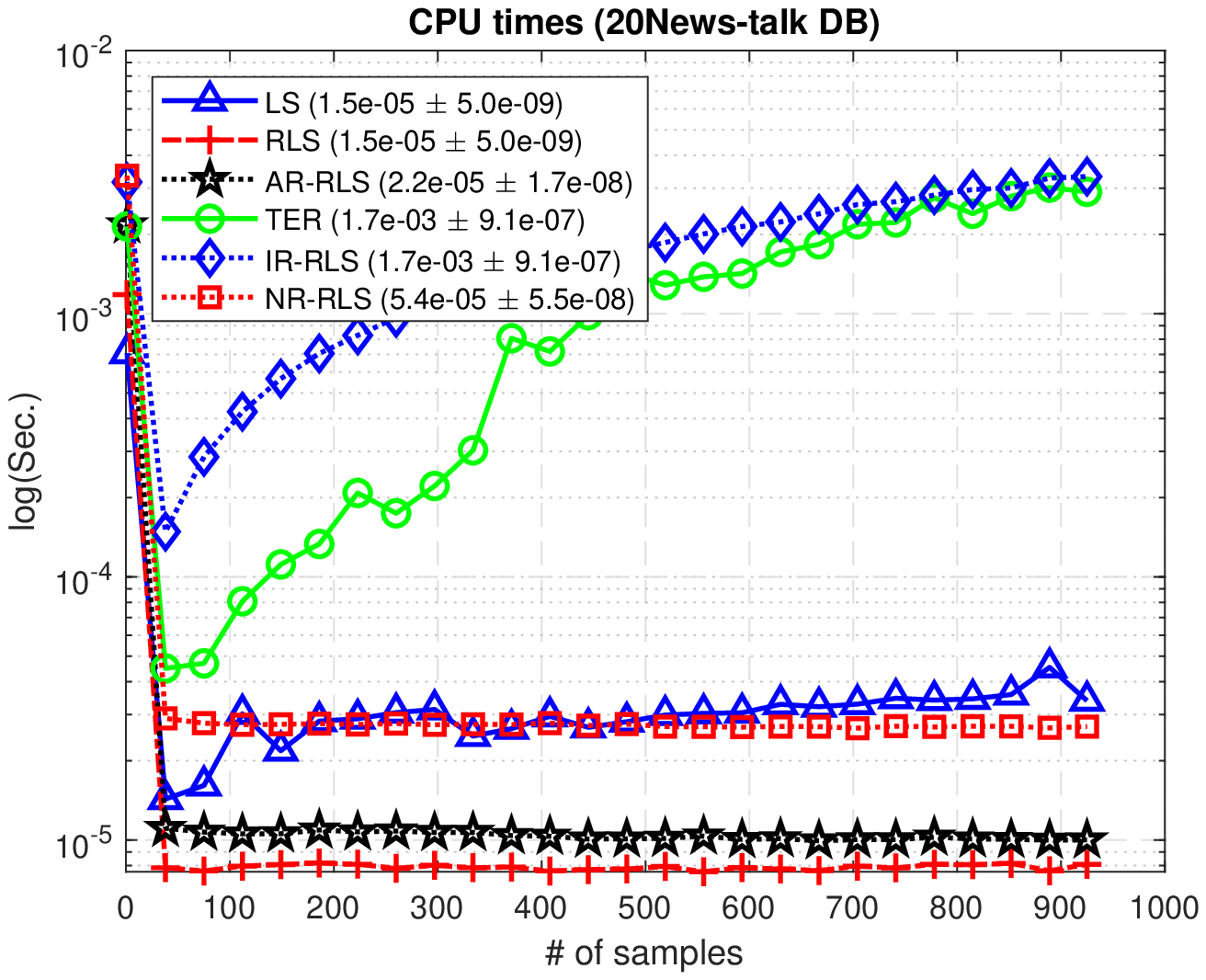}%
					} 
					\caption{20News-talk}%
					
				\end{figure*}

				\begin{figure*}[!h]
					\centering 
					\subfigure[The $L_2$-norm values
					]{ 
						\includegraphics[width=0.3\textwidth]{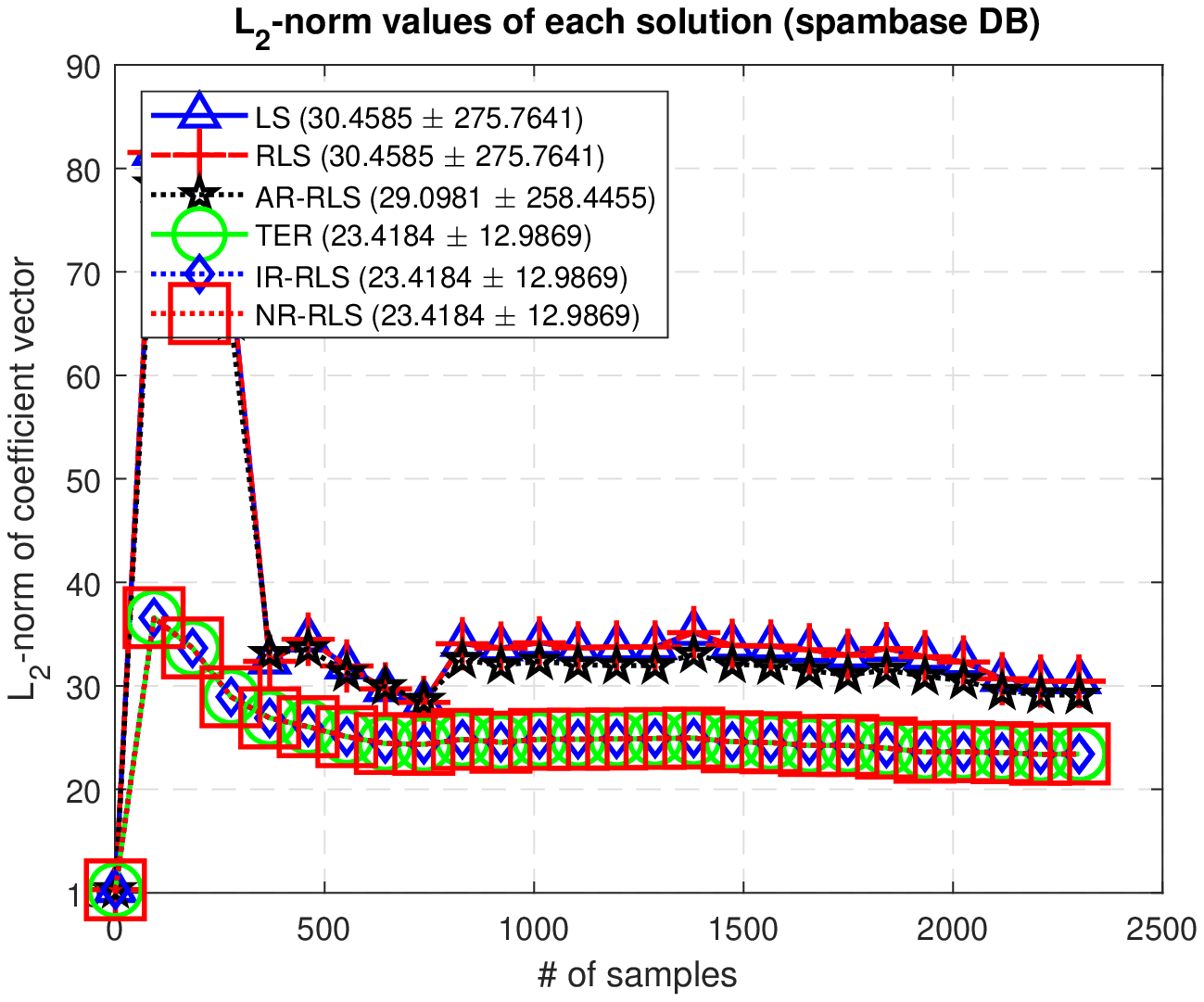}%
					} \hspace{0.4cm}
					\subfigure[The G-means
					]{ 
						\includegraphics[width=0.3\textwidth]{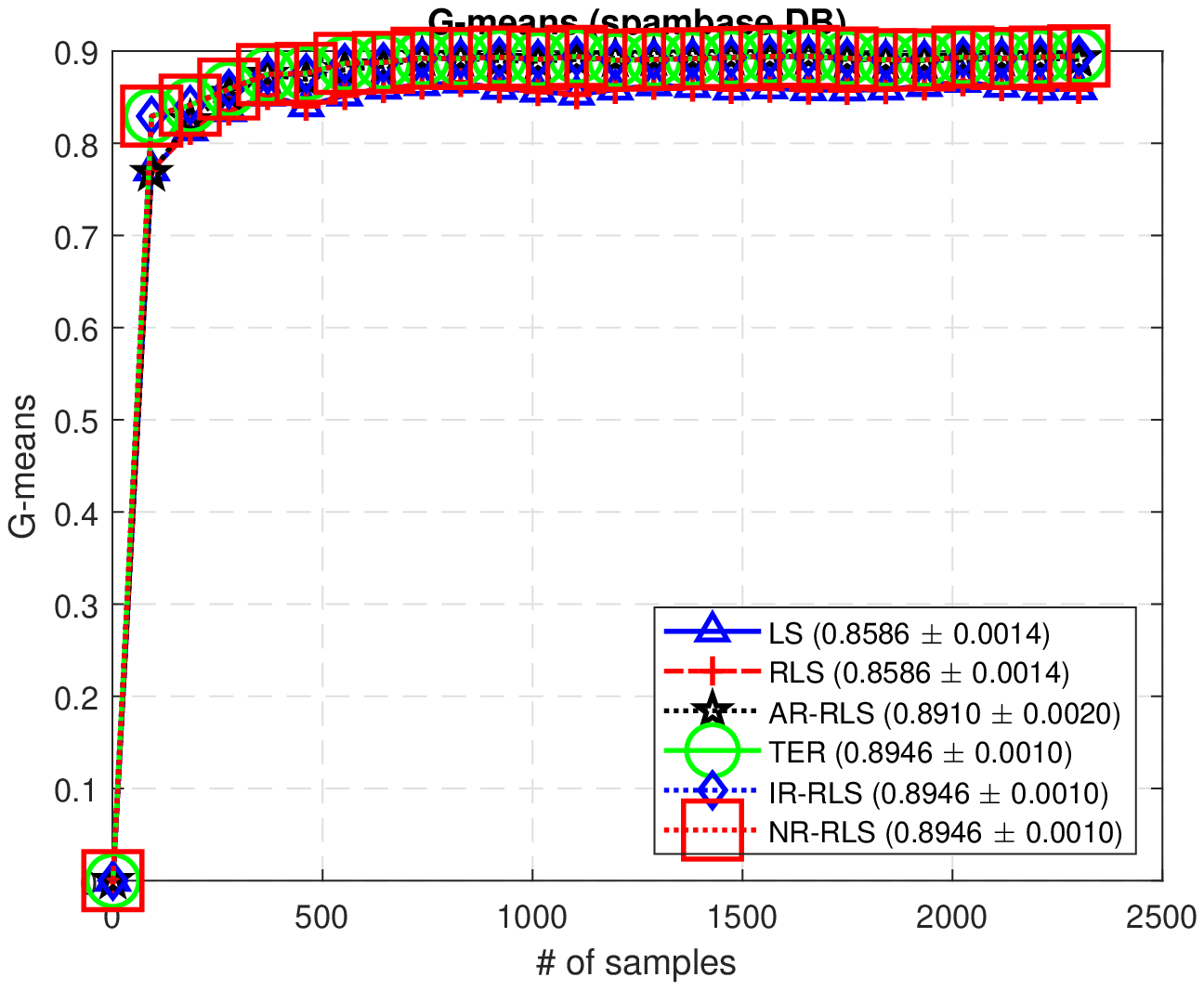}%
					} \hspace{0.4cm}
					\subfigure[The CPU times
					]{ 
						\includegraphics[width=0.3\textwidth]{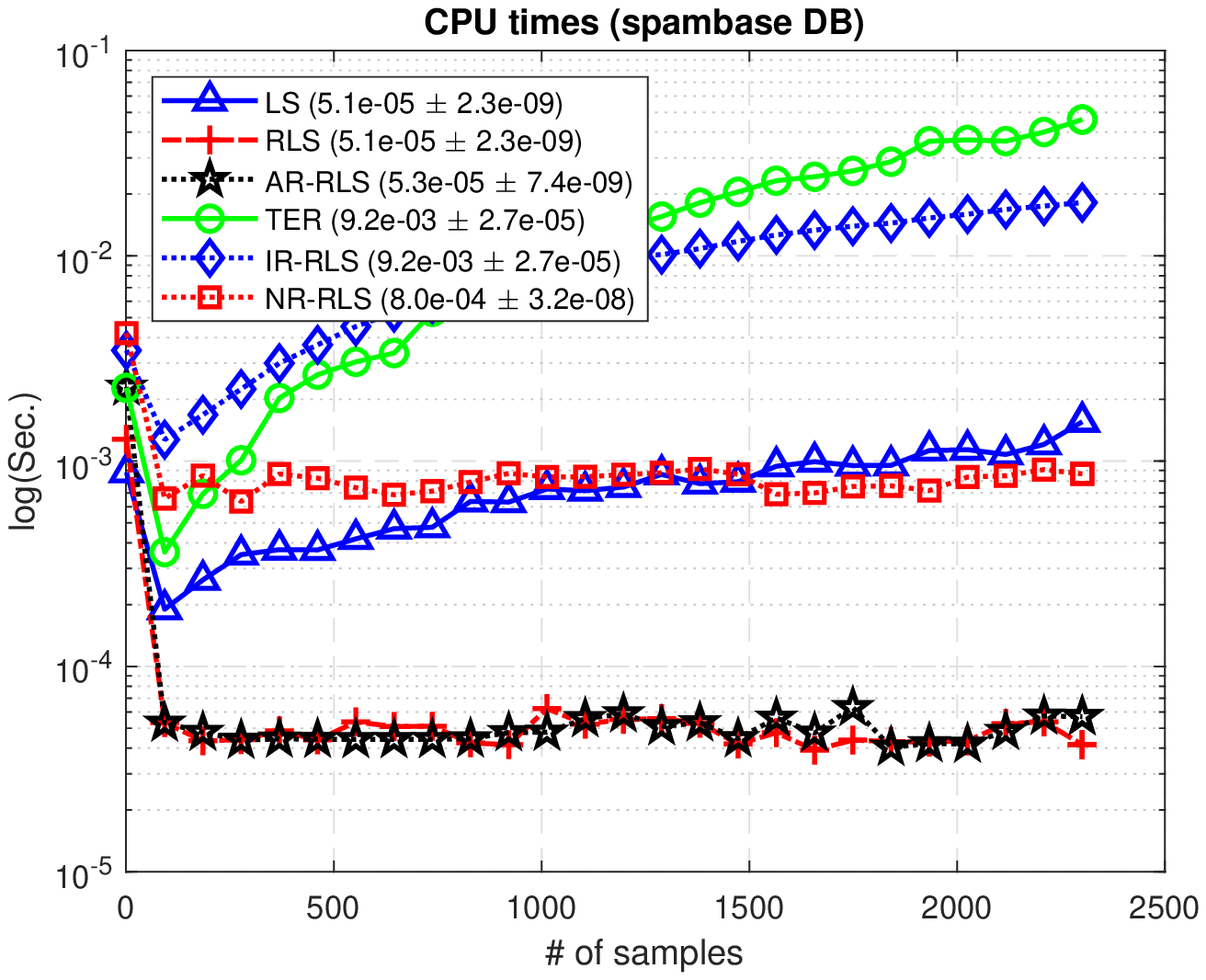}%
					} 
					\caption{Spambase}%
					
				\end{figure*} 
				\newpage
				\begin{figure*}[!h]
					\centering 
					\subfigure[The $L_2$-norm values
					]{ 
						\includegraphics[width=0.3\textwidth]{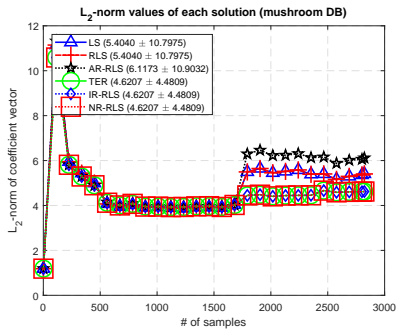}%
					} \hspace{0.4cm}
					\subfigure[The G-means
					]{ 
						\includegraphics[width=0.3\textwidth]{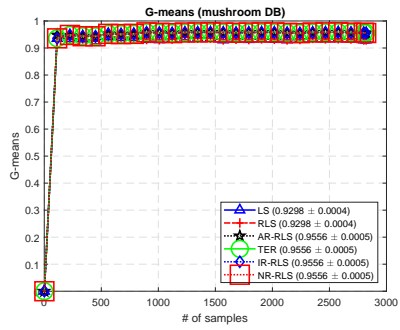}%
					} \hspace{0.4cm}
					\subfigure[The CPU times
					]{ 
						\includegraphics[width=0.3\textwidth]{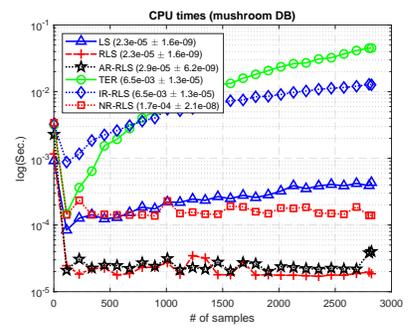}%
					} 
					\caption{Mushroom}%
					
				\end{figure*} 
				
				\begin{figure*}[!h]
					\centering 
					\subfigure[The $L_2$-norm values
					]{ 
						\includegraphics[width=0.3\textwidth]{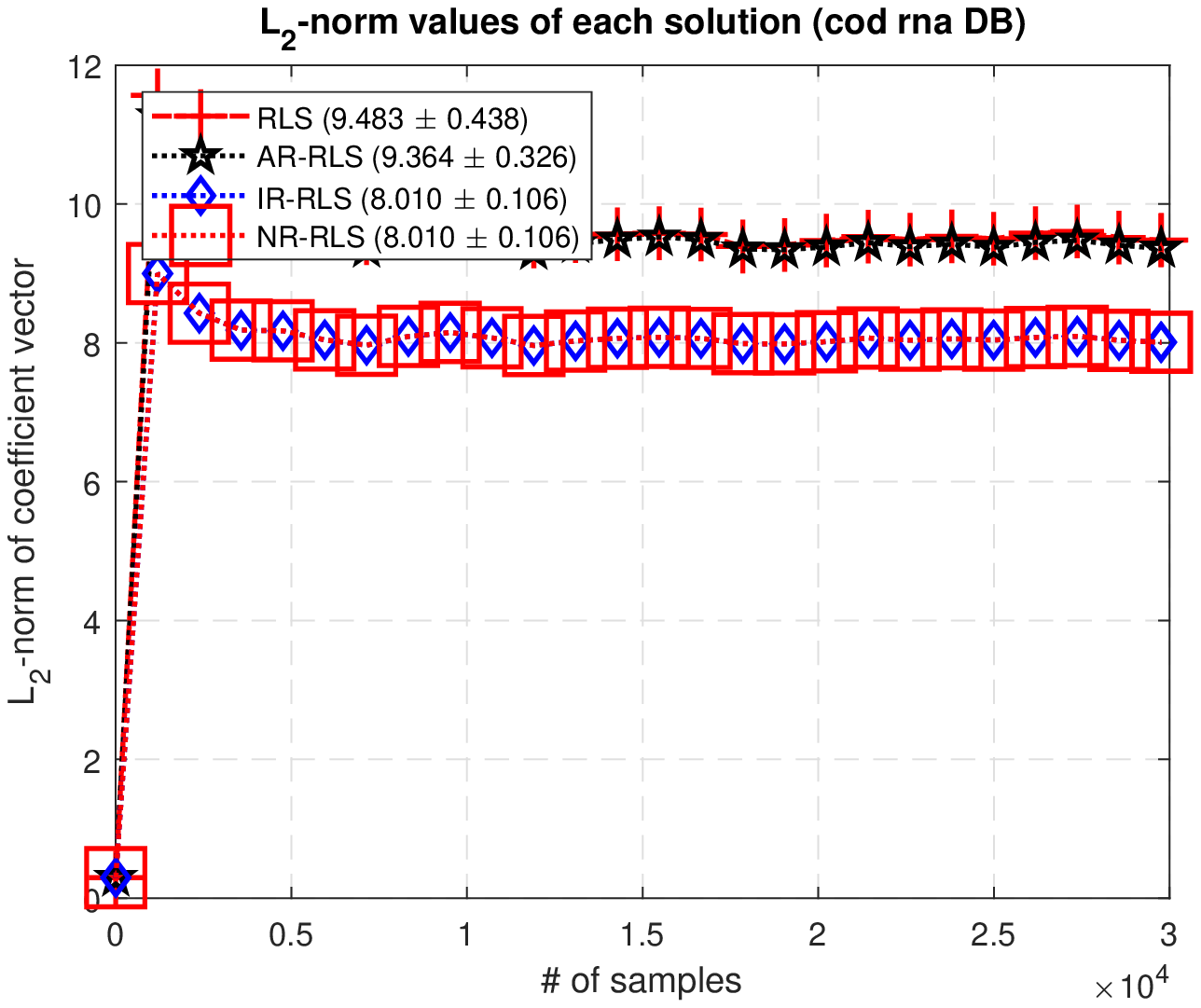}%
					} \hspace{0.4cm}
					\subfigure[The G-means
					]{ 
						\includegraphics[width=0.3\textwidth]{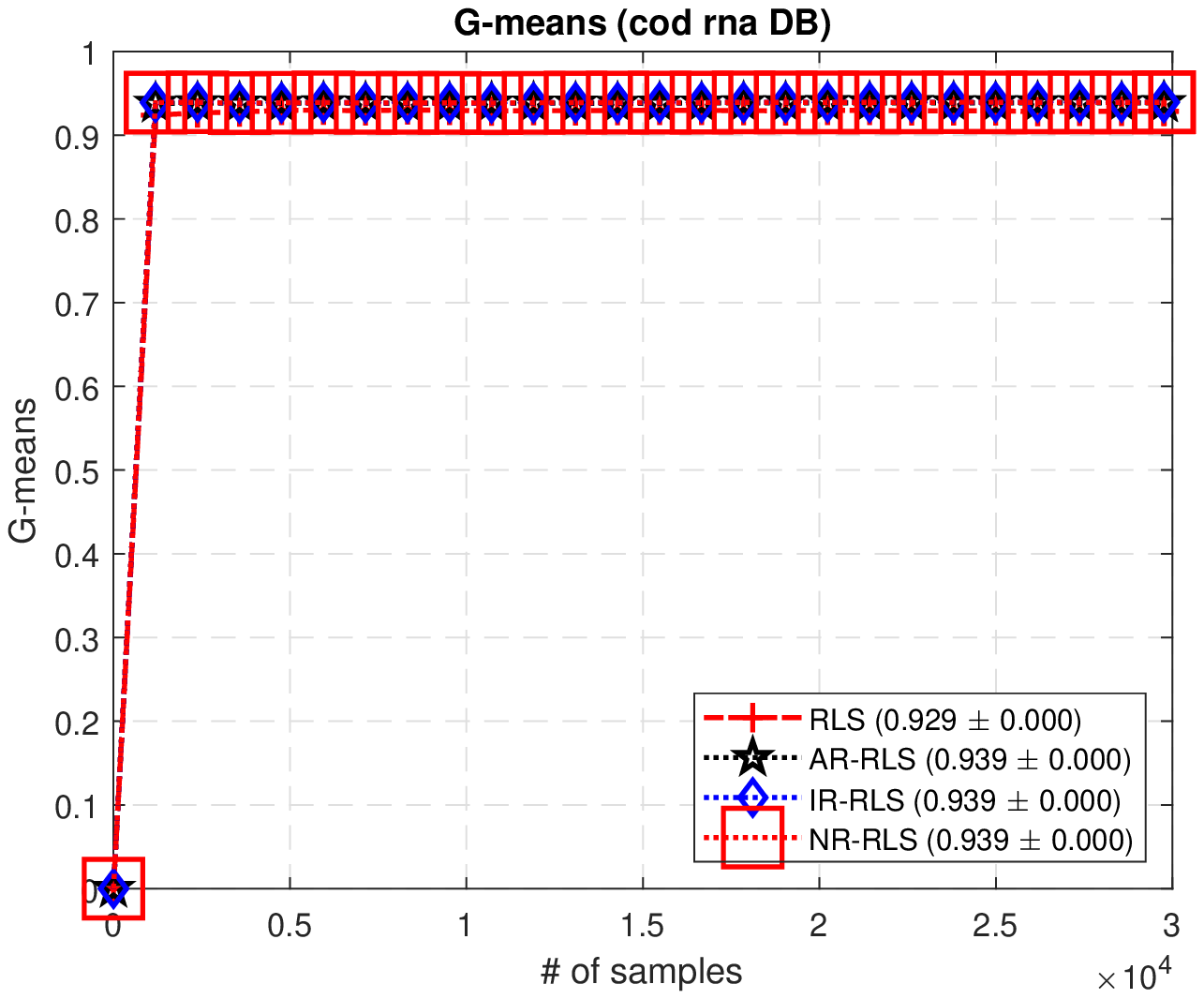}%
					} \hspace{0.4cm}
					\subfigure[The CPU times
					]{ 
						\includegraphics[width=0.3\textwidth]{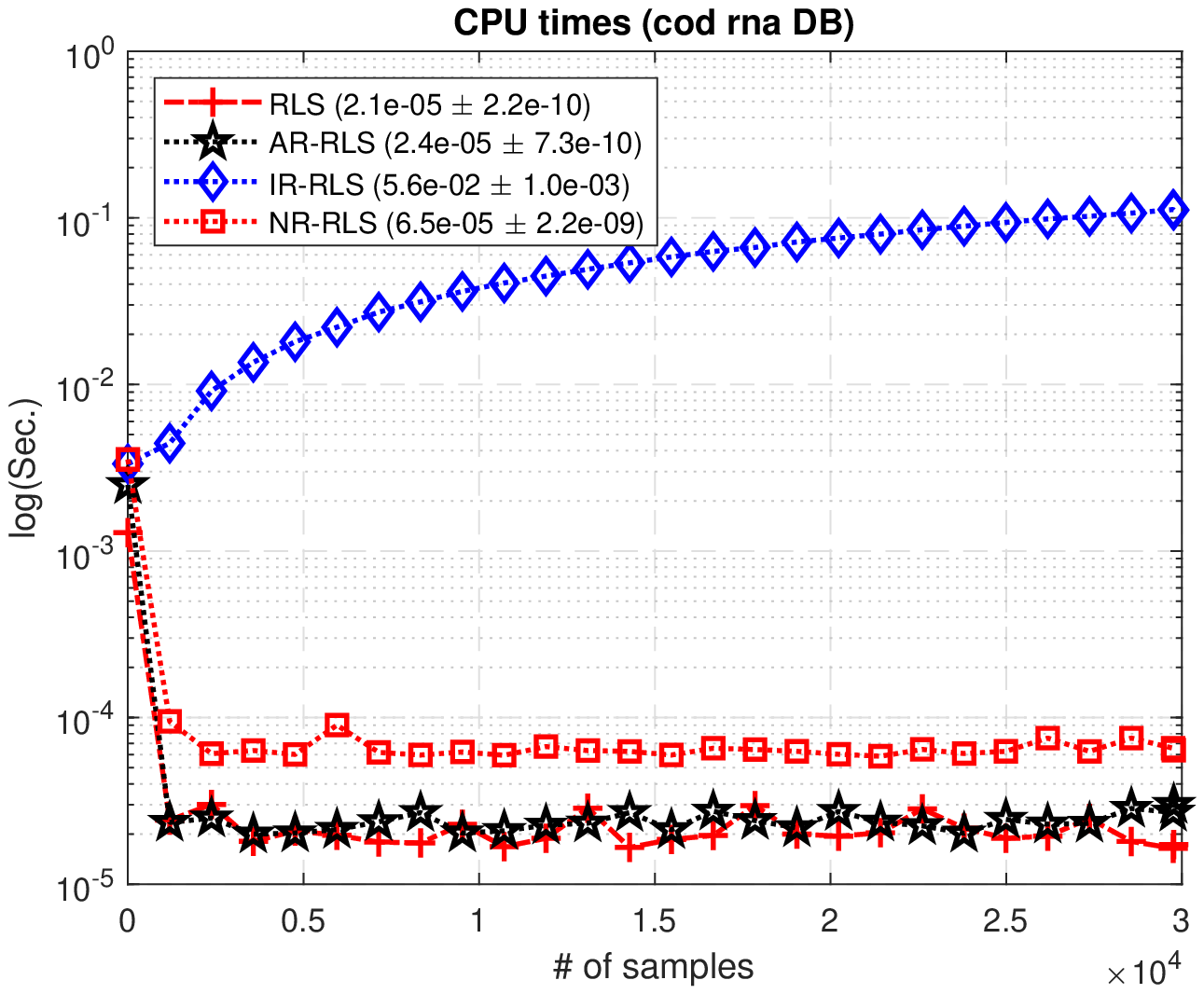}%
					} 
					\caption{Cod-rna}%
					
				\end{figure*} 
				
				\begin{figure*}[!h]
					\centering 
					\subfigure[The $L_2$-norm values
					]{ 
						\includegraphics[width=0.3\textwidth]{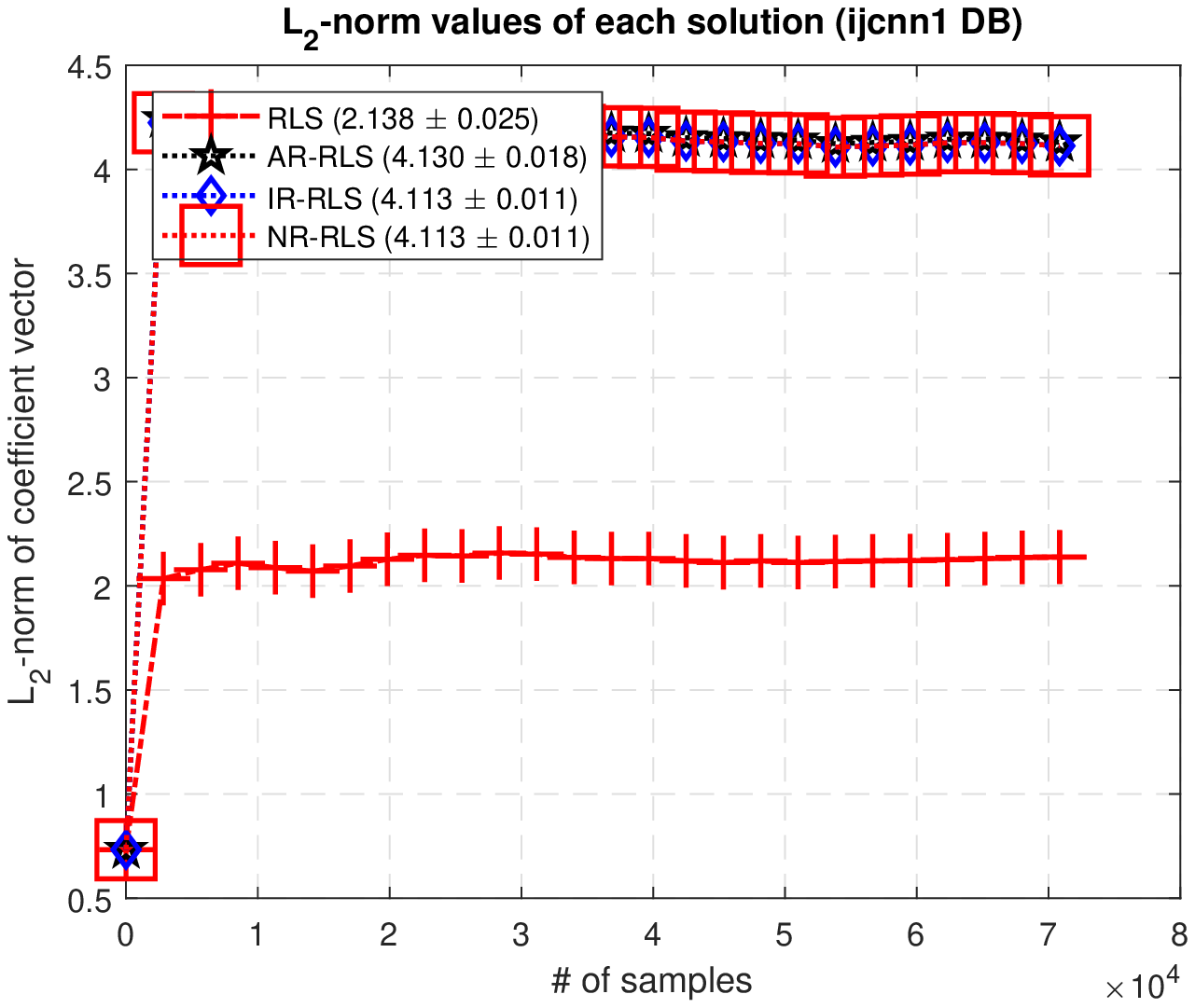}%
					} \hspace{0.4cm}
					\subfigure[The G-means
					]{ 
						\includegraphics[width=0.3\textwidth]{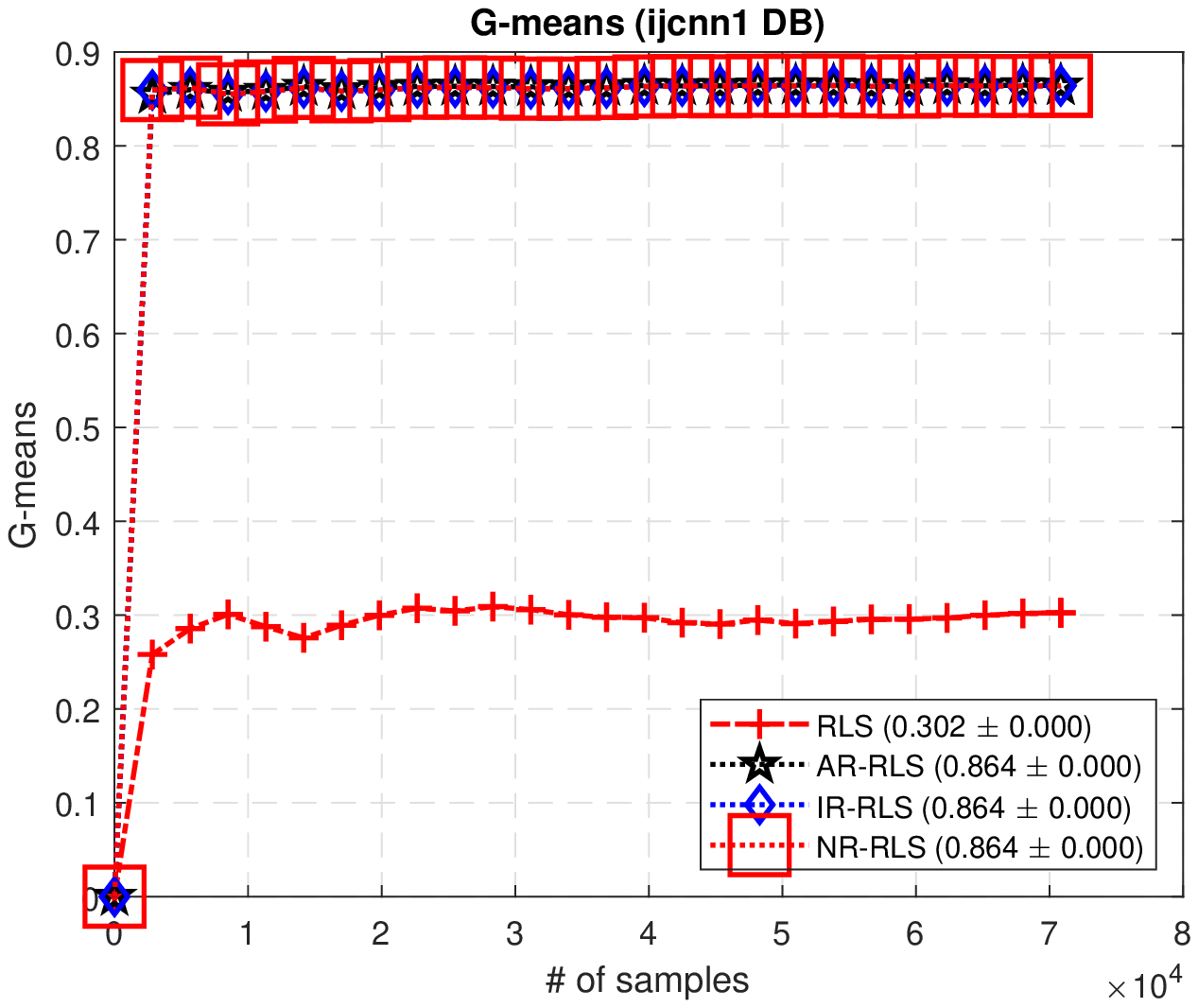}%
					} \hspace{0.4cm}
					\subfigure[The CPU times
					]{ 
						\includegraphics[width=0.3\textwidth]{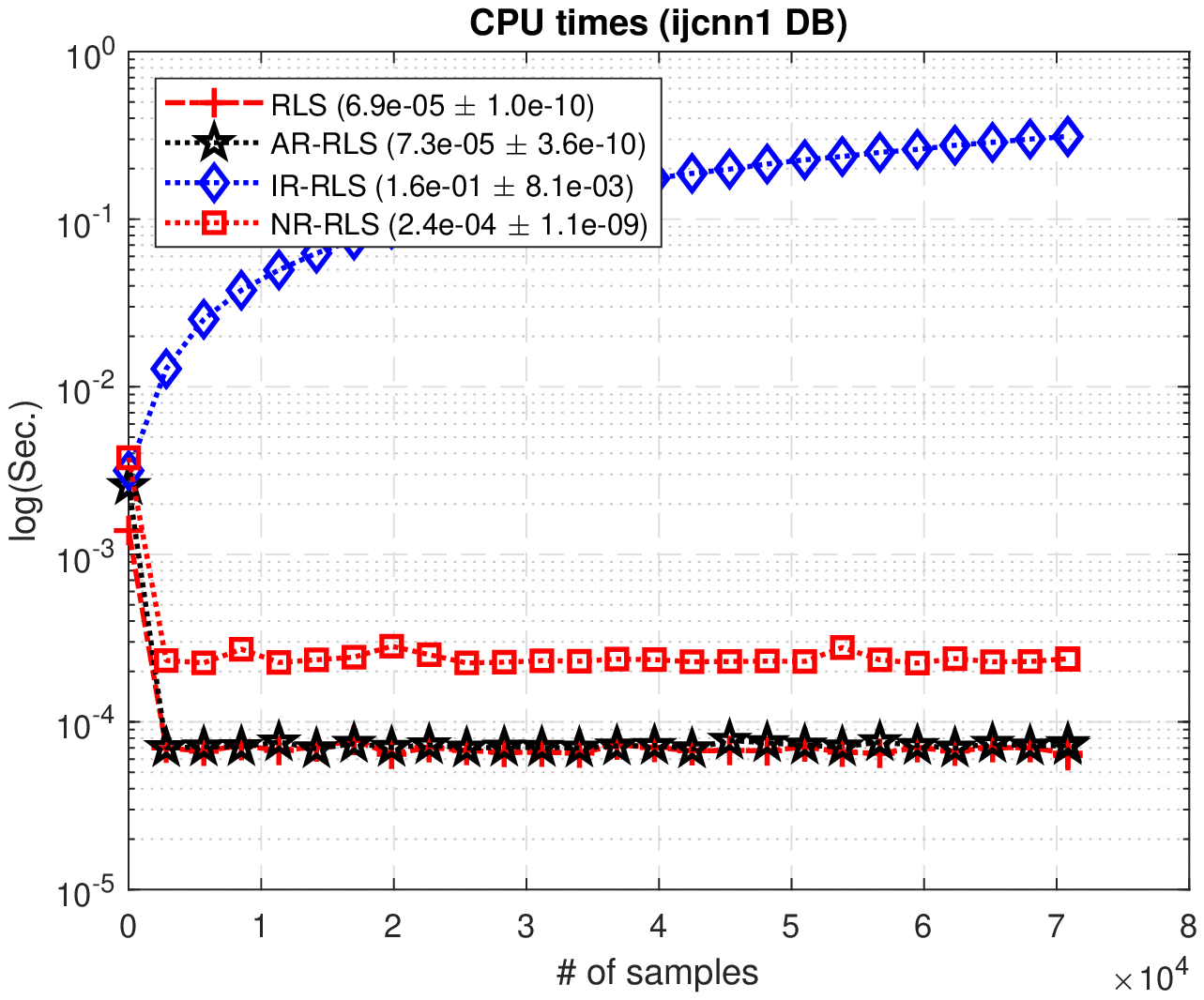}%
					} 
					\caption{Ijcnn1}%
					
				\end{figure*} 
				
				\begin{figure*}[!h]
					\centering 
					\subfigure[The $L_2$-norm values
					]{ 
						\includegraphics[width=0.3\textwidth]{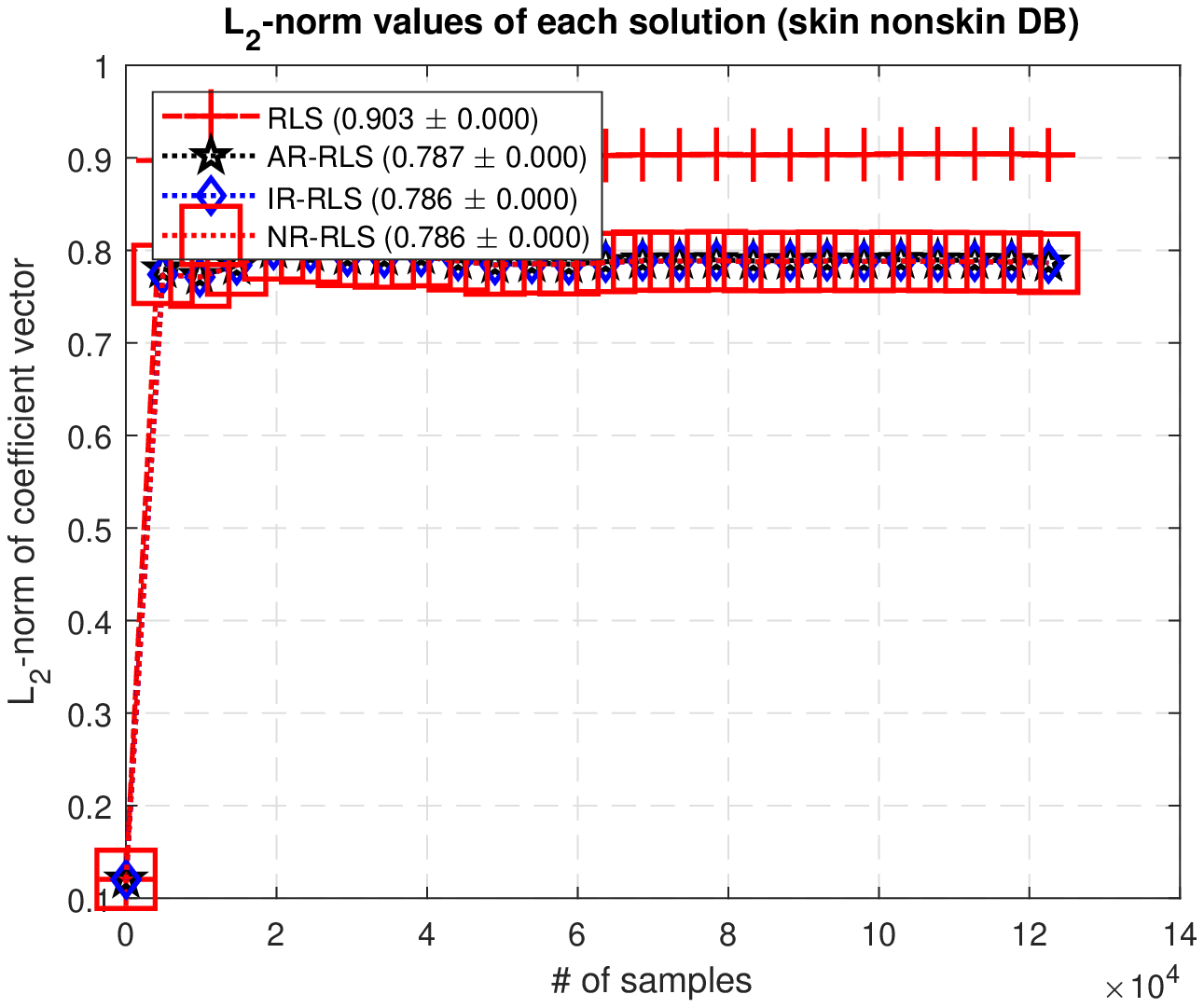}%
					} \hspace{0.4cm}
					\subfigure[The G-means
					]{ 
						\includegraphics[width=0.3\textwidth]{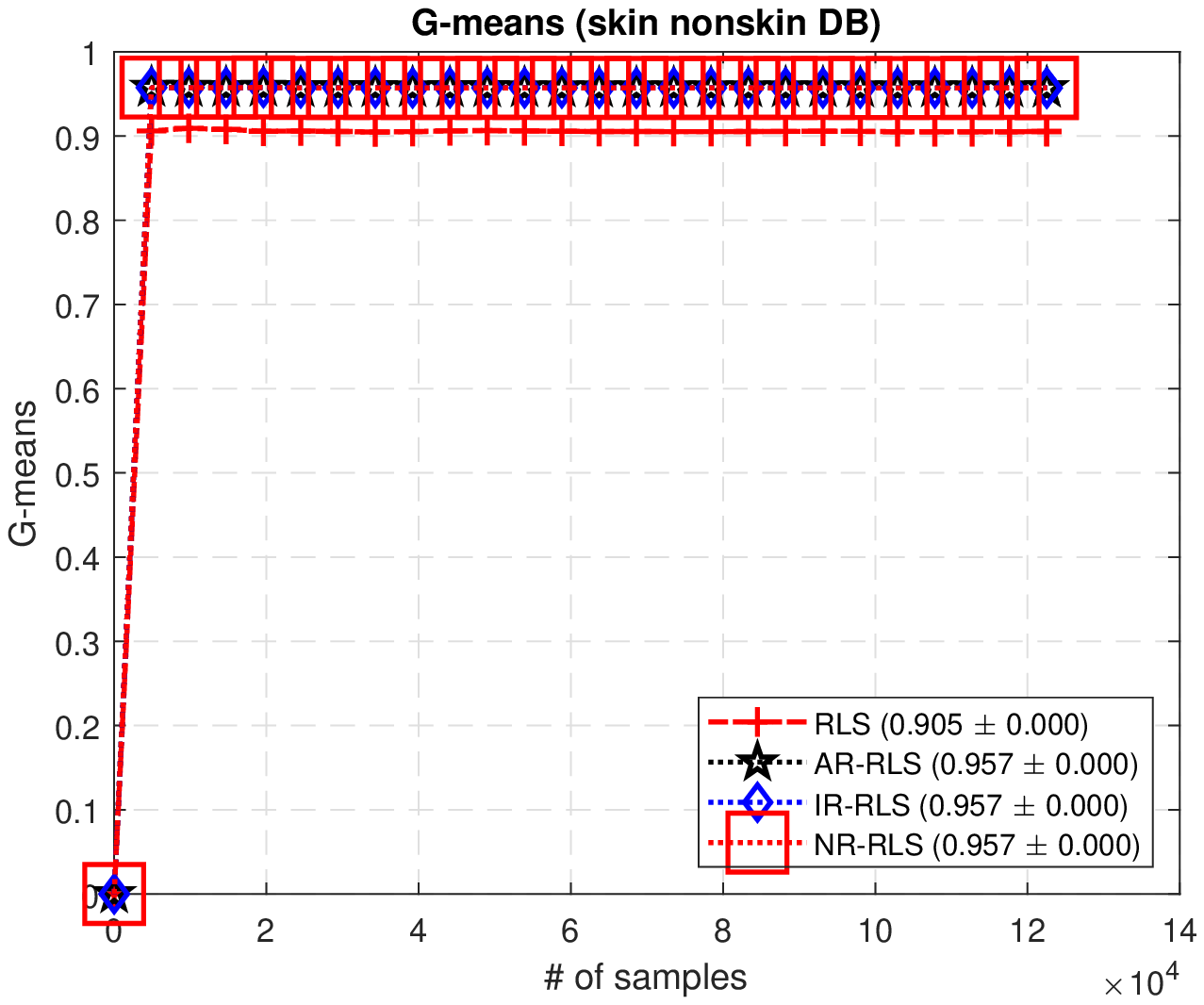}%
					} \hspace{0.4cm}
					\subfigure[The CPU times
					]{ 
						\includegraphics[width=0.3\textwidth]{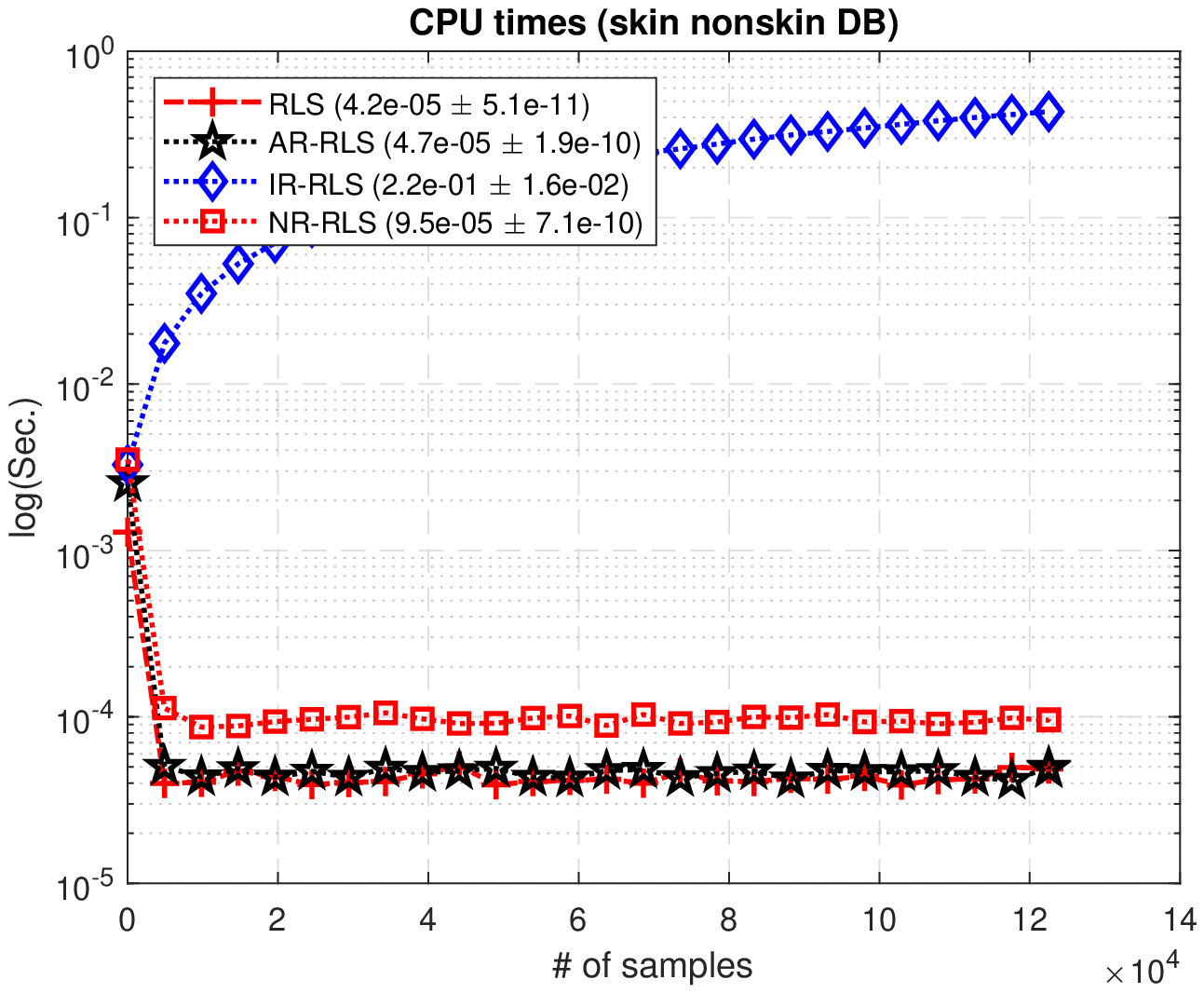}%
					} 
					\caption{Skin-nonskin}%
					
				\end{figure*} 

				\end{document}